\newtheorem{theorem}{Theorem}[section]
\newtheorem{lemma}[theorem]{Lemma}
\newtheorem{corollary[theorem]}{Corollary}
\newtheorem{definition}[theorem]{Definition}
\newtheorem{condition}[theorem]{Condition}
\newtheorem{remark}{Remark}
\newtheorem{assumption}{Assumption}
\let\oldremark\remark
\renewcommand{\remark}{\oldremark\normalfont}
\newenvironment{hproof}{%
  \proof}{\endproof}
\declaretheoremstyle[
notefont=\normalfont\itshape,
notebraces={}{},
headfont=\normalfont\itshape,
bodyfont=\normalfont,
headformat=\NAME\NOTE
]{nopar}
\declaretheorem[name={Proof of}, style=nopar, qed=$\blacksquare$]{refproof}
\definecolor{gainsboro}{rgb}{0.86, 0.86, 0.86}
\newcommand{\Var}{\operatorname{Var}}
\newcommand{\Tr}{\operatorname{Tr}}
\DeclareMathOperator*{\argmax}{argmax}
\def\HiLi{\leavevmode\rlap{\hbox to \hsize{\color{yellow!33}\leaders\hrule height .8\baselineskip depth .5ex\hfill}}}
\title{An Optimization-based Algorithm for Non-stationary Kernel Bandits without Prior Knowledge}
\author{%
  Kihyuk Hong \\
  University of Michigan\\
  \texttt{kihyukh@umich.edu} \\
  \And
  Yuhang Li \\
  University of Michigan\\
  \texttt{liyuhang@umich.edu} \\
  \And
  Ambuj Tewari \\
  University of Michigan \\
  \texttt{tewaria@umich.edu} \\
}
\begin{document}

\maketitle

\begin{abstract}

We propose an algorithm for non-stationary kernel bandits that does not require prior knowledge of the degree of non-stationarity.
The algorithm follows randomized strategies obtained by solving optimization problems that balance exploration and exploitation.
It adapts to non-stationarity by restarting when a change in the reward function is detected.
Our algorithm enjoys a tighter dynamic regret bound than previous work on non-stationary kernel bandits.
Moreover, when applied to non-stationary linear bandits by using a linear kernel, our algorithm is nearly minimax optimal, solving an open problem in the non-stationary linear bandit literature.
We extend our algorithm to use a neural network for dynamically adapting the feature mapping to observed data.
We prove a dynamic regret bound of the extension using the neural tangent kernel theory.
We demonstrate empirically that our algorithm and the extension can adapt to varying degrees of non-stationarity.

\end{abstract}

\section{Introduction}

The linear bandit (LB) problem \parencite{dani_stochastic_2008} and the kernel bandit (KB) problem \parencite{srinivas_gaussian_2010} are important paradigms for sequential decision making under uncertainty.
They extend the multi-armed bandit (MAB) problem \parencite{robbins_aspects_1952} by modeling the reward function with the side information of each arm provided as a feature vector.
LB assumes the reward function is linear.
KB extends LB to model non-linearity by assuming the reward function lies in the RKHS induced by a kernel.

A recent line of work studies the non-stationary variants of LB and KB where the reward functions can vary over time subject to two main types of non-stationarity budgets: the number of changes and the total variation in the sequence of reward functions.
A common algorithm design principle for adapting to non-stationarity is the principle of forgetting the past.
It has been applied to the non-stationary MAB to design nearly minimax optimal algorithms \parencite{garivier_upper-confidence_2011, besbes_stochastic_2014}.
Similarly, the principle has been applied to the non-stationary LB \parencite{cheung_learning_2019, russac_weighted_2019, zhao_simple_2020, kim_randomized_2020} and the non-stationary KB \parencite{zhou_no-regret_2021, deng_weighted_2022}.

Recently, \textcite{zhao_non-stationary_2021} found an error in a key technical lemma by \textcite{cheung_learning_2019} that affects the concentration bound of regression-based reward estimates under non-stationarity.
Unfortunately, the error is inherited by \textcite{russac_weighted_2019}, \textcite{zhao_simple_2020} and \textcite{kim_randomized_2020}.
The corrected regret bounds of the affected papers are worse than what were originally reported.
Since the correction, finding a nearly minimax optimal algorithm for the non-stationary LB setting has been an open problem.
The same error affected the work on non-stationary KB by \textcite{zhou_no-regret_2021} and they had to correct their initially reported regret bound to a worse one.

Algorithms using the principle of forgetting require the knowledge of the non-stationarity budgets.
For example, sliding window algorithms \parencite{garivier_upper-confidence_2011, cheung_learning_2019, zhou_no-regret_2021} that forget the past by discarding data older than certain time window require the knowledge of the non-stationarity budgets to optimally tune the size of the window.
Since having a prior knowledge of the non-stationarity budgets may not be realistic in practical settings, researchers have developed change detection based algorithms that do not require the knowledge of non-stationarity budgets.
A seminal paper by \textcite{auer_adaptively_2018} demonstrates a change detection based algorithm for the non-stationary two-armed bandit setting.
Their design principle has been applied to MAB \parencite{auer_adaptively_2019} and the contextual bandit setting \parencite{chen_new_2019}.
More recently, \textcite{wei_non-stationary_2021} proposed a reduction called MASTER that equips an algorithm designed for a stationary environment with change detection subroutines to adapt to non-stationarity without the knowledge of non-stationarity budgets.
They provided a reduction of the OFUL algorithm \parencite{abbasi-yadkori_improved_2011} and claimed near-minimax optimality for non-stationary linear bandits.
However, due to the aforementioned error, they had to correct their regret bound to a suboptimal one.

In this paper, we design an algorithm that sidesteps the error and recover the tighter dynamic regret bounds for non-stationary LB and KB that were once thought to be achieved.
We make the following contributions.

\begin{itemize}[leftmargin=*]
\item We design a novel optimization-based algorithm OPKB for stationary kernel bandits that uses inverse propensity score based reward estimates that sidestep the aforementioned error specific to regression based reward estimates.

\item We design an algorithm ADA-OPKB that adapts OPKB to non-stationary settings using change detection.
ADA-OPKB does not require the knowledge of the non-stationarity budgets and enjoys a dynamic regret bound tighter than previous work on non-stationary KB.

\item We show ADA-OPKB is nearly minimax optimal in the non-stationary linear bandit setting, solving an open problem in the non-stationary linear bandit literature.

\item We provide an extension of ADA-OPKB called ADA-OPNN that trains a neural network to dynamically adapt the feature mapping to observed data.
We show a dynamic regret bound for ADA-OPNN when the width of the network is sufficiently large using the neural tangent kernel theory \parencite{jacot_neural_2018}.

\end{itemize}

\subsection{Related Work}

\paragraph{Non-stationary Linear/Kernel Bandits}
Common approaches for non-stationary bandits include restarting periodically, using recent data within fixed time window (sliding-window) and exponentially decaying past observations (discounting).
These approaches require the knowledge of non-stationarity.
\textcite{zhou_no-regret_2021} analyze restarting and sliding-window approaches for adapting a UCB-based algorithm for kernel bandits.
\textcite{deng_weighted_2022} analyze a discounting approach for kernel bandits.
\textcite{russac_weighted_2019}, \textcite{cheung_learning_2019} and \textcite{zhao_simple_2020} propose discounting, sliding-window and restarting approaches for adapting a UCB-based algorithm for linear bandits respectively.
\textcite{cheung_hedging_2022} discuss restarting adversarial linear bandit algorithm.
For the non-stationary setting where the learner does not have the knowledge of the non-stationarity, \textcite{cheung_learning_2019}, \textcite{zhao_simple_2020} and \textcite{cheung_hedging_2022} discuss bandit-over-bandit (BOB) reduction.
\textcite{wei_non-stationary_2021} propose a change detection based reduction (MASTER) and show a reduction of a UCB-based algorithm for linear bandits.

\paragraph{Optimization-based Algorithms}

First proposed for contextual bandits optimization-based algorithms solve optimization problems to find randomized strategies that balance exploration and exploitation \parencite{dudik_efficient_2011, agarwal_taming_2014}. 
The idea is adapted to linear bandits \parencite{lattimore_end_2017, hao_adaptive_2020, lee_achieving_2021}.
Our paper is the first to apply the approach to kernel bandits.

\section{Problem Statement}\label{section:problem-statement}

We consider a bandit problem where the learner and the nature interact sequentially for $T$ time steps.
At each time $t$, the learner plays an action $x_t$ chosen from a finite set of actions $\mathcal{X} = \{a_1, \dots, a_N\} \subset \mathbb{R}^d$.
Then the nature reveals a noisy reward $y_t = r_t(x_t) + \eta_t$ where $r_t : \mathcal{X} \rightarrow \mathbb{R}$ is an unknown reward function at time $t$ and $\{ \eta_t \}_{t = 1}^T$ are independent zero-mean noises with a bound $\vert \eta_t \vert \leq S$.
\footnote{
The boundedness noise assumption is for making use of the Freedman-style inequality (Lemma~\ref{lemma:freedman}).
We can relax this assumption to a subgaussian noise assumption by modifying the Freedman-style inequality using a truncation argument. 
See Appendix~\ref{subsection-subgaussian} for detail.
}

Following the kernel bandit setting commonly used in the literature, we make the following regularity assumption on the reward functions.

\begin{assumption}[Kernel bandit]\label{assumption:rkhs}
The reward functions $r_t$ live in the RKHS $\mathcal{H}$ induced by a continuous positive semi-definite kernel $k : \mathcal{X} \times \mathcal{X} \rightarrow \mathbb{R}$ with $k(x, x) \leq 1$ for all $x \in \mathcal{X}$.
Their norms satisfy $\Vert r_t \Vert_{\mathcal{H}} \leq B$ for all $t = 1, \dots, T$.
The kernel $k$ and the bounds $S$, $B$ are known to the learner.
\end{assumption}

Note that Assumption~\ref{assumption:rkhs} implies $\vert r_t(x) \vert = \langle r_t, k(\cdot, x) \rangle_\mathcal{H} \leq \Vert r_t \Vert_\mathcal{H} \Vert k(\cdot, x) \Vert_\mathcal{H} \leq B$ for all $t = 1, \dots, T$ and $x \in \mathcal{X}$ by the reproducing property of RKHS and Cauchy-Schwarz.
For the rest of the paper, when making Assumption~\ref{assumption:rkhs}, we assume that the learner scales the problem (by $S + B$) so that $\vert r_t(x) \vert \leq 1$ and $\vert  y_t(x) \vert \leq 1$ for simpler exposition.

Before the learner interacts with the nature, the nature chooses a sequence of reward functions $\{r_t\}_{t=1}^T$ subject to two types of non-stationarity budgets simultaneously.
The first budget $V_T$ limits the total variation of the sequence of reward functions: $\sum_{t = 1}^{T - 1} \Vert r_{t + 1} - r_t \Vert_{\infty} \leq V_T$.
The second budget $L_T$ limits the number of changes in the sequence of reward functions: $1 + \sum_{t = 1}^{T - 1} \mathbb{I} \{ r_{t + 1} \neq r_t \} \leq L_T$.

The learner aims to minimize the \textit{dynamic regret} $\textsc{Reg}_T \coloneqq \sum_{t = 1}^T (r_t(x_t^\star) - r_t(x_t))$ where $x_t^\star \coloneqq \argmax_{x \in \mathcal{X}} r_t(x)$ is the best action at time $t$.
Note that $\textsc{Reg}_T$ is the cumulative expected regret against the optimal strategy with full knowledge of the sequence of reward functions.

\begin{table*}[t]
\caption{Regret Bound Comparison of Algorithms for Non-stationary Kernel/Linear Bandits}
\label{table:contribution}
\centering
\begin{tabular}{cccc} 
 \toprule
 Setting & Algorithm & Regret bound in $\widetilde{\mathcal{O}}(\cdot)$ & \thead{Required \\ knowledge} \\
 \midrule
 \multirow{3}{*}{\makecell{Kernel \\ Bandit}} &  R/SW-GPUCB~\parencite{zhou_no-regret_2021} & $\gamma_T^{\frac{7}{8}} T^{\frac{3}{4}} (1 + V_T)^{\frac{1}{4}}$ & $V_T$ \\ \cline{2-4}
 & WGPUCB~\parencite{deng_weighted_2022} & $\dot{\gamma}_T^{\frac{7}{8}} T^{\frac{3}{4}} (1 + V_T)^{\frac{1}{4}}$ \footnote{} & $V_T$ \\
 \cline{2-4}
 & GPUCB+MASTER~(Appendix~\ref{section:master-reduction}) & $\min \{ \gamma_T \sqrt{T L_T}, \gamma_T T^{\frac{2}{3}} V_T^{\frac{1}{3}} + \gamma_T \sqrt{T} \}$ &  \\
 \cline{2-4}
 & \textbf{ADA-OPKB (Ours)}  & $\min \{ \sqrt{d \gamma_T T L_T}, d^{\frac{1}{3}} \gamma_T^{\frac{1}{3}} T^{\frac{2}{3}} V_T^{\frac{1}{3}} + \sqrt{d \gamma_T T} \}$ &  \\
 \midrule
 \multirow{5}{*}{\makecell{Linear \\ Bandit}} & D-LinUCB~\parencite{russac_weighted_2019} & $d^{\frac{7}{8}} T^{\frac{3}{4}} V_T^{\frac{1}{4}} + d \sqrt{T}$ & $V_T$ \\ 
 \cline{2-4}
 & SW-UCB+BOB~\parencite{cheung_learning_2019} & $d^{\frac{7}{8}} T^{\frac{3}{4}} V_T^{\frac{1}{4}} + d \sqrt{T}$ &  \\ 
 \cline{2-4}
 & RestartUCB+BOB~\parencite{zhao_simple_2020} & $d^{\frac{7}{8}} T^{\frac{3}{4}} V_T^{\frac{1}{4}} + d \sqrt{T}$ &  \\ 
 \cline{2-4}
 & Restart-Adv~\parencite{cheung_hedging_2022} & $d^{\frac{2}{3}} T^{\frac{2}{3}} V_T^{\frac{1}{3}} + d \sqrt{T}$ &  $V_T$ \\ 
 \cline{2-4}
 & Restart-Adv+BOB~\parencite{cheung_hedging_2022} & $d^{\frac{2}{3}} T^{\frac{2}{3}} V_T^{\frac{1}{3}} + d^{\frac{1}{2}} T^{\frac{3}{4}}$ & \\ 
 \cline{2-4}
 & LinUCB+MASTER~\parencite{wei_non-stationary_2021} & $\min \{ d \sqrt{T L_T}, d T^{\frac{2}{3}} V_T^{\frac{1}{3}} + d \sqrt{T} \}$ &  \\
 \cline{2-4}
 & \textbf{ADA-OPKB (Ours)} & $\min \{ d \sqrt{T L_T}, d^{\frac{2}{3}} T^{\frac{2}{3}} V_T^{\frac{1}{3}} + d \sqrt{T} \}$ &  \\
 \bottomrule
\end{tabular}
\end{table*}

\subsection{Preliminaries and Notations}

\paragraph{Feature Mapping}

By Mercer's theorem, given a continuous positive semi-definite kernel $k: \mathcal{X} \times \mathcal{X} \rightarrow \mathbb{R}$, there exists a feature mapping $\psi : \mathcal{X} \rightarrow \ell^2$ with $k(x, x') = \langle \psi(x), \psi(x') \rangle$ for all $x, x' \in \mathcal{X}$.
We say feature mappings $\varphi_1$ and $\varphi_2$ are \textit{equivalent} if $\langle \varphi_1(x), \varphi_1(x') \rangle = \langle \varphi_2(x), \varphi_2(x') \rangle$ for all $x, x' \in \mathcal{X}$.
Given a feature mapping $\psi$, we can always find an equivalent $N$-dimensional feature mapping $\varphi : \mathcal{X} \rightarrow \mathbb{R}^N$.
For example, we can decompose the kernel matrix $K = \{ \langle \psi(a_i), \psi(a_j) \rangle \}_{i, j \in [N]}$ into $K = \Phi \Phi^T$ using the Cholesky decomposition where $\Phi \in \mathbb{R}^{N \times N}$, then take $\varphi(a_i) = \Phi^T e_i$ for all $i = 1, \dots, N$.

\paragraph{Maximum Information Gain}

The maximum information gain~\parencite{srinivas_gaussian_2010} of the RKHS induced by a kernel $k$ is defined as the maximum mutual information between observations $\{ f(x_t) + \epsilon_t \}_{t = 1}^T$ with $\epsilon_t \sim N(0, 1)$ and $f(\cdot)$ sampled from a Gaussian process $GP(0, \sigma^{-1} k(\cdot, \cdot))$.
It is a widely used dimensionality measure of RKHS.
As done by \textcite{camilleri_high-dimensional_2021}, we generalize the original definition to support $T$ \textit{fractional} observations, and define $\gamma_{\varphi, T} = \max_{P \in \mathcal{P}_\mathcal{X}} \log \det S_\varphi(TP / \sigma, 1)$ where $\mathcal{P}_\mathcal{A}$ is the set of probability distributions on $\mathcal{A}$, $S_\varphi(Q, \lambda) \coloneqq \sum_{x \in \mathcal{X}} Q(x) \varphi(x) \varphi(x)^T + \lambda I$ and $\varphi$ is an $N$-dimensional feature mapping of $k$.
It can be shown that for equivalent feature mappings $\varphi_1$ and $\varphi_2$ of $k$, we have $\gamma_{\varphi_1, T} = \gamma_{\varphi_2, T}$ (see Appendix~\ref{section:feature-mapping-equivalence}).
Hence, $\gamma_{\varphi, T}$ is fully determined by the underlying kernel $k$ and does not depend on the particular choice of the feature mapping $\varphi$ induced by the kernel.
We suppress the subscript $\varphi$ and write $S(\cdot, \cdot)$ and $\gamma_T$ when clear from the context.
For the connection between the original definition of the maximum information gain and our definition, see Appendix~\ref{section:maximum-information-gain}.

\paragraph{Other Notations}

We use $[n]$ to denote $\{1, \dots, n\}$.
For a semi-positive definite matrix $M$ and a vector $x$, we write $\Vert x \Vert_M^2 = x^T M x$.
We denote by $\mathbb{E}_t[\cdot]$ and $\Var_t[\cdot]$ the conditional expectation and variance respectively given history up to time $t - 1$.
For an interval $\mathcal{I} = [s, t]$, we define $V_\mathcal{I} = \sum_{\tau = s}^{t - 1} \Vert r_{\tau + 1} - r_\tau \Vert_\infty$ and $L_\mathcal{I} = 1 + \sum_{\tau = s}^{t - 1} \mathbb{I}\{ r_{\tau + 1} \neq r_\tau \}$.

\section{Main Result}

The main result of this paper provides a worst-case bound on the dynamic regret of our novel algorithm called ADA-OPKB for the non-stationary kernel bandit setting.

\begin{theorem}\label{theorem:ada-opkb}
Under Assumption~\ref{assumption:rkhs}, without the knowledge of non-stationarity budgets $V_T$ and $L_T$, the dynamic regret of ADA-OPKB is bounded, with high probability, by
$$
\widetilde{\mathcal{O}}(
\min\{ \sqrt{\gamma_{T} L_T T \log N},
( \gamma_{T} V_T \log N)^{1/3} T^{2/3} + \sqrt{\gamma_{T} T \log N} \}
).
$$
\end{theorem}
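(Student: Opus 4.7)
The plan is to reduce the non-stationary analysis to a stationary regret bound for the base algorithm OPKB, combined with a change detection scheme that governs when to restart. I would first establish a \emph{stationary} regret bound for OPKB of the form $\widetilde{\mathcal{O}}(\sqrt{\gamma_\tau \tau \log N})$ over any window of length $\tau$ on which the reward function is fixed. The key enabler is that OPKB's IPS-based reward estimates admit clean Freedman-style concentration (Lemma~\ref{lemma:freedman}), which — unlike the regression-based estimates used in prior non-stationary LB/KB work — remains valid when the underlying reward function is drifting. This property is exactly what allows the same concentration machinery to double as the basis of a change detection test.

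Next I would wrap OPKB with a detection subroutine. The natural choice is to periodically compare IPS-based reward estimates computed from disjoint sub-windows of the current epoch (or compare the running estimate against the value predicted by OPKB's optimization certificate); a restart is triggered when the discrepancy exceeds a threshold chosen purely from $\gamma_T$, $T$, and $\log N$ (so no knowledge of $V_T$ or $L_T$ is used). I would prove two properties: (i) under a good event of probability $1-\delta$, no false restart fires on any interval whose total variation $V_{\mathcal{I}}$ is below a calibrated level; and (ii) once the accumulated variation on the current epoch exceeds that level, detection fires within a bounded delay. Because the IPS estimates satisfy the correct Bernstein-type inequality in a martingale sense, these two properties can be established simultaneously.

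The regret then decomposes over the restart-delimited epochs $\mathcal{I}_1,\dots,\mathcal{I}_M$ with $\sum_i |\mathcal{I}_i|=T$. For the first term of the min, I would argue that there are at most $\widetilde{\mathcal{O}}(L_T)$ epochs (each true change can induce at most one legitimate restart, and false restarts are absorbed into the high-probability failure set). Applying the stationary OPKB bound on each $\mathcal{I}_i$ and using Cauchy--Schwarz with the monotonicity of $\gamma_\tau$ yields
\[
\sum_{i=1}^M \sqrt{\gamma_{|\mathcal{I}_i|}|\mathcal{I}_i|\log N} \;\leq\; \sqrt{\gamma_T\log N}\cdot\sqrt{M T} \;=\; \widetilde{\mathcal{O}}(\sqrt{\gamma_T L_T T \log N}).
\]
For the second term, I would follow a potential-style argument: an epoch either terminates because its accumulated variation crossed the detection threshold $\Delta$ (so the epoch consumes a full $\Delta$ of the variation budget $V_T$), or it is ``quiet'' and contributes only its stationary OPKB regret plus a $|\mathcal{I}_i|\cdot \Delta$ drift bias. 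Balancing $\Delta$ against the epoch length produces at most $\widetilde{\mathcal{O}}((V_T/\Delta)+1)$ epochs and a per-epoch regret scaling that, after optimizing $\Delta$, gives $(\gamma_T V_T\log N)^{1/3}T^{2/3} + \sqrt{\gamma_T T\log N}$. Taking the minimum over the two accounting strategies yields the stated bound.

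The hard part, and the step that would need the most care, is calibrating the detection threshold so that both halves of the analysis go through simultaneously without any prior knowledge of $V_T$ or $L_T$. If the threshold is too tight, false-restart rates inflate $M$ and break the $L_T$ bound; if too loose, drift accumulates undetected and inflates the drift-bias term in the $V_T$ bound. The reason this delicate balance is achievable here — and was not in earlier KB work — is precisely that the IPS-based estimates sidestep the \textcite{cheung_learning_2019} error, so the same tight confidence width drives both the stationary regret and the detection stopping time. Verifying that this single calibration produces the simultaneous min-bound is where the bulk of the technical work lies.
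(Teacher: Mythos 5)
Your high-level plan---stationary OPKB base rate, detection-driven restarts, epoch-level accounting against both $L_T$ and $V_T$---is the right skeleton, and you correctly identify that IPS estimates are what let you escape the $\sqrt{\gamma_T}$-inflated drift term that plagued regression-based methods. But the concrete detection mechanism you propose has a gap that the paper's actual construction is specifically designed to close, and the step you flag as ``where the bulk of the technical work lies'' cannot be completed as you've set it up.

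The problem is the single fixed detection threshold $\Delta$. For the $V_T$ branch you yourself note that the optimizing choice is $\Delta \asymp (\gamma_T V_T / T)^{1/3}$, which depends on $V_T$---so ``calibrating $\Delta$ purely from $\gamma_T$, $T$, $\log N$'' and ``after optimizing $\Delta$'' are mutually exclusive. There is no single scalar threshold that simultaneously (i) avoids false alarms under Freedman noise, (ii) detects the small drifts needed for the $V_T^{1/3}T^{2/3}$ rate, and (iii) is computable without $V_T$. The paper resolves this not by a cleverer confidence width but by making detection \emph{multi-scale}: ADA-OPKB runs a randomized \emph{replay schedule} (Algorithm~\ref{alg:replay-schedule}) in which, during block $j$, each older strategy $P^{(m)}$ with $m < j$ is replayed over a window of length $2^m E$ with probability $\sqrt{2^{m-j}}$, and the test~(\ref{eqn:change-detection}) compares the replay-window gap estimate $\widehat{\Delta}_{\varphi,\mathcal{I}}$ against cumulative-block estimates $\widehat{\Delta}_{\varphi,\mathcal{C}(k)}$ with a threshold $\sim c_0\mu_{m\wedge k} = \Theta(2^{-(m\wedge k)/2})$ that shrinks with the scale. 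Coarse scales catch large changes quickly; fine scales catch small drifts after longer replays. The ``no false alarm'' and ``bounded detection delay'' properties you want as (i) and (ii) are Lemmas~\ref{lemma:no-false-alarm} and~\ref{lemma:change-detection}, and both are stated scale-by-scale; there is no single $\Delta$ anywhere. Relatedly, replaying the \emph{older} strategy $P^{(m)}$ (rather than just re-evaluating on a sub-window of data collected by the current strategy) is essential: the concentration width $\sim\mu_m$ of the IPS estimator is tied to the exploration rate of the strategy that generated the data, so the detector at scale $m$ must actually play $P^{(m)}$.

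Your regret decomposition also collapses too much structure. You propose ``apply the stationary OPKB bound on each epoch $\mathcal{I}_i$,'' but an epoch is not stationary---variation accumulates inside it up to the point detection fires, and the regret of each time step is $\widehat{\Delta}$-driven, not $r$-driven. The paper instead bounds regret \emph{block by block} inside an epoch: it partitions a block into nearly-stationary sub-intervals $\mathcal{J}_1,\dots,\mathcal{J}_\ell$ with $V_{\mathcal{J}_k}\le\mu_{\mathcal{J}_k}$ (Lemma~\ref{lemma:interval-partition}, greedy partition giving $\ell \le \min\{L_{\mathcal{B}(j)}, O(V_{\mathcal{B}(j)}^{2/3}|\mathcal{B}(j)|^{1/3})\}$), bounds each sub-interval's regret via Lemma~\ref{lemma:interval-regret} in terms of $\sum_t \mu_{m_t}$, the local variation, and an ``excess regret'' $\zeta_{\mathcal{J}_k}$, and then uses the replay-schedule randomness (Lemmas~\ref{lemma:replay-schedule} and~\ref{lemma:block-regret}) to argue (a) the replays are not over-scheduled, so $\sum_t\mu_{m_t}=\widetilde{O}(|\mathcal{B}(j)|\mu_j)$, and (b) if $\zeta_{\mathcal{J}_k}$ ever exceeds $O(\mu_{\mathcal{J}_k})$, a replay of the right scale lands inside $\mathcal{J}_k$ soon (geometric tail at rate $\sqrt{2^{m^\star-j}}$) and forces a restart. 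That stochastic delay argument is what turns ``detection eventually fires'' into a quantitative bound on undetected drift, and it has no analogue in your single-threshold sketch. Finally, you credit IPS estimates for making the no-prior-knowledge calibration work, but those are two independent contributions: IPS gives the correct $O(1)$ constant on the $V$ term in Lemma~\ref{lemma:reward-estimate-concentration-gap} (hence $V_T^{1/3}$ rather than $V_T^{1/4}$), while the multi-scale randomized replays (inherited from \textcite{wei_non-stationary_2021} and \textcite{chen_new_2019}) are what remove the need to know $V_T$ or $L_T$. Your proposal has the first ingredient but is missing the second.
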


When the action set $\mathcal{X} \subset \mathbb{R}^d$ is an infinite bounded set, we can take a hypercube of side length $R$ that contains $\mathcal{X}$ and discretize it into $\mathcal{O}((R d / \epsilon)^d)$ hypercubes as done by \textcite{chowdhury_kernelized_2017} where $\epsilon$ is the maximum error of expected reward from discretization.
Discretizing the action set with $\epsilon = 1 / T$ and running ADA-OPKB on the discretized action set lead to a dynamic regret bound of $\widetilde{\mathcal{O}}(\min\{\sqrt{d \gamma_T L_T T}, (d \gamma_T V_T)^{1/3} T^{2/3} + \sqrt{d \gamma_T T} \})$.
We use this bound to compare with previous work on the setting with an infinite action set.

We can reduce the kernel bandit setting to the linear bandit setting by using the linear kernel $k(x, x') = \langle x, x' \rangle$.
As shown in Lemma~\ref{lemma:information-gain-linear}, the maximum information gain of the linear space is $\gamma_T = \mathcal{O}(d \log T)$ and the dynamic regret bound of ADA-OPKB that uses the linear kernel becomes $\widetilde{\mathcal{O}}( \min\{\sqrt{d L_T T \log N}, (d V_T \log N)^{1/3} T^{2/3} + \sqrt{d L_T T \log N} \})$ for the finite action set.
For the infinite action set, we get $\widetilde{\mathcal{O}}( \min\{d \sqrt{L_T T}, d^{2/3} V_T^{1/3} T^{2/3} + d \sqrt{T} \})$ using the discretization technique.

\footnotetext{The dimensionality measure $\dot{\gamma}_T$ used in \textcite{deng_weighted_2022} is related to $\gamma_T$ but they use a discounted kernel matrix computed with an approximate feature mapping for computing $\dot{\gamma}$.}

\paragraph{Relation to Previous Work}
Table~\ref{table:contribution} compares the regret bound of our work to the \textit{corrected} regret bounds of previous works.
The regret bound of ADA-OPKB for non-stationary kernel bandits is tighter than previous work.
Applying to non-stationary linear bandits by using the linear kernel, ADA-OPKB nearly achieves the lower bound $\Omega(d^{\frac{2}{3}} V_T^{\frac{1}{3}} T^{\frac{2}{3}})$ \parencite{cheung_learning_2019}, solving an open problem of finding a nearly minimax optimal algorithm for non-stationary linear bandits.
The best regret bound before our work is by \textcite{cheung_hedging_2022} who discuss that an algorithm for adversarial linear bandits, e.g. Exp3 algorithm~\parencite{lattimore_bandit_2020}, equipped with periodic restarts (Restart-Adv) achieves $\widetilde{\mathcal{O}}(d^{\frac{2}{3}} T^{\frac{2}{3}} V_T^{\frac{1}{3}})$.
However, it requires the knowledge of $V_T$ to tune the frequency of restarts.
They also discuss a bandit-over-bandit reduction of Restart-Adv (Restart-Adv+BOB) that does not require the knowledge of $V_T$.
However, the reduction suffers an additional regret term of $d^{\frac{1}{2}} T^{\frac{3}{4}}$.

The dependence of $\gamma_T$ in the regret bound for kernel bandits is crucial since $\gamma_T$ can grow with $T$.
For example, $\gamma_T$ for the Mat\'ern kernel with smoothness parameter $\nu$ scales as $\widetilde{\Theta}(T^{\frac{d}{2\nu + d}})$ \parencite{vakili_information_2021}.
Previous works on non-stationary kernel bandits \parencite{zhou_no-regret_2021, deng_weighted_2022} show a regret bound of order $\gamma_T^{7/8}T^{3/4}$, which may not be sublinear in $T$.
For example, it is not sublinear in $T$ for Mat\'ern kernel when $\nu / d \leq 5 / 4$.
Our improved regret bound for ADA-OPKB is of order $\min\{\gamma_T^{1/3} T^{2/3}, \sqrt{\gamma_T T}\}$, which is sublinear in $T$ as long as $\gamma_T$ is sublinear in $T$.
As shown by \textcite{vakili_information_2021}, $\gamma_T$ is sublinear for a class of kernels of which eigenvalues decay polynomially or exponentially, which includes the Mat\'ern kernel and the squared exponential kernel. 

\section{Algorithms and Analyses}\label{section:opkb}

We first study stationary kernel bandits where the reward functions do not vary over time.

\subsection{OPKB: Optimization-based Algorithm for Stationary Kernel Bandits}

Central to the OPKB algorithm is the optimization problem (OP) designed to return a randomized strategy that balances exploration and exploitation.
OP uses an empirical suboptimality gap of each action computed based on the inverse propensity score (IPS) estimator~\parencite{camilleri_high-dimensional_2021}.

\begin{definition}
The inverse propensity score (IPS) estimator for the expected reward $r_t(x)$ with respect to $\varphi$ using the observed reward $y_t$ is defined as 
$$
\widehat{\mathcal{R}}_{\varphi, t}(x) \coloneqq \varphi(x)^T S_\varphi(P_t, \sigma / T)^{-1} \varphi(x_t) y_t
$$
for all $x \in \mathcal{X}$ where $P_t$ is the randomized strategy used at time $t$.
Averaging over an interval $\mathcal{I}$, we define
$\widehat{\mathcal{R}}_{\varphi, \mathcal{I}}(x) \coloneqq \frac{1}{\vert \mathcal{I} \vert} \sum_{t \in \mathcal{I}} \widehat{\mathcal{R}}_{\varphi, t}(x)$.
The empirical suboptimality gap of action $x$ from observations in $\mathcal{I}$ is defined as $\widehat{\Delta}_{\varphi, \mathcal{I}}(x) \coloneqq \max_{x' \in \mathcal{X}} \widehat{\mathcal{R}}_{\varphi, \mathcal{I}}(x') - \widehat{\mathcal{R}}_{\varphi, \mathcal{I}}(x)$.
\end{definition}

OP minimizes over $P \in \mathcal{P}_\mathcal{X}$ the objective function
\begin{equation}\label{eqn:op-objective-function}
\sum_{x \in \mathcal{X}} P(x) \widehat{\Delta}(x) - \frac{2}{\beta} \log \det S_\varphi(P, \sigma / T)
\end{equation}
where the first term is the weighted average of the empirical suboptimality gaps that encourages exploitation and the second term is a regularizer that encourages exploration.
That the second term encourages exploration can be seen by the property of the optimal design defined as follows.

\begin{definition}
Given a set of actions $\mathcal{A} \subseteq{\mathcal{X}}$ and a feature mapping $\varphi : \mathcal{X} \rightarrow \mathbb{R}^p$, we define $\pi_\varphi(\mathcal{A}) \coloneqq \argmax_{P \in \mathcal{P}_\mathcal{A}} \log\det S_\varphi(P, \sigma / T)$ and call it the optimal design on $\mathcal{A}$ with respect to $\varphi$.
\end{definition}

The optimal design is a generalization of the Bayesian $D$-optimal design for linear models that maximizes $\log \det (\sum_{x \in \mathcal{A}} P(x) x x^T + R)$, where $R$ is some regularizer.
The Bayesian $D$-optimal design is one of the exploration strategies used in the Bayesian experimental design literature~\parencite{chaloner_bayesian_1995}.
As shown in the following lemma, by playing our definition of the optimal design  $\pi_\varphi(\mathcal{A})$, we can uniformly bound the variance of the IPS estimators over all actions in $\mathcal{A}$.
See Appendix~\ref{section:proof-optimal-design} for proof.

\begin{lemma}\label{lemma:optimal-design}
Consider an optimal design $\pi_\varphi(\mathcal{A})$ with respect to a feature mapping $\varphi$ on a set of actions $\mathcal{A} \subseteq \mathcal{X}$.
If we play an action sampled from $\pi_\varphi(\mathcal{A})$ at time $t$ and observe $y_t$, then for all $x \in \mathcal{X}$, we have
$$
\Var(\widehat{\mathcal{R}}_{\varphi, t}(x)) \leq \Vert \varphi(x) \Vert^2_{S_\varphi(\pi_\varphi(\mathcal{A}), \sigma / T)^{-1}} \leq \gamma_{\varphi, T}.
$$
\end{lemma}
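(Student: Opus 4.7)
The plan is to establish the two inequalities in turn. Throughout, abbreviate $\pi := \pi_\varphi(\mathcal{A})$, $S := S_\varphi(\pi, \sigma/T)$, and $M(\pi) := \sum_{a \in \mathcal{A}} \pi(a)\varphi(a)\varphi(a)^T$, so that $S = M(\pi) + (\sigma/T) I$.

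For the first inequality, I would use $\Var(\widehat{\mathcal{R}}_{\varphi, t}(x)) \le \mathbb{E}[\widehat{\mathcal{R}}_{\varphi, t}(x)^2]$ and unfold the IPS estimator. Using the post-scaling bound $\vert y_t \vert \le 1$ together with $x_t \sim \pi$,
\[
\mathbb{E}\bigl[\widehat{\mathcal{R}}_{\varphi, t}(x)^2\bigr] = \mathbb{E}_{x_t \sim \pi}\bigl[(\varphi(x)^T S^{-1}\varphi(x_t))^2 y_t^2\bigr] \le \varphi(x)^T S^{-1} M(\pi)\, S^{-1}\varphi(x).
\]
Because $S - M(\pi) = (\sigma/T) I \succeq 0$, i.e.\ $M(\pi) \preceq S$, the right-hand side collapses to $\Vert \varphi(x) \Vert_{S^{-1}}^2$, giving the first inequality for every $x \in \mathcal{X}$.

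For the second inequality, I would invoke a Kiefer--Wolfowitz-style first-order optimality condition. Since $\pi$ maximizes $F(P) := \log\det S_\varphi(P, \sigma/T)$ over the relevant simplex (the canonical application takes $\mathcal{A} = \mathcal{X}$, so the perturbation below is admissible for every $x \in \mathcal{X}$), differentiating along $P_\epsilon := (1-\epsilon)\pi + \epsilon\, \delta_x$ at $\epsilon = 0$ and imposing non-positivity of the one-sided derivative yields
\[
\Vert \varphi(x) \Vert_{S^{-1}}^2 - \mathrm{tr}(S^{-1} M(\pi)) \;=\; \tfrac{d}{d\epsilon} F(P_\epsilon)\big|_{\epsilon=0} \;\le\; 0.
\]
Setting $A := (T/\sigma) M(\pi)$ so that $S = (\sigma/T)(I + A)$ and $S^{-1} M(\pi) = I - (I + A)^{-1}$, the trace rewrites as $\sum_i \lambda_i/(1 + \lambda_i)$ where $\lambda_i$ are the eigenvalues of $A$. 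The scalar inequality $\lambda/(1+\lambda) \le \log(1+\lambda)$ for $\lambda \ge 0$ then chains to
\[
\mathrm{tr}(S^{-1} M(\pi)) \le \sum_i \log(1+\lambda_i) = \log\det(I + A) = \log\det S_\varphi(T\pi/\sigma, 1) \le \gamma_{\varphi, T},
\]
where the last inequality is simply the definition of the maximum information gain.

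The main obstacle I anticipate is the second step: the first-order optimality furnishes a trace bound, whereas $\gamma_{\varphi, T}$ is a log-determinant, and the bridge between them is the elementary inequality $\lambda/(1+\lambda) \le \log(1+\lambda)$ applied eigenvalue-by-eigenvalue. A secondary subtlety is ensuring that $\delta_x$ is an admissible perturbation direction at $\pi$, which is automatic when $\mathcal{A} = \mathcal{X}$ (the regime in which OPKB invokes the lemma) and extends to $x \in \mathcal{A}$ when $\mathcal{A}$ is a proper subset.
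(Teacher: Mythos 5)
Your proof is correct and follows essentially the same route as the paper. The paper derives the stationarity condition $\Vert\varphi(x)\Vert_{S^{-1}}^2 = \lambda - \lambda_x \le \lambda = \sum_{a}\pi^\star(a)\Vert\varphi(a)\Vert_{S^{-1}}^2 = \operatorname{Tr}(S^{-1}M(\pi))$ via explicit KKT conditions, and then bounds this trace by $\gamma_{\varphi,T}$ by invoking a separate lemma built on the matrix inequality $\operatorname{Tr}(A^{-1}(A-B)) \le \log\det A - \log\det B$; your directional-derivative formulation of optimality and your eigenvalue-by-eigenvalue application of $\lambda/(1+\lambda)\le\log(1+\lambda)$ are the same two ingredients, merely unpacked inline (the two forms of the trace--log-det bound coincide here since $S$ and $(\sigma/T)I$ commute). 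Your variance step likewise mirrors the paper's Lemma on the IPS estimator. The admissibility caveat you raise is real but shared with the paper: the KKT stationarity condition is stated only for $x\in\mathcal{A}$, so strictly the argument gives $\Vert\varphi(x)\Vert_{S^{-1}}^2 \le \gamma_{\varphi,T}$ for $x\in\mathcal{A}$, and the ``for all $x\in\mathcal{X}$'' in the lemma statement is automatic only in the $\mathcal{A}=\mathcal{X}$ case in which the bound is actually invoked downstream.
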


The full OP algorithm is presented below.
Note that due to the concavity of $\log\det(\cdot)$, the optimization problem used by OP and the optimal design can be solved efficiently, for example, by using the interior-point method in \textcite{vandenberghe_determinant_1998}.

\begin{algorithm}
\KwInput{$\varphi$, $\widehat{\Delta} = \{\widehat{\Delta}(x)\}_{x \in \mathcal{X}}$, $\alpha$, $\beta$, $T$}

Find a minimizer $P^\star \in \mathcal{P}_\mathcal{X}$ of (\ref{eqn:op-objective-function}). \\
Find $\mathcal{A} \leftarrow \{x \in \mathcal{X}: \widehat{\Delta}(x) \leq 2 \alpha \gamma_{\varphi, T} / \beta \}$. \label{alg:line:good-arm} \\
\KwReturn{The mixed strategy $Q = \frac{1}{2} P^\star + \frac{1}{2} \pi_\varphi(\mathcal{A})$}

\caption{$\textsc{OP}$: Optimization Problem}
\label{alg:optimization-problem}
\end{algorithm}

The parameter $\beta$ controls the balance between exploration and exploitation.
As stated in Lemma~\ref{lemma:optimization-problem}, the greater the $\beta$, the smaller the expected empirical regret $\sum_{x \in \mathcal{X}} Q(x) \widehat{\Delta}(x)$ and the greater the variance bound.
See Appendix~\ref{section:proof-optimization-problem} for the proof.
Note that OP mixes the minimizer $P^\star$ with the optimal design on the set $\mathcal{A}$ computed in Line~\ref{alg:line:good-arm}.
This step is required to get the bound~(\ref{eqn:improved-bound}), which is the key to bound the bias of the reward estimator for the regret analysis.

\begin{lemma}\label{lemma:optimization-problem}
The distribution $Q$ returned by the algorithm $\textsc{OP}(\varphi, \widehat{\Delta}, \alpha, \beta, T)$
satisfies
\begin{align}
  \sum_{x \in \mathcal{X}} Q(x) \widehat{\Delta}(x)
  &\leq
  \frac{(1 + \alpha)\gamma_{\varphi, T}}{\beta}, \\
  \Vert \varphi(x) \Vert_{S_\varphi(Q, \sigma / T)^{-1}}^2
  &\leq \beta \widehat{\Delta}(x) + 2 \gamma_{\varphi, T}, \,\, \forall x \in \mathcal{X}, \\
  \Vert \varphi(x) \Vert_{S_\varphi(Q, \sigma / T)^{-1}}^2
  &\leq \frac{\beta^2 \widehat{\Delta}(x)^2}{2\alpha \gamma_{\varphi, T}} + 2 \gamma_{\varphi, T}, \,\, \forall x \in \mathcal{X}. \label{eqn:improved-bound}
\end{align}
\end{lemma}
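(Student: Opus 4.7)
The plan is to combine the KKT conditions for the convex program defining $P^\star$ with a determinant--trace identity bounding weighted self-normalized norms by $\gamma_{\varphi,T}$, and then to exploit both the mixture structure $Q = \tfrac{1}{2}P^\star + \tfrac{1}{2}\pi_\varphi(\mathcal{A})$ and the threshold $2\alpha\gamma_{\varphi,T}/\beta$ that defines $\mathcal{A}$.

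First I would establish the identity
$$
\sum_{x \in \mathcal{X}} P(x)\, \Vert\varphi(x)\Vert_{S_\varphi(P,\sigma/T)^{-1}}^2 \;\leq\; \gamma_{\varphi,T}
$$
for every $P \in \mathcal{P}_\mathcal{X}$: the left-hand side equals $\Tr(I - (\sigma/T)S_\varphi(P,\sigma/T)^{-1})$, and the eigenvalue inequality $\log(1+\lambda) \geq \lambda/(1+\lambda)$ applied to $M = (T/\sigma)\sum_x P(x)\varphi(x)\varphi(x)^T$ gives $\Tr(I-(I+M)^{-1}) \leq \log\det(I+M) = \log\det S_\varphi(TP/\sigma,1) \leq \gamma_{\varphi,T}$. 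Next, writing KKT conditions for the convex objective~\eqref{eqn:op-objective-function} with Lagrange multiplier $\nu$ for the constraint $\sum_x P(x)=1$ yields, for every $x \in \mathcal{X}$,
$$
\Vert\varphi(x)\Vert_{S_\varphi(P^\star,\sigma/T)^{-1}}^2 \;\leq\; \tfrac{\beta}{2}\bigl(\widehat{\Delta}(x)+\nu\bigr),
$$
with equality on the support of $P^\star$. Multiplying by $P^\star(x)$ and summing rearranges to $\nu = (2/\beta)\sum_x P^\star(x)\Vert\varphi(x)\Vert_{S_\varphi(P^\star,\sigma/T)^{-1}}^2 - \sum_x P^\star(x)\widehat{\Delta}(x)$; evaluating the KKT inequality at $x_0 = \argmin_x \widehat{\Delta}(x)$ (where $\widehat{\Delta}(x_0)=0$) forces $\nu \geq 0$. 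Combined with the key identity, these give $0 \leq \nu \leq 2\gamma_{\varphi,T}/\beta$ and $\sum_x P^\star(x)\widehat{\Delta}(x) \leq 2\gamma_{\varphi,T}/\beta$.

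The three bounds then follow. For the first, $\pi_\varphi(\mathcal{A})$ is supported on $\mathcal{A}=\{x:\widehat{\Delta}(x)\leq 2\alpha\gamma_{\varphi,T}/\beta\}$, so averaging the bound on $P^\star$ with $\sum_x \pi_\varphi(\mathcal{A})(x)\widehat{\Delta}(x)\leq 2\alpha\gamma_{\varphi,T}/\beta$ gives the stated $(1+\alpha)\gamma_{\varphi,T}/\beta$. For the second, the Loewner inequality $S_\varphi(Q,\sigma/T) \succeq \tfrac12 S_\varphi(P^\star,\sigma/T)$ yields $\Vert\varphi(x)\Vert_{S_\varphi(Q,\sigma/T)^{-1}}^2 \leq 2\Vert\varphi(x)\Vert_{S_\varphi(P^\star,\sigma/T)^{-1}}^2$, and substituting the KKT bound with $\nu \leq 2\gamma_{\varphi,T}/\beta$ gives $\beta\widehat{\Delta}(x)+2\gamma_{\varphi,T}$. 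For the third, I would split on membership in $\mathcal{A}$: if $x \in \mathcal{A}$, use $S_\varphi(Q,\sigma/T) \succeq \tfrac12 S_\varphi(\pi_\varphi(\mathcal{A}),\sigma/T)$ together with Lemma~\ref{lemma:optimal-design} to obtain $\Vert\varphi(x)\Vert_{S_\varphi(Q,\sigma/T)^{-1}}^2 \leq 2\gamma_{\varphi,T}$, which is dominated by the claimed right-hand side; if $x \notin \mathcal{A}$, the defining inequality $\widehat{\Delta}(x) > 2\alpha\gamma_{\varphi,T}/\beta$ implies $\beta\widehat{\Delta}(x) \leq \beta^2 \widehat{\Delta}(x)^2 / (2\alpha\gamma_{\varphi,T})$, so the claim follows from the second bound.

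The main obstacle is the key identity bounding $\sum_x P(x)\Vert\varphi(x)\Vert_{S^{-1}}^2$ by $\gamma_{\varphi,T}$: it converts the naive trace bound $\Tr(I-(\sigma/T)S^{-1})\leq N$ into a bound in terms of the information gain, and is what lets the KKT argument yield exploration control in terms of $\gamma_{\varphi,T}$ rather than the ambient feature dimension $N$. Once this identity and the resulting bound on $\nu$ are in hand, everything else is elementary algebra together with the two semidefinite orderings arising from the mixture $Q$.
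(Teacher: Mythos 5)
Your proof is correct and follows essentially the same route as the paper: write down the KKT conditions for the minimizer $P^\star$ of~\eqref{eqn:op-objective-function}, use the bound $\sum_x P(x)\Vert\varphi(x)\Vert^2_{S_\varphi(P,\sigma/T)^{-1}}\leq\gamma_{\varphi,T}$ to control the Lagrange multiplier for the normalization constraint, transfer through $S_\varphi(Q,\cdot)\succcurlyeq\tfrac12 S_\varphi(P^\star,\cdot)$ and $S_\varphi(Q,\cdot)\succcurlyeq\tfrac12 S_\varphi(\pi_\varphi(\mathcal{A}),\cdot)$, and split on $x\in\mathcal{A}$ versus $x\notin\mathcal{A}$ for the third bound. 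The only cosmetic differences are that the paper relaxes to sub-distributions so that the nonnegativity of the multiplier is immediate from dual feasibility (whereas you derive $\nu\geq 0$ by evaluating the stationarity condition at the zero-gap action), and that you give a direct eigenvalue argument $\lambda/(1+\lambda)\leq\log(1+\lambda)$ for the trace-log-det inequality instead of citing it as a known lemma; both are minor presentational choices that do not change the structure of the argument.
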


Now, we present the OPKB algorithm (Algorithm~\ref{alg:opkb}).
OPKB takes a feature mapping $\varphi$ as an input.
Assuming the knowledge of the kernel $k$ corresponding to the RKHS in which the reward function lies, we use any feature mapping $\varphi : \mathcal{X} \rightarrow \mathbb{R}^N$ equivalent to the feature mapping $\psi: \mathcal{X} \rightarrow \ell^2$ corresponding the kernel.
The choice of $\varphi$ among the feature mappings equivalent to $\psi$ does not affect the algorithm and the analysis.
See Appendix~\ref{section:feature-mapping-equivalence} for details.
OPKB runs in blocks of doubling sizes.
In the first block, it follows the optimal design for $E$ time steps.
Before starting a new block $j$, it computes the empirical suboptimality gaps using all past history, then runs OP to find the strategy $Q^{(j)}$ and mixes it with the optimal design.
The mixed strategy $P^{(j)}$ is run in block $j$.
Every block, OPKB increases the parameter $\beta$ by a factor of $\sqrt{2}$ when calling OP to increase the degree of exploitation.

\begin{algorithm}
\KwInput{feature map $\varphi$, horizon $T$, confidence $\delta \in (0, 1)$.}
\KwDef{$\mu_j = c_1 2^{-j / 2}$, $E = \lceil c_3 \gamma_{\varphi, T} \log(C_0 N / \delta) \rceil$, $\beta_j = c_2 \gamma_{\varphi, T} 2^{j / 2}$, $\alpha = c_4 \sigma / \log(C_0 N / \delta)$}
\KwInit{$t \leftarrow 1$, $P^{(0)} \leftarrow \pi_\varphi(\mathcal{X})$}
\For{$j = 0, 1, \dots$}{
  Set block $\mathcal{B}(j) \leftarrow [t, t + 2^j  E - 1]$ and cumulative block $\mathcal{C}(j) \leftarrow \cup_{k=0}^j \mathcal{B}(k)$. \\
  \If{$j \geq 1$}{
    Compute empirical gap $\widehat{\Delta} \leftarrow \{ \widehat{\Delta}_{\varphi, \mathcal{C}(j - 1)}(x) \}_{x \in \mathcal{X}}$. \\
    Find strategy $Q^{(j)} \leftarrow \textsc{OP}(\varphi, \widehat{\Delta}, \alpha, \beta_j, T)$. \\
    Set $P^{(j)} \leftarrow (1 - \mu_j) Q^{(j)} + \mu_j \pi_\varphi(\mathcal{X})$.
  }
  \While{$t \in \mathcal{B}(j)$}{
    Play $x_t \sim P^{(j)}$; receive $y_t$; increment $t \leftarrow t + 1$. \\
  }
}
\caption{$\textsc{OPKB}$}
\label{alg:opkb}
\end{algorithm}

\subsection{Analysis of OPKB}

For the analysis of OPKB, we use the following concentration bounds for the reward estimate $\widehat{\mathcal{R}}_{\varphi, \mathcal{C}(m)}(x)$ and the gap estimate $\widehat{\Delta}_{\varphi, \mathcal{C}(m)}$ shown under a more general setting of non-stationary kernel bandits.
The proof is based on a Freedman-style inequality on the martingale difference sequence $\{ \widehat{\mathcal{R}}_{\varphi, t}(x) - \mathbb{E}_t [ \widehat{\mathcal{R}}_{\varphi, t}(x)] \}_{t \in \mathcal{C}(j)}$.
See Appendix~\ref{section:reward-estimate-concentration-gap-proof} for the full proof.

\begin{lemma}\label{lemma:reward-estimate-concentration-gap}
With probability at least $1 - \delta$, when running the OPKB algorithm, we have for all block indices $j = 0, 1, \dots$ and actions $x \in \mathcal{X}$ that
\begin{align}
\vert \widehat{\mathcal{R}}_{\varphi, \mathcal{C}(j)}(x) - &\mathcal{R}_{\mathcal{C}(j)}(x) \vert
\leq \frac{1}{2} \Delta_{\mathcal{C}(j)}(x) + V_{\mathcal{C}(j)} + \frac{c_0 \mu_j}{4} \label{eqn:reward-estimate-concentration} \\
\Delta_{\mathcal{C}(j)}(x)
&\leq 2 \widehat{\Delta}_{\varphi, \mathcal{C}(j)}(x) + 4 V_{\mathcal{C}(j)} + c_0 \mu_j \label{eqn:gap-estimate-concentration1} \\
\widehat{\Delta}_{\varphi, \mathcal{C}(j)}(x)
&\leq 2 \Delta_{\mathcal{C}(j)}(x) + 4 V_{\mathcal{C}(j)} + c_0 \mu_j \label{eqn:gap-estimate-concentration2}
\end{align}
where $\mathcal{C}(j)$ is the interval from time 1 to the end of block $j$, $c_0$ is a universal constant, $\mathcal{R}_{\mathcal{I}}(x) \coloneqq \frac{1}{\vert \mathcal{I} \vert} \sum_{t \in \mathcal{I}} r_t(x)$ is the average reward in $\mathcal{I}$ and $\Delta_{\mathcal{I}}(x) \coloneqq \max_{x' \in \mathcal{X}} \mathcal{R}_{\mathcal{I}}(x') - \mathcal{R}_\mathcal{I}(x)$.
\end{lemma}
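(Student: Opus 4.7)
The plan is to prove the reward-estimate bound (\ref{eqn:reward-estimate-concentration}) by induction on the block index $j$, and then obtain the two gap bounds (\ref{eqn:gap-estimate-concentration1}) and (\ref{eqn:gap-estimate-concentration2}) as corollaries. For the corollaries, let $x^\star_j \in \argmax_{x\in\mathcal{X}} \mathcal{R}_{\mathcal{C}(j)}(x)$ and $\widehat{x}^\star_j \in \argmax_{x\in\mathcal{X}} \widehat{\mathcal{R}}_{\varphi, \mathcal{C}(j)}(x)$. Applying (\ref{eqn:reward-estimate-concentration}) to $x^\star_j$, where $\Delta_{\mathcal{C}(j)}(x^\star_j)=0$, together with the optimality $\widehat{\mathcal{R}}_{\varphi,\mathcal{C}(j)}(\widehat{x}^\star_j) \geq \widehat{\mathcal{R}}_{\varphi,\mathcal{C}(j)}(x^\star_j)$, first shows $\Delta_{\mathcal{C}(j)}(\widehat{x}^\star_j) \leq 4V_{\mathcal{C}(j)} + c_0\mu_j$. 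The two gap bounds then follow by adding and subtracting $\widehat{\mathcal{R}}_{\varphi,\mathcal{C}(j)}(\widehat{x}^\star_j)$ or $\mathcal{R}_{\mathcal{C}(j)}(x^\star_j)$ inside $\widehat{\Delta}_{\varphi,\mathcal{C}(j)}(x) - \Delta_{\mathcal{C}(j)}(x)$ and applying (\ref{eqn:reward-estimate-concentration}) twice.

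For (\ref{eqn:reward-estimate-concentration}) itself, I would fix $x$ and decompose
\[
\widehat{\mathcal{R}}_{\varphi, \mathcal{C}(j)}(x) - \mathcal{R}_{\mathcal{C}(j)}(x) = \frac{1}{\vert\mathcal{C}(j)\vert}\sum_{t\in\mathcal{C}(j)} \bigl( D_t + b_t^{\mathrm{reg}} + b_t^{\mathrm{ns}} \bigr),
\]
where $D_t := \widehat{\mathcal{R}}_{\varphi,t}(x) - \mathbb{E}_t[\widehat{\mathcal{R}}_{\varphi,t}(x)]$ is a martingale difference, $b_t^{\mathrm{reg}} := \mathbb{E}_t[\widehat{\mathcal{R}}_{\varphi,t}(x)] - r_t(x) = -(\sigma/T)\varphi(x)^T S_\varphi(P_t,\sigma/T)^{-1}\theta_t$ is the regularization bias (with $\theta_t$ the representer of $r_t$ in the finite feature space), and $b_t^{\mathrm{ns}} := r_t(x) - \mathcal{R}_{\mathcal{C}(j)}(x)$ satisfies $\vert b_t^{\mathrm{ns}}\vert \leq V_{\mathcal{C}(j)}$ by the definition of the total-variation budget. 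The regularization term is $\mathcal{O}((\sigma/T)\Vert\varphi(x)\Vert_{S^{-1}}\Vert\theta_t\Vert_{S^{-1}})$ and our parameter schedule absorbs it into $c_0\mu_j/8$. For the martingale part I would invoke the Freedman-style inequality (Lemma~\ref{lemma:freedman}). The per-step variance is bounded via the refined bound (\ref{eqn:improved-bound}) of Lemma~\ref{lemma:optimization-problem}: since $S_\varphi(P^{(j_t)},\sigma/T) \succeq (1-\mu_{j_t}) S_\varphi(Q^{(j_t)},\sigma/T)$, one has $\Var_t[\widehat{\mathcal{R}}_{\varphi,t}(x)] \leq \Vert\varphi(x)\Vert^2_{S_\varphi(P^{(j_t)},\sigma/T)^{-1}} = \mathcal{O}\bigl(\beta_{j_t}^2\widehat{\Delta}_{\varphi,\mathcal{C}(j_t-1)}(x)^2/(\alpha\gamma_{\varphi,T}) + \gamma_{\varphi,T}\bigr)$, while the range is $\mathcal{O}(\mu_{j_t}^{-1}\gamma_{\varphi,T})$.

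The induction then closes the variance bound. By the inductive hypothesis applied to (\ref{eqn:gap-estimate-concentration2}) at block $k-1$, $\widehat{\Delta}_{\varphi,\mathcal{C}(k-1)}(x) \leq 2\Delta_{\mathcal{C}(k-1)}(x) + 4V_{\mathcal{C}(k-1)} + c_0\mu_{k-1}$, and $\vert\Delta_{\mathcal{C}(k-1)}(x) - \Delta_{\mathcal{C}(j)}(x)\vert \leq 2V_{\mathcal{C}(j)}$. Summing the per-block variance over $k\leq j$ is a geometric series thanks to $\vert\mathcal{B}(k)\vert=2^k E$, $\beta_k=c_2\gamma_{\varphi,T} 2^{k/2}$, $\mu_k=c_1 2^{-k/2}$; dividing by $\vert\mathcal{C}(j)\vert\asymp 2^j E$ and using $E=\Omega(\gamma_{\varphi,T}\log(C_0 N/\delta))$, the Freedman bound collapses to $\mathcal{O}\bigl((\alpha c_3)^{-1/2}\Delta_{\mathcal{C}(j)}(x) + V_{\mathcal{C}(j)} + c_0\mu_j/8\bigr)$. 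Choosing $c_3$ and $c_4$ (hence $E$ and $\alpha$) large enough pins the $\Delta_{\mathcal{C}(j)}(x)$ coefficient at $1/2$ and the $\mu_j$ coefficient at $c_0/4$. A union bound over blocks $j$ (via $\delta_j=\delta/(2(j+1)^2)$) and actions $x\in\mathcal{X}$ is absorbed into the $\log(C_0 N/\delta)$ factor in $E$ and $\alpha$.

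The main obstacle is the circularity between the variance bound and the quantity being controlled: $P^{(j)}$ depends on $\widehat{\Delta}_{\varphi,\mathcal{C}(j-1)}$, which is itself controlled only through the concentration inequality being proved. The doubling block schedule is precisely what lets the induction pass through, since the data in block $j$ dominates all prior data by a constant factor, so the per-step variance is allowed to grow like $\mu_j^{-1}\gamma_{\varphi,T}$ while the averaged fluctuation over $\mathcal{C}(j)$ still shrinks like $\mu_j$. Pinning the final coefficient of $\Delta_{\mathcal{C}(j)}(x)$ at exactly $1/2$ is then just constant tuning on $c_1,\dots,c_4$; the conceptual work lies in using the refined variance bound (\ref{eqn:improved-bound}) rather than the crude $\Vert\varphi(x)\Vert^2_{S_\varphi(P^{(j)},\sigma/T)^{-1}}\leq \mu_j^{-1}\gamma_{\varphi,T}$, and in threading the induction through all blocks simultaneously.
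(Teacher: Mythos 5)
Your high-level architecture is the same as the paper's: prove \eqref{eqn:reward-estimate-concentration} by induction on $j$ via a Freedman inequality applied to the martingale differences $\widehat{\mathcal{R}}_{\varphi,t}(x) - \mathbb{E}_t[\widehat{\mathcal{R}}_{\varphi,t}(x)]$, close the induction loop through the OP variance bounds, and then read off \eqref{eqn:gap-estimate-concentration1}--\eqref{eqn:gap-estimate-concentration2} by evaluating \eqref{eqn:reward-estimate-concentration} at $x$, $x^\star$, $\widehat{x}^\star$. Your derivation of the two gap bounds from the reward bound is correct and matches the paper. However, there is a genuine error in how you propose to bound the martingale variance.

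You write $\Var_t[\widehat{\mathcal{R}}_{\varphi,t}(x)] \leq \Vert\varphi(x)\Vert^2_{S_\varphi(P_t,\sigma/T)^{-1}} = \mathcal{O}\bigl(\beta_{m_t}^2\widehat{\Delta}_{\varphi,\mathcal{C}(m_t-1)}(x)^2/(\alpha\gamma_{\varphi,T}) + \gamma_{\varphi,T}\bigr)$ using the \emph{quadratic} bound \eqref{eqn:improved-bound}, and claim the Freedman term then collapses to $\mathcal{O}\bigl((\alpha c_3)^{-1/2}\Delta_{\mathcal{C}(j)}(x)\bigr)$. It does not. After multiplying by $\xi_{j}$, the coefficient of $\widehat{\Delta}^2$ in your variance bound is $\xi_{j}\beta_{m_t}^2/(\alpha\gamma_{\varphi,T}) \asymp (\mu_j/\gamma)\cdot(\gamma^2 2^{m_t})/(\alpha\gamma) = \mu_j 2^{m_t}/\alpha$; at $m_t = j$ this is $\mu_j 2^j/\alpha = c_1 2^{j/2}/\alpha$, which \emph{grows} with $j$. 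The averaged variance term therefore scales like $2^{j/2}\Delta^2/\alpha$ and cannot be pinned below $\tfrac{1}{2}\Delta$ by constant tuning: the geometric series $\sum_k |\mathcal{B}(k)|\beta_k^2 \asymp E\gamma^2\sum_k 4^k$ overwhelms the $1/|\mathcal{C}(j)|$ normalization. What the paper actually does (see \eqref{eqn:xi_varphi2_bound}) is use the \emph{linear} bound $\Vert\varphi(x)\Vert^2_{S_\varphi(Q^{(m_t)},\sigma/T)^{-1}} \leq \beta_{m_t}\widehat{\Delta}_{\varphi,\mathcal{C}(m_t-1)}(x) + 2\gamma_{\varphi,T}$ for the variance term; there the key product is $\xi_j\beta_{m_t} \leq \xi_j\beta_j = 1/(2c_0)$, a fixed constant, so the term cleanly collapses to a small multiple of $\Delta_{\mathcal{C}(j)}(x) + V_{\mathcal{C}(j)} + \mu_j$. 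The quadratic bound \eqref{eqn:improved-bound} is needed only for the \emph{regularization-bias} term $\sqrt{\sigma/T}\,\Vert\varphi(x)\Vert_{S_\varphi(P_t,\sigma/T)^{-1}}$ (see \eqref{eqn:gamma_varphi_bound}), where one takes the square root of \eqref{eqn:improved-bound} so that the $\widehat{\Delta}$-dependence comes out linear rather than $\sqrt{\widehat{\Delta}}$, which is precisely what the linear bound from Lemma~\ref{lemma:optimization-problem} would give there. So the remark in Section~\ref{section:opkb} that the $\pi_\varphi(\mathcal{A})$-mixing step is ``required to get \eqref{eqn:improved-bound}, which is the key to bound the bias'' is pointing you to the opposite assignment of bounds from the one in your sketch.
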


\begin{remark} \label{remark:sidestep}
Concentration bounds for regression-based reward estimates for the non-stationary LB and KB given by Lemma 2 in \textcite{zhao_non-stationary_2021} and Lemma 1 in \textcite{zhou_no-regret_2021} are analogous to (\ref{eqn:reward-estimate-concentration}).
However, their bounds have an additional factor of $\sqrt{d}$ and $\sqrt{\gamma_{\varphi, T}}$ respectively for the term $V_{\mathcal{C}(j)}$, leading to suboptimal regret bounds.
Their concentration bounds were believed to have a constant factor for the term $V_{\mathcal{C}(j)}$, but they had to be corrected due to an error found by \textcite{zhao_non-stationary_2021}.
The error is specific to regression-based reward estimates. See \textcite{zhao_non-stationary_2021} for details.
Our algorithm sidesteps the error by using IPS reward estimates instead of regression-based reward estimates.
The main motivation for using randomized strategies in our algorithm is to use IPS reward estimates, which can only be constructed when randomized strategies are used.
\end{remark}

\begin{remark} \label{remark:one-step-regret-bound}
Consider the stationary setting where $V_{\mathcal{C}(j)} = 0$.
The expected one step regret when following $P^{(j)}$ is
$$
\begin{aligned}
\sum_{x \in \mathcal{X}} &P^{(j)}(x) \Delta_{\mathcal{C}(j - 1)}(x) \\
&\leq \sum_{x \in \mathcal{X}} Q^{(j)}(x) \Delta_{\mathcal{C}(j - 1)}(x)
+ 2 \mu_j \sum_{x \in \mathcal{X}} \pi_\varphi(x) \\
&\leq 2 \sum_{x \in \mathcal{X}} Q^{(j)}(x) \widehat{\Delta}_{\varphi, \mathcal{C}(j - 1)}(x) + \mathcal{O}(\mu_j) \leq \mathcal{O}(\mu_j)
\end{aligned}
$$
where $\pi_\varphi$ is the optimal design on $\mathcal{X}$, the first inequality uses $\Delta_{\mathcal{C}(j - 1)} \leq 2$ and the last inequality uses Lemma~\ref{lemma:optimization-problem}.
\end{remark}

By the remark above, we can show the following theorem.

\begin{theorem}\label{theorem:opkb}
Under Assumption~\ref{assumption:rkhs} with stationary reward functions $r_t(\cdot) = r(\cdot)$ for all $t \in [T]$, the dynamic regret bound of OPKB using a feature mapping induced by the kernel $k$ is bounded with high probability by
$$
\textsc{Reg}_T \leq \widetilde{\mathcal{O}}\left(
\sqrt{\gamma_{T} T \log N}
\right).
$$
\end{theorem}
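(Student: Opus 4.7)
The plan is to condition on the high-probability event of Lemma~\ref{lemma:reward-estimate-concentration-gap}, specialize it to the stationary case (so $V_{\mathcal{C}(j)} = 0$), convert the resulting per-block expected-regret bound from Remark~\ref{remark:one-step-regret-bound} into a cumulative bound by exploiting the doubling block schedule, and finally pass from expected to actual regret by a martingale concentration inequality.

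In the stationary setting $\Delta_{\mathcal{C}(j)}(x) = \Delta(x)$ for every block and every $x$, so (\ref{eqn:gap-estimate-concentration1}) and (\ref{eqn:gap-estimate-concentration2}) reduce to $\tfrac{1}{2}\widehat{\Delta}_{\varphi,\mathcal{C}(j)}(x) - \tfrac{c_0 \mu_j}{2} \le \Delta(x) \le 2\widehat{\Delta}_{\varphi,\mathcal{C}(j)}(x) + c_0 \mu_j$ under the good event. Following Remark~\ref{remark:one-step-regret-bound}, for a block $j \ge 1$ the expected one-step regret of $P^{(j)}$ is controlled by $\sum_x Q^{(j)}(x)\widehat{\Delta}_{\varphi,\mathcal{C}(j-1)}(x) + \mathcal{O}(\mu_j)$, and Lemma~\ref{lemma:optimization-problem} with $\beta = \beta_j = c_2 \gamma_T 2^{j/2}$ bounds the former by $(1+\alpha)\gamma_T/\beta_j = \mathcal{O}(2^{-j/2}) = \mathcal{O}(\mu_j)$. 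Hence the expected per-step regret in block $j \ge 1$ is $\mathcal{O}(\mu_j) = \mathcal{O}(2^{-j/2})$, while block $0$ just plays the optimal design for $E = \lceil c_3 \gamma_T \log(C_0 N/\delta) \rceil$ steps and contributes at most $\mathcal{O}(E) = \widetilde{\mathcal{O}}(\gamma_T)$ in total (since $|r(\cdot)| \le 1$).

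Summing the expected regret over the blocks up to the last block $J$ that intersects $[1,T]$ gives
\begin{equation*}
\mathbb{E}[\textsc{Reg}_T] \;\le\; \mathcal{O}(E) + \sum_{j=1}^{J} |\mathcal{B}(j)|\cdot \mathcal{O}(\mu_j) \;=\; \mathcal{O}\bigl(E\bigr) + \mathcal{O}\!\Bigl(E\sum_{j=1}^{J} 2^{j/2}\Bigr) \;=\; \mathcal{O}\bigl(E\cdot 2^{J/2}\bigr).
\end{equation*}
The doubling schedule yields $T \asymp 2^{J+1} E$, so $2^{J/2} \lesssim \sqrt{T/E}$, and $\mathbb{E}[\textsc{Reg}_T] = \mathcal{O}(\sqrt{ET}) = \widetilde{\mathcal{O}}(\sqrt{\gamma_T T \log N})$. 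To convert this into a high-probability bound, I would apply Azuma--Hoeffding to the martingale difference $\Delta(x_t) - \mathbb{E}_t[\Delta(x_t)]$, which is bounded by $2$, to incur an extra $\widetilde{\mathcal{O}}(\sqrt{T})$ term that is absorbed into the main bound. Union-bounding the failure probability of this concentration step with the failure probability of Lemma~\ref{lemma:reward-estimate-concentration-gap} (taking, say, $\delta/2$ for each) closes the argument.

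The only non-routine step is verifying that the choices $\mu_j = c_1 2^{-j/2}$ and $\beta_j = c_2 \gamma_T 2^{j/2}$ are actually consistent with the requirement $\widehat{\Delta}_{\varphi,\mathcal{C}(j-1)}(x) \le 2\alpha\gamma_T/\beta_j$ needed for the set $\mathcal{A}$ in line~\ref{alg:line:good-arm} of OP to be non-empty and to contain near-optimal actions throughout the algorithm's execution; this is precisely where the doubling of $\beta_j$ must be matched against the concentration error in (\ref{eqn:gap-estimate-concentration2}). Once the constants $c_1,c_2,c_3,c_4$ are tuned so that (\ref{eqn:gap-estimate-concentration2}) forces $\widehat{\Delta}_{\varphi,\mathcal{C}(j-1)}(x^\star) \le 2\alpha\gamma_T/\beta_j$ for an optimal $x^\star$, the rest of the argument proceeds as sketched above.
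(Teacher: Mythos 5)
Your argument follows the paper's proof essentially verbatim: condition on Lemma~\ref{lemma:reward-estimate-concentration-gap} with $V_{\mathcal{C}(j)}=0$, use Remark~\ref{remark:one-step-regret-bound} together with Lemma~\ref{lemma:optimization-problem} to bound the expected per-step regret in block $j$ by $\mathcal{O}(\mu_j)=\mathcal{O}(2^{-j/2})$, sum the geometric block sizes to get $\widetilde{\mathcal{O}}(E\sqrt{T/E})$, and finish with an Azuma--Hoeffding term to pass from expected to realized regret. The only caveat: the step you flag as ``non-routine'' at the end is a non-issue --- the set $\mathcal{A}$ in line~\ref{alg:line:good-arm} of OP is always non-empty because the empirical maximizer $\hat{x}$ has $\widehat{\Delta}(\hat{x})=0 \le 2\alpha\gamma_T/\beta$, and nothing in Lemma~\ref{lemma:optimization-problem} or in the regret analysis requires $\mathcal{A}$ to contain the truly optimal arm; $\mathcal{A}$ exists only to furnish the variance bound (\ref{eqn:improved-bound}), which is established for \emph{all} $x\in\mathcal{X}$ regardless of membership in $\mathcal{A}$.
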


\begin{hproof}
By Remark~\ref{remark:one-step-regret-bound}, the expected regret of the block $\mathcal{B}(j)$ is $\mathcal{O}( \vert \mathcal{B}(j) \vert \sqrt{2^{-j}}) = \mathcal{O}(E \sqrt{2^j})$.
Summing over all blocks gives the bound $\mathcal{O}(\sqrt{\gamma_T T \log N})$ on the expected total regret.
See Appendix~\ref{section:analysis-opkb} for a full proof.
\end{hproof}

Our regret bound for OPKB is order-optimal \parencite{salgia_domain-shrinking_2021} and matches work by \textcite{salgia_domain-shrinking_2021,camilleri_high-dimensional_2021,li_gaussian_2022,valko_finite-time_2013}.
It is an improvement over \textcite{srinivas_gaussian_2010,chowdhury_kernelized_2017}.

\subsection{ADA-OPKB: Adapting OPKB to Non-Stationarity}\label{section:ada-opkb}

In this section, we propose an algorithm called ADA-OPKB for the non-stationary kernel bandit setting that does not require the knowledge of the non-stationarity budgets.

\begin{remark}
Before our paper, the most natural attempt for designing an algorithm for non-stationary KB is to use the MASTER reduction \parencite{wei_non-stationary_2021} on GPUCB \parencite{chowdhury_kernelized_2017}, a UCB-based algorithm for stationary kernel bandits.
This is because the MASTER reduction most naturally works for a UCB-based base algorithm. Also, the required analysis of GPUCB under non-stationary environment is available in the literature \parencite{zhou_no-regret_2021}.
However, as shown in Appendix~\ref{section:master-reduction}, the reduction of GPUCB gives worse dynamic regret bound compared to ADA-OPKB due to the suboptimal concentration bound of regression based reward estimates.
\end{remark}

ADA-OPKB adapts OPKB to non-stationarity by restarting upon detecting a significant change in reward functions.
The key is to use past strategies as change detectors.
Lemma~\ref{lemma:reward-estimate-concentration-gap} suggests that the strategy $P^{(j)}$ can detect changes in suboptimality gaps greater than $\sim \sqrt{2^{-j}}$ after running for $\sim 2^j$ time steps.
ADA-OPKB replays older strategies with small indices to detect large changes fast and more recent strategies to detect small changes after running for longer time intervals.
Algorithm~\ref{alg:ada-opkb} shows the full algorithm.
Highlighted lines indicate the difference from OPKB.

\begin{algorithm}[t]
\KwInput{feature map $\varphi$, horizon $T$, confidence $\delta \in (0, 1)$.}
\KwDef{$\mu_j = c_1 2^{-j / 2}$, $\beta_j = c_2 \gamma_{\varphi, T} 2^{j / 2}$, $E = \lceil c_3 \gamma_{\varphi, T} \log(N / \delta) \rceil$, $\alpha =  c_4 \sigma / \log(N / \delta)$}
\KwInit{$t \leftarrow 1$, epoch index $i \leftarrow 1$, $Q^{(0)} \leftarrow \pi_\varphi(\mathcal{X})$}
\For{$j = 0, 1, \dots$\label{alg:line:first} }{
  Set $\mathcal{B}(j) \leftarrow [t, t + 2^j  E - 1]$ and $\mathcal{C}(j) \leftarrow \cup_{k=0}^j \mathcal{B}(k)$. \\
  \If{$j \geq 1$}{
    Compute $\widehat{\Delta} \leftarrow \{ \widehat{\Delta}_{\varphi, \mathcal{C}(j - 1)}(x) \}_{x \in \mathcal{X}}$. \\
    Find strategy $Q^{(j)} \leftarrow \textsc{OP}(\varphi, \widehat{\Delta}, \alpha, \beta_j, T)$. \\
    Set $P^{(j)} \leftarrow (1 - \mu_j) Q^{(m_t)} + \mu_j \pi_\varphi(\mathcal{X})$. \\
  }
  \HiLi Generate replay schedule $\mathcal{S} \leftarrow \textsc{Schedule}(t, j)$. \\
  \While{$t \in \mathcal{B}(j)$}{
    \HiLi $m_t \leftarrow \min \{ m: (m, \mathcal{I}) \in \mathcal{S}\,\, \text{with}\,\,t \in \mathcal{I} \}$ \tcp*{smallest index of scheduled intervals}\label{alg:line:strategy-index}
    Play $x_t \sim P^{(m_t)}$; receive $y_t$; increment $t \leftarrow t + 1$. \\
    \HiLi If \textbf{Test} triggers a restart, increment $i$; go to Line~\ref{alg:line:first}.
  }
}
\KwTest{Trigger a restart if for any $(m, \mathcal{I}) \in \mathcal{S}$ with $\mathcal{I}$ ending at $t$ and $k < j$, the following holds
\begin{align}
\begin{split}
\widehat{\Delta}_{\varphi, \mathcal{I}}(x) - 4 \widehat{\Delta}_{\varphi, \mathcal{C}(k)}(x) &> 4 c_0 \mu_{m \wedge k} \quad \text{or} \\
\widehat{\Delta}_{\varphi, \mathcal{C}(k)}(x) - 4 \widehat{\Delta}_{\varphi, \mathcal{I}}(x) &> 4 c_0 \mu_{m \wedge k}. \label{eqn:change-detection}
\end{split}
\end{align}
}
\caption{$\textsc{ADA-OPKB}$: ADAptive Optimization Problem based Kernel Bandit Algorithm}
\label{alg:ada-opkb}
\end{algorithm}

Before starting a new block $j$, ADA-OPKB calls $\textsc{Schedule}$ (Algorithm~\ref{alg:replay-schedule}), similar to the scheduler in \textcite{wei_non-stationary_2021}), for determining when to use which of the strategies $P^{(0)}, \dots, P^{(j)}$.
The procedure generates a set of replay intervals denoted by $(m, \mathcal{I})$ where $m$ indicates the strategy index and $\mathcal{I}$ indicates the time interval scheduled for playing the strategy $P^{(m)}$.
A replay schedule of index $m$ has length $2^m E$ and there are $2^{j - m}$ slots in block $j$ available to be scheduled.
For each slot, the algorithm randomly schedule a replay of index $m$ with probability $\sqrt{2^{m - j}}$.
When multiple replay intervals are scheduled at a given time $t$, the algorithm selects the one with the smallest index.
The strategy used at time $t$ is denoted by $m_t$.
Upon completion of a replay interval $\mathcal{I}$, the change detection test (\ref{eqn:change-detection}) is run.
A restart is triggered if the test detects a significant change in reward functions.
The test is based on the comparison of the empirical gap $\widehat{\Delta}_{\varphi, \mathcal{I}}$ and $\widehat{\Delta}_{\varphi, \mathcal{C}(k)}$ where $\mathcal{C}(k)$ is any cumulative block prior to $\mathcal{I}$.

\begin{algorithm}
\KwInput{starting time $t$, block index $j$, base block size $E$}
\KwInit{$\mathcal{S} \leftarrow \{ (j, [t, t + 2^j E - 1]) \}$}
\For{$\tau = 0, \dots, 2^j E - 1$}{
  \For{$m = 0, \dots, j - 1$}{
    \If{$\tau$ is a multiple of $2^m E$}{
      With probability $\frac{\sqrt{2^m}}{\sqrt{2^j}}$, add $(m, [t + \tau, t + \tau + 2^m E - 1])$ to $\mathcal{S}$.
    }
  }
}
\KwReturn{$\mathcal{S}$}

\caption{$\textsc{Schedule}$}
\label{alg:replay-schedule}
\end{algorithm}

\subsection{Analysis of ADA-OPKB}

With the key lemmas proved for analyzing OPKB, we use ideas from \textcite{chen_new_2019} and \textcite{wei_non-stationary_2021} to analyze ADA-OPKB.
We provide a sketch of the proof below.
We suppress the dependency of the regret bound on $\gamma_T$ and $\log N$ for simplicity.
See Appendix~\ref{section:analysis-ada-opkb} for the full proof.

\paragraph{Step 1: Interval Regret}

Using a martingale concentration, we can bound the regret of an interval $\mathcal{J}$ inside a block $j$ as $\textsc{Reg}_\mathcal{J} \leq \mathcal{O}(\sum_{t \in \mathcal{J}} \mu_{m_t} + \vert \mathcal{J} \vert V_\mathcal{J} + \vert \mathcal{J} \vert \zeta_\mathcal{J})$ where $\zeta_{\mathcal{J}} \coloneqq \max_{x \in \mathcal{X}} ( \Delta_{\mathcal{J}}(x) - 8 \widehat{\Delta}_{\mathcal{C}(j - 1)}(x) )$ measures the change in average reward in $\mathcal{J}$ compared to the previous block $j - 1$.
See Appendix~\ref{subsection:interval-regret} for the proof.
Note that the interval regret is a sum of the expected one step regret assuming stationarity (Remark~\ref{remark:one-step-regret-bound}), the degree of non-stationarity within $\mathcal{J}$, and the magnitude of the change in reward function compared to the last block.

\paragraph{Step 2: Block Regret}

To bound the regret of a block $j$, we partition the block into nearly stationary intervals $\mathcal{J}_1, \dots, \mathcal{J}_\ell$ so that $V_{\mathcal{J}_i} \leq \mu_{\mathcal{J}_i}$ where $\mu_\mathcal{I} \coloneqq c \vert \mathcal{I} \vert^{-1/2}$.
Summing over the interval regret of $\mathcal{J}_k$ in Step 1 and applying Cauchy-Schwarz, we get $\textsc{Reg}_{\mathcal{B}(j)} \leq \mathcal{O}(\sum_{t \in \mathcal{B}(j)} \mu_{m_t} + \sqrt{\ell} \vert \mathcal{B}(j) \vert \mu_j + \sum_{k = 1}^\ell \vert \mathcal{J}_k \vert \zeta_{\mathcal{J}_k} \mathbb{I}\{ \zeta_{\mathcal{J}_k} > c' \mu_{\mathcal{J}_k}\})$.
The first term $\sum_{t \in \mathcal{B}(j)} \mu_{m_t}$ can be shown to be $\widetilde{\mathcal{O}}(\vert \mathcal{B}(j) \vert \mu_j)$, which suggests the replays of past strategies are not overdone (Lemma~\ref{lemma:replay-schedule}).
To bound the third term, we use the property of change detection test that when $\zeta_{\mathcal{J}_k}$ is above $c' \mu_{\mathcal{J}_k}$ then replaying a suitable strategy within $\mathcal{J}_k$ triggers a restart (Lemma~\ref{lemma:change-detection}).
We can show that the replays of past strategies are done frequently enough to terminate the block before the third term gets too large, leading to a bound $\widetilde{\mathcal{O}}(\sqrt{\ell} \vert \mathcal{B}(j) \vert \mu_j)$ (proof of Lemma~\ref{lemma:block-regret}).
Finally, we can greedily construct a partition with $\ell = \widetilde{\mathcal{O}}(\min\{ L_{\mathcal{B}(j)}, V_{\mathcal{B}(j)}^{2/3} \vert \mathcal{B}(j) \vert^{1/3} \})$ (Lemma~\ref{lemma:interval-partition}), which gives a block regret bound of $\widetilde{\mathcal{O}}(\min\{ \sqrt{2^j L_{\mathcal{B}(j)}}, V_{\mathcal{B}(j)}^{1/3} (2^j)^{2/3} \})$ (Lemma~\ref{lemma:block-regret}).

\paragraph{Step 3: Epoch Regret}

Since the block size is doubling, there can be at most $\mathcal{O}(\log_2 T)$ blocks in an epoch.
Summing up regret bounds of the blocks and applying Cauchy-Schwarz and H\"older's inequalities, we can bound the epoch regret by $\widetilde{\mathcal{O}}(\min \{ \sqrt{L_{\mathcal{E}_i} \vert \mathcal{E}_i \vert}, V_{\mathcal{E}_i}^{1/3} \vert \mathcal{E}_i \vert^{2/3} \})$ (Lemma~\ref{lemma:epoch-regret}).

\paragraph{Step 4: Total Regret}

By the property of the change detection test, restarts can be triggered only when the degree of non-stationarity is large enough (Lemma~\ref{lemma:no-false-alarm}).
Using this property, we can bound the number of epochs by $\widetilde{\mathcal{O}}(\min\{ L_T, V_T^{2/3} T^{1/3} \}$ (Lemma~\ref{lemma:epoch-bound}).
The epoch regret bound in Step 3 gives total regret bound of $\widetilde{\mathcal{O}}(\min \{ \sqrt{L_T T}, V_T^{1/3} T^{2/3} \})$ (Theorem~\ref{theorem:ada-opkb}).

\section{Dynamic Feature Mapping Using a Neural Network}\label{section:ada-opnn}

Recall that OPKB and ADA-OPKB use a fixed feature mapping induced by a kernel.
In this section, we present extensions of OPKB and ADA-OPKB called OPNN and ADA-OPNN respectively that use \textit{dynamic} feature mappings induced by a neural network trained using past history.

\subsection{Preliminaries and Notations}

\paragraph{Neural Network}

Following \textcite{zhou_neural_2020}, we use a fully connected neural network with width $m$ and depth $L$: $f(x; \bm{W}) = \sqrt{m} \bm{W}_L \sigma(\bm{W}_{L - 1} \sigma(\cdots \sigma(\bm{W}_1 x)\cdots ))$ where $\sigma(x) = \max\{x, 0\}$ is the ReLU activation function, $\bm{W}_1 \in \mathbb{R}^{m \times d}$, $\bm{W}_i \in \mathbb{R}^{m \times m}$ for $i = 2, \dots, L - 1$, $\bm{W}_L \in \mathbb{R}^{m \times 1}$ and $\bm{W} = [\text{vec}(\bm{W}_1)^T, \dots, \text{vec}(\bm{W}_L)^T]^T \in \mathbb{R}^p$ with $p = m + md + m^2(L - 1)$.
We denote by $g(x; \bm{W}) = \nabla_{\bm{W}} f(x; \bm{W}) \in \mathbb{R}^p$ the gradient of the neural network function.
We call $g(\cdot; \bm{W})$ the feature mapping induced by the neural network $f$ with parameter $\bm{W}$.
Each entry of the initial weights $\bm{W}^{(0)}$ of the network is sampled independently from $\mathcal{N}(0, 2 / m)$.

\paragraph{Neural Tangent Kernel}

By \textcite{jacot_neural_2018}, $\langle g(x; \bm{W}^{(0)}), g(x'; \bm{W}^{(0)}) \rangle$ converges in probability to $H(x, x')$ for all $x, x' \in \mathcal{X}$ where the deterministic kernel $H(\cdot, \cdot)$ is called the \textit{neural tangent kernel}.
We denote by $\bm{H} = \{ H(x, x') \}_{x, x' \in \mathcal{X}}$ the neural tangent kernel matrix.

\begin{algorithm}[h]
\KwInput{network width $m$, network depth $L$, time horizon $T$, confidence level $\delta \in (0, 1)$.}
\KwInit{$t \leftarrow 1$, initialize network weights $\bm{W}^{(0)}$, compute feature mapping $\varphi^{(0)}$ equivalent to $g(\cdot; \bm{W}^{(0)})$, find optimal design $P^{(0)} \leftarrow \pi_{\varphi^{(0)}}(\mathcal{X})$}
\KwDef{$\mu_j = c_1 2^{-j / 2}$, $\beta_j = c_2 \gamma_{\varphi^{(0)}, T} 2^{j / 2}$, $E = c_3 \gamma_{\varphi^{(0)}, T} \log(C_0 N / \delta)$, $\alpha = c_4 \sigma / \log(C_0 N / \delta)$}
\For{$j = 0, 1, \dots$}{
  Set $\mathcal{B}(j) \leftarrow [t, t + 2^j  E - 1]$ and $\mathcal{C}(j) \leftarrow \cup_{k=0}^j \mathcal{B}(k)$. \\
  \If{$j \geq 1$}{
    $\bm{W}^{(j)} \leftarrow \textsc{TrainNN}(\{(x_\tau, y_\tau)\}_{\tau \in \mathcal{C}(j - 1)}, \bm{W}^{(0)})$. \\
    Find a mapping $\varphi^{(j)}$ equivalent to $g(\cdot; \bm{W}^{(j)}) / \sqrt{m}$. \\
    Compute $\widehat{\Delta} \leftarrow \{ \widehat{\Delta}_{\varphi^{(j)}, \mathcal{C}(j - 1)}(x) \}_{x \in \mathcal{X}}$. \\
    Find strategy $Q^{(j)} \leftarrow \textsc{OP}(\varphi^{(j)}, \widehat{\Delta}, \alpha, \beta_j, T)$. \\
    Set $P^{(j)} \leftarrow (1 - \mu_j) Q^{(j)} + \mu_j \pi_{\varphi^{(j)}}(\mathcal{X})$.
  }
  \While{$t \in \mathcal{B}(j)$}{
    Play $x_t \sim P^{(j)}$. Receive $y_t$. Increment $t \leftarrow t + 1$. \\
  }
}
\caption{$\textsc{OPNN}$: Optimization Problem based algorithm using Neural Network}
\label{alg:opnn}
\end{algorithm}

For the analysis of OPNN and ADA-OPNN, we make the following assumptions.
The first assumption is on the invertibility of the neural tangent kernel matrix $\bm{H}$.

\begin{assumption}\label{assumption:invertibility}
For some $\lambda_0 > 0$, we have $\bm{H} \succcurlyeq \lambda_0 \bm{I}$.
\end{assumption}
This is a mild assumption commonly made when analyzing neural networks \parencite{du_gradient_2019, arora_exact_2019} and for analyzing neural bandit algorithms  \parencite{salgia_provably_2022, zhou_neural_2020, zhang_neural_2020, gu_batched_2021, kassraie_neural_2021}.
It is satisfied, for example, as long as no two actions in $\mathcal{X}$ are parallel (see Theorem 3.1 in \textcite{du_gradient_2019-1}).
The second assumption is on the regularity of the reward functions commonly made in the neural bandits literature \parencite{zhou_neural_2020, zhang_neural_2020, gu_batched_2021}.

\begin{assumption}\label{assumption:nn-reward}
We have $\sqrt{\bm{r}_t^T \bm{H}^{-1} \bm{r}_t} \leq B$ for all $t = 1, \dots, T$ where $\bm{r}_t = (r_t(a_1), \dots, r_t(a_N))$.
\end{assumption}

\subsection{OPNN and ADA-OPNN}

Unlike OPKB that uses a fixed feature mapping determined by a prespecified kernel, OPNN (Algorithm~\ref{alg:opnn}) uses the feature mapping induced by a neural network trained using past history.
For the initial block, OPNN uses the feature mapping induced by the initial weight $\bm{W}^{(0)}$.
Before starting a new block, OPNN trains the neural network with all past history using the procedure $\textsc{TrainNN}$ (Algorithm~\ref{alg:train-nn}) and recomputes the feature mapping using the newly trained weight.
The $\textsc{TrainNN}$ algorithm takes in training history and perform $J$ steps of gradient descent on the squared error loss regularized by L2 distance of the weight $\bm{W}$ from the initial weight $\bm{W}^{(0)}$.
Rest of the algorithm is the same as OPKB.

To adapt to non-stationarity, ADA-OPNN equips OPNN with change detection just as ADA-OPKB does with OPKB.
See Appendix~\ref{section:algorithm-detail} for the full algorithm of ADA-OPNN.

\begin{algorithm}
\KwInput{training history $\{(x_t, y_t)\}_{t \in \mathcal{I}}$, regularization parameter $\lambda$, step size $\eta$, number of gradient descent steps $J$, network width $m$, initial parameter $\bm{W}^{(0)}$}
Define $\mathcal{L}(\bm{W}) = \sum_{t \in \mathcal{I}} (f(x_t; \bm{W}) - y_t)^2 / 2 + m \lambda \Vert \bm{W} - \bm{W}^{(0)} \Vert_2^2 / 2$. \\
\For{$j = 0, \dots, J - 1$}{
  $\bm{W}^{(j + 1)} \leftarrow \bm{W}^{(j)} - \eta \nabla \mathcal{L}(\bm{W}^{(j)})$.
}
\KwReturn{$\bm{W}^{(J)}$.}
\caption{$\textsc{TrainNN}$: train neural network}
\label{alg:train-nn}
\end{algorithm}

\subsection{Analysis of OPNN and ADA-OPNN}

\textcite{jacot_neural_2018} show that the neural tangent kernel stays constant during training in the infinite network width limit.
Hence, in the infinite width limit, OPNN and ADA-OPNN are equivalent to OPKB and ADA-OPKB respectively that use the feature mapping corresponding to the kernel $H$.
We can expect that in the finite width regime, the regret bound for OPNN and ADA-OPNN are the same as that for OPKB and ADA-OPKB respectively as long as the network width is large enough.
Theorem~\ref{theorem:ada-opnn} and Theorem~\ref{theorem:opnn} confirm this.
See Appendix \ref{section:analysis-of-opnn} for the full proof.

\begin{remark}
The current NTK theory limits us to work in the infinite width regime where the feature mapping remains fixed.
However, we empirically show in Appendix~\ref{section:additional-experiments} that using the dynamic feature mapping induced by a finite width neural network is beneficial.
This finding is consistent with numerous empirical results demonstrated by \textcite{fort_deep_2020, lee_finite_2020} in the supervised learning setting.
We leave the analysis beyond the infinite width regime as future work.
\end{remark}

\begin{theorem}[Informal]\label{theorem:ada-opnn}
Under Assumption~\ref{assumption:invertibility} and Assumption~\ref{assumption:nn-reward}, the ADA-OPNN algorithm using a neural network of sufficiently large width achieves a dynamic regret bound of
$$
\begin{aligned}
\widetilde{\mathcal{O}}(\min\{&\sqrt{\gamma_T L_T T \log N}, \\
&(\gamma_T V_T \log N)^{1/3} T^{2/3} + \sqrt{\gamma_T T \log N} \})
\end{aligned}
$$
with high probability, where $\gamma_T$ is the maximum information gain corresponding to the neural tangent kernel $H$ of the neural network used in the algorithm.
\end{theorem}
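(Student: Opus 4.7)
The plan is to reduce the analysis of ADA-OPNN to that of ADA-OPKB applied with the feature mapping induced by the neural tangent kernel $H$, paying only lower-order terms for the discrepancy between the trained network and its linearization around the initial weights $\bm{W}^{(0)}$. Let $\varphi_H$ denote any $N$-dimensional feature mapping equivalent to $H$. Assumption~\ref{assumption:invertibility} ensures that the NTK Gram matrix is well-conditioned, and Assumption~\ref{assumption:nn-reward} guarantees that each reward function admits a representation $r_t(\cdot) = \langle \varphi_H(\cdot), \theta_t^\star \rangle$ with $\Vert \theta_t^\star \Vert_2 \leq B$, so the ADA-OPKB analysis with the kernel $H$ is in principle applicable once we handle the gap between $\varphi^{(j)}$ and $\varphi_H$.

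First I would invoke the standard NTK coupling at initialization: for sufficiently large width $m = \mathrm{poly}(T, L, N, 1/\lambda_0, B, 1/\delta)$, results of \textcite{jacot_neural_2018} and non-asymptotic refinements such as \textcite{arora_exact_2019, zhou_neural_2020} yield $\vert \langle g(x; \bm{W}^{(0)}), g(x'; \bm{W}^{(0)}) \rangle / m - H(x, x') \vert \leq 1/T^2$ uniformly over $\mathcal{X}$. Next I would control the training drift: the convergence analysis of over-parameterized gradient descent on the regularized squared loss in $\textsc{TrainNN}$ gives $\Vert \bm{W}^{(j)} - \bm{W}^{(0)} \Vert_2 = \mathcal{O}(\sqrt{\vert \mathcal{C}(j-1) \vert / (m \lambda)})$ after $J$ steps, and the local almost-linearity of $f$ (Jacobian Lipschitzness in a neighborhood of $\bm{W}^{(0)}$) implies $\Vert g(x; \bm{W}^{(j)}) - g(x; \bm{W}^{(0)}) \Vert_2 / \sqrt{m} \leq 1/T^2$ for every action $x$, every block $j$, and every restart epoch. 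Consequently the Gram matrices induced by $\varphi^{(j)} = g(\cdot; \bm{W}^{(j)})/\sqrt{m}$ and by $\varphi_H$ are uniformly $\mathcal{O}(1/T)$-close in operator norm throughout execution.

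Third, I would propagate this closeness through the ADA-OPKB proof. The IPS estimator, the optimal design $\pi_\varphi(\cdot)$, and the $\textsc{OP}$ output depend on the feature mapping only through continuous functions of the Gram matrix, namely the quadratic forms $\Vert \varphi(x) \Vert_{S_\varphi(P, \sigma/T)^{-1}}^2$ and $\log\det S_\varphi(P, \sigma/T)$. A standard matrix perturbation argument then shows that replacing $\varphi^{(j)}$ by $\varphi_H$ in every step perturbs the outputs of $\textsc{OP}$ and of the optimal-design subroutine by $\mathcal{O}(1/T)$, and the linearization error $\vert r_t(x) - \langle g(x; \bm{W}^{(0)}), \theta_t^\star \rangle \vert$ is also $\mathcal{O}(1/T)$ under Assumption~\ref{assumption:nn-reward}. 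Absorbing these $\mathcal{O}(1/T)$ slacks into the right-hand sides of Lemma~\ref{lemma:reward-estimate-concentration-gap} preserves the form of the concentration bounds, and then Steps 1--4 of the ADA-OPKB analysis (interval regret, block regret, epoch regret, and epoch count) transfer verbatim, with $\gamma_T$ now denoting the maximum information gain of $H$. Summing yields the stated bound.

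The main obstacle will be to secure the uniform feature-drift bound $\Vert g(x; \bm{W}^{(j)}) - g(x; \bm{W}^{(0)}) \Vert_2 / \sqrt{m} \leq 1/T^2$ simultaneously over all blocks, all actions, and all adaptively chosen restart epochs. Because the training data $\mathcal{C}(j-1)$ grows and the restart times depend on observed rewards, $m$ must be polynomial in $T$ with a sufficiently high exponent to keep every $\bm{W}^{(j)}$ inside the local almost-linear regime over the entire horizon. A secondary subtlety is that the $\textsc{Schedule}$ procedure replays strategies $P^{(m)}$ computed with older, less-trained feature maps, so the drift estimate must hold uniformly across \emph{all} feature maps that are active as change detectors, not only the currently trained one; this forces a union bound over the random schedule at the same prescribed width $m$, and dictates the final polynomial width requirement in the formal statement.
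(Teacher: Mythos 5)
Your high-level strategy — reduce ADA-OPNN to ADA-OPKB by controlling the drift of the neural feature map from its NTK/initialization reference and the NTK coupling at initialization, pay a small slack for the discrepancy, and absorb it into the concentration bounds — matches what the paper does. The paper even proves essentially the same drift lemmas you invoke (their Lemmas on gradient deviation, lazy training, and Gram-matrix drift $\Vert \bm{G}_0^T \bm{G}_0 - \bm{G}^T \bm{G} \Vert_F$).

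The gap is in your third step, where you claim that ``replacing $\varphi^{(j)}$ by $\varphi_H$ in every step perturbs the outputs of $\textsc{OP}$ and of the optimal-design subroutine by $\mathcal{O}(1/T)$.'' Those outputs are \emph{argmaxes} of optimization problems, and an $\mathcal{O}(1/T)$ perturbation of the Gram matrix does not in general yield an $\mathcal{O}(1/T)$ perturbation of an argmax; the objective value is continuous, but the maximizer can move discontinuously under ties or flat directions, and neither $\log\det S_\varphi(\cdot,\sigma/T)$ nor the OP objective is strongly concave in $P$. The paper's analysis does not need — and does not prove — any such stability. It instead observes that Lemma~\ref{lemma:optimal-design} and Lemma~\ref{lemma:optimization-problem} hold \emph{exactly} for the output of $\textsc{OP}$ with whatever feature mapping $\varphi^{(j)}$ the algorithm actually used (they are statements about KKT conditions for an arbitrary $\varphi$). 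Consequently the regret analysis is carried out directly with $\varphi^{(j)}$: the only quantities that need approximation are (i) the realizability of $r_t$ in terms of $g(\cdot;\bm{W}^{(j)})$ — Lemma~\ref{lemma:lazy-training-error} controls this with an error $\epsilon$ — and (ii) the information-gain constant used in the algorithm's parameters, which must be shown close to its reference value (Lemma~\ref{lemma:information-gain-drift-bound}). These slacks are then carried through the Freedman-style concentration bound (Lemma~\ref{lemma:reward-estimate-concentration-nn} and Lemma~\ref{lemma:reward-estimate-concentration-gap-nn}), never through a perturbation-of-argmax argument.

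A secondary issue: you take $\varphi_H$ (the exact infinite-width NTK feature map) as the reference, but the algorithm's parameter definitions ($E$, $\beta_j$, $\alpha$) are computed from $\gamma_{\varphi^{(0)},T}$, the information gain of the finite-width gradient at initialization, since $\varphi_H$ is not computable. The paper therefore uses $\varphi^{(0)}$ as the reference throughout, and only invokes Zhou et al.'s coupling $\Vert \bm{G}_0^T\bm{G}_0 - \bm{H}\Vert_F \leq N\epsilon$ to translate the final bound into one stated in terms of $\gamma_T$ for $H$. Your proposal should be rephrased with $\varphi^{(0)}$ as the reference for consistency with the parameters the algorithm actually uses. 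With the argmax-perturbation step replaced by the observation that Lemmas~\ref{lemma:optimal-design} and~\ref{lemma:optimization-problem} hold exactly for any $\varphi$, and the realizability/information-gain slacks carried through the concentration bound instead, the rest of your outline (union bound over the random schedule, re-running the interval/block/epoch decomposition) goes through.
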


\paragraph{Relation to Previous Work}

Our regret bound of ADA-OPNN becomes $\widetilde{\mathcal{O}}(\sqrt{\gamma_T T \log N})$ when adapted to the stationary setting, which is an improvement over previous work \parencite{zhou_neural_2020, gu_batched_2021, jia_learning_2022} by a factor of $\sqrt{\gamma_T}$ and is comparable to work by \textcite{kassraie_neural_2021}.

\section{Experiments}\label{section:experiment}

The most notable feature of our algorithms is that they can adapt to non-stationarity without prior knowledge of the degree of non-stationarity.
In this section, we illustrate this feature by comparing to previous work SW-GPUCB~\parencite{zhou_no-regret_2021} and WGPUCB~\parencite{deng_weighted_2022}, both of which require the knowledge of the degree of non-stationarity to tune parameters.
For the parameter tuning and the experiments, we used an internal cluster of nodes with 20-core 2.40 GHz CPU and Tesla V100 GPU.
The total amount of computing time was around 300 hours.

\paragraph{Experiment Design}
We run all algorithms in two environments: an environment with a single switch and the other with two switches.
We first tune the algorithms for the first environment.
Then, we run the tuned algorithms on the second environment to see how the algorithms adapt to the new non-stationarity.

\paragraph{Environments}

We run all simulations for $T = 10000$ rounds.
For each simulation, we randomly sample an action set of size $N = 100$ from the unit sphere in $\mathbb{R}^d$.
We follow \textcite{chowdhury_kernelized_2017} and sample the reward vector $\{ r(x) \}_{x \in \mathcal{X}}$ from the multivariate normal distribution $\mathcal{N}(0, \bm{K})$ where $\bm{K} = \{ k(x, x') \}_{x, x' \in \mathcal{X}}$ and $k$ is the radial basis function kernel with length scale $0.2$.
We scale the reward vector so that the maximum absolute reward is 0.8, 
We sample the noises $\eta_t$ from $\mathcal{N}(0, 0.1^2)$.
We run experiments on two environments: the first environment has a single switch at time 3000 and the second environment has switches at time 1500 and 5000.

\paragraph{Algorithm Tuning}

We tune SW-GPUCB, WGPUCB, ADA-OPKB, ADA-OPNN on the first environment with a single switch.
For SW-GPUCB, we do a grid search for $\lambda$ over the range $\{0.01, 0.02, 0.05, 0.1, \dots, 100\}$, the UCB scale parameter $v$ over $[0.001, 1]$, and the window size over $\{100, 200, 500, 1000, \dots, 10000\}$.
See Algorithm~\ref{alg:igp-ucb} for the definition of $\lambda$.
For WGPUCB, we do a grid search for $\lambda$ over the range $\{0.01, 0.02, 0.05, 0.1, \dots, 100\}$, the UCB scale parameter over $\{0.001, 0.002, 0.005, 0.01, \dots, 1\}$, and the discounting factor over $\{0.99, 0.995, 0.999, 0.9995, 0.9999 \}$.
See Algorithm~\ref{alg:igp-ucb} for the definition of $\lambda$.
For ADA-OPKB and ADA-OPNN, we do a grid search for $\sigma$ over $\{1, 2, 5, 10, 20, 50, 100, 200, 500, 1000\}$ and $c_0, c_1, c_2, c_3, c_4$ over $\{0.001, 0.002, 0.005, 0.01, \dots, 100\}$.
For ADA-OPNN, we do a grid search for the learning rate $\eta$ over $\{10^{-9}, 10^{-8}, 10^{-7} \}$, training steps $J$ over $\{100, 1000, 10000\}$ and regularization parameter $\lambda$ over $\{1, 10, 100, 1000\}$.
We use a neural network of depth $L = 3$ and width $m = 2048$.

\begin{remark} \label{remark:limitation}
Compared to SW-GPUCB and WGPUCB, ADA-OPKB and ADA-OPNN have many parameters to tune.
We leave designing a simpler algorithm with less parameters that does not require the knowledge of non-stationarity as future work.
\end{remark}

\paragraph{Results}

The cumulative regrets of SW-GPUCB, WGPUCB, ADA-OPKB and ADA-OPNN averaged over 25 random seeds are shown in Figure~\ref{figure:experiment}.
Error bars indicate standard errors of the means.
Plot (a) shows the performances of the algorithms tuned under the first environment (a single switch).
We remark that SW-GPUCB outperforms ADA-OPKB and ADA-OPNN in the initial stationary interval because ADA-OPKB and ADA-OPNN have overhead of running change detections.
We conjecture that ADA-OPNN performs worse than ADA-OPKB due to kernel mismatch: ADA-OPKB uses the kernel used by the nature for drawing reward functions while ADA-OPNN does not.

Plot (b) shows the performances of the algorithms on the second environment (switches at time 1500 and 5000).
SW-GPUCB optimally tuned for the single switch environment (window size 3000), performs worse than ADA-OPKB and ADA-OPNN in the new environment.
WGPUCB optimally tuned for the single switch environment (discounting factor of 0.9995) performs similarly to ADA-OPNN but is outperformed by ADA-OPKB.
This experiment highlights the fact that ADA-OPKB and ADA-OPNN can adapt to new non-stationarity better than SW-GPUCB and WGPUCB.

For an experiment that demonstrates the benefit of dynamically updating feature mapping for OPNN, and an experiment under a slowly varying environment, see Appendix~\ref{section:additional-experiments}.

\begin{figure}
    \centering
    \begin{subfigure}[b]{0.475\textwidth}
        \centering
        \includegraphics[width=\textwidth, height=1.2in]{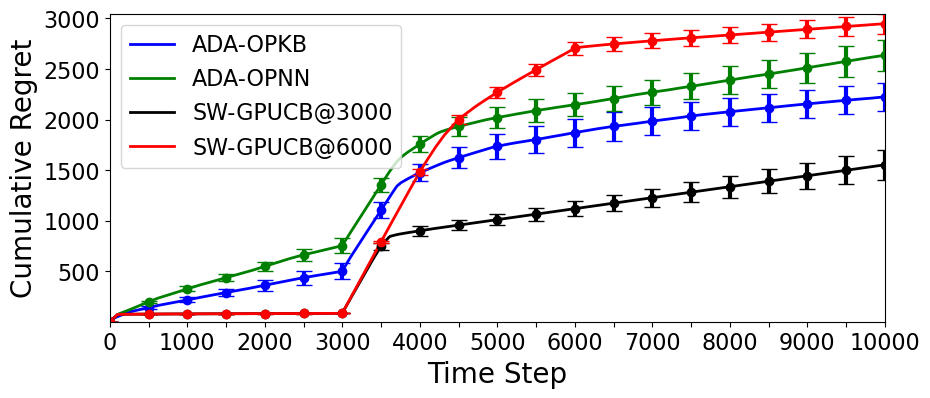}
        \caption[a]%
        {{\small Environment 1: single switch}}    
    \end{subfigure}
    \hfill
    \begin{subfigure}[b]{0.475\textwidth}  
        \centering 
        \includegraphics[width=\textwidth, height=1.2in]{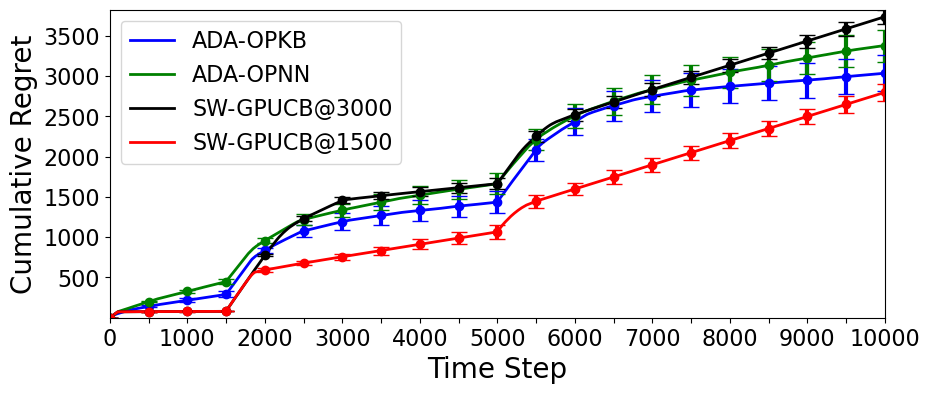}
        \caption[]%
        {{\small Environment 2: two switches}}    
    \end{subfigure}
    \caption{Cumulative regret comparison of algorithms in non-stationary environments}
    \label{figure:experiment}
\end{figure}

\section{Conclusion}

In this paper, we propose an algorithm for non-stationary kernel bandits that does not require the knowledge of non-stationary budgets, and show a simultaneous dynamic regret bound in terms of the budgets on the total variation and the number of changes in reward functions.
The dynamic regret bound is tighter than previous work on the non-stationary kernel bandit setting.
Also, our algorithm is nearly minimax optimal in the non-stationary linear bandit setting when run with a linear kernel.
We provide an extension of our algorithm using a neural network.
An interesting future work would be to adapt to a new non-stationary measure that tracks the number of times the identity of the best arm changes, which is a smaller measure than the number of changes in the reward functions.
We believe the reward estimate based change detection algorithm and its analysis in this paper is suitable for this extension.

\printbibliography

\newpage

\appendix

\section{Notation Table}

\begin{table*}[h]
\centering
\begin{tabular}{lll} 
 \toprule
 Notation & Definition & Explanation \\
 \midrule
 $S_\varphi(Q, \lambda)$ & $\sum_{x \in \mathcal{X}} Q(x) \varphi(x) \varphi(x)^T + \lambda I$ &  \\
 $\gamma_{\varphi, T}$ & $\max_{P \in \mathcal{P}_{\mathcal{X}}} \log \det S_\varphi(TP / \sigma, 1)$ & Information gain with respect to $\varphi$ \\
 $V_{[s, t]}$ & $\sum_{\tau = s}^{t - 1} \Vert r_{\tau + 1} - r_\tau \Vert_\infty$ & Total variation in interval $[s, t]$ \\
 $L_{[s, t]}$ & $\sum_{\tau = s}^{t - 1} \mathbb{I}\{r_{\tau + 1} \neq r_\tau\}$ & Number of arm switches in $[s, t]$ \\
 $\pi_\varphi(\mathcal{A})$ & $\argmax_{P \in \mathcal{P}_{\mathcal{A}}} \log \det S_\varphi(P, \sigma / T)$ &  Optimal design on $\mathcal{A}$ with respect to $\varphi$\\
 $\mathcal{R}_\mathcal{I}(x)$ & $\frac{1}{\vert \mathcal{I} \vert} \sum_{t \in \mathcal{I}} r_t(x)$ & Average reward of arm $x$ over interval $\mathcal{I}$ \\
 $\Delta_t(x)$ & $\max_{x' \in \mathcal{X}} r_t(x') - r_t(x)$ & Optimality gap of $x$ at time $t$ \\
 $\Delta_\mathcal{I}(x)$ & $\max_{x' \in \mathcal{X}} \mathcal{R}_\mathcal{I}(x') - \mathcal{R}_\mathcal{I}(x)$ & Average optimality gap over the interval $\mathcal{I}$ \\
 $\widehat{\mathcal{R}}_{\varphi, t}(x)$ & $\varphi(x)^T S_\varphi(P_t, \sigma / T)^{-1} \varphi(x_t) y_t$ & IPS estimator for $r_t(x)$ with respect to $\varphi$ \\
 $\widehat{\mathcal{R}}_{\varphi, \mathcal{I}} (x)$ & $\frac{1}{\vert \mathcal{I} \vert} \sum_{t \in \mathcal{I}} \widehat{\mathcal{R}}_{\varphi, t}(x)$ & IPS estimator for average reward of $x$ over the interval $\mathcal{I}$ \\
 $\widehat{\Delta}_{\varphi, \mathcal{I}}(x)$ & $\max_{x' \in \mathcal{X}} \widehat{\mathcal{R}}_{\varphi, \mathcal{I}}(x') - \widehat{\mathcal{R}}_{\varphi, \mathcal{I}}(x)$ & Estimated optimality gap of arm $x$ over the interval $\mathcal{I}$ \\
 \bottomrule
\end{tabular}
\end{table*}

\section{Omitted algorithms}\label{section:algorithm-detail}

The ADA-OPNN algorithm adapts the OPNN algorithm to the non-stationary environment by equipping change detection.

\begin{algorithm}
\KwInput{network width $m$, network depth $L$, time horizon $T$, confidence level $\delta \in (0, 1)$.}
\KwDef{$\mu_j = c_1 2^{-j / 2}$, $\beta_j = c_2 \gamma_{\varphi, T} 2^{j / 2}$, $E = \lceil c_3 \gamma_{\varphi, T} \log(C_1 N / \delta) \rceil$, $\alpha =  c_4 \sigma / \log(C_1 N / \delta)$}
\KwInit{time step $t \leftarrow 1$, epoch index $i \leftarrow 1$, initial strategy $Q^{(0)} \leftarrow \pi_\varphi(\mathcal{X})$}
\For{$j = 0, 1, \dots$ \label{alg:line:first-ada-opnn} }{
  Set block $\mathcal{B}(j) \leftarrow [t, t + 2^j  E - 1]$ and cumulative block $\mathcal{C}(j) \leftarrow \cup_{k=0}^j \mathcal{B}(k)$. \\
  \If{$j \geq 1$}{
    $\bm{W}^{(j)} \leftarrow \textsc{TrainNN}(\{(x_\tau, y_\tau)\}_{\tau \in \mathcal{C}(j - 1)}, \bm{W}^{(0)})$ \\
    Find a feature mapping $\varphi^{(j)}$ equivalent to $g(\cdot; \bm{W}^{(j)}) / \sqrt{m}$ \\
    Compute the empirical gap $\widehat{\Delta} \leftarrow \{ \widehat{\Delta}_{\varphi^{(j)}, \mathcal{C}(j - 1)}(x) \}_{x \in \mathcal{X}}$ using all past history in epoch $i$. \\
    Find strategy $Q^{(j)} \leftarrow \textsc{OP}(\varphi^{(j)}, \widehat{\Delta}, \alpha, \beta_j, T)$; Set $P^{(j)} \leftarrow (1 - \mu_j) Q^{(m_t)} + \mu_j \pi_{\varphi^{(j)}}(\mathcal{X})$. \\
  }
  Generate replay schedule $\mathcal{S} \leftarrow \textsc{Schedule}(t, j)$. \\
  \While{$t \in \mathcal{B}(j)$}{
    $m_t \leftarrow \min \{ m: (m, \mathcal{I}) \in \mathcal{S}\,\, \text{with}\,\,t \in \mathcal{I} \}$ \tcp*{smallest index of scheduled intervals}
    Record $P_t \leftarrow P^{(m_t)}$. Play $x_t \sim P_t$ and receive reward $y_t$; Increment $t \leftarrow t + 1$. \\
    If \textbf{Test} with $\varphi = \varphi^{(j)}$ triggers a restart then increment $i$ and go to Line~\ref{alg:line:first-ada-opnn}.
  }
}
\KwTest{Trigger a restart if for any $(m, \mathcal{I}) \in \mathcal{S}$ with $\mathcal{I}$ ending at $t$ and $k < j$, the following holds
$$
\widehat{\Delta}_{\varphi, \mathcal{I}}(x) - 4 \widehat{\Delta}_{\varphi, \mathcal{C}(k)}(x) > 4 c_0 \mu_{m \wedge k} \quad \text{or} \quad \widehat{\Delta}_{\varphi, \mathcal{C}(k)}(x) - 4 \widehat{\Delta}_{\varphi, \mathcal{I}}(x) > 4 c_0 \mu_{m \wedge k}.
$$
}
\caption{$\textsc{ADA-OPNN}$: ADAptive Optimization Problem based algorithm using Neural Network}
\label{alg:ada-opnn}
\end{algorithm}

\section{Maximum information gain}\label{section:maximum-information-gain}

In this section, we summarize the properties of the maximum information gain used in this paper.
The original definition of the maximum information gain by \textcite{srinivas_gaussian_2010} is
$$
\bar{\gamma}_T = \max_{x_1, \dots, x_T \in \mathcal{X}} \frac{1}{2} \log \det (\sigma^{-1} K_T + I_T)
$$
where $K_T = [ k(x_i, x_j) ]_{i, j \in [T]}$ and $I_T \in \mathbb{R}^{T \times T}$ is the identity matrix.
For ease of exposition, we drop the factor $\frac{1}{2}$ that appears in the original definition of $\bar\gamma_T$.
In this paper, we define the \textit{continuous} version of the maximum information gain $\gamma_T$ as follows
$$
\gamma_{\varphi, T} = \max_{P \in \mathcal{P}_\mathcal{X}} \log \det S_\varphi( \sigma^{-1} TP, I_N)
$$
where $\varphi : \mathcal{X} \rightarrow \mathbb{R}^N$ is a feature mapping corresponding to the kernel $k$ such that $k(x, x') = \langle \varphi(x), \varphi(x') \rangle$.
To see the connection of $\gamma_{\varphi, T}$ to the original definition $\bar\gamma_T$, note that $K_T = H K H^T$ where $K = [ k(a_i, a_j) ]_{i, j \in [N]}$ and $H \in \{0, 1\}^{T \times N}$ with $H_{ti} = \mathbb{I}\{ x_t = a_i \}$ is the history matrix that indicates whether the action $a_i$ is played at time $t$ for $t \in [T]$ and $i \in [N]$.
Using the notation $\Phi = [ \varphi(a_1) \cdots \varphi(a_N) ]^T \in \mathbb{R}^{N \times N}$ such that $K = \Phi \Phi^T$, we have by the Sylvester's determinant identity $\det (I + AB) = \det(I + BA)$ that
$$
\begin{aligned}
\log \det (\sigma^{-1} K_T + I_T) &= \log \det (\sigma^{-1} H \Phi \Phi^T H^T + I_T) \\
&= \log \det (\sigma^{-1} \Phi^T H^T H \Phi  + I_N) \\
&= \log \det (\sigma^{-1} \Phi^T D_N \Phi + I_N) \\
&= \log \det S_\varphi( \sigma^{-1} T P_N, I_N)
\end{aligned}
$$
where $D_N = H^T H = \text{diag}(n_1, \dots, n_N)$ with $n_i$ denoting how many times $a_i$ appears in the sequence $x_1, \dots, x_T$ and $P_N = D_N / T$ is the relative frequency of the actions.
Hence,
$$
\bar\gamma_T = \max_{P \in \mathcal{P}_{T, \mathcal{X}}} \frac{1}{2} \log \det S_\varphi( \sigma^{-1} T P, I_N)
$$
where the maximization is over $\mathcal{P}_{T, \mathcal{X}} \coloneqq \{ P \in \mathcal{P}_\mathcal{X} : P(a_i) = n_i / T \,\, \text{for all} \,\, i \in [N] \,\, \text{with} \,\, n_i \in \mathbb{Z} \}$.
It follows that our definition $\gamma_{\varphi, T}$ is a continuous version of the maximum information gain in the sense that it maximizes over $\mathcal{P}_\mathcal{X}$ instead of the discretized probability space $\mathcal{P}_{T, \mathcal{X}}$.

A direct consequence is that $\gamma_{\varphi, T} \geq \bar\gamma_T$.
To get an upper bound on $\gamma_{\varphi, T}$ we can use Theorem 3 in \textcite{vakili_information_2021} that shows an upper bound of $\bar\gamma_T$ in terms of the eigendecay of the kernel $k(\cdot, \cdot)$.
It can be seen that their proof can be easily adapted to the continuous version, which leads to upper bounds for common kernels in the following lemma.

\begin{lemma}[Theorem 3 in \textcite{vakili_information_2021}]
For the Mat\'ern-$\nu$ kernel and the SE kernel, the maximum information gain is upper bounded by
$$
\begin{aligned}
\gamma_{\varphi, T} &= \mathcal{O}\left(
T^{\frac{d}{2 \nu + d}} \log^{\frac{2\nu}{2 \nu + d}} (T)
\right), \quad \text{for Mat\'ern-}\nu\,\,\text{kernel} \\
\gamma_{\varphi, T} &= \mathcal{O}\left(
\log^{d+1}(T)
\right), \quad \text{for SE kernel}.
\end{aligned}
$$
\end{lemma}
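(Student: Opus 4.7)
The plan is to adapt the argument of Theorem 3 in Vakili et al.\ (2021), which establishes an analogous bound for the discrete $\bar\gamma_T$, and verify that every step carries over when the maximization is taken over $\mathcal{P}_\mathcal{X}$ rather than the discretized $\mathcal{P}_{T,\mathcal{X}}$. The starting point is the identity already derived in the excerpt,
$$\gamma_{\varphi,T} = \max_{P\in\mathcal{P}_\mathcal{X}} \log\det\bigl(I_N + \sigma^{-1} T\,\Phi^T \mathrm{diag}(P)\Phi\bigr),$$
where $\Phi\in\mathbb{R}^{N\times N}$ has rows $\varphi(a_i)^T$. Since the nonzero eigenvalues of $\Phi^T\mathrm{diag}(P)\Phi$ coincide with those of $A_P \coloneqq \mathrm{diag}(P)^{1/2} K \,\mathrm{diag}(P)^{1/2}$, where $K=\{k(a_i,a_j)\}_{i,j}$ is the kernel matrix, we obtain
$$\gamma_{\varphi,T} = \max_{P\in\mathcal{P}_\mathcal{X}} \sum_i \log\bigl(1 + \sigma^{-1} T\, \lambda_i(A_P)\bigr).$$

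Next, I would use Mercer's theorem to expand the kernel as $k(x,x')=\sum_{m\ge 1}\mu_m\phi_m(x)\phi_m(x')$, where $\{\mu_m\}$ are the Mercer eigenvalues (with known decay rates per kernel) and $\{\phi_m\}$ are the eigenfunctions, bounded uniformly on $\mathcal{X}$. This yields
$$\mathrm{Tr}(A_P) = \sum_i \lambda_i(A_P) = \sum_{m\ge 1} \mu_m \,\mathbb{E}_{X\sim P}[\phi_m(X)^2] \;\le\; C\sum_{m\ge 1}\mu_m,$$
for an absolute constant $C$ depending only on the uniform bound on the eigenfunctions. Crucially, this trace bound is \emph{linear} in $P$ and therefore agrees for $P$ in the continuous simplex $\mathcal{P}_\mathcal{X}$ and the discrete one $\mathcal{P}_{T,\mathcal{X}}$, so no extra factor is introduced by passing to the continuous maximization.

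I would then split the sum at a truncation level $D$. For the top $D$ eigenvalues, use $\log(1+x)\le \log(1+\sigma^{-1}T)$ trivially, giving a contribution $D\log(1+\sigma^{-1}T)$. For the tail, use $\log(1+x)\le x$ and the bound $\sum_{m>D}\mu_m \cdot \mathbb{E}_P[\phi_m^2]$ controlled by the eigendecay rate. Optimizing $D$ produces the two claimed bounds: for the Matérn-$\nu$ kernel, $\mu_m = \Theta(m^{-(2\nu+d)/d})$ and choosing $D = \Theta(T^{d/(2\nu+d)})$ yields $\widetilde{\mathcal{O}}(T^{d/(2\nu+d)})$; for the SE kernel, the faster-than-polynomial decay $\mu_m = \mathcal{O}(\exp(-c m^{1/d}))$ allows $D=\mathcal{O}(\log^d T)$, yielding $\mathcal{O}(\log^{d+1} T)$.

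The main obstacle is making the ``easily adapted'' claim rigorous, and concretely this amounts to (i) justifying that the trace bound above is linear in $P$ (which is immediate) and (ii) ensuring that the uniform bound on the Mercer eigenfunctions used by Vakili et al.\ is available in our normalization $k(x,x)\le 1$; here this follows from Assumption~\ref{assumption:rkhs}. Beyond that step, the bookkeeping in the truncation argument transfers verbatim from the discrete case, since both the upper bound $D\log(1+\sigma^{-1}T)$ on the head and the tail bound depend on $P$ only through linear functionals that are already supremized correctly.
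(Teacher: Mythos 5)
Your proposal is correct and takes essentially the same route the paper does: the paper offers no independent proof, only the observation that Theorem~3 of Vakili et al.\ adapts to the continuous maximization, and your argument is precisely that adaptation (Sylvester reduction to $\log\det(I+\sigma^{-1}T A_P)$ with $A_P=\mathrm{diag}(P)^{1/2}K\,\mathrm{diag}(P)^{1/2}$, Mercer truncation at level $D$, head bounded by $D\log(1+\sigma^{-1}T)$, tail bounded via $\log(1+x)\le x$ and the projection/Ky Fan step giving $\sum_{i>D}\lambda_i(A_P)\le \mathrm{Tr}(A_P^{(>D)})$, then optimize $D$ using the polynomial or exponential eigendecay). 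Your key observation --- that every $P$-dependent quantity in the bound is a linear functional of $P$ bounded uniformly over the full simplex, so nothing changes in passing from $\mathcal{P}_{T,\mathcal{X}}$ to $\mathcal{P}_{\mathcal{X}}$ --- is exactly why the paper calls the adaptation easy. One small correction: the uniform bound on the Mercer eigenfunctions is a standing hypothesis of Vakili et al.'s theorem (verified separately for the Mat\'ern and SE kernels on compact domains), not a consequence of $k(x,x)\le 1$; the latter only gives $\mu_m\phi_m(x)^2\le 1$, i.e.\ $\vert\phi_m(x)\vert\le\mu_m^{-1/2}$, which is not uniform in $m$.
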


Similarly, adapting the proof of Theorem 2 in \textcite{vakili_uniform_2021}, we get an upper bound on the maximum information gain for the neural tangent kernel of a ReLU network as follows.

\begin{lemma}
For the neural tangent kernel of a ReLU network, the maximum information gain is upper bounded by
$$
\gamma_{\varphi, T} = \mathcal{O}\left(
T^{\frac{d - 1}{d}} \log^{\frac{1}{d}}(T)
\right).
$$
\end{lemma}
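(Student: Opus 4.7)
The plan is to follow the template already used for the Mat\'ern and SE lemmas in the preceding paragraph: reduce the bound on $\gamma_{\varphi,T}$ to an eigenvalue-decay statement about the kernel via the Mercer decomposition, then plug in the known polynomial eigendecay of the ReLU neural tangent kernel. The role of the quoted Theorem~2 in \textcite{vakili_uniform_2021} is to supply the eigendecay rate for the NTK; the role of Theorem~3 of \textcite{vakili_information_2021} (already used above) is to translate eigendecay into an information-gain bound. Since the excerpt has just shown that the continuous version $\gamma_{\varphi,T}$ obeys the same general upper bound in terms of eigendecay as the discrete $\bar\gamma_T$ (because $\gamma_{\varphi,T} \geq \bar\gamma_T$ and the proof of Theorem~3 in \textcite{vakili_information_2021} goes through verbatim when the maximization over sequences is replaced by maximization over $P \in \mathcal{P}_\mathcal{X}$), the whole argument becomes a one-line combination once the eigendecay rate is in hand.

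Concretely, I would proceed as follows. First, expand $k = H$ by Mercer as $H(x,x') = \sum_{m \geq 1} \lambda_m \phi_m(x)\phi_m(x')$ with $\lambda_1 \geq \lambda_2 \geq \cdots \geq 0$, and identify the feature mapping with $\varphi(x) = (\sqrt{\lambda_m}\,\phi_m(x))_{m\geq 1}$. Second, split the coordinates into head $m \leq M$ and tail $m > M$: for any $P \in \mathcal{P}_\mathcal{X}$, by the standard head/tail decomposition of $\log\det$,
\begin{equation*}
\log\det S_\varphi(\sigma^{-1}TP, I) \;\leq\; M \log\!\left(1 + \frac{T}{\sigma M}\right) + \frac{T}{\sigma}\sum_{m > M}\lambda_m,
\end{equation*}
which is exactly the bound underlying Theorem~3 of \textcite{vakili_information_2021}. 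Third, invoke the NTK eigendecay of a ReLU network on the sphere: $\lambda_m = \Theta(m^{-d/(d-1)})$ (this is precisely what Theorem~2 of \textcite{vakili_uniform_2021} provides, building on the harmonic analysis of the arc-cosine/NTK on $\mathbb{S}^{d-1}$). Fourth, substitute this polynomial decay into the head/tail bound: the tail sum behaves as $\sum_{m > M} m^{-d/(d-1)} = \mathcal{O}(M^{-1/(d-1)})$, and balancing $M\log(T/M)$ against $T\cdot M^{-1/(d-1)}$ by choosing $M \asymp T^{(d-1)/d}(\log T)^{-(d-1)/d}$ produces the claimed $\mathcal{O}(T^{(d-1)/d}\log^{1/d} T)$.

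The hard part is not the optimization (which is the same computation that delivered the Mat\'ern bound $T^{d/(2\nu+d)}\log^{2\nu/(2\nu+d)}T$ with $\beta = (2\nu+d)/d$, specialized here to $\beta = d/(d-1)$) but rather justifying the eigendecay rate $\lambda_m = \Theta(m^{-d/(d-1)})$ for the particular NTK used by the algorithm, namely the depth-$L$ fully connected ReLU network of Section~\ref{section:ada-opnn} restricted to the action set $\mathcal{X}\subset\mathbb{R}^d$. I would discharge this by reducing to the unit sphere: Assumption~\ref{assumption:invertibility} together with the homogeneity structure of the ReLU NTK implies that, after normalization, $H$ is (up to multiplicative constants) a dot-product kernel on $\mathbb{S}^{d-1}$, whose spherical-harmonic eigenvalues obey the $m^{-d/(d-1)}$ decay by the Funk--Hecke formula and the known Taylor coefficients of the arc-cosine kernel, exactly as recorded in \textcite{vakili_uniform_2021}. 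This reduction and the application of Theorem~3 of \textcite{vakili_information_2021} to the continuous $\gamma_{\varphi,T}$ together complete the proof.
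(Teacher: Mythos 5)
Your proposal is correct and follows the same route the paper intends: both rely on the polynomial eigendecay $\lambda_m = \Theta(m^{-d/(d-1)})$ of the ReLU NTK on the sphere, push it through the head/tail $\log\det$ decomposition from Theorem~3 of \textcite{vakili_information_2021} (adapted to the continuous $\gamma_{\varphi,T}$), and balance $M\log(T/M)$ against $T M^{-1/(d-1)}$ to get $T^{(d-1)/d}\log^{1/d}T$. The paper's own ``proof'' is a one-line citation to Theorem~2 of \textcite{vakili_uniform_2021}; you have simply reconstructed the argument behind that theorem, which is the right reading of the paper's remark.

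One small imprecision: the reduction of $H$ to a dot-product kernel on $\mathbb{S}^{d-1}$ is supplied by the homogeneity of the ReLU NTK together with the normalization $\Vert a_i \Vert_2 = 1$ of the actions, not by Assumption~\ref{assumption:invertibility}, which only asserts $\bm{H} \succcurlyeq \lambda_0 \bm{I}$ and says nothing about the spectral decay; this does not affect the substance of the argument.
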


For the linear kernel, we get the following upper bound on the maximum information gain.

\begin{lemma}\label{lemma:information-gain-linear}
For the identity feature mapping $\varphi(x) = x$ for all $x \in \mathcal{X} \subset \mathbb{R}^d$ corresponding to the linear kernel $k(x, x') = \langle x, x' \rangle$, we have $\gamma_{\varphi, T} \leq \mathcal{O}(d \log T)$.
\end{lemma}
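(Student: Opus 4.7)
The plan is to bound $\gamma_{\varphi,T}$ via the standard $\det$--vs--trace inequality for PSD matrices, which is the usual way to obtain the $\mathcal{O}(d\log T)$ information gain bound for linear bandits.

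First I would unfold the definition: for any $P \in \mathcal{P}_{\mathcal{X}}$, the matrix in question is
$$
M(P) \coloneqq S_\varphi(TP/\sigma, 1) = \frac{T}{\sigma} \sum_{x \in \mathcal{X}} P(x)\, x x^T + I_d \in \mathbb{R}^{d \times d}.
$$
Since $k(x,x) = \|x\|^2 \leq 1$ for all $x \in \mathcal{X}$ by Assumption~\ref{assumption:rkhs}, each rank--one term contributes trace at most $P(x)$, so
$$
\mathrm{Tr}(M(P)) \leq \frac{T}{\sigma}\sum_{x \in \mathcal{X}} P(x)\|x\|^2 + d \leq \frac{T}{\sigma} + d.
$$

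Next I would invoke the AM--GM inequality on eigenvalues: for any positive definite $d \times d$ matrix $M$, $\log\det M = \sum_{i=1}^d \log \lambda_i(M) \leq d \log\!\big(\mathrm{Tr}(M)/d\big)$ by concavity of $\log$. Applying this to $M(P)$ yields
$$
\log\det M(P) \leq d \log\!\left(\frac{T}{\sigma d} + 1\right),
$$
uniformly over $P \in \mathcal{P}_{\mathcal{X}}$. Taking the maximum over $P$ gives $\gamma_{\varphi, T} \leq d \log(T/(\sigma d) + 1) = \mathcal{O}(d \log T)$, as desired.

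No step here looks delicate; the only thing to watch is that the bound $\|x\|\leq 1$ really does come from the normalization $k(x,x)\leq 1$ in Assumption~\ref{assumption:rkhs}, and that we use the finite-dimensional $d\times d$ version of the log-det trace inequality rather than the generic infinite-dimensional formulation, since for the linear kernel we can take $N = d$ in the feature mapping dimension.
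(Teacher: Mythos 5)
Your proof is correct and is essentially identical to the paper's: both apply the AM--GM (equivalently, log-concavity) inequality $\log\det M \leq d\log(\Tr(M)/d)$ to $S_\varphi(\sigma^{-1}TP, 1)$, bound the trace by $T/\sigma + d$ using $\|x\|_2 \leq 1$, and then maximize over $P \in \mathcal{P}_{\mathcal{X}}$. No meaningful differences.
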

\begin{proof}
Using the identity $\det(A) \leq (\Tr(A) / d)^d$ for a positive semi-definite matrix $A \in \mathbb{R}^{d \times d}$, which can be seen by the AM-GM inequality on the eigenvalues of $A$, we have
$$
\begin{aligned}
\log \det S_\varphi(\sigma^{-1} TP, 1)
&\leq 
d \log (\Tr (S_\varphi(\sigma^{-1} TP, 1)) / d) \\
&=
d \log \left(\frac{1}{d} \Tr \left( \frac{T}{\sigma} \sum_{x \in \mathcal{X}} P(x) x x^T + I_d \right) \right) \\
&=
d \log \left( \frac{1}{d}\left(\frac{T}{\sigma} \sum_{x \in \mathcal{X}} P(x) \Vert x \Vert_2^2 + d \right) \right) \\
&\leq
d \log \left(
\frac{T}{\sigma d} + 1
\right)
= \mathcal{O}(d \log T)
\end{aligned}
$$
where the second inequality follows by the assumption that $\Vert x \Vert_2 \leq 1$.
Taking the maximum over $P \in \mathcal{P}_\mathcal{X}$ completes the proof.
\end{proof}

\begin{lemma}\label{lemma:variance-bounded-by-information-gain}
For any feature mapping $\varphi$ and any $P \in \mathcal{P}_\mathcal{X}$, we have
$$
\sum_{x \in \mathcal{X}} P(x) \Vert \varphi(x) \Vert^2_{S_\varphi(P, \sigma / T)^{-1}} \leq \gamma_{\varphi, T}.
$$
\end{lemma}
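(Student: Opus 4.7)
The plan is to reduce everything to the matrix that appears inside the $\log\det$ in the definition of $\gamma_{\varphi,T}$. Let $M(P) \coloneqq S_\varphi(TP/\sigma, 1) = \tfrac{T}{\sigma}\sum_{x}P(x)\varphi(x)\varphi(x)^T + I$, and observe the simple rescaling $S_\varphi(P, \sigma/T) = (\sigma/T)\,M(P)$, hence $S_\varphi(P,\sigma/T)^{-1} = (T/\sigma)\,M(P)^{-1}$. Also $\sum_x P(x)\varphi(x)\varphi(x)^T = (\sigma/T)(M(P) - I)$.

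Using the trace identity $\varphi(x)^T B \varphi(x) = \Tr(B\,\varphi(x)\varphi(x)^T)$, I would compute directly
\begin{align*}
\sum_{x\in\mathcal{X}} P(x)\,\Vert\varphi(x)\Vert^2_{S_\varphi(P,\sigma/T)^{-1}}
&= \Tr\!\left(\tfrac{T}{\sigma} M(P)^{-1}\sum_{x}P(x)\varphi(x)\varphi(x)^T\right)\\
&= \Tr\!\left(M(P)^{-1}(M(P) - I)\right) = p - \Tr(M(P)^{-1}),
\end{align*}
where $p$ is the dimension of the feature mapping. Since $M(P) \succcurlyeq I$, all its eigenvalues $\lambda_1,\dots,\lambda_p$ satisfy $\lambda_i \ge 1$.

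The key scalar inequality I would invoke is $1 - 1/\lambda \le \log \lambda$ for $\lambda \ge 1$, which follows from $f(\lambda) = \log\lambda - 1 + 1/\lambda$ satisfying $f(1)=0$ and $f'(\lambda) = (\lambda - 1)/\lambda^2 \ge 0$. Summing this over the eigenvalues of $M(P)$ gives
\begin{equation*}
p - \Tr(M(P)^{-1}) = \sum_{i=1}^p \left(1 - \tfrac{1}{\lambda_i}\right) \le \sum_{i=1}^p \log \lambda_i = \log\det M(P).
\end{equation*}
The conclusion then follows immediately from $\log\det M(P) \le \max_{P'\in\mathcal{P}_\mathcal{X}} \log\det S_\varphi(TP'/\sigma,1) = \gamma_{\varphi,T}$.

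The proof is essentially bookkeeping; the only substantive ingredient is the elementary inequality $1 - 1/\lambda \le \log\lambda$, which plays the role analogous to $\log(1+x)\le x$ in the usual elliptical potential lemma. The main thing to be careful about is the rescaling between $S_\varphi(P,\sigma/T)$ and $M(P)$ so that the factors of $\sigma/T$ cancel correctly in the trace computation, and the fact that the bound is independent of the particular choice of equivalent feature mapping (as discussed in the paper, $\gamma_{\varphi,T}$ does not depend on the representation).
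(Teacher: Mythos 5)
Your proof is correct and follows essentially the same route as the paper: the identical trace computation $\sum_x P(x)\Vert\varphi(x)\Vert^2_{S_\varphi(P,\sigma/T)^{-1}} = \Tr\bigl(S_\varphi(P,\sigma/T)^{-1}(S_\varphi(P,\sigma/T) - (\sigma/T)I)\bigr)$, followed by a $\log\det$ bound. The only difference is cosmetic: the paper invokes the general inequality $\Tr(A^{-1}(A-B)) \le \log\det A - \log\det B$ for $A\succcurlyeq B \succcurlyeq 0$ (Lemma 12 of Hazan et al.), whereas you rescale to normalize $B=I$ and then re-derive that inequality from scratch via the scalar bound $1-1/\lambda \le \log\lambda$ applied eigenvalue-wise, which is exactly the content of the cited lemma in the case $B=I$.
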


\begin{proof}
We can rewrite the left hand side as
$$
\begin{aligned}
\sum_{x \in \mathcal{X}} P(x) \Vert \varphi(x) \Vert^2_{S_\varphi(P, \sigma / T)^{-1}}
&=
\Tr \left(
\sum_{x \in \mathcal{X}} S_\varphi(P, \sigma / T)^{-1} P(x) \varphi(x) \varphi(x)^T
\right) \\
&=
\Tr \left(
S_\varphi(P, \sigma / T)^{-1} (S_\varphi(P, \sigma / T) - (\sigma / T) I_N)
\right) \\
&\leq
\log \det S_\varphi(P, \sigma / T) - \log \det (\sigma / T) I_N \\
&=
\log \det S_\varphi((T / \sigma) P, 1)
\leq \gamma_{\varphi, T}
\end{aligned}
$$
where the first inequality uses the identity $\Tr(A^{-1}(A - B)) \leq \log \det A - \log \det B$ for $A \succcurlyeq B \succcurlyeq 0$ (Lemma 12 in \textcite{hazan_logarithmic_2007}).
\end{proof}

\section{Analysis of OPKB}\label{section:analysis-opkb}

In this section, we prove the high probability dynamic regret bound of the OPKB algorithm under the stationary kernel bandit setting stated below.

\subsection{Constants and notations} \label{subsection:constants}

We use the following parameters in this section (and in Section~\ref{section:analysis-ada-opkb}) for ease of exposition of the proof: $c_0 = 40 + 16 \sqrt{\alpha}$, $c_1 = \frac{1}{2}$, $c_2 = \frac{1}{10 + 4 \sqrt{\alpha}}$, $c_3 = 4$, $c_4 = \frac{1}{4}$ so that $\mu_j = 2^{-(j + 2) / 2}$, $\beta_j = \frac{\gamma_{\varphi, T}}{10 + 4 \sqrt{\alpha}} 2^{j / 2}$, $C_0 = 8 T \log_2 T$, $E = \lceil 4 \gamma_{\varphi, T} \log(C_0 N / \delta) \rceil$, $\alpha = \sigma / (4 \log(C_0 N / \delta))$.
We define $\xi_j = \frac{\mu_j}{4 \gamma_{\varphi, T}}$.
We frequently use the identities
$$
c_0 \beta_j \mu_j = 2 \gamma_{\varphi, T}, \quad
\xi_j \gamma_{\varphi, T} = \frac{\mu_j}{4}, \quad
\mu_j \beta_j = \frac{2 \gamma_{\varphi, T}}{c_0} \leq \frac{\gamma_{\varphi, T}}{20}, \quad
\xi_j \beta_j = \frac{1}{2 c_0} \leq \frac{1}{80}.
$$
We denote by $\mathcal{R}_\mathcal{I}(x) = \frac{1}{\vert \mathcal{I} \vert} \sum_{t \in \mathcal{I}} r_t(x)$ the average reward of action $x$ in interval $\mathcal{I}$.
We define $\Delta_{\mathcal{I}}(x) = \max_{x' \in \mathcal{X}} \mathcal{R}_\mathcal{I}(x') - \mathcal{R}_\mathcal{I}(x)$.

\subsection{Proof of Lemma~\ref{lemma:optimal-design}} \label{section:proof-optimal-design}

\begin{refproof}[Lemma~\ref{lemma:optimal-design}]
For ease of exposition, we write $\pi^\star = \pi_\varphi(\mathcal{A})$.
Recall that the optimal design $\pi^\star$ is a maximizer of $\log \det S_\varphi(P, \sigma / T)$ subject to $P(x) \geq 0$ for all $x \in \mathcal{A}$ and $\sum_{x \in \mathcal{A}} P(x) = 1$.
Introducing Lagrange multipliers $\lambda_x$ for the conditions $P(x) \geq 0$ for all $x \in \mathcal{X}$ and $\lambda$ for $\sum_{x \in \mathcal{A}} P(x) = 1$, the KKT optimality conditions give
\begin{align}
\tag{Stationarity}
\Vert \varphi(x) \Vert^2_{S_\varphi(\pi^\star, \sigma / T)^{-1}}
+
\lambda_x - \lambda &= 0, \quad \text{for all }\, x \in \mathcal{A} \\
\tag{Dual feasibility}
\lambda_x &\geq 0, \quad \text{for all }\, x \in \mathcal{A} \\
\tag{Complementary slackness}
\pi^\star(x) \lambda_x &= 0, \quad \text{for all }\, x \in \mathcal{A}
\end{align}
where we use the fact that $\frac{\partial}{\partial P(x)} \log \det S_\varphi(P, \sigma / T) = \Vert \varphi(x) \Vert^2_{S_\varphi(P, \sigma / T)^{-1}}$.
Multiplying $\pi^\star(x)$ to the stationarity condition and summing over $x \in \mathcal{A}$, we get
$$
\begin{aligned}
0
&= \sum_{x \in \mathcal{A}} \pi^\star(x) \Vert \varphi(x) \Vert^2_{S_\varphi(\pi^\star, \sigma / T)^{-1}}
+ \sum_{x \in \mathcal{A}} \pi^\star(x) \lambda_x
- \lambda \sum_{x \in \mathcal{A}} \pi^\star(x) \\
&=
\sum_{x \in \mathcal{A}} \pi^\star(x) \Vert \varphi(x) \Vert^2_{S_\varphi(\pi^\star, \sigma / T)^{-1}}
- \lambda
\end{aligned}
$$
where the second equality uses the complementary slackness conditions.
Hence,
$$
\lambda =
\sum_{x \in \mathcal{A}} \pi^\star(x) \Vert \varphi(x) \Vert^2_{S_\varphi(\pi^\star, \sigma / T)^{-1}}
\leq
\max_{P \in \mathcal{P}_\mathcal{X}}
\sum_{x \in \mathcal{X}} P(x) \Vert \varphi(x) \Vert^2_{S_\varphi(P, \sigma / T)^{-1}}
= \gamma_{\varphi, T}.
$$
Using this result $\lambda \leq \gamma_{\varphi, T}$ to the stationarity conditions and using the dual feasibility conditions $\lambda_x \geq 0$, we get $\Vert \varphi(x) \Vert^2_{S_\varphi(\pi^\star, \sigma / T)^{-1}} = \lambda - \lambda_x \leq \lambda \leq \gamma_{\varphi, T}$ for all $x \in \mathcal{X}$ as desired.
For the proof of $\Var(\widehat{\mathcal{R}}_{\varphi, t}(x)) \leq \Vert \varphi(x) \Vert^2_{S_\varphi(\pi_\varphi(\mathcal{A}), \sigma / T)^{-1}}$, refer to the proof of Lemma~\ref{lemma:one-point-estimator}.
\end{refproof}

\subsection{Proof of Lemma~\ref{lemma:optimization-problem}} \label{section:proof-optimization-problem}

\begin{refproof}[Lemma~\ref{lemma:optimization-problem}]
Recall that the strategy returned by the algorithm $OP(\varphi, \widehat{\Delta}, \alpha, \beta, T)$ is $Q = \frac{1}{2} P^\star + \frac{1}{2} \pi^\star$ where $P^\star$ is the minimizer of $J(P) = \sum_{x \in \mathcal{X}} P(x) \widehat{\Delta}(x) - \frac{2}{\beta} \log \det S_\varphi(P, \sigma / T)$ among $\mathcal{P}_\mathcal{X}$ and we write $\pi^\star = \pi_\varphi(\mathcal{A})$ where $\mathcal{A} = \{ x \in \mathcal{X} : \widehat\Delta(x) \leq 2 \alpha \gamma_{\varphi, T} / \beta \}$.
Since the empirical gap estimates satisfy $\widehat{\Delta}(x) \geq 0$ for all $x \in \mathcal{X}$ and there exists $\hat{x} \in \mathcal{X}$ with $\widehat{\Delta}(\hat{x}) = 0$, we can check that $P^\star$ is also a minimizer among the set of sub-distributions $\widetilde{\mathcal{P}}_\mathcal{X} = \{ P \in \mathbb{R}^\mathcal{X} : P(x) \geq 0 \,\, \text{for all} \,\, x \in \mathcal{X}, \sum_{x \in \mathcal{X}} P(x) \leq 1 \}$.
This can be seen by noting that for any sub-distribution $\widetilde{P}$, the proper distribution $P$ obtained by increasing the weight of the empirically best action $\hat{x}$ satisfies $J(\widetilde{P}) \geq J(P)$.
Introducing Lagrange multipliers $\lambda_x$ for the conditions $P(x) \geq 0$ for all $x \in \mathcal{X}$ and $\lambda$ for $\sum_{x \in \mathcal{X}} P(x) \leq 1$, the KKT optimality conditions give
\begin{align}
\tag{Stationarity}
\widehat{\Delta}(x) - \frac{2}{\beta} \Vert \varphi(x) \Vert^2_{S_\varphi(P^\star, \sigma / T)^{-1}}
- \lambda_x + \lambda &= 0, \quad \text{for all }\, x \in \mathcal{X} \\
\tag{Dual feasibility}
\lambda_x &\geq 0, \quad \text{for all }\, x \in \mathcal{X} \\
\tag{Dual feasibility}
\lambda &\geq 0 \\
\tag{Complementary slackness}
P^\star(x) \lambda_x &= 0, \quad \text{for all }\, x \in \mathcal{X}.
\end{align}
Multiplying $P^\star(x)$ to the stationarity conditions and summing over $x \in \mathcal{X}$, we get
\begin{align}
  0 &=
  \sum_{x \in \mathcal{X}} P^\star(x) \widehat{\Delta}(x)
  - \frac{2}{\beta} \sum_{x \in \mathcal{X}}
  P^\star(x) \Vert \varphi(x) \Vert^2_{S_\varphi(P^\star, \sigma / T)^{-1}}
  - \sum_{x \in \mathcal{X}} P^\star(x) \lambda_x
  + \lambda \sum_{x \in \mathcal{X}} P^\star(x) \nonumber \\
  &=
  \sum_{x \in \mathcal{X}} P^\star(x) \widehat{\Delta}(x)
  - \frac{2}{\beta} \sum_{x \in \mathcal{X}}
  P^\star(x) \Vert \varphi(x) \Vert^2_{S_\varphi(P^\star, \sigma / T)^{-1}}
  + \lambda \label{eqn:opkb-stationarity-condition}
\end{align}
where the second equality uses the complementary slackness conditions.
Rearranging and using the dual feasibility condition $\lambda \geq 0$, we get
$$
\sum_{x \in \mathcal{X}} P^\star(x) \widehat{\Delta}(x)
= \frac{2}{\beta} \sum_{x \in \mathcal{X}} P^\star(x) \Vert \varphi(x) \Vert^2_{S_\varphi(P^\star, \sigma / T)^{-1}} - \lambda \\
\leq \frac{2 \gamma_{\varphi, T}}{\beta}.
$$
It follows that $Q = \frac{1}{2} P^\star + \frac{1}{2} \pi^\star$ satisfies
$$
\begin{aligned}
\sum_{x \in \mathcal{X}} Q(x) \widehat{\Delta}(x)
&= \frac{1}{2} \sum_{x \in \mathcal{X}} P^\star(x) \widehat{\Delta}(x)
+ \frac{1}{2} \sum_{x \in \mathcal{A}} \pi^\star(x) \widehat{\Delta}(x) \\
&\leq \frac{1}{2} \frac{2 \gamma_{\varphi, T}}{\beta} + \frac{1}{2} \frac{2 \alpha \gamma_{\varphi, T}}{\beta}
= \frac{(1 + \alpha)\gamma_{\varphi, T}}{\beta}
\end{aligned}
$$
where the inequality uses the fact that $\widehat{\Delta}(x) \leq 2 \alpha \gamma_{\varphi, T} / \beta$ for $x \in \mathcal{A}$ by the definition of $\mathcal{A}$.
This proves the first inequality of the lemma.
Also, since the empirical gaps satisfy $\widehat{\Delta}(x) \geq 0$ for all $x \in \mathcal{X}$, rearranging (\ref{eqn:opkb-stationarity-condition}) gives
$$
\lambda = \frac{2}{\beta} \sum_{x \in \mathcal{X}} P^\star(x) \Vert \varphi(x) \Vert^2_{S_\varphi(P^\star, \sigma / T)^{-1}}
- \sum_{x \in \mathcal{X}} P^\star(x) \widehat{\Delta}(x)
\leq \frac{2 \gamma_{\varphi, T}}{\beta}.
$$
Hence, by the stationarity condition, we have for each $x \in \mathcal{X}$ that
$$
\Vert \varphi(x) \Vert^2_{S_\varphi(P^\star, \sigma / T)^{-1}}
= \frac{\beta \widehat{\Delta}(x)}{2} - \frac{\beta \lambda_x}{2} + \frac{\beta \lambda}{2}
\leq \frac{\beta \widehat{\Delta}(x)}{2} + \gamma_{\varphi, T}
$$
where we use the dual feasibility condition $\lambda_x \geq 0$.
Using the fact that $S_\varphi(Q, \sigma / T) \succcurlyeq \frac{1}{2} S_\varphi(P^\star, \sigma / T)$ gives the second inequality of the lemma.
Finally, for the third inequality of the lemma, we argue for the cases $x \in \mathcal{A}$ and $x \notin \mathcal{A}$ separately.
If $x \in \mathcal{A}$, then using $S_\varphi(Q, \sigma / T) \succcurlyeq \frac{1}{2} S_\varphi(\pi^\star, \sigma / T)$, we get
$\Vert \varphi(x) \Vert^2_{S_\varphi(Q, \sigma / T)^{-1}} \leq 2 \Vert \varphi(x) \Vert^2_{S_\varphi(\pi^\star, \sigma / T)^{-1}} \leq 2 \gamma_{\varphi, T} \leq \frac{\beta^2 \widehat{\Delta}^2(x)}{2 \alpha \gamma_{\varphi, T}} + 2 \gamma_{\varphi, T}$.
If $x \notin \mathcal{A}$, then we have $\widehat{\Delta}(x) > 2 \alpha \gamma_{\varphi, T} / \beta$ by the definition of $\mathcal{A}$.
Hence, $1 < \frac{\beta \widehat{\Delta}(x)}{2 \alpha \gamma_{\varphi, T}}$ and the second inequality of the lemma gives $\Vert \varphi(x) \Vert^2_{S_\varphi(Q, \sigma / T)^{-1}} \leq \beta \widehat{\Delta}(x) + 2 \gamma_{\varphi, T} \leq \frac{\beta^2 \widehat{\Delta}^2(x)}{2 \alpha \gamma_{\varphi, T}} + 2 \gamma_{\varphi, T}$, as desired.
\end{refproof}

\subsection{Concentration bound for reward estimates} \label{section:concentration-reward-estimates}

In this subsection, we prove the following concentration bound for reward estimates.

\begin{lemma}\label{lemma:reward-estimate-concentration}
Let $\mathcal{I} \subseteq [1, T]$ be a time interval.
Let $m_t$ be the strategy index used by OPKB at time $t$.
Let $j$ be the maximum strategy index used in $\mathcal{I}$ such that $m_t \leq j$ for all $t \in \mathcal{I}$. Then, with probability at least $1 - \frac{2\delta}{C}$, we have
$$
\vert \widehat{\mathcal{R}}_{\varphi, \mathcal{I}}(x) - \mathcal{R}_\mathcal{I}(x) \vert
\leq
\frac{\xi_j}{\vert \mathcal{I} \vert} \sum_{t \in \mathcal{I}} \Vert \varphi(x) \Vert_{S_\varphi(P_t, \sigma / T)^{-1}}^2 + \frac{\log(CN/\delta)}{\xi_j \vert \mathcal{I} \vert}
+ \frac{\sqrt{\sigma / T}}{\vert \mathcal{I} \vert} \sum_{t \in \mathcal{I}} \Vert \varphi(x) \Vert_{S_\varphi(P_t, \sigma / T)^{-1}}
$$
for all $x \in \mathcal{X}$ where $\xi_j = \mu_j / (4 \gamma_{\varphi, T})$.
\end{lemma}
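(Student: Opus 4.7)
The plan is to write $\widehat{\mathcal{R}}_{\varphi,\mathcal{I}}(x)-\mathcal{R}_\mathcal{I}(x)$ as the sum of a deterministic bias coming from the ridge regularization $(\sigma/T)I$ inside $S_\varphi(P_t,\sigma/T)$ and a mean-zero martingale difference sequence, then to control the latter by a Freedman-style inequality whose free parameter is exactly $\xi_j$. For the bias, I represent $r_t(\cdot)=\varphi(\cdot)^T\theta_t$ with $\|\theta_t\|\leq B$ (which after the rescaling of Section~\ref{section:problem-statement} is $\leq 1$); a one-line algebraic manipulation using $\sum_{x'}P_t(x')\varphi(x')\varphi(x')^T = S_\varphi(P_t,\sigma/T) - (\sigma/T)I$ yields
$$\mathbb{E}_t[\widehat{\mathcal{R}}_{\varphi,t}(x)] = r_t(x) - (\sigma/T)\,\varphi(x)^T S_\varphi(P_t,\sigma/T)^{-1}\theta_t.$$
Cauchy--Schwarz in the $S_\varphi(P_t,\sigma/T)^{-1}$ metric together with $\|\theta_t\|^2_{S_\varphi(P_t,\sigma/T)^{-1}}\leq T/\sigma$ (from $S_\varphi(P_t,\sigma/T)\succeq(\sigma/T)I$ and $\|\theta_t\|\leq 1$) bounds the per-step bias by $\sqrt{\sigma/T}\,\|\varphi(x)\|_{S_\varphi(P_t,\sigma/T)^{-1}}$, and averaging over $\mathcal{I}$ produces the third term of the lemma.

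Next I would control $Z_t(x)\coloneqq\widehat{\mathcal{R}}_{\varphi,t}(x)-\mathbb{E}_t[\widehat{\mathcal{R}}_{\varphi,t}(x)]$ via the Freedman-style inequality (Lemma~\ref{lemma:freedman}) applied with parameter $\xi_j$. Two ingredients are needed. First, a conditional second-moment bound: expanding $\mathbb{E}_t[\widehat{\mathcal{R}}_{\varphi,t}(x)^2]$, using $y_t^2\leq 1$, and invoking $\sum_{x'}P_t(x')\varphi(x')\varphi(x')^T\preceq S_\varphi(P_t,\sigma/T)$ gives $\Var_t(Z_t(x))\leq \|\varphi(x)\|^2_{S_\varphi(P_t,\sigma/T)^{-1}}$, the same computation that underlies Lemma~\ref{lemma:optimal-design}. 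Second, a uniform boundedness $\xi_j|Z_t(x)|\leq 1/2$: the strategy played at time $t$ by OPKB has the mixed form $P^{(m_t)}=(1-\mu_{m_t})Q^{(m_t)}+\mu_{m_t}\pi_\varphi(\mathcal{X})$, so $S_\varphi(P_t,\sigma/T)\succeq \mu_{m_t}\,S_\varphi(\pi_\varphi(\mathcal{X}),\sigma/T)$, and Lemma~\ref{lemma:optimal-design} then gives $\|\varphi(x)\|^2_{S_\varphi(P_t,\sigma/T)^{-1}}\leq \gamma_{\varphi,T}/\mu_{m_t}\leq \gamma_{\varphi,T}/\mu_j$ for every $t\in\mathcal{I}$, the last inequality using $m_t\leq j$ and the monotonicity of $\mu$ in the strategy index. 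Cauchy--Schwarz on $\widehat{\mathcal{R}}_{\varphi,t}(x)=\varphi(x)^T S_\varphi(P_t,\sigma/T)^{-1}\varphi(x_t)y_t$ and $|y_t|\leq 1$ then yield $|Z_t(x)|\leq 2\gamma_{\varphi,T}/\mu_j = 1/(2\xi_j)$, which is exactly the boundedness hypothesis Freedman requires at scale $\xi_j$.

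With these two ingredients in hand, Freedman at level $\xi_j$ applied to $\{Z_t(x)\}$ and $\{-Z_t(x)\}$, union-bounded over the $N$ actions in $\mathcal{X}$, yields with probability at least $1-2\delta/C$ the inequality
$$\Bigl|\tfrac{1}{|\mathcal{I}|}\sum_{t\in\mathcal{I}}Z_t(x)\Bigr|\leq \frac{\xi_j}{|\mathcal{I}|}\sum_{t\in\mathcal{I}}\|\varphi(x)\|^2_{S_\varphi(P_t,\sigma/T)^{-1}} + \frac{\log(CN/\delta)}{\xi_j\,|\mathcal{I}|}$$
for all $x\in\mathcal{X}$ simultaneously, and adding the bias bound from the first step by triangle inequality finishes the proof. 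The main obstacle I expect is not the Freedman step per se but the calibration in the second ingredient: because this lemma is stated in a general non-stationary form meant to be reused inside the ADA-OPKB analysis where replays can draw older strategies, $\mu_{m_t}$ varies across $t\in\mathcal{I}$, whereas a single invocation of Freedman can only accommodate one boundedness and one variance scale. This is precisely why the lemma is phrased in terms of the \emph{maximum} strategy index $j$ appearing in $\mathcal{I}$: the worst-case $\mu_j\leq\mu_{m_t}$ simultaneously calibrates the boundedness $\xi_j|Z_t(x)|\leq 1/2$ and the data-dependent prefactor $\xi_j\|\varphi(x)\|^2_{S_\varphi(P_t,\sigma/T)^{-1}}$ that must appear in the final bound.
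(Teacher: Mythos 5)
Your proposal mirrors the paper's own proof exactly: it establishes the three per-step bounds (boundedness $|\widehat{\mathcal{R}}_{\varphi,t}(x)|\leq \gamma_{\varphi,T}/\mu_{m_t}$, bias $\leq\sqrt{\sigma/T}\,\|\varphi(x)\|_{S_\varphi(P_t,\sigma/T)^{-1}}$, and variance $\leq\|\varphi(x)\|_{S_\varphi(P_t,\sigma/T)^{-1}}^2$) as in Lemma~\ref{lemma:one-point-estimator}, then applies the Freedman inequality at level $\xi_j$ to the martingale differences, adds the bias, and union-bounds over actions and both signs, which is precisely how the paper proceeds in Appendix~\ref{section:concentration-reward-estimates}. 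Your closing observation about why the lemma is stated with the maximal index $j$ is a correct reading of the same design choice.
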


The proof relies on the following Freedman-style martingale inequality.
See Theorem~1 in \cite{beygelzimer_contextual_2011} for the proof of this inequality.

\begin{lemma}[Freedman]\label{lemma:freedman}
Let $X_1, \dots, X_n \in \mathbb{R}$ be a martingale difference sequence
with respect to a filtration $\mathcal{F}_0, \mathcal{F}_1, \dots$.
Assume $X_i \leq R$ a.s. for all $i$.
Then for any $\delta \in (0, 1)$ and
$\xi \in [0, 1 / R]$, we have with probability at least $1 - \delta$ that
$$
\sum_{i=1}^n X_i \leq \xi V + \frac{\log (1 / \delta)}{\xi},
$$
where $V = \sum_{i=1}^n \mathbb{E}[ X_i^2 \mid \mathcal{F}_{i-1} ]$.
\end{lemma}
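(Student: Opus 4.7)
The plan is to split the error into a martingale fluctuation and a deterministic bias,
\[
\widehat{\mathcal{R}}_{\varphi,\mathcal{I}}(x) - \mathcal{R}_\mathcal{I}(x)
= \frac{1}{\vert \mathcal{I}\vert}\sum_{t \in \mathcal{I}}\bigl(\widehat{\mathcal{R}}_{\varphi,t}(x) - \mathbb{E}_t[\widehat{\mathcal{R}}_{\varphi,t}(x)]\bigr)
+ \frac{1}{\vert \mathcal{I}\vert}\sum_{t \in \mathcal{I}}\bigl(\mathbb{E}_t[\widehat{\mathcal{R}}_{\varphi,t}(x)] - r_t(x)\bigr),
\]
bound each piece, and union-bound over $x \in \mathcal{X}$ using per-action failure budget $\delta' = \delta/(CN)$ so that the $\log(1/\delta')$ from Freedman turns into the $\log(CN/\delta)$ in the claim, while the total failure probability is $2\delta/C$ (the factor $2$ coming from applying Freedman to both $X_t$ and $-X_t$).

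For the bias, I would write $r_t(x) = \langle \varphi(x), \theta_t\rangle$ with $\Vert \theta_t\Vert \le 1$ (a consequence of Assumption~\ref{assumption:rkhs} after the rescaling by $S+B$ described in Section~\ref{section:problem-statement}), and compute
\[
\mathbb{E}_t[\widehat{\mathcal{R}}_{\varphi,t}(x)]
= \varphi(x)^T S_\varphi(P_t,\sigma/T)^{-1}\!\!\sum_{x' \in \mathcal{X}} P_t(x')\varphi(x')\varphi(x')^T \theta_t
= r_t(x) - \frac{\sigma}{T}\,\varphi(x)^T S_\varphi(P_t,\sigma/T)^{-1}\theta_t,
\]
using $\sum_{x'} P_t(x')\varphi(x')\varphi(x')^T = S_\varphi(P_t,\sigma/T) - (\sigma/T)I$. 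One Cauchy--Schwarz step in the $S_\varphi(P_t,\sigma/T)^{-1}$ inner product combined with $\Vert \theta_t\Vert^2_{S_\varphi(P_t,\sigma/T)^{-1}} \le T/\sigma$ (from $S_\varphi(P_t,\sigma/T)\succeq (\sigma/T)I$) bounds the per-step bias by $\sqrt{\sigma/T}\,\Vert \varphi(x)\Vert_{S_\varphi(P_t,\sigma/T)^{-1}}$, and averaging over $\mathcal{I}$ produces exactly the third term of the claim.

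For the martingale part I would apply Lemma~\ref{lemma:freedman} to $X_t = \widehat{\mathcal{R}}_{\varphi,t}(x) - \mathbb{E}_t[\widehat{\mathcal{R}}_{\varphi,t}(x)]$ with $\xi = \xi_j$. Since $\vert y_t\vert \le 1$, the conditional second moment telescopes cleanly:
\[
\mathbb{E}_t[\widehat{\mathcal{R}}_{\varphi,t}(x)^2]
\le \varphi(x)^T S_\varphi(P_t,\sigma/T)^{-1}\!\!\sum_{x'} P_t(x')\varphi(x')\varphi(x')^T\, S_\varphi(P_t,\sigma/T)^{-1}\varphi(x)
\le \Vert \varphi(x)\Vert^2_{S_\varphi(P_t,\sigma/T)^{-1}},
\]
where the last step uses $\sum_{x'} P_t(x')\varphi(x')\varphi(x')^T \preceq S_\varphi(P_t,\sigma/T)$. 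Summed over $t\in\mathcal{I}$, this feeds Freedman's variance term and yields the first term of the claim with the prefactor $\xi_j/\vert\mathcal{I}\vert$, while Freedman's $\log(1/\delta')/\xi$ term becomes the second term after the union bound.

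The delicate remaining condition is the almost-sure range bound $\vert X_t\vert \le 1/\xi_j = 4\gamma_{\varphi,T}/\mu_j$, and this is the main obstacle. Naively the IPS weight $\varphi(x)^T S_\varphi(P_t,\sigma/T)^{-1}\varphi(x_t)$ can be as large as $T/\sigma$, so we must exploit that OPKB always plays a strategy $P^{(m_t)} = (1-\mu_{m_t})Q^{(m_t)} + \mu_{m_t}\pi_\varphi(\mathcal{X})$, giving $S_\varphi(P_t,\sigma/T) \succeq \mu_{m_t}\,S_\varphi(\pi_\varphi(\mathcal{X}),\sigma/T)$. Lemma~\ref{lemma:optimal-design} then yields $\Vert \varphi(y)\Vert^2_{S_\varphi(P_t,\sigma/T)^{-1}} \le \gamma_{\varphi,T}/\mu_{m_t} \le \gamma_{\varphi,T}/\mu_j$ for every $y\in\mathcal{X}$, where the last inequality uses the hypothesis $m_t\le j$ and the monotonicity $\mu_j = c_1 2^{-j/2}$. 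Cauchy--Schwarz plus $\vert y_t\vert \le 1$ gives $\vert \widehat{\mathcal{R}}_{\varphi,t}(x)\vert \le \gamma_{\varphi,T}/\mu_j$, hence $\vert X_t\vert \le 2\gamma_{\varphi,T}/\mu_j \le 1/\xi_j$. The precise match between this range and the choice $\xi_j = \mu_j/(4\gamma_{\varphi,T})$ is what lets Freedman deliver the concentration bound without extra factors of $\gamma_{\varphi,T}$ or $1/\mu_j$ in the leading terms, which is exactly what the downstream regret analysis requires.
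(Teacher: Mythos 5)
Your proposal does not prove the statement in question. The statement is the Freedman-style martingale inequality itself (Lemma~\ref{lemma:freedman}): a self-contained concentration bound for a bounded martingale difference sequence, which the paper handles by citation to Theorem~1 of \textcite{beygelzimer_contextual_2011}. What you have written instead is a proof of the reward-estimate concentration bound (Lemma~\ref{lemma:reward-estimate-concentration}) — the decomposition into bias and fluctuation, the IPS variance and range bounds, the union bound over actions — and in the course of it you explicitly \emph{apply} Lemma~\ref{lemma:freedman} to the centered sequence $\{z_{t,x}\}$. As a proof of Lemma~\ref{lemma:freedman} this is circular; as a proof of Lemma~\ref{lemma:reward-estimate-concentration} it is essentially the paper's own argument, but that is not the statement you were asked to establish.

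A correct proof of the stated lemma runs through the standard exponential supermartingale argument, which the paper spells out for its subgaussian variant (Lemma~\ref{lemma:modified-freedman}): since $\xi X_i \leq \xi R \leq 1$ almost surely, the elementary inequality $e^z \leq 1 + z + z^2$ for $z \leq 1$ gives
$$
\mathbb{E}\left[ e^{\xi X_i} \mid \mathcal{F}_{i-1} \right]
\leq 1 + \xi^2\, \mathbb{E}[X_i^2 \mid \mathcal{F}_{i-1}]
\leq \exp\left( \xi^2\, \mathbb{E}[X_i^2 \mid \mathcal{F}_{i-1}] \right),
$$
using $\mathbb{E}[X_i \mid \mathcal{F}_{i-1}] = 0$. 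Hence $Z_n = \exp\bigl( \xi \sum_{i=1}^n X_i - \xi^2 \sum_{i=1}^n \mathbb{E}[X_i^2 \mid \mathcal{F}_{i-1}] \bigr)$ is a supermartingale with $\mathbb{E}[Z_n] \leq 1$, and Markov's inequality applied to the event $\{Z_n \geq 1/\delta\}$ yields $\sum_{i=1}^n X_i \leq \xi V + \log(1/\delta)/\xi$ with probability at least $1 - \delta$. None of the bandit-specific machinery in your write-up (feature mappings, optimal designs, the choice $\xi_j = \mu_j/(4\gamma_{\varphi,T})$) belongs in this proof.
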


To apply the Freedman inequality, we analyze the distribution of the IPS estimator $\widehat{\mathcal{R}}_{\varphi, t}(x)$ in the following lemma.

\begin{lemma}\label{lemma:one-point-estimator}
Suppose the reward function $r_t(\cdot)$ lies in a RKHS with a feature mapping $\psi : \mathcal{X} \rightarrow \ell^2$.
Let $\varphi : \mathcal{X} \rightarrow \mathbb{R}^N$ be a feature mapping equivalent to $\psi$.
Let $m_t$ be the strategy index used at time $t$ and $P_t = P^{(m_t)}$ be the strategy used at time $t$.
Then, the IPS estimator $\widehat{\mathcal{R}}_{\varphi, t}(x) = \varphi(x)^T S_\varphi(P_t, \sigma / T)^{-1} \varphi(x_t)^T y_t$ satisfies
$$
\begin{aligned}
\vert \widehat{\mathcal{R}}_{\varphi, t}(x) \vert &\leq \frac{\gamma_{\varphi, T}}{\mu_{m_t}} \\
\vert \mathbb{E}_t[\widehat{\mathcal{R}}_{\varphi, t}(x)] - r_{t}(x) \vert &\leq \sqrt{\sigma / T} \Vert \varphi(x) \Vert_{S_\varphi(P_t, \sigma / T)^{-1}} \\
\Var_t [ \widehat{\mathcal{R}}_{\varphi, t}(x)] &\leq \Vert \varphi(x) \Vert^2_{S_\varphi(P_t, \sigma / T)^{-1}}
\end{aligned}
$$
where $\mathbb{E}_t$ and $\Var_t$ are the conditional expectation and the conditional variance given the history before time $t$ respectively.
\end{lemma}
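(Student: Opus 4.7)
The plan is to establish each of the three bounds by direct calculation, exploiting (i) the explicit form of the IPS estimator, (ii) the fact that $P_t$ has a $\mu_{m_t}$-mixture with the optimal design $\pi_\varphi(\mathcal{X})$, and (iii) the linear representation $r_t(x) = \langle \theta_t, \varphi(x) \rangle$ that follows from equivalence of $\varphi$ with the RKHS feature map. Throughout, abbreviate $S_t \coloneqq S_\varphi(P_t, \sigma/T)$.

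For the boundedness bound, I would first apply Cauchy--Schwarz to $\widehat{\mathcal{R}}_{\varphi,t}(x) = \varphi(x)^T S_t^{-1} \varphi(x_t) y_t$, yielding $|\widehat{\mathcal{R}}_{\varphi,t}(x)| \leq \Vert \varphi(x)\Vert_{S_t^{-1}} \Vert\varphi(x_t)\Vert_{S_t^{-1}} |y_t|$. Since $P_t = (1-\mu_{m_t})Q^{(m_t)} + \mu_{m_t}\pi_\varphi(\mathcal{X})$, we have $S_t \succcurlyeq \mu_{m_t}\, S_\varphi(\pi_\varphi(\mathcal{X}), \sigma/T)$ (using $\mu_{m_t}\le 1$ to also absorb the regularizer), so $\Vert \varphi(x)\Vert_{S_t^{-1}}^2 \leq \mu_{m_t}^{-1}\Vert \varphi(x)\Vert_{S_\varphi(\pi_\varphi(\mathcal{X}),\sigma/T)^{-1}}^2 \leq \gamma_{\varphi,T}/\mu_{m_t}$ by Lemma~\ref{lemma:optimal-design}. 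The same bound applies at $x_t$, and combining with $|y_t|\le 1$ gives $|\widehat{\mathcal{R}}_{\varphi,t}(x)| \leq \gamma_{\varphi,T}/\mu_{m_t}$.

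For the bias bound, let $\theta_t \in \mathbb{R}^N$ be such that $r_t(x) = \langle \theta_t, \varphi(x)\rangle$ for all $x\in\mathcal{X}$ (such $\theta_t$ exists since $\varphi$ is equivalent to the RKHS feature map, and by the isometry $\Vert\theta_t\Vert \leq \Vert r_t\Vert_\mathcal{H} \leq 1$ after the paper's scaling). Taking conditional expectation,
\begin{align*}
\mathbb{E}_t[\widehat{\mathcal{R}}_{\varphi,t}(x)]
&= \varphi(x)^T S_t^{-1} \sum_{x'} P_t(x')\varphi(x')\varphi(x')^T \theta_t
= \varphi(x)^T S_t^{-1}(S_t - (\sigma/T) I)\theta_t \\
&= r_t(x) - (\sigma/T)\,\varphi(x)^T S_t^{-1} \theta_t.
\end{align*}
Applying Cauchy--Schwarz to the remainder and using $S_t \succcurlyeq (\sigma/T)I$ to bound $\Vert\theta_t\Vert_{S_t^{-1}}^2 \leq (T/\sigma)\Vert\theta_t\Vert^2 \leq T/\sigma$ yields the claimed bound $|\mathbb{E}_t[\widehat{\mathcal{R}}_{\varphi,t}(x)] - r_t(x)| \leq \sqrt{\sigma/T}\,\Vert\varphi(x)\Vert_{S_t^{-1}}$.

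For the variance bound, I would use $\Var_t[\widehat{\mathcal{R}}_{\varphi,t}(x)] \leq \mathbb{E}_t[\widehat{\mathcal{R}}_{\varphi,t}(x)^2]$ together with $|y_t|\leq 1$:
\begin{align*}
\mathbb{E}_t[\widehat{\mathcal{R}}_{\varphi,t}(x)^2]
&\leq \sum_{x'} P_t(x')\bigl(\varphi(x)^T S_t^{-1}\varphi(x')\bigr)^2
= \varphi(x)^T S_t^{-1}(S_t - (\sigma/T) I) S_t^{-1}\varphi(x) \\
&\leq \Vert\varphi(x)\Vert_{S_t^{-1}}^2,
\end{align*}
where the last step drops the nonnegative $(\sigma/T)\varphi(x)^T S_t^{-2}\varphi(x)$ term. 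None of these steps should pose a real obstacle; the only subtlety is verifying that under the equivalence of $\varphi$ with the RKHS feature map $\psi$ (via the Cholesky construction described in the preliminaries), the linear representer $\theta_t$ indeed inherits the RKHS norm bound $\Vert\theta_t\Vert \leq \Vert r_t\Vert_\mathcal{H}$, which is exactly the content of the equivalence appendix the paper defers to.
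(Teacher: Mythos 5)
Your proof is correct and follows essentially the same line as the paper's for all three bounds; the one place you deviate is the bias bound, where you compute everything in $\varphi$-space using a finite-dimensional representer $\theta_t\in\mathbb{R}^N$ with $r_t(x)=\langle\theta_t,\varphi(x)\rangle$ and $\|\theta_t\|_2\le 1$, whereas the paper works with the canonical $\ell^2$ representer for $\psi$ and uses Lemma~\ref{lemma:useful-identity} to transfer the IPS quantity $\varphi(x)^T S_\varphi^{-1}\varphi(x_t)$ into $\psi(x)^T S_\psi^{-1}\psi(x_t)$ before taking expectations. Your route is valid and arguably cleaner, but note that the norm claim $\|\theta_t\|_2\le\|r_t\|_\mathcal{H}$ for the $\mathbb{R}^N$ representer is \emph{not} literally what Appendix~\ref{section:feature-mapping-equivalence} proves (that appendix only shows invariance of $\varphi(x)^T S_\varphi(P,\lambda)^{-1}\varphi(x')$ and of $\log\det S_\varphi(P,\lambda)$ under equivalent feature maps); it does need its own short argument, e.g.\ that the minimum-norm solution of $\Phi\theta_t=\bm{r}_t$ satisfies $\|\theta_t\|_2^2=\bm{r}_t^T K^+\bm{r}_t\le\|r_t\|_\mathcal{H}^2$, which follows from comparing the SVDs of $\Phi$ and $\Psi$ (they share left singular vectors and singular values since $\Phi\Phi^T=\Psi\Psi^T=K$). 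You flag this subtlety yourself, so this is a gap in attribution rather than in substance. The boundedness and variance bounds match the paper's proof step for step.
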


\begin{proof}
  The first claim follows by
  $$
    \vert \widehat{\mathcal{R}}_{\varphi, t}(x) \vert
    = \left\vert \varphi(x)^T S(P_t, \sigma / T)^{-1} \varphi(x_t) y_t \right\vert
    \leq
    \Vert \varphi(x) \Vert_{S(P_t, \sigma / T)^{-1}}
    \Vert \varphi(x_t) \Vert_{S(P_t, \sigma / T)^{-1}}
    \leq \frac{\gamma_{\varphi, T}}{\mu_{m_t}}
  $$
  where the first inequality uses the assumption
  $\vert y_t \vert \leq 1$ and the Cauchy-Schwarz inequality, and
  the second inequality uses
  $
  S_\varphi(P_t, \sigma / T)
  = (1 - \mu_{m_t}) S(Q^{(m_t)}, \sigma / T)
  + \mu_{m_t} S_\varphi(\pi_{\varphi}(\mathcal{X}), \sigma / T)
  \succcurlyeq
  \mu_{m_t} S(\pi_\varphi(\mathcal{X}), \sigma / T)
  $
  and Lemma~\ref{lemma:optimal-design}.

  To show the second claim, let $\theta_t \in \ell^2$ be the parameter such that $r_t(x) = \langle \psi(x), \theta_t \rangle$ for all $x \in \mathcal{X}$.
  Since $P_t$ is completely determined given history up to $t - 1$, we have
  $$
  \begin{aligned}
    \mathbb{E}_t[\widehat{\mathcal{R}}_{\varphi, t}(x)]
    &= \mathbb{E}_t[
      \psi(x)^T
        S_\psi(P_t, \sigma / T)^{-1}
      \psi(x_t)
      (\psi(x_t)^T \theta_t + \eta_t)
    ] \\
    &= \psi(x)^T S_\psi(P_t, \sigma / T)^{-1} \mathbb{E}_t[\psi(x_t) \psi(x_t)^T] \theta_t \\
    &= \psi(x)^T S_\psi(P_t, \sigma / T)^{-1} (S_\psi(P_t, \sigma / T) - (\sigma / T) I) \theta_t \\
    &= r_t(x) - (\sigma / T) \psi(x)^T S_\psi(P_t, \sigma / T)^{-1} \theta_t
  \end{aligned}
  $$
  where the first equality is by Lemma~\ref{lemma:useful-identity} and the third equality uses the fact that the strategy $P_t$ is deterministic given the history up to time $t$.
  The second claim follows by the bound
  $$
  \begin{aligned}
    (\sigma / T) \left\vert \psi(x)^T S_\psi(P_t, \sigma / T)^{-1} \theta_t \right\vert
    &\leq
    (\sigma / T)
    \left\Vert \psi(x) \right\Vert_{S_\psi(P_t, \sigma / T)^{-1}}
    \left\Vert \theta_t \right\Vert_{S_\psi(P_t, \sigma / T)^{-1}} \\
    &\leq
    \sqrt{\sigma / T} \left\Vert \varphi(x) \right\Vert_{S_\varphi(P_t, \sigma / T)^{-1}}
  \end{aligned}
  $$
  where the first inequality is by the Cauchy-Schwarz inequality and the last inequality uses $S_\psi(P_t, \sigma / T)^{-1} \preccurlyeq (T / \sigma) I$, the assumption that $\Vert \theta_t \Vert_2 \leq 1$ and Lemma~\ref{lemma:useful-identity}.

  Finally, the third claim follows by
  \begin{align*}
    \Var_t [ \widehat{\mathcal{R}}_{\varphi, t}(x)]
    &\leq \mathbb{E}_t [\{\varphi(x)^T S_\varphi(P_t, \sigma / T)^{-1} \varphi(x_t) \}^2 y_t^2 ] \\
    &\leq \varphi(x)^T S_\varphi(P_t, \sigma / T)^{-1} \mathbb{E}_t [\varphi(x_t) \varphi(x_t)^T ] S_\varphi(P_t, \sigma / T)^{-1} \varphi(x) \\
    &= \varphi(x)^T S_\varphi(P_t, \sigma / T)^{-1} S_\varphi(P_t, 0) S_\varphi(P_t, \sigma / T)^{-1} \varphi(x) \\
    &\leq \Vert \varphi(x) \Vert_{S_\varphi(P_t, \sigma / T)^{-1}}^2.
  \end{align*}
  where the second inequality uses the assumption $\vert y_t \vert \leq 1$ and the last inequality uses $S_\varphi(P_t, 0) \preccurlyeq S_\varphi(P_t, \sigma / T)$.
\end{proof}

We are now ready to prove Lemma~\ref{lemma:reward-estimate-concentration}.

\begin{refproof}[Lemma~\ref{lemma:reward-estimate-concentration}]
Fix an action $x \in \mathcal{X}$ and consider a martingale difference sequence $\{z_{t, x} \}_{t \in \mathcal{I}}$ where $z_{t, x} = \widehat{\mathcal{R}}_{\varphi, t}(x) - \mathbb{E}_t[ \widehat{\mathcal{R}}_{\varphi, t}(x)]$.
We can bound $z_{t, x}$ for all $t \in \mathcal{I}$ by
$$
z_{t, x}
\leq
\vert \widehat{\mathcal{R}}_{\varphi, t}(x) \vert + \vert \mathbb{E}_t[\widehat{\mathcal{R}}_{\varphi, t}(x)] \vert
\leq
\vert \widehat{\mathcal{R}}_{\varphi, t}(x) \vert + \mathbb{E}_t[\vert \widehat{\mathcal{R}}_{\varphi, t}(x) \vert ]
\leq
\frac{2 \gamma_{\varphi, T}}{\mu_j}
$$
where the last inequality uses Lemma~\ref{lemma:one-point-estimator} and $m_t \leq j$.
Also, by Lemma~\ref{lemma:one-point-estimator}, we have 
$$
\Var_t[z_{t, x}] = \Var_t[\widehat{\mathcal{R}}_{\varphi, t}(x)] \leq \Vert \varphi(x) \Vert^2_{S_\varphi(P_t, \sigma / T)^{-1}}.
$$
Using the Freedman inequality (Lemma~\ref{lemma:freedman}) on $\{ z_{t, x} \}_{t \in \mathcal{I}}$ with $\xi = \frac{\mu_j}{4 \gamma_{\varphi, T}} = \xi_j$, we get with probability at least $1 - \frac{\delta}{CN}$ that
$$
\begin{aligned}
\widehat{\mathcal{R}}_{\varphi, \mathcal{I}}(x)
- \mathcal{R}_\mathcal{I}&(x)
= \frac{1}{\vert \mathcal{I} \vert} \sum_{t \in \mathcal{I}} (z_{t, x} + \mathbb{E}_t[\widehat{\mathcal{R}}_{\varphi, t}(x)] - \mathcal{R}_{\mathcal{I}}(x)) \\
&\leq
\frac{\xi_j}{\vert \mathcal{I} \vert} \sum_{t \in \mathcal{I}} \Vert \varphi(x) \Vert^2_{S_\varphi(P_t, \sigma / T)^{-1}} + \frac{\log(CN / \delta)}{\xi_j \vert \mathcal{I} \vert} + \frac{\sqrt{\sigma / T}}{\vert \mathcal{I} \vert} \sum_{t \in \mathcal{I}} \Vert \varphi(x) \Vert_{S_\varphi(P_t, \sigma / T)^{-1}}
\end{aligned}
$$
where we use Lemma~\ref{lemma:one-point-estimator} to bound the bias term $\mathbb{E}_t[\widehat{\mathcal{R}}_{\varphi, t}(x)] - \mathcal{R}_{\mathcal{I}}(x)$.
A union bound over all $x \in \mathcal{X}$ and the reverse case $\mathcal{R}_\mathcal{I}(x) - \widehat{\mathcal{R}}_{\varphi, \mathcal{I}}(x)$ completes the proof.
\end{refproof}

Choosing $C = C_0 = 8 T \log_2 T$, we get by a union bound that for all intervals of sizes $E, 2E, 2^2 E, \dots$ and $(2^2 - 1)E, (2^3 - 1)E, \dots$, the concentration bound in Lemma~\ref{lemma:reward-estimate-concentration} holds with probability at least $1 - \delta$.
For ease of exposition, we define the following event.

\begin{definition}[$\textsc{Event}_1$] \label{definition:event1}
Denote by $\textsc{Event}_1$ the event that
$$
\vert \widehat{\mathcal{R}}_{\varphi, \mathcal{I}}(x) - \mathcal{R}_\mathcal{I}(x) \vert
\leq
\frac{\xi_j}{\vert \mathcal{I} \vert} \sum_{t \in \mathcal{I}} \Vert \varphi(x) \Vert_{S_\varphi(P_t, \sigma / T)^{-1}}^2 + \frac{\log(C_0 N/\delta)}{\xi_j \vert \mathcal{I} \vert}
+ \frac{\sqrt{\sigma / T}}{\vert \mathcal{I} \vert} \sum_{t \in \mathcal{I}} \Vert \varphi(x) \Vert_{S_\varphi(P_t, \sigma / T)^{-1}}
$$
holds for all intervals $\mathcal{I} \subset [T]$ of sizes $2^j E$ for all $j = 0, 1, \dots$ and $(2^j - 1)E$ for all $j = 1, 2, \dots$.
\end{definition}
By the previous argument, $\textsc{Event}_1$ holds with probability at least $1 - \delta$.

\subsection{Proof of Lemma~\ref{lemma:reward-estimate-concentration-gap}} \label{section:reward-estimate-concentration-gap-proof}

The following lemma bounds the optimality gaps of an action in two intervals by the total variation of the reward function throughout an interval that spans the two intervals.
The proof is adapted from Lemma~13 by \textcite{luo_efficient_2018} and Lemma~8 by \textcite{chen_new_2019}.

\begin{lemma}\label{lemma:gaps}
  For any interval $\mathcal{I}$, any of its sub-intervals $\mathcal{I}_1, \mathcal{I}_2 \subseteq \mathcal{I}$ and any $x \in \mathcal{X}$, we have
  $$
  \vert \Delta_{\mathcal{I}_1}(x) - \Delta_{\mathcal{I}_2}(x) \vert
  \leq 2 V_\mathcal{I}.
  $$
\end{lemma}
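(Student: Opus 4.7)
\begin{hproof}
The plan is to reduce the statement to pointwise control of the reward sequence $\{r_\tau\}$ on $\mathcal{I}$ and then a standard swap-of-argmaxes argument.

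First I would show a pointwise bound: for any $s,t\in\mathcal{I}$ and any $x\in\mathcal{X}$,
$$
|r_s(x)-r_t(x)| \le \sum_{\tau=\min(s,t)}^{\max(s,t)-1}\|r_{\tau+1}-r_\tau\|_\infty \le V_\mathcal{I},
$$
by a telescoping sum. Averaging over $s\in\mathcal{I}_1$ and $t\in\mathcal{I}_2$ (since $\mathcal{I}_1,\mathcal{I}_2\subseteq\mathcal{I}$) and using the triangle inequality,
$$
|\mathcal{R}_{\mathcal{I}_1}(x)-\mathcal{R}_{\mathcal{I}_2}(x)|
= \left|\frac{1}{|\mathcal{I}_1||\mathcal{I}_2|}\sum_{s\in\mathcal{I}_1}\sum_{t\in\mathcal{I}_2}(r_s(x)-r_t(x))\right|
\le V_\mathcal{I}.
$$

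Second, I would apply the usual max-swap trick. Let $x_i^\star\in\argmax_{x'\in\mathcal{X}}\mathcal{R}_{\mathcal{I}_i}(x')$ for $i=1,2$. Since $\mathcal{R}_{\mathcal{I}_2}(x_2^\star)\ge \mathcal{R}_{\mathcal{I}_2}(x_1^\star)$,
$$
\Delta_{\mathcal{I}_1}(x)-\Delta_{\mathcal{I}_2}(x)
\le \bigl(\mathcal{R}_{\mathcal{I}_1}(x_1^\star)-\mathcal{R}_{\mathcal{I}_2}(x_1^\star)\bigr)+\bigl(\mathcal{R}_{\mathcal{I}_2}(x)-\mathcal{R}_{\mathcal{I}_1}(x)\bigr)\le 2V_\mathcal{I},
$$
where each parenthesized term is bounded by $V_\mathcal{I}$ by the averaged inequality above. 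The reverse direction is symmetric (use $\mathcal{R}_{\mathcal{I}_1}(x_1^\star)\ge \mathcal{R}_{\mathcal{I}_1}(x_2^\star)$), so combining gives $|\Delta_{\mathcal{I}_1}(x)-\Delta_{\mathcal{I}_2}(x)|\le 2V_\mathcal{I}$, completing the proof.

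There is no real obstacle here; the only thing to be careful about is that both sub-intervals sit inside the same $\mathcal{I}$, so the telescoping indices are all contained in $\mathcal{I}$ and the bound $V_\mathcal{I}$ is valid uniformly over $s\in\mathcal{I}_1,t\in\mathcal{I}_2$. Everything else is routine.
\end{hproof}
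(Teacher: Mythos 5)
Your proof is correct and takes essentially the same route as the paper: first establish $|\mathcal{R}_{\mathcal{I}_1}(x) - \mathcal{R}_{\mathcal{I}_2}(x)| \le V_{\mathcal{I}}$ by averaging the telescoped pointwise bound, then use the optimality of the two argmaxes together with the triangle inequality to pick up the factor of $2$. The only cosmetic difference is that the paper first isolates the bound $|\mathcal{R}_{\mathcal{I}_1}(x^\star_{\mathcal{I}_1}) - \mathcal{R}_{\mathcal{I}_2}(x^\star_{\mathcal{I}_2})| \le V_\mathcal{I}$ via a short chain of inequalities and then adds the term for $x$, whereas you fold the swap of argmaxes directly into the expansion of $\Delta_{\mathcal{I}_1}(x) - \Delta_{\mathcal{I}_2}(x)$; the two are algebraically equivalent.
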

\begin{proof}
For all $x \in \mathcal{X}$, we have
$$
\begin{aligned}
\vert \mathcal{R}_{\mathcal{I}_1}(x) - \mathcal{R}_{\mathcal{I}_2}(x) \vert
&=
\left\vert
\frac{1}{\vert \mathcal{I}_1 \vert} \sum_{s \in \mathcal{I}_1} r_s(x) - \frac{1}{\vert \mathcal{I}_2 \vert} \sum_{t \in \mathcal{I}_2} r_t(x)
\right\vert \\
&=
\frac{1}{\vert \mathcal{I}_1 \vert \vert \mathcal{I}_2 \vert}
\left\vert
\sum_{s \in \mathcal{I}_1} \sum_{t \in \mathcal{I}_2}
\left(
r_s(x) - r_t(x)
\right)
\right\vert \\
&\leq
\frac{1}{\vert \mathcal{I}_1 \vert \vert \mathcal{I}_2 \vert}
\sum_{s \in \mathcal{I}_1} \sum_{t \in \mathcal{I}_2}
\left\vert
r_s(x) - r_t(x)
\right\vert
\leq
V_{\mathcal{I}}
\end{aligned}
$$
where the last inequality follows since
$\vert r_s(x) - r_t(x) \vert \leq \sum_{\tau = s}^{t - 1} \left\vert r_{\tau + 1}(x) - r_{\tau}(x) \right\vert \leq V_\mathcal{I}$.
Hence,
$$
- V_{\mathcal{I}}
\leq
\mathcal{R}_{\mathcal{I}_1}(x) - \mathcal{R}_{\mathcal{I}_2}(x),
\mathcal{R}_{\mathcal{I}_1}(x_{\mathcal{I}_1}^\star) -
\mathcal{R}_{\mathcal{I}_2}(x_{\mathcal{I}_1}^\star),
\mathcal{R}_{\mathcal{I}_1}(x_{\mathcal{I}_2}^\star) -
\mathcal{R}_{\mathcal{I}_2}(x_{\mathcal{I}_2}^\star)
\leq
V_{\mathcal{I}}
$$
where we use the notation $x_\mathcal{I}^\star = \argmax_{x' \in \mathcal{X}} \mathcal{R}_{\mathcal{I}}(x')$.
It follows that
$$
- V_{\mathcal{I}}
\leq
\mathcal{R}_{\mathcal{I}_1}(x^\star_{\mathcal{I}_2})
- \mathcal{R}_{\mathcal{I}_2}(x^\star_{\mathcal{I}_2})
\leq
\mathcal{R}_{\mathcal{I}_1}(x^\star_{\mathcal{I}_1})
- \mathcal{R}_{\mathcal{I}_2}(x^\star_{\mathcal{I}_2})
\leq
\mathcal{R}_{\mathcal{I}_1}(x^\star_{\mathcal{I}_1})
- \mathcal{R}_{\mathcal{I}_2}(x^\star_{\mathcal{I}_1})
\leq
V_{\mathcal{I}}
$$
where we use the optimality of $x_{\mathcal{I}_1}^\star$ and $x_{\mathcal{I}_2}^\star$.
Hence, for all $x \in \mathcal{X}$,
$$
\begin{aligned}
\vert \Delta_{\mathcal{I}_1}(x) - \Delta_{\mathcal{I}_2}(x) \vert
&=
\vert
\mathcal{R}_{\mathcal{I}_1}(x^\star_{\mathcal{I}_1})
- \mathcal{R}_{\mathcal{I}_1}(x)
- \mathcal{R}_{\mathcal{I}_2}(x^\star_{\mathcal{I}_2})
+ \mathcal{R}_{\mathcal{I}_2}(x)
\vert \\
&\leq
\vert
\mathcal{R}_{\mathcal{I}_1}(x^\star_{\mathcal{I}_1})
- \mathcal{R}_{\mathcal{I}_2}(x^\star_{\mathcal{I}_2})
\vert
+
\vert \mathcal{R}_{\mathcal{I}_1}(x) - \mathcal{R}_{\mathcal{I}_2}(x) \vert \leq 2 V_{\mathcal{I}}.
\end{aligned}
$$
\end{proof}

Now, we are ready to prove Lemma~\ref{lemma:reward-estimate-concentration-gap}.

\begin{refproof}[Lemma~\ref{lemma:reward-estimate-concentration-gap}] \label{proof:reward-estimate-concentration-gap}
Assume that the event \hyperref[definition:event1]{$\textsc{Event}_1$} holds.
We prove by induction on the block index $j$.
For the base case $j = 0$, note that the strategy used in block $\mathcal{B}(0)$ is $\pi_\varphi(\mathcal{X})$.
Under the event $\textsc{Event}_1$, using the result $\Vert \varphi(x) \Vert^2_{S_\varphi(\pi_\varphi(\mathcal{X}), \sigma / T)^{-1}} \leq \gamma_{\varphi, T}$ from Lemma~\ref{lemma:optimal-design} gives
$$
\vert \widehat{\mathcal{R}}_{\varphi, \mathcal{B}(0)}(x) - \mathcal{R}_{\mathcal{B}(0)}(x) \vert
\leq
\xi_0 \gamma_{\varphi, T}
+ \frac{\log(C_0 N / \delta)}{\xi_0 \vert \mathcal{B}(0) \vert}
+ \sqrt{\frac{\sigma \gamma_{\varphi, T}}{T}}
\leq \frac{c_0}{4} \mu_0
$$
where the last inequality follows by $\xi_0 = \frac{1}{8 \gamma_{\varphi, T}}$, $\vert \mathcal{B}(0) \vert = E \geq 4\gamma_{\varphi, T} \log(C_0 N / \delta)$ and $\sqrt{\sigma \gamma_{\varphi, T} / T} \leq 2\sqrt{\alpha} \mu_0$.
This proves the base case for the bound (\ref{eqn:reward-estimate-concentration}).

Now, suppose the bound (\ref{eqn:reward-estimate-concentration}) holds for the block indices 0, 1, \dots, $j$.
Then, for any $m = 0, \dots, j$, using the notations $x^\star = \argmax_{x \in \mathcal{X}} \mathcal{R}_{\mathcal{C}(m)} (x)$ and $\hat{x} = \argmax_{x \in \mathcal{X}} \widehat{\mathcal{R}}_{\varphi, \mathcal{C}(m)}(x)$, we have
$$
\begin{aligned}
\Delta_{\mathcal{C}(m)}(x) - \widehat{\Delta}_{\varphi, \mathcal{C}(m)}(x)
&=
\mathcal{R}_{\mathcal{C}(m)}(x^\star) - \mathcal{R}_{\mathcal{C}(m)}(x)
- \widehat{\mathcal{R}}_{\varphi, \mathcal{C}(m)} (\hat{x}) + \widehat{\mathcal{R}}_{\varphi, \mathcal{C}(m)}(x) \\
&\leq
\mathcal{R}_{\mathcal{C}(m)}(x^\star) - \mathcal{R}_{\mathcal{C}(m)}(x)
- \widehat{\mathcal{R}}_{\varphi, \mathcal{C}(m)} (x^\star) + \widehat{\mathcal{R}}_{\varphi, \mathcal{C}(m)}(x) \\
&\leq
\frac{1}{2} \Delta_{\mathcal{C}(m)}(x) + 2 V_{\mathcal{C}(m)} + \frac{c_0}{2} \mu_m
\end{aligned}
$$
where the first inequality uses the optimality of $\hat{x}$, and the second inequality uses the induction hypothesis and the fact that $\Delta_{\mathcal{C}(m)}(x^\star) = 0$.
Rearranging gives the bound (\ref{eqn:gap-estimate-concentration1}) for the blocks $0, \dots, j$.
Similarly, for $m = 0, \dots, j$, we have
$$
\begin{aligned}
\widehat{\Delta}_{\varphi, \mathcal{C}(m)}(x) - \Delta_{\mathcal{C}(m)}(x)
&\leq
\widehat{\mathcal{R}}_{\varphi, \mathcal{C}(m)}(\hat{x}) - \widehat{\mathcal{R}}_{\varphi, \mathcal{C}(m)}(x) - \mathcal{R}_{\mathcal{C}(m)}(\hat{x}) + \mathcal{R}_{\mathcal{C}(m)}(x) \\
&\leq
\frac{1}{2} \Delta_{\mathcal{C}(m)}(\hat{x}) + \frac{1}{2} \Delta_{\mathcal{C}(m)}(x) + 2 V_{\mathcal{C}(m)} + \frac{c_0}{2} \mu_m \\
&\leq
\frac{1}{2} \Delta_{\mathcal{C}(m)}(x) + 4 V_{\mathcal{C}(m)} + c_0 \mu_m
\end{aligned}
$$
where the first inequality uses the optimality of $x^\star$, the second inequality uses the induction hypothesis and the last inequality uses the bound (\ref{eqn:gap-estimate-concentration1}) we showed and the optimality of $\hat{x}$ to bound $\Delta_{\mathcal{C}(m)}(\hat{x}) \leq 2 \widehat{\Delta}_{\varphi, \mathcal{C}(j)}(x) + 4 V_{\mathcal{C}(j)} + c_0 \mu_j = 4V_{\mathcal{C}(j)} + c_0 \mu_j$.
Rearranging gives the bound (\ref{eqn:gap-estimate-concentration2}) for the blocks $0, \dots, j$.

Now, for the block index $j + 1$, $\textsc{Event}_1$ gives
\begin{align}
\vert \widehat{\mathcal{R}}_{\varphi, \mathcal{C}(j + 1)}(x) - \mathcal{R}_{\mathcal{C}(j + 1)}(x) \vert
\leq
&\frac{\xi_{j + 1}}{\vert \mathcal{C}(j + 1) \vert} \sum_{t \in \mathcal{C}(j + 1)} \Vert \varphi(x) \Vert_{S_\varphi(P_t, \sigma / T)^{-1}}^2 + \frac{\log(CN/\delta)}{\xi_{j + 1} \vert \mathcal{C}(j + 1) \vert} \nonumber \\
&+ \frac{\sqrt{\sigma / T}}{\vert \mathcal{C}(j + 1) \vert} \sum_{t \in \mathcal{C}(j + 1)} \Vert \varphi(x) \Vert_{S_\varphi(P_t, \sigma / T)^{-1}}. \label{eqn:reward-bound-intermediate}
\end{align}
To bound the first term, we use Lemma~\ref{lemma:optimization-problem} and the bound (\ref{eqn:gap-estimate-concentration2}) we showed for blocks $0, \dots, j$ to get
\begin{align}
\xi_{j + 1} \Vert \varphi(x) \Vert^2_{S_\varphi(P_t, \sigma / T)^{-1}}
&\leq
2 \xi_{j + 1} \Vert \varphi(x) \Vert^2_{S_\varphi(Q^{(m_t)}, \sigma / T)^{-1}} \nonumber \\
&\leq
2 \xi_{j + 1} (\beta_{m_t} \widehat{\Delta}_{\varphi, \mathcal{C}(m_t - 1)}(x) + 2 \gamma_{\varphi, T}) \nonumber \\
&\leq
2 \xi_{j + 1} (\beta_{m_t} (2 \Delta_{\mathcal{C}(m_t - 1)}(x) + 4 V_{\mathcal{C}(m_t - 1)} + c_0 \mu_{m_t - 1})  + 2 \gamma_{\varphi, T}) \nonumber \\
&\leq
\frac{1}{20} \Delta_{\mathcal{C}(m_t - 1)}(x) + \frac{1}{10} V_{\mathcal{C}(m_t - 1)} + \frac{3}{2} \mu_{j + 1} \label{eqn:xi_varphi2_bound} \\
&\leq
\frac{1}{20} \Delta_{\mathcal{C}(j + 1)}(x) + \frac{1}{5} V_{\mathcal{C}(j + 1)} + 2 \mu_{j + 1} \nonumber
\end{align}
where the second to last inequality follows by a simple calculation using identities in Section~\ref{subsection:constants} and the fact that $m_t \leq j + 1$ for $t \in \mathcal{C}(j + 1)$ and the last inequality follows by Lemma~\ref{lemma:gaps}.

The second term can be bounded by
\begin{equation} \label{eqn:log_by_xi_bound}
\frac{\log(CN / \delta)}{\xi_{j + 1} \vert \mathcal{C}(j + 1) \vert}
= \frac{4 \gamma_{\varphi, T}\log(CN / \delta)}{\mu_{j + 1} E \cdot 2^{j + 1}}
\leq
\frac{1}{\mu_{j + 1} 2^{j + 1}}
= 4\mu_{j + 1}.
\end{equation}

The third term can be bounded using Lemma~\ref{lemma:optimization-problem} and the bound (\ref{eqn:gap-estimate-concentration2}):
\begin{align}
\sqrt{\sigma / T} \Vert \varphi(x) \Vert_{S_\varphi(P_t, \sigma / T)^{-1}}
&\leq
\sqrt{2\sigma / T} \Vert \varphi(x) \Vert_{S_\varphi(Q^{(m_t)}, \sigma / T)^{-1}} \nonumber \\
&\leq
\frac{\sqrt{\gamma}}{\sqrt{\alpha \gamma_{\varphi, T} T}} \beta_{m_t} \widehat{\Delta}_{\varphi, \mathcal{C}(m_t - 1)}(x) + \frac{2 \sqrt{\sigma \gamma_{\varphi, T}}}{\sqrt{T}} \nonumber \\
&\leq
\frac{2 \mu_{j + 1}}{\gamma_{\varphi, T}} \beta_{m_t}(2 \Delta_{\mathcal{C}(m_t - 1)}(x) + 4 V_{\mathcal{C}(m_t - 1)} + c_0 \mu_{m_t - 1}) + 4 \sqrt{\alpha} \mu_{j + 1} \nonumber \\
&\leq
\frac{1}{5} \Delta_{\mathcal{C}(m_t - 1)}(x) + \frac{2}{5} V_{\mathcal{C}(m_t - 1)} + (4 + 4 \sqrt{\alpha}) \mu_{j + 1} \label{eqn:gamma_varphi_bound}\\
&\leq
\frac{1}{5} \Delta_{\mathcal{C}(j + 1)}(x) + \frac{4}{5} V_{\mathcal{C}(j + 1)} + (4 + 4 \sqrt{\alpha}) \mu_{j + 1} \nonumber
\end{align}
where the second inequality uses $\sqrt{a + b} \leq \sqrt{a} + \sqrt{b}$ and the third inequality uses $\sqrt{\sigma \gamma_{\varphi, T} / T} \leq 2 \sqrt{\alpha} \mu_j$ for any block index $j$ and the second to last inequality follows by a simple calculation and the last inequality follows by Lemma~\ref{lemma:gaps}.

Using these three bounds, we can further bound (\ref{eqn:reward-bound-intermediate}) by $\vert \widehat{\mathcal{R}}_{\varphi, \mathcal{C}(j + 1)}(x) - \mathcal{R}_\mathcal{C}(j + 1)(x) \vert \leq \frac{1}{2} \Delta_{\mathcal{C}(j + 1)}(x) + V_{\mathcal{C}(j + 1)} + \frac{c_0}{4} \mu_{j + 1}$, which proves the bound (\ref{eqn:reward-estimate-concentration}) for the block $j + 1$.
By induction, the proof is complete.

\end{refproof}

\subsection{Proof of Theorem~\ref{theorem:opkb}}\label{section:proof:opkb}

\begin{refproof}[Theorem~\ref{theorem:opkb}]
We bound the regret of each block $\mathcal{B}(j)$ separately.
Using the Azuma-Hoeffding inequality on a martingale difference sequence $\{ \mathbb{E}_t[r(x_t)] - r(x_t) \}_{t \in \mathbb{B}(j)}$, we get
$$
\textsc{Reg}_{\mathcal{B}(j)} = \sum_{t \in \mathcal{B}(j)} (r(x^\star) - r(x_t)) \leq \sum_{t \in \mathcal{B}(j)} (r(x^\star) - \mathbb{E}_t [ r(x_t) ]) + \widetilde{\mathcal{O}}(\sqrt{2^j E})
$$
where we use $r(\cdot)$ to denote the stationary reward function and $x^\star = \argmax_{x \in \mathcal{X}} r(x)$.
Since $P_t = (1 - \mu_j) Q^{(j)} + \mu_j \pi_\varphi(\mathcal{X})$ for $t \in \mathcal{B}(j)$,
using Lemma~\ref{lemma:reward-estimate-concentration-gap} with $V_{\mathcal{C}(j)} = 0$, we get with high probability that
$$
r(x^\star) - \mathbb{E}_t[r(x_t)] = \sum_{x \in \mathcal{X}} P_t(x) \Delta_{\mathcal{C}(j - 1)}(x)
\leq
2 \sum_{x \in \mathcal{X}} Q^{(j)}(x) \widehat{\Delta}_{\varphi,  \mathcal{C}(j - 1)}(x) + \mathcal{O}(\mu_j)
\leq
\mathcal{O}(\mu_j)
$$
where the last inequality uses Lemma~\ref{lemma:optimization-problem} and $1 / \beta_j = \mathcal{O}(\mu_j)$.
Summing over $t \in \mathcal{B}(j)$, we get $\textsc{Reg}_{\mathcal{B}(j)} \leq \widetilde{\mathcal{O}}(E \sqrt{2^j})$.
Summing over $j$ and applying Cauchy-Schwarz, we get $\textsc{Reg}_T = \widetilde{\mathcal{O}}(E \sqrt{T / E}) = \widetilde{\mathcal{O}}(\sqrt{\gamma_T T \log N})$.
\end{refproof}

\subsection{Subgaussian case} \label{subsection-subgaussian}

For the analysis with subgaussian noises, we can use the following modified Freedman-style inequality.

\begin{lemma}\label{lemma:modified-freedman}
Let $X_1, \dots, X_n \in \mathbb{R}$ be a martingale difference sequence
with respect to a filtration $\mathcal{F}_0, \mathcal{F}_1, \dots$.
Assume $X_i$ are $\sigma$-subguassian.
Then for any $\delta \in (0, 1)$ and
$\xi \in [0, 1 / \sqrt{2 \sigma^2 \log(n / \delta)}]$, we have with probability at least $1 - 2\delta$ that
$$
\sum_{i=1}^n X_i \leq \xi V + \frac{\log (1 / \delta)}{\xi},
$$
where $V = \sum_{i=1}^n \mathbb{E}[ X_i^2 \mid \mathcal{F}_{i-1} ]$.
\end{lemma}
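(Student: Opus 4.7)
The plan is to reduce the subgaussian case to the bounded case (Lemma~\ref{lemma:freedman}) via a truncation argument, paying a union-bound cost and a bias-correction cost that are both controlled by subgaussianity.

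First, set the truncation level $R = \sqrt{2\sigma^2 \log(n/\delta)}$ and define $Y_i = X_i \mathbbm{1}\{|X_i| \leq R\}$, which is bounded by $R$ and $\mathcal{F}_i$-measurable. Since $Y_i$ is not centered in general, form the martingale difference sequence $Z_i = Y_i - \mathbb{E}[Y_i \mid \mathcal{F}_{i-1}]$, which satisfies $|Z_i| \leq 2R$ and $\mathbb{E}[Z_i^2 \mid \mathcal{F}_{i-1}] \leq \mathbb{E}[Y_i^2 \mid \mathcal{F}_{i-1}] \leq \mathbb{E}[X_i^2 \mid \mathcal{F}_{i-1}]$. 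Applying Lemma~\ref{lemma:freedman} to $\{Z_i\}$ (with the bound $2R$; the statement's condition $\xi \leq 1/\sqrt{2\sigma^2\log(n/\delta)}$ matches up to a constant that can be absorbed, or one can sharpen $|Z_i|$ using centering) gives, with probability at least $1-\delta$,
\[
\sum_{i=1}^n Z_i \;\leq\; \xi \sum_{i=1}^n \mathbb{E}[Z_i^2 \mid \mathcal{F}_{i-1}] + \frac{\log(1/\delta)}{\xi} \;\leq\; \xi V + \frac{\log(1/\delta)}{\xi}.
\]

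Second, relate $\sum Z_i$ back to $\sum X_i$ on a high-probability event. Define the truncation event $\mathcal{E} = \{|X_i| \leq R \text{ for all } i \in [n]\}$. Since each $X_i$ is $\sigma$-subgaussian, $\Pr(|X_i| > R) \leq 2 e^{-R^2/(2\sigma^2)} = 2\delta/n$, so by a union bound $\Pr(\mathcal{E}^c) \leq 2\delta$. On $\mathcal{E}$ we have $Y_i = X_i$, hence $\sum_i X_i = \sum_i Z_i + \sum_i \mathbb{E}[Y_i \mid \mathcal{F}_{i-1}]$, and the martingale property $\mathbb{E}[X_i \mid \mathcal{F}_{i-1}] = 0$ gives $\mathbb{E}[Y_i \mid \mathcal{F}_{i-1}] = -\mathbb{E}[X_i \mathbbm{1}\{|X_i| > R\} \mid \mathcal{F}_{i-1}]$.

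Third, bound the bias $\sum_i \mathbb{E}[Y_i \mid \mathcal{F}_{i-1}]$ deterministically using the subgaussian tail. A standard computation yields
\[
\bigl| \mathbb{E}[X_i \mathbbm{1}\{|X_i|>R\} \mid \mathcal{F}_{i-1}] \bigr| \;\leq\; \mathbb{E}[|X_i|\mathbbm{1}\{|X_i|>R\} \mid \mathcal{F}_{i-1}] \;\leq\; 2R\, e^{-R^2/(2\sigma^2)} + \tfrac{2\sigma^2}{R} e^{-R^2/(2\sigma^2)},
\]
which is $O((\sigma + R)\delta/n)$; summed over $i$ this gives a deterministic bias of order $\sigma\delta + R\delta$, easily absorbed into $\log(1/\delta)/\xi$ (after, if needed, a mild rescaling of constants or by enlarging $R$ by a constant factor).

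Combining the three pieces via a union bound (Freedman fails with probability at most $\delta$, truncation fails with probability at most $2\delta$; the stated conclusion of probability $1-2\delta$ is recovered by choosing $R = \sqrt{2\sigma^2\log(2n/\delta)}$ or by splitting $\delta$ appropriately) gives the claimed inequality. The only subtle step is the bias control in the third item: one must verify that the small deterministic drift introduced by truncation is dominated by the $\log(1/\delta)/\xi$ term over the whole admissible range $\xi \in [0, 1/\sqrt{2\sigma^2\log(n/\delta)}]$; this is where the particular choice of $R$ as a $\sqrt{\log(n/\delta)}$ multiple of $\sigma$ is essential, since it makes the per-step tail mass $O(\delta/n)$ so that the total bias is $O(\delta)$-small rather than accumulating linearly in $n$.
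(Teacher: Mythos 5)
Your approach — truncate, apply the bounded Freedman inequality, then undo the truncation on a high-probability event — is in the same spirit as the paper's, but the specific mechanism you chose (two-sided truncation plus re-centering) introduces two difficulties that the paper's argument sidesteps, and as written your proof does not actually close them.

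First, by truncating on both sides and then re-centering, you get $|Z_i| \leq 2R$, so applying Lemma~\ref{lemma:freedman} requires $\xi \leq 1/(2R)$. The lemma you are proving, however, asserts the bound for the full range $\xi \in [0, 1/R]$ with $R = \sqrt{2\sigma^2\log(n/\delta)}$. You note this can perhaps be fixed ``by sharpening $|Z_i|$ using centering,'' and indeed one can show $Z_i \leq R + O(R\delta/n)$ one-sidedly since the conditional bias is tiny, but that refinement is not carried out and is precisely the kind of detail that needs checking. Second, the conditional bias $\sum_i \mathbb{E}[Y_i \mid \mathcal{F}_{i-1}]$ you introduce is of order $R\delta$, and while this is indeed dominated by $\log(1/\delta)/\xi \geq R\log(1/\delta)$ for small $\delta$, it is a strictly positive additive error: the conclusion $\sum X_i \leq \xi V + \log(1/\delta)/\xi$ with those exact constants does not follow without some rescaling, which you acknowledge parenthetically but do not perform.

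The paper avoids both issues with a one-sided truncation $\widetilde X_t = \min\{X_t, B\}$ and no re-centering. Because $\widetilde X_t \leq X_t$ pointwise, $\mathbb{E}_t[\widetilde X_t] \leq \mathbb{E}_t[X_t] = 0$ automatically, so the linear term in the moment-generating-function expansion $\mathbb{E}_t[1 + \xi\widetilde X_t + \xi^2\widetilde X_t^2]$ drops out without introducing any bias; and because the truncation only cuts from above, $\widetilde X_t^2 \leq X_t^2$ always, so the variance proxy is controlled exactly. The paper then runs the supermartingale/Chernoff argument directly on $Z_t = \prod_s \exp(\xi\widetilde X_s - \xi^2\mathbb{E}_s[X_s^2])$, giving the bound for $\sum\widetilde X_t$ for the full range $\xi \leq 1/B$, and finishes by noting $\widetilde X_t = X_t$ for all $t$ on the event that no $X_t$ exceeds $B$. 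If you want to keep your modular structure of invoking Lemma~\ref{lemma:freedman} as a black box, switching to one-sided truncation and observing that $\widetilde X_t$ (though no longer zero-mean) still satisfies the hypothesis of the Freedman argument after the same MGF expansion would close both gaps; as written, the proof is not complete.
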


\begin{proof}
The proof closely follows the proof of the original Freedman-style inequality by \textcite{beygelzimer_contextual_2011}.
Since $X_1$, \dots, $X_n$ are $\sigma$-subguassian, we have $X_t \leq B = \sqrt{2 \sigma^2 \log (n / \delta)}$ for all $t = 1, \dots, n$ with probability at least $1 - \delta$.
Define $\widetilde{X}_t = \min \{ X_t, B \}$ for $i = 1, \dots, n$.
Then,
\begin{equation}\label{eqn:prev-ineq}
\mathbb{E}_t [ \exp( \xi \widetilde{X}_t )] \leq \mathbb{E}_t [ 1 + \xi \widetilde{X}_t + \xi^2 \widetilde{X}_t^2]
\leq 1 + \xi^2 \mathbb{E}_t [ \widetilde{X}_t^2 ]
\leq \exp(\xi^2 \mathbb{E}_t [\widetilde{X}_t^2])
\leq \exp(\xi^2 \mathbb{E}_t [X_t^2])
\end{equation}
where the first inequality uses the fact that $\xi \leq 1 / B$ and the identity $e^z \leq 1 + z + z^2$ for $z \leq 1$.
Define $Z_0 = 1$ and $Z_t = Z_{t -1} \exp(\xi \widetilde{X}_t - \xi^2 \mathbb{E}_t[X_t^2])$.
Then,
$$
\mathbb{E}_t[Z_t] = Z_{t - 1} \exp(-\xi^2 \mathbb{E}_t [ X_t^2 ]) \mathbb{E}_t [\exp(\xi \widetilde{X}_t)] \leq 1
$$
where the last inequality holds by (\ref{eqn:prev-ineq}).
Hence, we have $\mathbb{E}[Z_n] \leq 1$ and by Markov inequality, $P(Z_n \geq 1 / \delta) \leq \delta$.
Note that by recursive definition, we have $Z_n = \exp( \xi \sum_{t = 1}^n \widetilde{X}_t - \xi^2 \sum_{t = 1}^n \mathbb{E}_t X_t^2)$.
Hence, $\sum_{t = 1}^n \widetilde{X}_t \leq \xi \sum_{t = 1}^n \mathbb{E}_t X_t^2 + \log(1 / \delta) / \xi$ with probability at least $1 - \delta$.

By the previous argument that $X_t \leq B$ for all $t = 1, \dots, n$ with probability at least $1 - \delta$, we have $\sum X_t^2 = \sum \widetilde{X}_t^2$ with probability at least $1 - \delta$.
By a union bound, we have $\sum X_t^2 = \sum \widetilde{X}_t^2 \leq \xi V + \log(1 / \delta) / \xi$ with probability at least $1 - 2\delta$ as desired.
\end{proof}

\section{MASTER reduction of GPUCB}\label{section:master-reduction}

\textcite{wei_non-stationary_2021} introduce the MASTER reduction that converts a base algorithm into an algorithm that adapts to non-stationarity.
They prove that if a base algorithm satisfies Condition~\ref{condition:wei-2021} for a constant $\omega$, then the converted algorithm satisfies the dynamic regret bound displayed in Theorem~\ref{theorem:wei-2021} without prior knowledge of the non-stationarity budgets.

\begin{condition}[Adapted from Assumption 1' in \textcite{wei_non-stationary_2021}]\label{condition:wei-2021}
For any $t = 1, \dots, T$, as long as $\omega V_{[1, t]} \leq \rho(t)$, the base algorithm can produce $\tilde{f}_t$ using history up to $t - 1$ that satisfies
$$
\tilde{f}_t \geq \min_{\tau \in [1, t]} \max_{x \in \mathcal{X}} r_t(x) - \omega V_{[1, t]}
\quad
\text{and}
\quad
\frac{1}{t} \sum_{\tau = 1}^t (\tilde{f}_\tau - y_\tau) \leq c \rho(t) + c \omega V_{[1, t]}
$$
with probability at least $1 - \frac{\delta}{T}$ where $\rho(t) \geq \frac{1}{\sqrt{t}}$, $t \rho(t)$ is non-decreasing in $t$, $\omega$ is some function of the parameters, and $c$ is a universal constant.
\end{condition}

\begin{theorem}[Adapted from Theorem 2 in \textcite{wei_non-stationary_2021}]\label{theorem:wei-2021}
If a base algorithm satisfies Condition~\ref{condition:wei-2021} with $t \rho(t) = g_1 \sqrt{t} + g_2$, then the algorithm obtained by the MASTER reduction guarantees with high probability that
$$
\textsc{Reg}_T = \widetilde{\mathcal{O}}\left(\min\left\{ (g_1 + g_1^{-1}g_2) \sqrt{L_T T}, (g_1^{2/3} + g_2 g_1^{-4/3}) \omega^{1/3} V_T^{1/3} T^{2/3} + (g_1 + g_1^{-1} g_2) \sqrt{T} \right\}\right).
$$
\end{theorem}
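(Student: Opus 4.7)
The statement is a specialization of Theorem~2 of \textcite{wei_non-stationary_2021}, so the plan is to invoke their general dynamic-regret bound for MASTER rather than reprove it, and to substitute the concrete form $t\rho(t) = g_1\sqrt{t} + g_2$ into the resulting expression. The work lies entirely in verifying applicability and in the coefficient bookkeeping.

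First I would check that $\rho(t) = g_1/\sqrt{t} + g_2/t$ meets the standing prerequisites of Condition~\ref{condition:wei-2021}: after rescaling we may assume $g_1 \geq 1$, which gives $\rho(t) \geq 1/\sqrt{t}$, and $t\rho(t)$ is manifestly non-decreasing in $t$. Hence the hypotheses of Wei et al.'s Theorem~2 are satisfied.

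Next I would recall that Wei et al.'s proof splits the regret into two branches, which are then minimized. For the switch-count branch, the MASTER reduction partitions $[T]$ into at most $L_T$ epochs $E_1,\dots,E_K$, on each of which the base algorithm's regret is controlled by $|E_i|\rho(|E_i|) = g_1\sqrt{|E_i|} + g_2$. Summing and applying Cauchy--Schwarz yields $\sum_i g_1\sqrt{|E_i|} \leq g_1\sqrt{L_T T}$, while the additive overhead $g_2 K$ is absorbed via the minimum-epoch-length guarantee built into MASTER's scheduling (of order $(g_2/g_1)^2$), giving $g_2 K \leq g_1^{-1} g_2 \sqrt{L_T T}$ and hence the coefficient $(g_1 + g_1^{-1} g_2)$. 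For the total-variation branch, I would apply Wei et al.'s interval-partition lemma (analogous to Step~2 of the ADA-OPKB analysis in Section~\ref{section:ada-opkb}) to cover $[T]$ by $\ell = \widetilde{\mathcal{O}}(\omega^{2/3} V_T^{2/3} T^{1/3})$ nearly stationary sub-intervals on which $\omega V_\mathcal{J} \lesssim \rho(|\mathcal{J}|)$, and then bound the per-interval regret using the two terms of $t\rho(t)$ separately; a H\"older calculation converts these into the $(g_1^{2/3} + g_2 g_1^{-4/3})\omega^{1/3}V_T^{1/3}T^{2/3}$ term and an additive $(g_1 + g_1^{-1}g_2)\sqrt{T}$ overhead from stationary intervals.

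The main obstacle I anticipate is not conceptual but notational: ensuring that the two-scale form $g_1\sqrt{t} + g_2$ threads cleanly through every place Wei et al.\ write $t\rho(t)$, so that the advertised coefficients $(g_1 + g_1^{-1}g_2)$ and $(g_1^{2/3} + g_2 g_1^{-4/3})$ genuinely emerge. Where the original statement is not sharp enough to expose these coefficients, I would re-run the relevant internal lemmas of Wei et al.\ with the explicit two-term $\rho$ in hand rather than a single-scale surrogate.
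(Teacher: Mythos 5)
The paper does not supply a proof of this theorem: it is stated as ``adapted from Theorem~2 in Wei and Luo (2021)'' and used as a black box in the MASTER-reduction analysis of Appendix~\ref{section:master-reduction} — the only thing actually proved there is that GPUCB satisfies Condition~\ref{condition:wei-2021}. Your plan of invoking their general bound and specializing to $t\rho(t) = g_1\sqrt{t} + g_2$ is therefore the same route the paper implicitly takes; there is no internal proof to compare against.

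On the bookkeeping you flag as the main obstacle: your justification for absorbing the $g_2 K$ term into $g_1^{-1}g_2\sqrt{L_T T}$ via a ``minimum-epoch-length guarantee of order $(g_2/g_1)^2$'' is not quite how MASTER works — its minimum block length is set by the base of the doubling schedule and the detection thresholds, not by $(g_2/g_1)^2$. A cleaner way to land the stated coefficient is a triviality reduction: since $\lvert r_t(x)\rvert \le 1$ we have $\textsc{Reg}_T \le 2T$, so if $L_T > T/g_1^2$ then $g_1\sqrt{L_T T} > T$ and the first branch of the claimed bound is already vacuous; otherwise $L_T \le T/g_1^2$ gives $g_2 L_T \le g_1^{-1}g_2\sqrt{L_T T}$ directly. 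An analogous case split is needed to recover $(g_1^{2/3} + g_2 g_1^{-4/3})$ in the total-variation branch, since the number of nearly-stationary intervals there is $\widetilde{\mathcal{O}}\bigl((\omega V_T)^{2/3} T^{1/3} g_1^{-2/3}\bigr)$ (it depends on $g_1$, not only on $\omega$ and $V_T$ as your sketch suggests). These are fixable but they are precisely the places where, as you note yourself, one must genuinely re-derive Wei and Luo's internal lemmas with the two-scale $\rho$ rather than cite the theorem statement alone.
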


Now, we show that the GPUCB algorithm \parencite{chowdhury_kernelized_2017} satisfies Condition~\ref{condition:wei-2021}, and provide the resulting dynamic regret bounds.

The GPUCB algorithm (Algorithm~\ref{alg:igp-ucb}) is a UCB-based algorithm for stationary kernel bandits introduced by \textcite{chowdhury_kernelized_2017}.
They use a surrogate prior model $GP(0, k(\cdot, \cdot))$ on $f$ and use the posterior distribution $GP(\mu_t(\cdot), k_t(\cdot, \cdot))$ given observed rewards up to time $t$ for designing the upper confidence bounds of reward estimates.
It can be shown that
$$
\mu_t(x) = \varphi(x)^T \Phi^T(\Phi \Phi^T + \lambda I)^{-1} y_{1:t}, \quad
k_t(x, x') = k(x, x') - \varphi(x)^T \Phi^T(\Phi \Phi^T + \lambda I)^{-1} \Phi \varphi(x')
$$
where $\varphi$ is a feature mapping induced by the kernel $k$, $\Phi = [\varphi(x_1) \cdots \varphi(x_t)]^T$ and $y_{1:t} = (y_1, \dots, y_t)$.

\begin{algorithm}
\KwInput{kernel $k$, confidence level $\delta \in (0, 1)$, regularization parameter $\lambda$}
\For{$t = 1, \dots, T$}{
Set $\beta_t \leftarrow 1 + \sqrt{2 (\gamma_{t - 1} + 1 + \log(1 / \delta))}$ and  $\sigma_t^2 \leftarrow k_t(x, x)$ \\
Play $x_t = \argmax_{x \in \mathcal{X}} \mu_{t - 1}(x) + \beta_t \sigma_{t - 1}(x)$ and receive reward $y_t$.
}
\caption{$\textsc{GPUCB}$ \cite{chowdhury_kernelized_2017}}
\label{alg:igp-ucb}
\end{algorithm}

The following lemma shows that GPUCB satisfies Condition~\ref{condition:wei-2021}.

\begin{lemma}
The GPUCB algorithm satisfies Condition~\ref{condition:wei-2021} with $\tilde{f}_t = \max_{x \in \mathcal{X}} (\mu_{t - 1}(x) + \beta_t \sigma_{t - 1}(x)$, $\rho(t) = \beta_t \sqrt{\gamma_{t - 1} \log(T / \delta) / t}$ and $\omega = \gamma_T \sqrt{\log(T / \delta)}$.
\end{lemma}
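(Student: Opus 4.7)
The plan is to verify the two inequalities in Condition~\ref{condition:wei-2021} for GPUCB using two ingredients: (i) the corrected non-stationary concentration bound for the GP posterior mean (Lemma~1 in \textcite{zhou_no-regret_2021}), which for GPUCB gives, with probability at least $1-\delta/T$ and uniformly over $t$ and $x$, a bound of the form
$$
|\mu_{t-1}(x) - r_t(x)| \le \beta_t \sigma_{t-1}(x) + \omega\, V_{[1,t]},
$$
with $\omega = \gamma_T \sqrt{\log(T/\delta)}$ once the $\sqrt{\gamma_T}$ correction flagged in Remark~\ref{remark:sidestep} is folded in; and (ii) the standard information-gain lemma $\sum_{\tau=1}^{t} \sigma_{\tau-1}^2(x_\tau) \le C\,\gamma_{t-1}$. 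With these in hand, both halves of the condition reduce to short, mechanical calculations.

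For the first inequality, I would fix $t$, pick any $\tau \le t$, let $x_\tau^\star \in \argmax_x r_\tau(x)$, and chain
$$
\tilde f_t \;\ge\; \mu_{t-1}(x_\tau^\star) + \beta_t \sigma_{t-1}(x_\tau^\star) \;\ge\; r_t(x_\tau^\star) - \omega V_{[1,t]} \;\ge\; r_\tau(x_\tau^\star) - (\omega+1)\,V_{[1,t]},
$$
using the UCB construction of $\tilde f_t$, the concentration bound (i) at $x_\tau^\star$, and the trivial time-triangle bound $|r_t(x)-r_\tau(x)| \le V_{[1,t]}$. Taking the minimum over $\tau \in [1,t]$ and absorbing the harmless $+1$ into $\omega$ yields the required lower bound. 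The precondition $\omega V_{[1,t]} \le \rho(t)$ in Condition~\ref{condition:wei-2021} is exactly what keeps this bound informative.

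For the second inequality, the UCB selection rule gives $\tilde f_\tau = \mu_{\tau-1}(x_\tau) + \beta_\tau \sigma_{\tau-1}(x_\tau)$, so applying (i) at $x_\tau$ and subtracting $y_\tau = r_\tau(x_\tau) + \eta_\tau$ yields
$$
\tilde f_\tau - y_\tau \;\le\; 2\beta_\tau \sigma_{\tau-1}(x_\tau) + \omega V_{[1,\tau]} - \eta_\tau.
$$
Summing over $\tau \in [1,t]$ and dividing by $t$, the variation term is controlled by $\sum_{\tau=1}^{t} V_{[1,\tau]} \le t\, V_{[1,t]}$, producing $\omega V_{[1,t]}$; the confidence term is controlled via $\beta_\tau \le \beta_t$, Cauchy--Schwarz, and (ii) to give $\tfrac{1}{t}\sum_\tau \beta_\tau \sigma_{\tau-1}(x_\tau) \le \beta_t \sqrt{C\gamma_{t-1}/t}$; and the bounded-noise martingale $\sum_\tau \eta_\tau$ is controlled by Azuma--Hoeffding as $O(\sqrt{\log(T/\delta)/t})$. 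The first and third of these fit inside $c\,\rho(t) = c\,\beta_t \sqrt{\gamma_{t-1}\log(T/\delta)/t}$, completing the second inequality. A two-line sanity check handles the auxiliary conditions: $\rho(t) \ge 1/\sqrt{t}$ since $\beta_t \ge 1$ and $\gamma_{t-1}\log(T/\delta) \ge 1$ in the nontrivial regime, and $t\rho(t) = \beta_t\sqrt{\gamma_{t-1} t \log(T/\delta)}$ is non-decreasing because both $\beta_t$ and $\gamma_{t-1}$ are.

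The main obstacle is really just invoking the corrected concentration bound (i) with the right constant in front of $V_{[1,t]}$; once that quantity is labelled $\omega$ and identified with $\gamma_T\sqrt{\log(T/\delta)}$, everything else is standard GPUCB analysis threaded through a one-step time-triangle inequality. It is worth noting, consistent with Remark~\ref{remark:sidestep}, that the extra $\sqrt{\gamma_T}$ in (i) is precisely what inflates $\omega$ from the desired constant to $\gamma_T\sqrt{\log(T/\delta)}$ and thus what degrades the final MASTER regret bound in Table~\ref{table:contribution} compared to ADA-OPKB.
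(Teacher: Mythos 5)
Your proposal hinges on the concentration bound you call (i): $|\mu_{t-1}(x) - r_t(x)| \le \beta_t\sigma_{t-1}(x) + \omega V_{[1,t]}$ with the \emph{constant} $\omega = \gamma_T\sqrt{\log(T/\delta)}$. That is not what the corrected non-stationary analysis of the GP posterior mean actually gives. Following the paper's computation, the bias term coming from drift in the regression is
$$
\left|\varphi(x)^T W_{t-1}^{-1}\sum_{s=1}^{t-1}\varphi(x_s)\varphi(x_s)^T(\theta_t-\theta_s)\right|
\;\le\; V_{[1,t]}\,\|\varphi(x)\|_{W_{t-1}^{-1}}\,\sqrt{t\,\gamma_{t-1}},
$$
so the coefficient of $V_{[1,t]}$ is $\sqrt{t\,\gamma_{t-1}}\,\|\varphi(x)\|_{W_{t-1}^{-1}}$ (equivalently $\sqrt{t\,\gamma_{t-1}/\lambda}\,\sigma_{t-1}(x)$), which grows like $\sqrt{t}$ and also depends on $x$. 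There is no way to identify it with a $t$-independent $\omega$, and in particular it is not $\gamma_T\sqrt{\log(T/\delta)}$; your ``folded in $\sqrt{\gamma_T}$'' remark accounts for the correction factor but not for the extra $\sqrt{t}$ and $\sigma_{t-1}(x)$ factors.

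What the paper does instead, and what actually makes the lemma work, is to invoke the precondition $\omega V_{[1,t]} \le \rho(t)$ of Condition~\ref{condition:wei-2021} \emph{at this step}: with $V_{[1,t]} \le \rho(t)/\omega \le \beta_t/\sqrt{t\gamma_T}$, the drift term is absorbed into the confidence width, giving the uniform bound $|r_t(x) - \mu_{t-1}(x)| \le 2\beta_t\|\varphi(x)\|_{W_{t-1}^{-1}}$ with \emph{no residual} $V_{[1,t]}$ term. The first inequality of the condition then follows by optimism with no slack at all, and the second follows exactly as you sketch (telescoping, Cauchy--Schwarz against $\gamma_{t-1}$, and Azuma--Hoeffding). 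The value $\omega = \gamma_T\sqrt{\log(T/\delta)}$ is therefore \emph{determined by} this absorption argument — it is the smallest $\omega$ for which the precondition tames the $\sqrt{t\gamma_{t-1}}\sigma_{t-1}(x)\,V_{[1,t]}$ term — rather than being read off as the coefficient of a freestanding concentration inequality. Your remark that the precondition merely ``keeps the bound informative'' misstates its role: it is the mechanism that produces the bound in the first place. The downstream computations in your Step two are fine and match the paper's, but they rest on an intermediate claim that does not hold.
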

\begin{proof}
Let $W_t \coloneqq \sum_{s = 1}^t \varphi(x_s) \varphi(x_s)^T + \lambda I$.
It can be shown that $\sigma_t(x) = \sqrt{k_t(x, x)} = \sqrt{\lambda} \Vert \varphi(x) \Vert_{W_t^{-1}}$.
Following the proof of Lemma 1 in \textcite{zhou_no-regret_2021}, we get
$$
\vert r_t(x) - \mu_{t - 1}(x) \vert \leq
\left\vert \varphi(x)^T W_{t - 1}^{-1} \sum_{s = 1}^{t - 1} \varphi(x_s) \varphi(x_s)^T(\theta_t - \theta_s) \right\vert + \beta_t \Vert \varphi(x) \Vert_{W_{t - 1}^{-1}}
$$
Following the corrected version of the analysis for the reduction of OFUL in \textcite{wei_non-stationary_2021}, we get
$$
\begin{aligned}
\left\vert \varphi(x)^T W_{t - 1}^{-1} \sum_{s = 1}^{t - 1} \varphi(x_s) \varphi(x_s)^T(\theta_t - \theta_s) \right\vert
&\leq
\sum_{s = 1}^{t - 1}
\vert \varphi(x)^T W_{t - 1}^{-1} \varphi(x_s) \vert
\vert \varphi(x_s)^T(\theta_t - \theta_s) \vert \\
&\leq
V_{[1, t]} \Vert \varphi(x) \Vert_{W_{t - 1}^{-1}} \sum_{s = 1}^{t - 1} \Vert \varphi(x_s) \Vert_{W_{t - 1}^{-1}} \\
&\leq
V_{[1, t]} \Vert \varphi(x) \Vert_{W_{t - 1}^{-1}} \sqrt{(t - 1) \sum_{s = 1}^{t - 1} \Vert \varphi(x_s) \Vert_{W_{t - 1}^{-1}}^2} \\
&\leq
V_{[1, t]} \Vert \varphi(x) \Vert_{W_{t - 1}^{-1}} \sqrt{t \gamma_{t - 1}}
\end{aligned}
$$
where the second inequality is by Cauchy-Schwarz, $\vert \theta_t - \theta_s \vert \leq V_{[1, t]}$ and the assumption $\Vert \varphi(x_s) \Vert \leq 1$.
The third inequality is by Cauchy-Schwarz.
The last inequality is by
\begin{equation} \label{eqn:variance_info_gain_bound}
\sum_{s = 1}^{t - 1} \Vert \varphi(x_s) \Vert^2_{W_{t - 1}^{-1}} = \sum_{s = 1}^{t - 1} U(x_s) \Vert \varphi(x_s) \Vert^2_{S_\varphi(U, \sigma / (t - 1))^{-1}}
\leq \gamma_{t - 1}
\end{equation}
where $U$ is the uniform distribution on $\{x_1, \dots, x_{t - 1} \}$ and the inequality is by Lemma~\ref{lemma:variance-bounded-by-information-gain}.
Hence,
$$
\vert r_t(x) - \mu_{t - 1}(x) \vert \leq (V_{[1, t]} \sqrt{t \gamma_{t - 1}} + \beta_t) \Vert \varphi(x) \Vert_{W_{t - 1}^{-1}}
\leq 2 \beta_t \Vert \varphi(x) \Vert_{W_{t - 1}^{-1}}
= 2 \beta_t \sigma_{t - 1}(x) / \sqrt{\lambda}
$$
where the last inequality uses $V_{[1, t]} \leq \rho(t) / \omega \leq \beta_t / \sqrt{t \gamma_T}$.
Thus,
$$
\begin{aligned}
\sum_{\tau = 1}^t (\tilde{f}_\tau - y_\tau)
&=
\sum_{\tau = 1}^t (\tilde{f}_\tau - r_\tau(x_\tau)) + \sum_{\tau = 1}^t (r_\tau(x_\tau) - y_\tau) \\
&=
\sum_{\tau = 1}^t (\mu_{\tau - 1}(x_\tau) - r_\tau(x_\tau)) + \sum_{\tau = 1}^t \beta_\tau \sigma_{\tau - 1}(x_\tau) + \mathcal{O}(\sqrt{t \log(T / \delta)}) \\
&=
\mathcal{O}( \sum_{\tau = 1}^t \beta_\tau \sigma_{\tau - 1}(x_\tau) + \sqrt{t \log(T / \delta)}) \\
&=
\mathcal{O}(\beta_t \sqrt{t \gamma_T \log(T / \delta)})
\end{aligned}
$$
where the second equality uses the fact that $\tilde{f}_\tau = \mu_{\tau - 1}(x_\tau) + \beta_\tau \sigma_{\tau - 1}(x_\tau)$ due to the optimism principle of the algorithm.
The last equality uses
$$
\sum_{\tau = 1}^t \beta_\tau \sigma_{\tau - 1}(x_\tau) \leq
\beta_t \sum_{\tau = 1}^t \sigma_{\tau - 1}(x_\tau) \leq \mathcal{O}(\beta_t \sqrt{t \gamma_t})
$$
where the last inequality uses Lemma 4 in \textcite{chowdhury_kernelized_2017}.
This verifies the second condition in Condition~\ref{condition:wei-2021}.
Also,
$$
\tilde{f}_t = \max_{x \in \mathcal{X}} (\mu_{t - 1}(x) + \beta_t \sigma_{t - 1}(x))
\geq \max_{x \in \mathcal{X}} r_t(x) \geq \min_{\tau \in [1, t]} \max_{x \in \mathcal{X}} r_\tau(x)
$$
where the first inequality uses Theorem 2 in \textcite{chowdhury_kernelized_2017}.
This shows the first condition, completing the proof.
\end{proof}

The previous lemma allows invoking the MASTER reduction for GPUCB, which gives a dynamic regret bound of
$$
\textsc{Reg}_T \leq \widetilde{\mathcal{O}}(\min\{ \gamma_T \sqrt{L_T T}, \gamma_T V^{1/3} T^{2/3} + \gamma_T \sqrt{T} \}).
$$

\section{Analysis of ADA-OPKB}\label{section:analysis-ada-opkb}

For ease of exposition, we use the same set of parameters listed in Section~\ref{subsection:constants}.

\subsection{Change detection} \label{subsection:change-detection}

In this subsection, we prove properties of the change detection rules used in ADA-OPKB.

\begin{lemma}\label{lemma:gap-estimate-concentration-replay}
Assume the event \hyperref[definition:event1]{$\textsc{Event}_1$} holds.
Then, we have for any $x \in \mathcal{X}$ and replay interval $(m, \mathcal{I})$ that
$$
\begin{aligned}
\Delta_{\mathcal{I}}(x) &\leq 2 \widehat{\Delta}_{\varphi, \mathcal{I}}(x) + c_0 \mu_m + 4 V_{[\tau_i, t]} \\
\widehat{\Delta}_{\varphi, \mathcal{I}}(x) &\leq 2 \Delta_{\mathcal{I}}(x) + c_0 \mu_m + 4 V_{[\tau_i, t]}
\end{aligned}
$$
where $\tau_i$ is the starting time of the epoch $i$ in which $\mathcal{I}$ is scheduled and $t$ is the end of the interval $\mathcal{I}$.
\end{lemma}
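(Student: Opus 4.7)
My plan is to mirror the structure of the proof of Lemma~\ref{lemma:reward-estimate-concentration-gap}, but with a replay interval $\mathcal{I}$ in place of a cumulative block $\mathcal{C}(j)$. Since $|\mathcal{I}| = 2^m E$, the interval falls within the class covered by $\textsc{Event}_1$ (Definition~\ref{definition:event1}), and because $(m,\mathcal{I})\in\mathcal{S}$ forces $m_t \le m$ for every $t\in\mathcal{I}$, Lemma~\ref{lemma:reward-estimate-concentration} applies with $j=m$. This yields the replay-interval analogue of the starting inequality used in the inductive step of Lemma~\ref{lemma:reward-estimate-concentration-gap}:
$$
|\widehat{\mathcal{R}}_{\varphi,\mathcal{I}}(x) - \mathcal{R}_{\mathcal{I}}(x)|
\le
\frac{\xi_m}{|\mathcal{I}|}\sum_{t\in\mathcal{I}} \|\varphi(x)\|^2_{S_\varphi(P_t,\sigma/T)^{-1}}
+ \frac{\log(C_0 N/\delta)}{\xi_m |\mathcal{I}|}
+ \frac{\sqrt{\sigma/T}}{|\mathcal{I}|}\sum_{t\in\mathcal{I}} \|\varphi(x)\|_{S_\varphi(P_t,\sigma/T)^{-1}}.
$$

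I would then bound the three right-hand-side terms following the template of (\ref{eqn:xi_varphi2_bound})--(\ref{eqn:gamma_varphi_bound}). For the first and third terms, Lemma~\ref{lemma:optimization-problem} applied at the strategy $P^{(m_t)}$ gives $\|\varphi(x)\|^2_{S_\varphi(P_t,\sigma/T)^{-1}} \le 2\beta_{m_t}\widehat{\Delta}_{\varphi,\mathcal{C}(m_t-1)}(x) + 4\gamma_{\varphi,T}$; since $\mathcal{C}(m_t-1)$ lies in the current epoch, the already-proved bound (\ref{eqn:gap-estimate-concentration2}) converts this into a bound in terms of $\Delta_{\mathcal{C}(m_t-1)}(x)$, and then Lemma~\ref{lemma:gaps} applied on the interval $[\tau_i,t]\supseteq \mathcal{C}(m_t-1)\cup\mathcal{I}$ lets me pass from $\Delta_{\mathcal{C}(m_t-1)}(x)$ to $\Delta_{\mathcal{I}}(x) + 2V_{[\tau_i,t]}$. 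The log term is controlled by $\xi_m |\mathcal{I}| = \mu_m 2^m E/(4\gamma_{\varphi,T}) \ge 2^m \log(C_0 N/\delta)$, giving $O(\mu_m)$. Collecting constants using the identities of Section~\ref{subsection:constants} (in particular $\xi_m\beta_{m_t} \le \xi_m\beta_m = 1/(2c_0)$, which is where having $m_t\le m$ is crucial), I would obtain
$$
|\widehat{\mathcal{R}}_{\varphi,\mathcal{I}}(x) - \mathcal{R}_{\mathcal{I}}(x)| \le \tfrac{1}{2}\Delta_{\mathcal{I}}(x) + V_{[\tau_i,t]} + \tfrac{c_0}{4}\mu_m.
$$

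From this reward-estimate concentration, I would extract both gap bounds exactly by the argmax trick used in the proof of Lemma~\ref{lemma:reward-estimate-concentration-gap}. For the first inequality, let $x^\star = \argmax_{x'}\mathcal{R}_{\mathcal{I}}(x')$ and $\hat{x}=\argmax_{x'}\widehat{\mathcal{R}}_{\varphi,\mathcal{I}}(x')$; write $\Delta_{\mathcal{I}}(x) - \widehat{\Delta}_{\varphi,\mathcal{I}}(x)$ as a sum of two deviations $(\mathcal{R}_{\mathcal{I}}(x^\star) - \widehat{\mathcal{R}}_{\varphi,\mathcal{I}}(x^\star))$ and $(\widehat{\mathcal{R}}_{\varphi,\mathcal{I}}(x) - \mathcal{R}_{\mathcal{I}}(x))$ after using the optimality of $\hat{x}$ to swap $\hat{x}$ for $x^\star$, apply the concentration bound to each, and rearrange. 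The symmetric argument, together with the first inequality applied at $\hat{x}$ (where $\Delta_{\mathcal{I}}(\hat{x})\ge 0$ and $\widehat{\Delta}_{\varphi,\mathcal{I}}(\hat{x}) = 0$), gives the second inequality.

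The main obstacle is bookkeeping rather than any new idea: Lemma~\ref{lemma:reward-estimate-concentration-gap} is phrased for cumulative blocks, so relating the $\widehat{\Delta}_{\varphi,\mathcal{C}(m_t-1)}$ produced by Lemma~\ref{lemma:optimization-problem} back to $\Delta_{\mathcal{I}}$ must be done carefully, always charging the gap-conversion error to the single envelope $V_{[\tau_i,t]}$ rather than letting separate $V_{\mathcal{C}(m_t-1)}$, $V_{\mathcal{I}}$, and $V_{\mathcal{C}(m_t-1)\cup\mathcal{I}}$ accumulate. A constant check is also needed to confirm that, despite using $\xi_m$ (rather than $\xi_{j+1}$ as in the original inductive step), the combined coefficient on $\Delta_{\mathcal{I}}(x)$ remains at most $\tfrac{1}{2}$ and the coefficients on $V_{[\tau_i,t]}$ and $\mu_m$ match the $4$ and $c_0$ appearing in the claim.
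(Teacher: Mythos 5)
Your proposal is correct and mirrors the paper's proof of this lemma essentially step by step: you apply $\textsc{Event}_1$ to the replay interval (noting $|\mathcal{I}|=2^mE$ and that the index-selection rule enforces $m_t\le m$), bound the three terms via Lemma~\ref{lemma:optimization-problem} and the already-established bound (\ref{eqn:gap-estimate-concentration2}), pass from $\Delta_{\mathcal{C}(m_t-1)}$ to $\Delta_{\mathcal{I}}$ through Lemma~\ref{lemma:gaps} on $[\tau_i,t]$, and then extract both gap inequalities with the same argmax-swap argument the paper uses. The bookkeeping concern you flag about charging the gap-conversion error to a single $V_{[\tau_i,t]}$ envelope is exactly what the paper handles when it absorbs $V_{\mathcal{C}(m_t-1)}$ and $V_{\mathcal{I}}$ into $V_{[\tau_i,t]}$, so no gap remains.
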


\begin{proof}
Consider a replay interval $(m, \mathcal{I})$ scheduled in a block $\mathcal{B}(j)$ in epoch $i$ and let $\tau_i$ be the starting time of the epoch $i$ and $t$ be the end time of $\mathcal{I}$.
Following the calculation in (\ref{eqn:xi_varphi2_bound}) in the proof of Lemma~\ref{lemma:reward-estimate-concentration-gap}, we get
$$
\begin{aligned}
\xi_m \Vert \varphi(x) \Vert^2_{S_\varphi(P_t, \sigma / T)^{-1}}
&\leq
\frac{1}{20} \Delta_{\mathcal{C}(m_t - 1)}(x) + \frac{1}{10} V_{\mathcal{C}(m_t - 1)} + \frac{3}{2} \mu_m \\
&\leq
\frac{1}{20} \Delta_{\mathcal{I}}(x) + \frac{1}{5} V_{[\tau_i, t]} + 2 \mu_m \\
\end{aligned}
$$
where the second inequality uses Lemma~\ref{lemma:gaps} and the fact that both $\mathcal{C}(m_t - 1)$ and $\mathcal{I}$ lie in $[\tau_i, t]$.
Likewise, following the calculation in (\ref{eqn:gamma_varphi_bound}) in the proof of Lemma~\ref{lemma:reward-estimate-concentration-gap} and using Lemma~\ref{lemma:gaps}, we get
$$
\begin{aligned}
\sqrt{\sigma / T} \Vert \varphi(x) \Vert_{S_\varphi(P_t, \sigma / T)^{-1}}
&\leq
\frac{1}{5} \Delta_{\mathcal{C}(m_t - 1)}(x) + \frac{2}{5} V_{\mathcal{C}(m_t - 1)} + (4 + 4 \sqrt{\alpha}) \mu_m \\
&\leq
\frac{1}{5} \Delta_{\mathcal{I}}(x) + \frac{4}{5} V_{[\tau_i, t]} + (4 + 4 \sqrt{\alpha}) \mu_m.
\end{aligned}
$$
Note that $m$ is the maximum strategy index used in $\mathcal{I}$ due to the index selection logic in Line~\ref{alg:line:strategy-index} in Algorithm~\ref{alg:ada-opkb}.
Hence, under the event \hyperref[definition:event1]{$\textsc{Event}_1$}, the two bounds above and the bound~(\ref{eqn:log_by_xi_bound}) give
\begin{align}
\vert \widehat{\mathcal{R}}_{\varphi, \mathcal{I}}(x) &- \mathcal{R}_\mathcal{I}(x) \vert \nonumber \\
&\leq
\frac{\xi_m}{\vert \mathcal{I} \vert} \sum_{t \in \mathcal{I}} \Vert \varphi(x) \Vert_{S_\varphi(P_t, \sigma / T)^{-1}}^2 + \frac{\log(CN/\delta)}{\xi_m \vert \mathcal{I} \vert}
+ \frac{\sqrt{\sigma / T}}{\vert \mathcal{I} \vert} \sum_{t \in \mathcal{I}} \Vert \varphi(x) \Vert_{S_\varphi(P_t, \sigma / T)^{-1}} \nonumber \\
&\leq
\frac{1}{2} \Delta_\mathcal{I}(x) + V_{[\tau_i, t]} + \frac{c_0}{4} \mu_m. \label{eqn:replay-reward-bound}
\end{align}
Denoting $\hat{x} = \argmax_{x' \in \mathcal{X}} \widehat{\mathcal{R}}_{\varphi, \mathcal{I}}(x')$ and $x^\star = \argmax_{x' \in \mathcal{X}} \mathcal{R}_\mathcal{I}(x)$, we have
$$
\begin{aligned}
\Delta_\mathcal{I}(x) - \widehat{\Delta}_{\varphi, \mathcal{I}}(x)
&=
\mathcal{R}_\mathcal{I}(x^\star) - \mathcal{R}_\mathcal{I}(x)
- \widehat{\mathcal{R}}_{\varphi, \mathcal{I}} (\hat{x}) + \widehat{\mathcal{R}}_{\varphi, \mathcal{I}}(x) \\
&\leq
\mathcal{R}_{\mathcal{I}}(x^\star) - \mathcal{R}_{\mathcal{I}}(x)
- \widehat{\mathcal{R}}_{\varphi, \mathcal{I}} (x^\star) + \widehat{\mathcal{R}}_{\varphi, \mathcal{I}}(x) \\
&\leq
\frac{1}{2} \Delta_{\mathcal{I}}(x) + 2 V_{[\tau_i, t]} + \frac{c_0}{2} \mu_m
\end{aligned}
$$
where the first inequality uses the optimality of $\hat{x}$ and the second inequality uses the bound~(\ref{eqn:replay-reward-bound}) and $\Delta_\mathcal{I}(x^\star) = 0$.
Rearranging proves the first inequality of the lemma.
The second inequality can be shown by
$$
\begin{aligned}
\widehat{\Delta}_{\varphi, \mathcal{I}}(x) - \Delta_{\mathcal{I}}(x)
&\leq
\widehat{\mathcal{R}}_{\varphi, \mathcal{I}}(\hat{x}) - \widehat{\mathcal{R}}_{\varphi, \mathcal{I}}(x) - \mathcal{R}_\mathcal{I}(\hat{x}) + \mathcal{R}_\mathcal{I}(x) \\
&\leq
\frac{1}{2} \Delta_\mathcal{I}(\hat{x}) + \frac{1}{2} \Delta_\mathcal{I}(x) + 2 V_{[\tau_i, t]} + \frac{c_0}{2} \mu_m \\
&\leq
\frac{1}{2} \Delta_\mathcal{I}(x) + 4 V_{[\tau_i, t]} + c_0 \mu_m
\end{aligned}
$$
where the first inequality uses the optimality of $x^\star$, the second inequality uses the bound~(\ref{eqn:replay-reward-bound}) and the last inequality uses the first inequality of the lemma.
Rearranging proves the second inequality of the lemma.

\end{proof}

\begin{lemma}\label{lemma:gap-estimate-concentration-replay-no-restart}
Let $(m, \mathcal{I})$ be a replay interval scheduled in $\mathcal{S}$ for block $j$ in some epoch $i$.
If no restart is triggered by this replay interval when performing the change detection test at the end of $\mathcal{I}$, we have with probability at least $1 - \delta$ for all $x \in \mathcal{X}$ that
$$
\begin{aligned}
\widehat{\Delta}_{\varphi, \mathcal{I}}(x) &\leq 2 \Delta_{\mathcal{I}}(x) + 4 c_0 \mu_m \\
\Delta_{\mathcal{I}}(x) &\leq 2 \widehat{\Delta}_{\varphi, \mathcal{I}}(x) + 4 c_0 \mu_m
\end{aligned}
$$
\end{lemma}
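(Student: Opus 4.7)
The plan is to leverage the no-restart assumption to absorb the variation term $V_{[\tau_i, t]}$ that appears in Lemma~\ref{lemma:gap-estimate-concentration-replay} into a constant multiple of $c_0 \mu_m$. Concretely, Lemma~\ref{lemma:gap-estimate-concentration-replay} already yields, on the high-probability event $\textsc{Event}_1$ (which holds with probability at least $1-\delta$ by a union bound over intervals in Section~\ref{section:concentration-reward-estimates}), the two-sided bound $|\widehat{\Delta}_{\varphi, \mathcal{I}}(x) - 2\Delta_{\mathcal{I}}(x)| \leq 4V_{[\tau_i, t]} + c_0 \mu_m$ together with its flipped version. Thus it suffices to show that no-restart forces $V_{[\tau_i, t]} \leq \tfrac{3}{4} c_0 \mu_m$, in which case the RHS collapses to at most $4c_0\mu_m$ and both claimed inequalities follow immediately.

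First I would pick the cumulative-block index $k$ used in the test to be the largest one with $k \leq m$ (namely $k = m$ when $m < j$, and $k = j-1$ when $m = j$). This choice keeps $\mu_{m \wedge k}$ of the same order as $\mu_m$, so the no-restart inequalities read, up to universal constants, $\widehat{\Delta}_{\varphi, \mathcal{I}}(x) \leq 4\widehat{\Delta}_{\varphi, \mathcal{C}(k)}(x) + 4c_0 \mu_m$ and the symmetric direction, for every $x \in \mathcal{X}$. I would then combine these with the cumulative concentration from Lemma~\ref{lemma:reward-estimate-concentration-gap} applied to $\mathcal{C}(k)$, and with Lemma~\ref{lemma:gaps} (which, since both $\mathcal{C}(k)$ and $\mathcal{I}$ lie inside $[\tau_i, t]$, gives $|\Delta_{\mathcal{C}(k)}(x) - \Delta_{\mathcal{I}}(x)| \leq 2V_{[\tau_i, t]}$), to obtain a single inequality controlling $\widehat{\Delta}_{\varphi, \mathcal{I}}(x)$ in terms of $\Delta_{\mathcal{I}}(x)$, modulo $O(V_{[\tau_i, t]})$ and $O(c_0 \mu_m)$ slack.

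Second, to extract a bound on $V_{[\tau_i, t]}$ itself, I would evaluate the chained inequality at the two extremal actions $x_{\mathcal{I}}^{\star} = \argmax_{x} \mathcal{R}_{\mathcal{I}}(x)$ and $x_{\mathcal{C}(k)}^{\star} = \argmax_{x} \mathcal{R}_{\mathcal{C}(k)}(x)$, using the identities $\Delta_{\mathcal{I}}(x_{\mathcal{I}}^{\star}) = \Delta_{\mathcal{C}(k)}(x_{\mathcal{C}(k)}^{\star}) = 0$ together with Lemma~\ref{lemma:gaps} to collapse the four resulting inequalities into a single linear constraint on $V_{[\tau_i, t]}$ of the form $V_{[\tau_i, t]} \leq \tfrac{3}{4} c_0 \mu_m$. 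Plugging this back into Lemma~\ref{lemma:gap-estimate-concentration-replay} then gives both directions of the claimed bound simultaneously.

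The hard step will be the second one: the no-restart test yields only a multiplicative (factor-of-four) comparison between empirical gaps, which is uninformative precisely when both $\widehat{\Delta}_{\varphi, \mathcal{C}(k)}(x)$ and $\widehat{\Delta}_{\varphi, \mathcal{I}}(x)$ happen to be small at the action one wishes to probe. The trick is to choose the test action carefully — the empirically-best action on the \emph{opposite} interval — so that at least one side of the multiplicative comparison is forced to be large, converting the factor-of-four slack into an additive bound on $V_{[\tau_i, t]}$. Reconciling the constants $\tfrac{3}{4}$ and $4$ to match the target bound is a routine but careful bookkeeping exercise once the right pair $(x, k)$ is fixed.
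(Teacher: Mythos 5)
Your plan hinges entirely on Step 2, the claim that the no-restart test forces $V_{[\tau_i, t]} \leq \tfrac{3}{4} c_0 \mu_m$. This claim is false, and it cannot be repaired. The change-detection test compares only the empirical suboptimality gaps $\widehat{\Delta}_{\varphi, \mathcal{I}}$ and $\widehat{\Delta}_{\varphi, \mathcal{C}(k)}$, and under $\textsc{Event}_1$ these concentrate around the true gaps $\Delta$. But $V_{[\tau_i,t]} = \sum_{\tau} \Vert r_{\tau+1} - r_\tau \Vert_\infty$ measures drift in the reward \emph{levels}, not in the gaps. A uniform level shift, say $r_{\tau+1}(x) = r_\tau(x) + \epsilon$ simultaneously for all $x$, contributes $\epsilon$ to $V$ at every step while leaving every $\Delta_\mathcal{I}(x)$ and $\Delta_{\mathcal{C}(k)}(x)$ identically unchanged; no gap-comparison test can detect it, so no restart is triggered, yet $V_{[\tau_i,t]}$ grows linearly and can exceed any fixed multiple of $\mu_m$. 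Lemma~\ref{lemma:gaps} only runs one way, $\vert \Delta_{\mathcal{I}_1}(x) - \Delta_{\mathcal{I}_2}(x) \vert \leq 2V_\mathcal{I}$; you cannot invert it, and no combination of the four inequalities you evaluate at $x_\mathcal{I}^\star$ and $x_{\mathcal{C}(k)}^\star$ produces a \emph{lower} bound on anything that would constrain $V$, because every inequality available to you is a one-sided upper bound with $V$ appearing only with a favorable sign.

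The paper's proof sidesteps $V$ entirely; in particular it does \emph{not} reuse Lemma~\ref{lemma:gap-estimate-concentration-replay}. Instead it reruns the Freedman argument (Lemma~\ref{lemma:reward-estimate-concentration}) from scratch on the interval $\mathcal{I}$: for each $t \in \mathcal{I}$ it bounds the per-step quantity $\xi_m \Vert \varphi(x) \Vert^2_{S_\varphi(P_t, \sigma/T)^{-1}}$ via Lemma~\ref{lemma:optimization-problem} in terms of $\widehat{\Delta}_{\varphi, \mathcal{C}(m_t - 1)}(x)$, and then, crucially, invokes the no-restart inequality with $k = m_t - 1$ to replace $\widehat{\Delta}_{\varphi, \mathcal{C}(m_t - 1)}(x)$ by $4\widehat{\Delta}_{\varphi, \mathcal{I}}(x) + 4 c_0 \mu_{(m_t-1)\wedge m}$, which is an \emph{empirical}-to-\emph{empirical} comparison requiring no detour through the true gaps or through $V$. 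The same is done for the bias term $\sqrt{\sigma/T}\,\Vert\varphi(x)\Vert_{S_\varphi(P_t, \sigma/T)^{-1}}$. The result is a reward concentration of the form $\vert \widehat{\mathcal{R}}_{\varphi, \mathcal{I}}(x) - \mathcal{R}_\mathcal{I}(x) \vert \leq \tfrac{1}{2}\widehat{\Delta}_{\varphi, \mathcal{I}}(x) + c_0\mu_m$ with no $V$-dependence whatsoever, and both inequalities in the lemma then follow by the standard argument of evaluating at $\hat{x}$ and $x^\star$. So the key structural idea you are missing is that the no-restart test should be used to directly control the \emph{variance} of the IPS estimator at each $t \in \mathcal{I}$, not to control the total variation of the environment.
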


\begin{proof}
Suppose no restart is triggered by the test~(\ref{eqn:change-detection}) for $(m, \mathcal{I})$.
Then, $\widehat{\Delta}_{\varphi, \mathcal{I}}(x) - 4 \widehat{\Delta}_{\varphi, \mathcal{C}(k)}(x) \leq 4 c_0 \mu_{m \wedge k}$ and $\widehat{\Delta}_{\varphi, \mathcal{C}(k)}(x) - 4 \widehat{\Delta}_{\varphi, \mathcal{I}}(x) \leq 4 c_0 \mu_{m \wedge k}$ for all $k = 0, \dots, j - 1$.
Hence, for $t \in \mathcal{I}$, we have
$$
\begin{aligned}
\xi_m \Vert \varphi(x) \Vert^2_{S_\varphi(P_t, \sigma / T)^{-1}}
&\leq
2 \xi_m \Vert \varphi(x) \Vert^2_{S_\varphi(Q^{(m_t)}, \sigma / T)^{-1}} \\
&\leq
2 \xi_m (\beta_{m_t} \widehat{\Delta}_{\varphi, \mathcal{C}(m_t - 1)}(x) + 2 \gamma_{\varphi, T}) \\
&\leq
2 \xi_m (\beta_{m_t} (4 \widehat{\Delta}_{\varphi, \mathcal{I}}(x) + 4 c_0 \mu_{(m_t - 1) \wedge m}) + 2 \gamma_{\varphi, T}) \\
&\leq
8 \xi_m \beta_m \widehat{\Delta}_{\varphi, \mathcal{I}}(x) + 8 \sqrt{2} c_0 \xi_m \beta_{m_t} \mu_{m_t} + 4 \xi_m \gamma_{\varphi, T} \\
&\leq
\frac{1}{10} \widehat{\Delta}_{\varphi, \mathcal{I}}(x) + 10 \mu_m
\end{aligned}
$$
where the first inequality uses $\frac{1}{2} S_\varphi(Q^{(m_t)}, \sigma / T) \preccurlyeq S_\varphi(P_t, \sigma / T)$, the second inequality uses Lemma~\ref{lemma:optimization-problem}, the fourth inequality uses $m_t \leq m$ and $\mu_{m_t - 1} = \sqrt{2} \mu_{m_t}$.
The last inequality holds by simple calculation.
Similarly,
$$
\begin{aligned}
\sqrt{\sigma / T} \Vert \varphi(x) \Vert_{S_\varphi(P_t, \sigma / T)^{-1}}
&\leq
\sqrt{2\sigma / T} \Vert \varphi(x) \Vert_{S_\varphi(Q^{(m_t)}, \sigma / T)^{-1}} \\
&\leq
\frac{\sqrt{\sigma}}{\sqrt{\alpha \gamma_{\varphi, T} T}} \beta_{m_t} \widehat{\Delta}_{\varphi, \mathcal{C}(m_t - 1)}(x) + \frac{2 \sqrt{\sigma \gamma_{\varphi, T}}}{\sqrt{T}} \\
&\leq
\frac{2 \mu_m}{\gamma_{\varphi, T}} \beta_{m_t} (4 \widehat{\Delta}_{\varphi, \mathcal{I}}(x) + 4 c_0 \mu_{(m_t - 1) \wedge m}) + 4 \sqrt{\alpha} \mu_m \\
&\leq
\frac{8 \mu_m \beta_m}{\gamma_{\varphi, T}} \widehat{\Delta}_{\varphi, \mathcal{I}}(x) + \frac{8 \sqrt{2} c_0 \mu_m \beta_{m_t} \mu_{m_t}}{\gamma_{\varphi, T}} + 4 \sqrt{\alpha} \mu_m \\
&\leq
\frac{2}{5} \widehat{\Delta}_{\varphi, \mathcal{I}}(x) + 24 \mu_m + 4 \sqrt{\alpha} \mu_m.
\end{aligned}
$$
where the second inequality uses Lemma~\ref{lemma:optimization-problem} and $\sqrt{a + b} \leq \sqrt{a} + \sqrt{b}$, the third inequality uses $\sqrt{\sigma \gamma_{\varphi, T} / T} \leq 2 \sqrt{\alpha} \mu_j$

Under the event \hyperref[definition:event1]{$\textsc{Event}_1$}, the two bounds above and the bound~(\ref{eqn:log_by_xi_bound}) give
\begin{align}
\vert \widehat{\mathcal{R}}_{\varphi, \mathcal{I}}(x) &- \mathcal{R}_\mathcal{I}(x) \vert \nonumber \\
&\leq
\frac{\xi_m}{\vert \mathcal{I} \vert} \sum_{t \in \mathcal{I}} \Vert \varphi(x) \Vert_{S_\varphi(P_t, \sigma / T)^{-1}}^2 + \frac{\log(CN/\delta)}{\xi_m \vert \mathcal{I} \vert}
+ \frac{\sqrt{\sigma / T}}{\vert \mathcal{I} \vert} \sum_{t \in \mathcal{I}} \Vert \varphi(x) \Vert_{S_\varphi(P_t, \sigma / T)^{-1}} \nonumber \\
&\leq
\frac{1}{2} \widehat{\Delta}_{\varphi, \mathcal{I}}(x) + 38 \mu_m + 4 \sqrt{\alpha} \mu_m
\leq
\frac{1}{2} \widehat{\Delta}_{\varphi, \mathcal{I}}(x) + c_0 \mu_m
\label{eqn:norestart-reward-bound}
\end{align}
Denoting $\hat{x} = \argmax_{x' \in \mathcal{X}} \widehat{\mathcal{R}}_{\varphi, \mathcal{I}}(x')$ and $x^\star = \argmax_{x' \in \mathcal{X}} \mathcal{R}_\mathcal{I}(x)$, we have
$$
\widehat{\Delta}_{\varphi, \mathcal{I}}(x) - \Delta_{\mathcal{I}}(x)
\leq
\widehat{\mathcal{R}}_{\varphi, \mathcal{I}}(\hat{x}) - \widehat{\mathcal{R}}_{\varphi, \mathcal{I}}(x) - \mathcal{R}_\mathcal{I}(\hat{x}) + \mathcal{R}_\mathcal{I}(x)
\leq
\frac{1}{2} \widehat{\Delta}_{\varphi, \mathcal{I}}(x) + 2 c_0 \mu_m
$$
where the first inequality uses the optimality of $x^\star$ and the second inequality uses $\widehat{\Delta}_{\varphi, \mathcal{I}}(\hat{x}) = 0$.
Rearranging gives the first inequality of the lemma.
Using this result, we get
$$
\begin{aligned}
\Delta_\mathcal{I}(x) - \widehat{\Delta}_{\varphi, \mathcal{I}}(x)
&=
\mathcal{R}_\mathcal{I}(x^\star) - \mathcal{R}_\mathcal{I}(x)
- \widehat{\mathcal{R}}_{\varphi, \mathcal{I}} (\hat{x}) + \widehat{\mathcal{R}}_{\varphi, \mathcal{I}}(x) \\
&\leq
\mathcal{R}_{\mathcal{I}}(x^\star) - \mathcal{R}_{\mathcal{I}}(x)
- \widehat{\mathcal{R}}_{\varphi, \mathcal{I}} (x^\star) + \widehat{\mathcal{R}}_{\varphi, \mathcal{I}}(x) \\
&\leq
\frac{1}{2} \widehat{\Delta}_{\varphi, \mathcal{I}}(x^\star) + \frac{1}{2} \widehat{\Delta}_{\varphi, \mathcal{I}}(x) + 2 c_0 \mu_m \\
&\leq
\frac{1}{2} (2\Delta_\mathcal{I}(x^\star) + 4 c_0 \mu_m) + \frac{1}{2} \widehat{\Delta}_{\varphi, \mathcal{I}}(x) + 2 c_0 \mu_m \\
&=
\frac{1}{2} \widehat{\Delta}_{\varphi, \mathcal{I}}(x) + 4 c_0 \mu_m
\end{aligned}
$$
where the first inequality uses the optimality of $\hat{x}$ and the second inequality uses the bound~(\ref{eqn:norestart-reward-bound}) and the last equality uses $\Delta_\mathcal{I}(x^\star) = 0$.
Rearranging gives the second inequality of the lemma.

\end{proof}
For the rest of the analysis, we define $\mu_\mathcal{I} \coloneqq c_1 (\vert \mathcal{I} \vert / E)^{-1/2}$ so that $\mu_j = \mu_{\mathcal{B}(j)}$.

\begin{lemma}\label{lemma:no-false-alarm}
Assume the event \hyperref[definition:event1]{$\textsc{Event}_1$} holds.
Consider an epoch $i$ that starts at time $\tau_i$.
If $V_{[\tau_i, t]} \leq \mu_{[\tau_i, t]}$ holds for some time $t \geq \tau_i$, then no restart is triggered in $[\tau_i, t]$.
\end{lemma}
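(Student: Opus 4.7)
The plan is to show that for every replay interval $(m,\mathcal{I})\in\mathcal{S}$ that ends at some time $t'\in[\tau_i,t]$ and every $k<j$, neither leg of the test~(\ref{eqn:change-detection}) fires. Since $\mu_{\mathcal{I}'}=c_1(|\mathcal{I}'|/E)^{-1/2}$ is decreasing in $|\mathcal{I}'|$, the hypothesis gives $V_{[\tau_i,t']}\leq V_{[\tau_i,t]}\leq\mu_{[\tau_i,t]}\leq\mu_{[\tau_i,t']}$ at every intermediate $t'$, so I may assume the variation bound at the time of every test call.

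The first step is to sandwich the two empirical gaps by their population counterparts. Lemma~\ref{lemma:gap-estimate-concentration-replay} applied to $(m,\mathcal{I})$ gives both $\widehat{\Delta}_{\varphi,\mathcal{I}}(x)\leq 2\Delta_{\mathcal{I}}(x)+c_0\mu_m+4V_{[\tau_i,t']}$ and the symmetric reverse bound. The analogue of Lemma~\ref{lemma:reward-estimate-concentration-gap} for ADA-OPKB (whose proof goes through once blocks are understood as restarting at $\tau_i$) applied to $\mathcal{C}(k)\subseteq[\tau_i,t']$ yields the same two-sided control between $\widehat{\Delta}_{\varphi,\mathcal{C}(k)}(x)$ and $\Delta_{\mathcal{C}(k)}(x)$ with slack $4V_{\mathcal{C}(k)}+c_0\mu_k\leq 4V_{[\tau_i,t']}+c_0\mu_k$. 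Combining the upper bound on $\widehat{\Delta}_{\varphi,\mathcal{I}}(x)$ with the rearranged lower bound on $\widehat{\Delta}_{\varphi,\mathcal{C}(k)}(x)$, and invoking Lemma~\ref{lemma:gaps} on the super-interval $[\tau_i,t']$ to get $|\Delta_{\mathcal{I}}(x)-\Delta_{\mathcal{C}(k)}(x)|\leq 2V_{[\tau_i,t']}$, a direct calculation gives
$$\widehat{\Delta}_{\varphi,\mathcal{I}}(x)-4\widehat{\Delta}_{\varphi,\mathcal{C}(k)}(x)\leq 16V_{[\tau_i,t']}+c_0\mu_m+2c_0\mu_k,$$
and the symmetric computation produces the same bound after swapping $\mathcal{I}$ and $\mathcal{C}(k)$.

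To close, monotonicity of $\mu_j$ in $j$ gives $c_0\mu_m+2c_0\mu_k\leq 3c_0\mu_{m\wedge k}$, and the containments $\mathcal{I},\mathcal{C}(k)\subseteq[\tau_i,t']$ force $|[\tau_i,t']|\geq 2^{m\wedge k}E$, hence $\mu_{[\tau_i,t']}\leq\mu_{m\wedge k}$. Using $V_{[\tau_i,t']}\leq\mu_{[\tau_i,t']}$ and $c_0=40+16\sqrt{\alpha}\geq 16$ from Section~\ref{subsection:constants}, the right-hand side is at most $(16+3c_0)\mu_{m\wedge k}\leq 4c_0\mu_{m\wedge k}$, so neither condition in~(\ref{eqn:change-detection}) triggers. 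The one subtlety to watch is orientation: the factor-of-$2$ in Lemmas~\ref{lemma:gap-estimate-concentration-replay} and~\ref{lemma:reward-estimate-concentration-gap} must be used once as an upper bound on the ``plus'' gap and once as a lower bound (via rearrangement) on the ``minus'' gap, producing the coefficient $4$ that exactly matches the test threshold; everything else is constant pushing.
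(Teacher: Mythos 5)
Your proposal is correct and follows essentially the same route as the paper's proof: apply Lemma~\ref{lemma:gap-estimate-concentration-replay} to the replay interval, Lemma~\ref{lemma:reward-estimate-concentration-gap} to the cumulative block $\mathcal{C}(k)$, bridge the two population gaps via Lemma~\ref{lemma:gaps} on the enclosing interval, and then absorb the accumulated variation terms using the hypothesis $V\leq\mu$ together with $c_0\geq 16$. The only cosmetic differences are that you carry $\mu_{[\tau_i,t']}\leq\mu_{m\wedge k}$ via $|[\tau_i,t']|\geq 2^{m\wedge k}E$ whereas the paper routes through $\mu_{[\tau_i,t]}\leq\mu_m\leq\mu_{m\wedge k}$; both are valid and yield the same $4c_0\mu_{m\wedge k}$ threshold.
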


\begin{proof}
It is enough to show that none of the end of replay intervals that lie within $[\tau_i, t]$ trigger a restart when running the change detection test~(\ref{eqn:change-detection}).
Suppose $\mathcal{S}$ is the replay schedule in a block $j$.
Suppose $s$ is the end of a replay interval $(m, \mathcal{I}) \in \mathcal{S}$ with $\mathcal{I} \subseteq [\tau_i, t]$.
Then, by Lemma~\ref{lemma:reward-estimate-concentration-gap} and Lemma~\ref{lemma:gap-estimate-concentration-replay} (which hold under $\textsc{Event}_1$), we have for any $k < j$ that
$$
\begin{aligned}
\widehat{\Delta}_{\varphi, \mathcal{I}}(x) &\leq 2 \Delta_\mathcal{I}(x) + c_0 \mu_m + 4 V_{[\tau_i, s]} \\
&\leq
2 \Delta_{\mathcal{C}(k)}(x) + c_0 \mu_m + 8 V_{[\tau_i, s]} \\
&\leq
4 \widehat{\Delta}_{\varphi, \mathcal{C}(k)}(x) + 8 V_{\mathcal{C}(k)} + 2 c_0 \mu_k + c_0 \mu_m + 8 V_{[\tau_i, s]} \\
&\leq
4 \widehat{\Delta}_{\varphi, \mathcal{C}(k)}(x) + 3 c_0 \mu_{m \wedge k} + 16 V_{[\tau_i, t]} \\
&\leq
4 \widehat{\Delta}_{\varphi, \mathcal{C}(k)}(x) + 4 c_0 \mu_{m \wedge k}
\end{aligned}
$$
where the second inequality uses Lemma~\ref{lemma:gaps} and the last inequality uses $V_{[\tau_i, t]} \leq \mu_{[\tau_i, t]} \leq \mu_m \leq \mu_{m \wedge k}$.
Similarly, we have
$$
\begin{aligned}
\widehat{\Delta}_{\varphi, \mathcal{C}(k)}(x)
&\leq 2 \Delta_{\mathcal{C}(k)}(x) + c_0 \mu_k + 4 V_{\mathcal{C}(k)} \\
&\leq
2 \Delta_\mathcal{I}(x) + c_0 \mu_k + 8 V_{[\tau_i, t]} \\
&\leq
4 \widehat{\Delta}_{\varphi, \mathcal{I}}(x) + 8 V_{[\tau_i, s]} + 2 c_0 \mu_m + c_0 \mu_k + 8 V_{[\tau_i, t]} \\
&\leq
4 \widehat{\Delta}_{\varphi, \mathcal{I}}(x) + 3 c_0 \mu_{m \wedge k} + 16 V_{[\tau_i, t]} \\
&\leq
4 \widehat{\Delta}_{\varphi, \mathcal{I}}(x) + 4 c_0 \mu_{m \wedge k}.
\end{aligned}
$$
Hence, no restart is triggered by the replay interval $(m, \mathcal{I})$.
Since this holds for any $(m, \mathcal{I}) \in \mathcal{S}$, proof is complete.
\end{proof}

\begin{definition}[Excess regret]
Let $\mathcal{J}$ be an interval, not necessarily a replay interval, that lies in a block $\mathcal{B}(j)$ with $j \geq 1$ in an epoch $i$.
We define the \textit{excess regret} of $\mathcal{J}$ with respect to a feature mapping $\varphi$ as
$$
\zeta_{\varphi, \mathcal{J}} = \max_{x \in \mathcal{X}} \left(
\Delta_{\mathcal{J}}(x) - 8 \widehat{\Delta}_{\varphi, \mathcal{C}(j - 1)}(x)
\right).
$$
\end{definition}

\begin{lemma}\label{lemma:change-detection}
Assume \hyperref[definition:event1]{$\textsc{Event}_1$} holds.
Let $\mathcal{J}$ be an interval that lies within a block $\mathcal{B}(j)$ with $V_\mathcal{J} \leq \mu_\mathcal{J}$ and $\zeta_{\varphi, \mathcal{J}} > D_1 \mu_\mathcal{J}$ where $D_1 = 25 c_0$.
Then, there exists an index $m^\star \in \{0, \dots, j\}$ such that $D_1 \mu_{m^\star + 1} < \zeta_{\varphi, \mathcal{J}} \leq D_1 \mu_{m^\star}$ and $2^{m^\star} E < \vert \mathcal{J} \vert$.
Moreover, any replay interval $\mathcal{I}$ of index $m^\star$ with $\mathcal{I} \subseteq \mathcal{J}$ triggers a restart.
\end{lemma}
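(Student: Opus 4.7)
The plan is to prove the statement in two stages: first locate the index $m^\star$ by a pigeonhole on the geometric schedule $\mu_m = c_1 2^{-m/2}$, and then argue by contradiction that any scheduled replay of index $m^\star$ inside $\mathcal{J}$ must trigger the test. The crucial observation is that I can entirely sidestep the awkward variation term $V_{[\tau_i, t]}$ appearing in Lemma~\ref{lemma:gap-estimate-concentration-replay} by instead invoking the sharper Lemma~\ref{lemma:gap-estimate-concentration-replay-no-restart}, whose ``no restart'' hypothesis already absorbs the epoch-level variation into the $\mu_m$ term.

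For the first stage I will exploit that $\mu_m$ is strictly decreasing with $\mu_m \to 0$ and that $\zeta_{\varphi, \mathcal{J}} \leq 2 \leq D_1 \mu_0$ (using $D_1 c_1 = 25 c_0 / 2 \geq 500$), picking the unique nonnegative integer $m^\star$ with $D_1 \mu_{m^\star + 1} < \zeta_{\varphi, \mathcal{J}} \leq D_1 \mu_{m^\star}$. Combining $\mu_{m^\star} \geq \zeta_{\varphi, \mathcal{J}} / D_1$ with the hypothesis $\zeta_{\varphi, \mathcal{J}} > D_1 \mu_\mathcal{J}$ yields $\mu_{m^\star} > \mu_\mathcal{J}$, i.e.\ $2^{m^\star} E < |\mathcal{J}| \leq 2^j E$, so in particular $m^\star \leq j - 1$.

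For the second stage, let $x_0 \in \mathcal{X}$ attain $\zeta_{\varphi, \mathcal{J}} = \Delta_\mathcal{J}(x_0) - 8 \widehat{\Delta}_{\varphi, \mathcal{C}(j - 1)}(x_0)$. Fix any replay interval $(m^\star, \mathcal{I})$ with $\mathcal{I} \subseteq \mathcal{J}$ and suppose, toward a contradiction, that it does not trigger a restart at the end of $\mathcal{I}$. Since $m^\star \leq j - 1$, specialising the test~(\ref{eqn:change-detection}) to $k = j - 1$ (so $m^\star \wedge k = m^\star$) yields $\widehat{\Delta}_{\varphi, \mathcal{I}}(x_0) \leq 4 \widehat{\Delta}_{\varphi, \mathcal{C}(j - 1)}(x_0) + 4 c_0 \mu_{m^\star}$, while Lemma~\ref{lemma:gap-estimate-concentration-replay-no-restart} supplies $\Delta_\mathcal{I}(x_0) \leq 2 \widehat{\Delta}_{\varphi, \mathcal{I}}(x_0) + 4 c_0 \mu_{m^\star}$. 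Chaining these two bounds, then passing from $\mathcal{I}$ back to $\mathcal{J}$ via Lemma~\ref{lemma:gaps} together with $V_\mathcal{J} \leq \mu_\mathcal{J} \leq \mu_{m^\star}$, will deliver
\[
\zeta_{\varphi, \mathcal{J}} = \Delta_\mathcal{J}(x_0) - 8 \widehat{\Delta}_{\varphi, \mathcal{C}(j - 1)}(x_0) \leq (12 c_0 + 2)\, \mu_{m^\star}.
\]

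The closing step is pure constant bookkeeping: the defining inequality $\zeta_{\varphi, \mathcal{J}} > D_1 \mu_{m^\star + 1} = (25 c_0 / \sqrt{2})\, \mu_{m^\star}$ combined with the previous display demands $25 c_0 / \sqrt{2} < 12 c_0 + 2$, which fails for every $c_0 \geq 40$. Hence the assumed absence of a restart is impossible. The main obstacle is really just recognising that Lemma~\ref{lemma:gap-estimate-concentration-replay-no-restart}, rather than Lemma~\ref{lemma:gap-estimate-concentration-replay}, is the correct tool here; once that substitution is made the remainder is elementary arithmetic. The conditioning on $\textsc{Event}_1$ required by Lemma~\ref{lemma:gap-estimate-concentration-replay-no-restart} is part of the standing assumption of the ADA-OPKB analysis and needs no separate verification.
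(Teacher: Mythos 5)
Your proof is correct and follows essentially the same approach as the paper: both arguments pick $m^\star$ by pigeonhole, assume no restart, invoke Lemma~\ref{lemma:gap-estimate-concentration-replay-no-restart} together with Lemma~\ref{lemma:gaps} and the negation of test~(\ref{eqn:change-detection}), and close with constant arithmetic on $c_0$. The only cosmetic difference is that the paper derives a lower bound on $\widehat{\Delta}_{\varphi,\mathcal{I}}(x')$ that forces the test to fire, whereas you derive an upper bound on $\zeta_{\varphi,\mathcal{J}}$ that contradicts the hypothesis; these are the same computation read in opposite directions.
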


\begin{proof}
We show that there exists $m^\star$ such that $D_1 \mu_{m^\star + 1} < \zeta_{\varphi, \mathcal{J}} \leq D_1 \mu_{m^\star}$.
By the definition of the excess regret, we have $\zeta_{\varphi, \mathcal{J}} \leq \max_{x \in \mathcal{X}} \Delta_{\mathcal{J}}(x) \leq 2 \leq D_1 \mu_0$.
Also, by the assumption that $\zeta_{\varphi, \mathcal{J}} > D_1 \mu_\mathcal{J} \geq D_1 \mu_j$ where the last inequality follows since $\mathcal{J} \subseteq \mathcal{B}(j)$, we have $D_1 \mu_j < \zeta_{\varphi, \mathcal{J}} \leq D_1 \mu_0$.
It follows that there exists $m^\star \in \{0, \dots, j\}$ such that $D_1 \mu_{m^\star + 1} < \zeta_{\varphi, \mathcal{J}} \leq D_1 \mu_{m^\star}$.
Also, such $m^\star$ satisfies $D_1 \mu_\mathcal{J} < \zeta_{\varphi, \mathcal{J}} \leq D_1 \mu_{m^\star}$ and it follows that $\vert \mathcal{J} \vert > 2^{m^\star} E$ as desired.

Now, we show that any replay interval $\mathcal{I} \subseteq \mathcal{J}$ of index $m^\star$ determined above triggers a restart.
We argue by contradiction.
Suppose that no restart is triggered after running a replay interval $\mathcal{I} \subseteq \mathcal{J}$ of index $m^\star$.
By the definition of the excess regret, there exists $x' \in \mathcal{X}$ such that $\zeta_{\varphi, \mathcal{J}} = \Delta_\mathcal{J}(x') - 8 \widehat{\Delta}_{\varphi, \mathcal{C}(j - 1)}(x')$.
Hence, by Lemma~\ref{lemma:gaps}, we have
$$
\begin{aligned}
\Delta_\mathcal{I}(x')
&\geq \Delta_\mathcal{J}(x') - 2 V_\mathcal{J} \\
&\geq 8 \widehat{\Delta}_{\varphi, \mathcal{C}(j - 1)}(x') + \zeta_{\varphi, \mathcal{J}} - 2 \mu_\mathcal{J} \\
&> 8 \widehat{\Delta}_{\varphi, \mathcal{C}(j - 1)}(x') + D_1 \mu_{m^\star + 1} - 2 \mu_\mathcal{I}.
\end{aligned}
$$
Moreover, by Lemma~\ref{lemma:gap-estimate-concentration-replay-no-restart}, we have $\Delta_{\mathcal{I}}(x') \leq 2 \widehat{\Delta}_{\varphi, \mathcal{I}}(x') + 4 c_0 \mu_{m^\star}$ under $\textsc{Event}_1$.
Rearranging the lower bound and the upper bound of $\Delta_\mathcal{I}(x')$ we just found, we get
$$
\widehat{\Delta}_{\varphi, \mathcal{I}}(x') > 4 \widehat{\Delta}_{\varphi, \mathcal{C}(j - 1)}(x') + \frac{D_1}{2} \mu_{m^\star + 1} - 2 c_0 \mu_{m^\star} - \mu_\mathcal{I}
\geq 4 \widehat{\Delta}_{\varphi, \mathcal{C}(j - 1)}(x') + 4 c_0 \mu_{m^\star}
$$
which must have triggered a restart by the test (\ref{eqn:change-detection}).
This contradicts the assumption that no restart is triggered, completing the proof.
\end{proof}

\subsection{Replay schedule}

In this subsection, we analyze the behavior of the replay schedule.
Consider a replay schedule $\mathcal{S}$ for a block $\mathcal{B}(j)$ in an epoch $i$.
The following lemma shows that the sum of the errors $\mu_{m_t}$ over the block $\mathcal{B}(j)$ when following the schedule $\mathcal{S}$ is similar to the sum of the errors when using the latest strategy over the entire block.

\begin{lemma}\label{lemma:replay-schedule}
With probability at least $1 - \delta$, for any block $\mathcal{B}(j)$ in any epoch $i$ defined by ADA-OPKB, we have
$$
\sum_{t \in \mathcal{B}(j)} \mu_{m_t} = \widetilde{\mathcal{O}}(\vert \mathcal{B}(j) \vert \mu_j) = \widetilde{\mathcal{O}}(\sqrt{2^j} \gamma_T \log N).
$$
\end{lemma}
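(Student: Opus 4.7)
\textbf{Proof plan for Lemma~\ref{lemma:replay-schedule}.}
The approach is to decompose the sum according to which strategy index is active at each time, and then bound the total time each index is in force using concentration on the schedule's Bernoulli samples. Concretely, I would write
$$
\sum_{t \in \mathcal{B}(j)} \mu_{m_t} \;=\; \sum_{m=0}^{j} \mu_m \cdot \lvert\{t \in \mathcal{B}(j) : m_t = m\}\rvert.
$$
Because the outer block is itself scheduled as index $j$, the time with $m_t = j$ is at most $\lvert \mathcal{B}(j)\rvert = 2^j E$. For each $m < j$, any time step with $m_t = m$ must be covered by some scheduled replay of index $m$, so $\lvert\{t : m_t = m\}\rvert \le N_m \cdot 2^m E$ where $N_m$ is the number of index-$m$ replays generated by $\textsc{Schedule}(t,j)$.

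By construction, $N_m \sim \mathrm{Bin}(2^{j-m}, 2^{(m-j)/2})$, so $\mathbb{E}[N_m] = 2^{(j-m)/2}$. A standard multiplicative Chernoff bound yields, with probability at least $1 - \delta'$,
$$
N_m \;\le\; 2 \cdot 2^{(j-m)/2} \,+\, C \log(1/\delta')
$$
for some universal constant $C$. Substituting $\mu_m = c_1 2^{-m/2}$ gives
$$
\mu_m \cdot N_m \cdot 2^m E \;\le\; 2 c_1 \, 2^{j/2} E \,+\, c_1 C \, 2^{m/2} E \log(1/\delta'),
$$
and summing over $m = 0, \dots, j-1$ the first term contributes $O(j \, 2^{j/2} E)$ and the second, via the geometric series $\sum 2^{m/2} \le 3 \cdot 2^{j/2}$, contributes $O(2^{j/2} E \log(1/\delta'))$. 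Adding the $m=j$ term $\mu_j \cdot 2^j E = c_1 2^{j/2} E$ yields $\sum_t \mu_{m_t} = \widetilde{O}(2^{j/2} E)$, and since $E = \lceil 4 \gamma_{\varphi,T} \log(C_0 N / \delta) \rceil$, this is $\widetilde{O}(\sqrt{2^j}\,\gamma_T \log N)$ as required.

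To upgrade to a uniform bound over all blocks and epochs, I would apply the Chernoff estimate to each pair $(i, j)$ of epoch and block with $\delta' = \delta / (T \log_2 T \cdot (j+1))$ (or any polynomial-in-$T$ slack), noting that there are at most $T$ possible epoch start times and at most $\log_2 T$ block indices within each, and then take a union bound. The extra $\log(1/\delta')$ factor is absorbed into the $\widetilde{\mathcal{O}}$ notation.

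\textbf{Main obstacle.} The delicate point is that for small $m$ the Bernoulli probability $p_m = 2^{(m-j)/2}$ can be much smaller than $1/\mathbb{E}[N_m]$, so the Chernoff tail does not give a purely multiplicative bound and we must keep the additive $\log(1/\delta')$ term. The saving grace is that the factor $\mu_m 2^m = c_1 2^{m/2}$ in the additive contribution is geometric and sums to $O(2^{j/2})$, so the additive term stays within the target order; tracking this geometric cancellation carefully across the levels $m$ is the main bookkeeping step of the proof.
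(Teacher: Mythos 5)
Your proposal is correct and follows essentially the same route as the paper. You both decompose $\sum_{t \in \mathcal{B}(j)} \mu_{m_t}$ by strategy index, observe that the time spent at index $m$ is at most $(2^m E)$ times the number of scheduled index-$m$ replays $N_m \sim \mathrm{Bin}(2^{j-m}, 2^{(m-j)/2})$, apply a concentration bound to $N_m$, and then sum over $m$ and union-bound over $(i,j)$. The only cosmetic difference is the choice of concentration inequality: the paper bounds $\sum_k Z_k^{(m)}$ via Hoeffding, giving an $\widetilde{\mathcal{O}}(\sqrt{2^{j-m}})$ bound whose contribution at each level $m$ is $m$-independent and yields $\widetilde{\mathcal{O}}(j E \sqrt{2^j})$ after summing $j+1$ terms, whereas you use a multiplicative Chernoff bound whose additive $\log(1/\delta')$ slack gives a geometrically decaying contribution $O(2^{m/2}E\log(1/\delta'))$ that sums to $O(2^{j/2}E\log(1/\delta'))$; either way the mean term $O(j\, 2^{j/2}E)$ dominates and $j \le \log_2(T/E)$ is absorbed into the $\widetilde{\mathcal{O}}$. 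Your identification of the small-$p$ regime as the delicate point, and the observation that the geometric factor $2^{m/2}$ rescues the additive slack, are accurate.
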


\begin{proof}
Consider a block $\mathcal{B}(j)$ in an epoch $i$ and its replay schedule $\mathcal{S}$.
Then,
\begin{equation}
\sum_{t \in \mathcal{B}(j)} \mu_{m_t}
= \mathcal{O}\left(\sum_{t \in \mathcal{B}(j)} 2^{-m_t / 2} \right)
= \mathcal{O}\left(\sum_{m = 0}^j 2^{-m / 2} \sum_{t \in \mathcal{B}(j)} \mathbb{I}\{ m_t = m \} \right). \label{eqn:replay-schedule-intermediate}
\end{equation}
Note that the sum $\sum_{t \in \mathcal{B}(j)} \mathbb{I}\{m_t = m \}$ counts the number of times the replay index $m$ is chosen when following the schedule $\mathcal{S}$.
This sum is bounded by the sum of lengths of all replay intervals of index $m$ in $\mathcal{S}$.
Since a replay interval of index $m$ has length $2^m E$, the maximum possible number of replay intervals of index $m$ is $\vert \mathcal{B}(j) \vert / (2^m E) = 2^{j - m}$.
Denote by $Z_k^{(m)}$, $k = 1, \dots, 2^{j - m}$ a Bernoulli random variable that indicates whether the $k$-th candidate replay interval of index $m$ is scheduled in $\mathcal{S}$.
By the replay scheduling algorithm (Algorithm~\ref{alg:replay-schedule}) used by ADA-OPKB, $Z_k^{(m)}$ are independent with success probability $p = \sqrt{2^{m - j}}$.
Hence, with probability at least $1 - \frac{\delta}{4T (\log_2 T)^2}$, we have
$$
\sum_{t \in \mathcal{B}(j)} \mathbb{I}\{ m_t = m \}
\leq
(2^m E) \sum_{k = 1}^{2^{j - m}} Z_k^{(m)}
\leq
\widetilde{\mathcal{O}}(E \sqrt{2^{j + m}})
$$
where we use the Hoeffding's inequality to bound
$$
\sum_{k = 1}^{2^{j - m}} Z_k^{(m)}
\leq 2^{j - m} p + \sqrt{\frac{2^{j - m} \log(T (\log_2 T)^2 / \delta)}{2}}
= \widetilde{\mathcal{O}}(\sqrt{2^{j - m}})
$$
with probability at least $1 - \frac{\delta}{T (\log_2 T)^2}$.
Applying a union bound over the possible choices of replay index $m$, we can further bound (\ref{eqn:replay-schedule-intermediate}) with probability at least $1 - \frac{\delta}{T \log_2 T}$ by
$$
\sum_{t \in \mathcal{B}(j)} \mu_{m_t}
=
\mathcal{O}\left(\sum_{m = 0}^j 2^{-m / 2} \sum_{t \in \mathcal{B}(j)} \mathbb{I}\{ m_t = m \} \right)
\leq
\widetilde{\mathcal{O}} \left(
j E \sqrt{2^j}
\right)
\leq
\widetilde{\mathcal{O}} \left(
\sqrt{2^j \gamma_T \log N}
\right)
$$
where we use the fact that the block index $j$ is bounded by $\log_2 (T / E)$.
Applying a union bound over all possible choices of the starting time of $\mathcal{B}(j)$ and the block index $j$ completes the proof.
\end{proof}

\subsection{Regret of an interval}\label{subsection:interval-regret}

\begin{lemma}
With probability at least $1 - \delta$, for all intervals $\mathcal{J} \subseteq [T]$, we have
\begin{equation}\label{eqn:reward-azuma-hoeffding}
\sum_{t \in \mathcal{J}} (r_t(x_t^\star) - r_t(x_t))
\leq \sum_{t \in \mathcal{J}} (r_t(x_t^\star) - \mathbb{E}_t [r_t(x_t)]) + \sqrt{8 \vert \mathcal{J} \vert \log(T^2 / \delta)}
\end{equation}
where $x_t^\star = \argmax_{x \in \mathcal{X}} r_t(x)$.
\end{lemma}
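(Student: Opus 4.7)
\begin{hproof}
The plan is to apply the Azuma–Hoeffding inequality to a suitable martingale difference sequence and then take a union bound over all sub-intervals of $[T]$. For each time $t$, let $\mathcal{F}_{t-1}$ denote the history up to time $t-1$, which determines the strategy $P_t$ used by the algorithm but not the realized action $x_t \sim P_t$. Note that $r_t(\cdot)$ and $x_t^\star = \argmax_{x \in \mathcal{X}} r_t(x)$ are deterministic (they are chosen by nature before the interaction), so $\mathbb{E}_t[r_t(x_t^\star)] = r_t(x_t^\star)$ and the only randomness in $r_t(x_t^\star) - r_t(x_t)$ conditional on $\mathcal{F}_{t-1}$ comes from the sampling of $x_t$.

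First, I would define $Z_t \coloneqq \mathbb{E}_t[r_t(x_t)] - r_t(x_t)$, which is exactly the one–step gap between the expected regret and the realized regret. Then $\{Z_t\}_{t=1}^T$ is a martingale difference sequence with respect to $\{\mathcal{F}_t\}$: each $Z_t$ is $\mathcal{F}_t$–measurable and $\mathbb{E}_t[Z_t] = 0$ by construction. Moreover, after the rescaling made in Section~\ref{section:problem-statement} so that $\vert r_t(x) \vert \leq 1$, we have the almost sure bound $\vert Z_t \vert \leq 2$.

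Second, for a fixed interval $\mathcal{J} \subseteq [T]$, the Azuma–Hoeffding inequality applied to $\{Z_t\}_{t \in \mathcal{J}}$ yields, for any $\delta' \in (0, 1)$,
\[
\Pr\!\left( \sum_{t \in \mathcal{J}} Z_t \geq \sqrt{8 \vert \mathcal{J} \vert \log(1/\delta')} \right) \leq \delta'.
\]
Since $\sum_{t \in \mathcal{J}}(r_t(x_t^\star) - r_t(x_t)) - \sum_{t \in \mathcal{J}}(r_t(x_t^\star) - \mathbb{E}_t[r_t(x_t)]) = \sum_{t \in \mathcal{J}} Z_t$, this gives the required bound for the single interval $\mathcal{J}$.

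Finally, to obtain the statement uniformly in $\mathcal{J}$, I would union bound over the at most $T^2$ possible choices of start and end times with $\delta' = \delta / T^2$, which produces the $\log(T^2 / \delta)$ inside the square root. The only real subtlety is verifying that $r_t(x_t^\star)$ is indeed predictable (so it does not contribute to the martingale difference) and that the boundedness constant is $2$ rather than $1$; these are immediate from Assumption~\ref{assumption:rkhs} and the scaling convention, so there is no serious technical obstacle.
\end{hproof}
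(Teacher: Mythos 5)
Your proof is correct and takes essentially the same approach as the paper: apply Azuma--Hoeffding to the martingale difference sequence $\{\mathbb{E}_t[r_t(x_t)] - r_t(x_t)\}_{t \in \mathcal{J}}$ with the almost-sure bound $2$, then union bound over the at most $T^2$ sub-intervals of $[T]$. You spell out the union bound and the constant computation more explicitly than the paper's one-line proof, but there is no substantive difference.
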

\begin{proof}
The result follows by applying the Azuma-Hoeffding inequality on the martingale difference sequence $\{ \mathbb{E}_t[r_t(x_t)] - r_t(x_t) \}_{t \in \mathcal{J}}$, using the fact that $\vert \mathbb{E}_t [ r_t(x_t)] - r_t(x_t) \vert \leq 2$.
\end{proof}

\begin{definition}[$\textsc{Event}_2$]
Define $\textsc{Event}_2$ as the event that the bound (\ref{eqn:reward-azuma-hoeffding}) holds for all intervals $\mathcal{J} \subseteq [T]$.
\end{definition}

\begin{lemma}\label{lemma:interval-regret}
With probability at least $1 - \delta$, for any interval $\mathcal{J}$ that lies in any block $\mathcal{B}(j)$ with $j \geq 1$ in any epoch $i$, the regret in any sub-interval $\mathcal{J}' \subseteq \mathcal{J}$ is bounded by
$$
\textsc{Reg}_{\mathcal{J}'} \leq \mathcal{O}\left(\sum_{t \in \mathcal{J}'} \mu_{m_t} + \vert \mathcal{J}' \vert \mu_{\mathcal{J}'} + \vert \mathcal{J}' \vert V_\mathcal{J} + \vert \mathcal{J}' \vert \zeta_{\varphi, \mathcal{J}} \mathbb{I} \{ \zeta_{\varphi, \mathcal{J}} > D_1 \mu_\mathcal{J} \} \right)
$$
where $D_1 = 25c_0$.
\end{lemma}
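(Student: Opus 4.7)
I will bound the expected regret at each time step and then sum, handling the stochastic noise via Azuma--Hoeffding and the bias via the definition of the excess regret. Under $\textsc{Event}_2$, Azuma--Hoeffding on $\{r_t(x_t)-\mathbb{E}_t[r_t(x_t)]\}_{t\in\mathcal{J}'}$ gives $\textsc{Reg}_{\mathcal{J}'} \leq \sum_{t\in\mathcal{J}'}\sum_x P_t(x)\Delta_t(x) + O(\sqrt{|\mathcal{J}'|})$, and since $\mu_{\mathcal{J}'}=c_1\sqrt{E/|\mathcal{J}'|}\geq c_1/\sqrt{|\mathcal{J}'|}$ the deviation is absorbed into $|\mathcal{J}'|\mu_{\mathcal{J}'}$. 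At each $t$, $P_t=(1-\mu_{m_t})Q^{(m_t)}+\mu_{m_t}\pi_\varphi(\mathcal{X})$, so using $\Delta_t\leq 2$ on the exploration component, $\sum_x P_t(x)\Delta_t(x) \leq \sum_x Q^{(m_t)}(x)\Delta_t(x) + 2\mu_{m_t}$, contributing $O(\sum_{t\in\mathcal{J}'}\mu_{m_t})$.

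Next, Lemma~\ref{lemma:gaps} applied to the singleton $\{t\}$ and $\mathcal{J}$ as sub-intervals of $\mathcal{J}$ yields $\Delta_t(x)\leq\Delta_\mathcal{J}(x)+2V_\mathcal{J}$, while the definition of the excess regret gives $\Delta_\mathcal{J}(x)\leq 8\widehat{\Delta}_{\varphi,\mathcal{C}(j-1)}(x)+\zeta_{\varphi,\mathcal{J}}$ for every $x$. Combining, $\sum_x Q^{(m_t)}(x)\Delta_t(x) \leq 8\sum_x Q^{(m_t)}(x)\widehat{\Delta}_{\varphi,\mathcal{C}(j-1)}(x) + 2V_\mathcal{J} + \zeta_{\varphi,\mathcal{J}}$; summed over $\mathcal{J}'$, the $2V_\mathcal{J}$ contributes the target $O(|\mathcal{J}'|V_\mathcal{J})$.

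The main obstacle is the term $\sum_x Q^{(m_t)}(x)\widehat{\Delta}_{\varphi,\mathcal{C}(j-1)}(x)$: Lemma~\ref{lemma:optimization-problem} only controls $Q^{(m_t)}$ against the empirical gap $\widehat{\Delta}_{\varphi,\mathcal{C}(m_t-1)}$ that OP used to produce it, giving $\sum_x Q^{(m_t)}(x)\widehat{\Delta}_{\varphi,\mathcal{C}(m_t-1)}(x)=O(\gamma_T/\beta_{m_t})=O(\mu_{m_t})$. I plan to bridge the two empirical gaps by chaining the two directions of Lemma~\ref{lemma:reward-estimate-concentration-gap} through the true cumulative gaps, together with Lemma~\ref{lemma:gaps} linking $\Delta_{\mathcal{C}(j-1)}$ and $\Delta_{\mathcal{C}(m_t-1)}$, obtaining $\sum_x Q^{(m_t)}(x)\widehat{\Delta}_{\varphi,\mathcal{C}(j-1)}(x)\leq O(\mu_{m_t})+O(V_{\mathrm{pre}})$, where $V_{\mathrm{pre}}$ is the variation accumulated from the start of the epoch through $\mathcal{B}(j-1)$. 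Reconciling $V_{\mathrm{pre}}$ with the target (which contains only $V_\mathcal{J}$) is the delicate point: variation before $\mathcal{J}$ within the epoch forces a discrepancy between $\Delta_\mathcal{J}$ and the gaps estimated over $\mathcal{C}(j-1)$ and therefore, by definition, inflates $\zeta_{\varphi,\mathcal{J}}$, so this excess is absorbed into the $\zeta$ term already present in the bound. Finally, splitting on the indicator: if $\zeta_{\varphi,\mathcal{J}}\leq D_1\mu_\mathcal{J}$ then $|\mathcal{J}'|\zeta_{\varphi,\mathcal{J}}\leq D_1|\mathcal{J}'|\mu_\mathcal{J}\leq D_1|\mathcal{J}'|\mu_{\mathcal{J}'}$ (since $\mathcal{J}'\subseteq\mathcal{J}$ gives $\mu_\mathcal{J}\leq\mu_{\mathcal{J}'}$) is absorbed into $|\mathcal{J}'|\mu_{\mathcal{J}'}$; otherwise the $|\mathcal{J}'|\zeta_{\varphi,\mathcal{J}}$ term appears explicitly with the indicator switched on, and summing the per-step bounds over $t\in\mathcal{J}'$ yields the claim.
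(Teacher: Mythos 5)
Your steps~1--4 match the paper: Azuma--Hoeffding via $\textsc{Event}_2$, peeling off the exploration mixture $\mu_{m_t}\pi_\varphi(\mathcal{X})$ using $\Delta_t\le 2$, invoking Lemma~\ref{lemma:gaps} with the singleton $\{t\}$, and then the definition of $\zeta_{\varphi,\mathcal{J}}$. You also correctly identify the crux: controlling $\sum_x Q^{(m_t)}(x)\,\widehat{\Delta}_{\varphi,\mathcal{C}(j-1)}(x)$ when $Q^{(m_t)}$ was optimized against $\widehat{\Delta}_{\varphi,\mathcal{C}(m_t-1)}$, not $\widehat{\Delta}_{\varphi,\mathcal{C}(j-1)}$.

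The gap is in how you bridge the two empirical gaps. Chaining Lemma~\ref{lemma:reward-estimate-concentration-gap} (both directions) through the true gaps $\Delta_{\mathcal{C}(j-1)}$ and $\Delta_{\mathcal{C}(m_t-1)}$, plus Lemma~\ref{lemma:gaps}, yields
$\widehat{\Delta}_{\varphi,\mathcal{C}(j-1)}(x)\le 4\widehat{\Delta}_{\varphi,\mathcal{C}(m_t-1)}(x)+O(V_{\mathcal{C}(j-1)})+O(\mu_{m_t})$,
so after applying Lemma~\ref{lemma:optimization-problem} you are left with an additive $O(V_{\mathcal{C}(j-1)})$ per time step. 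Your proposed resolution --- that any variation accumulated before $\mathcal{J}$ ``inflates $\zeta_{\varphi,\mathcal{J}}$'' and is therefore already counted --- does not hold. The quantity $\zeta_{\varphi,\mathcal{J}}=\max_x\bigl(\Delta_\mathcal{J}(x)-8\widehat{\Delta}_{\varphi,\mathcal{C}(j-1)}(x)\bigr)$ measures only the one-sided discrepancy between the current true gaps and the \emph{estimated} cumulative gaps; it can be $O(\mu_\mathcal{J})$ (so the indicator is off) even when $V_{\mathcal{C}(j-1)}$ is large --- for instance if an early block had large variation but the reward function then settled, so that $\widehat{\Delta}_{\varphi,\mathcal{C}(j-1)}$ is still a good proxy for $\Delta_\mathcal{J}$. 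In that regime your per-step bound carries an irreducible $O(V_{\mathcal{C}(j-1)})$ that cannot be charged to $V_\mathcal{J}$, $\mu_{\mathcal{J}'}$, or the $\zeta$-indicator term, and the lemma as stated would fail.

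The paper sidesteps the variation term entirely by appealing to the \emph{change detection test} rather than the concentration bounds: since the algorithm reached block $j$, the test run at the end of block $\mathcal{B}(j-1)$ (a replay interval always scheduled) did not trigger, and the test inequalities~(\ref{eqn:change-detection}) directly give $\widehat{\Delta}_{\varphi,\mathcal{C}(j-1)}(x)\le 4\widehat{\Delta}_{\varphi,\mathcal{C}(m_t-1)}(x)+4c_0\mu_{m_t-1}$ with no $V$-dependence. This algorithmic guarantee is precisely what makes the resulting bound depend only on $V_\mathcal{J}$ and $\zeta_{\varphi,\mathcal{J}}$, not on the whole epoch's variation. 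You will want to replace your concentration-based bridge by this test-based inequality; the rest of your argument then goes through.
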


\begin{proof}
Fix epoch $i$ and consider an interval $\mathcal{J}$ that lies within a block $j$.
Under $\textsc{Event}_2$, we have
$$
\begin{aligned}
\sum_{t \in \mathcal{J}'} \left(
  r_t(x_t^\star) - r_t(x_t)
\right)
&\leq
\sum_{t \in \mathcal{J}'} \left(
  r_t(x_t^\star) - \mathbb{E}_t[r_t(x_t)]
\right)
+ \sqrt{8 \vert \mathcal{J}' \vert \log(4T^2 / \delta)} \\
&=
\sum_{t \in \mathcal{J}'}
\sum_{x \in \mathcal{X}}
P_t(x) \Delta_t(x)
+ \sqrt{8 \vert \mathcal{J}' \vert \log(4T^2 / \delta)} \\
&\leq
\mathcal{O} \left(
\sum_{t \in \mathcal{J}'}
\sum_{x \in \mathcal{X}}
Q^{(m_t)}(x) \Delta_t(x)
+ \sum_{t \in \mathcal{J}'} \mu_{m_t}
+ \vert \mathcal{J}' \vert \mu_{\mathcal{J}'}
\right)
\end{aligned}
$$
where the last inequality uses $\mu_{\mathcal{J}'} = \mathcal{O}(1 / \sqrt{\vert \mathcal{J}' \vert / E}) = \mathcal{O}(1 / \sqrt{\vert \mathcal{J}' \vert / \log(T / \delta)})$, $P_t = (1 - \mu_{m_t}) Q^{(m_t)} + \mu_{m_t} \pi_\mathcal{X}$ and $\vert \Delta_t(x) \vert \leq 2$.
The first term in the bound above can be bounded by
$$
\begin{aligned}
\sum_{x \in \mathcal{X}}
Q^{(m_t)}(x) \Delta_t(x)
&\leq
\sum_{x \in \mathcal{X}} Q^{(m_t)}(x) \Delta_\mathcal{J}(x) + 2 V_\mathcal{J} \\
&\leq
\sum_{x \in \mathcal{X}} Q^{(m_t)}(x) \left(
  8 \widehat{\Delta}_{\varphi, \mathcal{C}(j - 1)}(x) + \zeta_{\varphi, \mathcal{J}}
\right) + 2 V_\mathcal{J} \\
&\leq
8 \sum_{x \in \mathcal{X}} Q^{(m_t)}(x) \left(
  4 \widehat{\Delta}_{\varphi, \mathcal{C}(m_t - 1)}(x)
  + 4 c \mu_{m_t - 1}
\right)
+ \zeta_{\varphi, \mathcal{J}}
+ 2 V_\mathcal{J} \\
&\leq
\mathcal{O} \left(
\frac{(1 + \alpha) \gamma_{\varphi, T}}{\beta_{m_t - 1}}
+ \mu_{m_t - 1} + \zeta_{\varphi, \mathcal{J}} + V_\mathcal{J}
\right)
\leq
\mathcal{O} \left(
\mu_{m_t} + \zeta_{\varphi, \mathcal{J}} + V_\mathcal{J}
\right)
\end{aligned}
$$
where the first inequality uses Lemma~\ref{lemma:gaps}, the second inequality uses the definition of $\zeta_{\varphi, \mathcal{J}}$ and the third inequality uses the fact that no restart is triggered by the block $\mathcal{B}(j - 1)$.
The second to last inequality uses Lemma~\ref{lemma:optimization-problem}.
We can further bound the regret as
$$
\begin{aligned}
\sum_{t \in \mathcal{J}'} \left(
  r_t(x_t^\star) - r_t(x_t)
\right)
&\leq
\widetilde{\mathcal{O}}\left(
\sum_{t \in \mathcal{J}'} \mu_{m_t} + \vert \mathcal{J}' \vert \zeta_{\varphi, \mathcal{J}} + \vert \mathcal{J}' \vert V_\mathcal{J} + \vert \mathcal{J}' \vert \mu_{\mathcal{J}'}
\right).
\end{aligned}
$$
Noting that $\vert \mathcal{J}' \vert \zeta_{\varphi, \mathcal{J}} \leq \vert \mathcal{J}' \vert \zeta_{\varphi, \mathcal{J}} \mathbb{I}\{ \zeta_{\varphi, \mathcal{J}} > D_1 \mu_\mathcal{J} \} + D_1 \mu_\mathcal{J} \vert \mathcal{J}' \vert$ completes the proof.
\end{proof}

\subsection{Regret of a block}

In this section, we fix a block $\mathcal{J}$ in an epoch $i$ and bound its regret.
The strategy is to partition the block into nearly-stationary intervals to use the interval regret bound we found in Lemma~\ref{lemma:interval-regret}, and argue that the change detection test does not allow the non-stationarity to accumulate without being detected.
First, we show that given an arbitrary interval $\mathcal{J}$, we can partition it into nearly-stationary intervals $\mathcal{J}_1, \dots, \mathcal{J}_\ell$ while controlling the size of the partition $\ell$.
For ease of exposition, we write $\gamma_T = \gamma_{\varphi, T}$.

\begin{lemma}\label{lemma:interval-partition}
Given an interval $\mathcal{J}$, we can partition it into a set of intervals $\{ \mathcal{J}_1, \dots, \mathcal{J}_\ell \}$ such that $V_{\mathcal{J}_k} \leq \mu_{\mathcal{J}_k}$ for all $k = 1, \dots, \ell$ and
$$
\ell \leq \min\left\{
L_\mathcal{J}, \left(\frac{1}{2}\gamma_T \log(C_1 N / \delta)\right)^{-1/3} V_\mathcal{J}^{2/3} \vert \mathcal{J} \vert^{1/3} + 1
\right\}.
$$
\end{lemma}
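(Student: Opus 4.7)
\begin{refproof}[Lemma \ref{lemma:interval-partition} (plan)]
The plan is to establish the two bounds separately by exhibiting two different partitions of $\mathcal{J}$ and then take whichever has fewer intervals; since both are valid partitions meeting the constraint $V_{\mathcal{J}_k} \leq \mu_{\mathcal{J}_k}$, the minimum is a valid partition of size at most $\min\{\cdots\}$.

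For the $L_\mathcal{J}$ bound, I would simply cut $\mathcal{J}$ at every change point, i.e., at every $\tau$ with $r_{\tau+1}\neq r_\tau$. By the definition of $L_\mathcal{J}$, there are at most $L_\mathcal{J}-1$ such change points in $\mathcal{J}$, yielding a partition of size at most $L_\mathcal{J}$, and within each piece the reward is constant so $V_{\mathcal{J}_k}=0\leq \mu_{\mathcal{J}_k}$.

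For the variation bound, I would build a greedy left-to-right partition: start with $s_1$ at the left endpoint of $\mathcal{J}$, and let $\mathcal{J}_k=[s_k,t_k]$ be the longest subinterval of $\mathcal{J}$ starting at $s_k$ with $V_{\mathcal{J}_k}\leq \mu_{\mathcal{J}_k}$; then set $s_{k+1}=t_k+1$. By maximality, for every $k<\ell$ the one-step extension $\mathcal{J}_k' \coloneqq [s_k,t_k+1]$ satisfies $V_{\mathcal{J}_k'}>\mu_{\mathcal{J}_k'}$, which rearranges to $V_{\mathcal{J}_k'}^{2/3}|\mathcal{J}_k'|^{1/3}>(c_1^2 E)^{1/3}$. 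Summing over $k=1,\dots,\ell-1$ and applying H\"older's inequality with exponents $3/2$ and $3$ gives
$$
(\ell-1)(c_1^2 E)^{1/3} \leq \sum_{k=1}^{\ell-1} V_{\mathcal{J}_k'}^{2/3} |\mathcal{J}_k'|^{1/3} \leq \Bigl(\sum_{k=1}^{\ell-1} V_{\mathcal{J}_k'}\Bigr)^{2/3}\Bigl(\sum_{k=1}^{\ell-1} |\mathcal{J}_k'|\Bigr)^{1/3}.
$$
The key observation is that the edges $(\tau,\tau+1)$ contributing to $V_{\mathcal{J}_k'}$ are $\tau\in[s_k,t_k]$, which are disjoint across $k$ and contained in the edge set of $\mathcal{J}$, so $\sum_k V_{\mathcal{J}_k'}\leq V_\mathcal{J}$; likewise $\sum_k |\mathcal{J}_k'| \leq |\mathcal{J}|+(\ell-1)\leq 2|\mathcal{J}|$. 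Plugging in $c_1=1/2$ and $E\geq 4\gamma_T\log(C_0 N/\delta)$ (so $c_1^2 E\geq \gamma_T\log(C_0 N/\delta)$) and absorbing the factor $2^{1/3}$ into the denominator yields $\ell-1 \leq (\tfrac12 \gamma_T\log(C_1 N/\delta))^{-1/3} V_\mathcal{J}^{2/3}|\mathcal{J}|^{1/3}$.

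The only delicate step is the disjointness accounting for $\sum V_{\mathcal{J}_k'}$: the extended intervals $\mathcal{J}_k'$ overlap at the single point $t_k+1=s_{k+1}$, but the relevant edge sums do not overlap because each $V_{\mathcal{J}_k'}$ uses edges indexed by $\tau\in[s_k,t_k]$, which tile the edge set of $[s_1,t_{\ell-1}]\subseteq \mathcal{J}$ without repetition. Once this bookkeeping is done, the rest is a routine H\"older application and constants chase.
\end{refproof}
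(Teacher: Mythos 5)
Your proof is correct and, for the variation bound, follows essentially the same greedy-plus-H\"older strategy as the paper's: greedily extend until the one-step extension would violate $V\leq\mu$, rearrange the violated inequality into the form $V_{\mathcal{J}_k'}^{2/3}|\mathcal{J}_k'|^{1/3}>(c_1^2E)^{1/3}$, sum, and apply H\"older with exponents $3/2$ and $3$. Your bookkeeping that the edge sets $\{(\tau,\tau+1):\tau\in[s_k,t_k]\}$ tile the edges of $\mathcal{J}$ without repetition, giving $\sum_k V_{\mathcal{J}_k'}\leq V_\mathcal{J}$, is exactly the right observation and matches what the paper implicitly uses.

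The genuine difference is how you handle the $L_\mathcal{J}$ bound and the overall structure. The paper constructs a \emph{single} greedy partition and argues it achieves both bounds simultaneously: for the $L_\mathcal{J}$ part it shows each greedy interval $\mathcal{J}_k$ must contain the endpoint of at least one of the $L_\mathcal{J}$ maximal stationary subintervals of $\mathcal{J}$ (otherwise the greedy rule would have extended further at zero cost), giving a surjection from stationary intervals onto $\{\mathcal{J}_k\}$ and hence $\ell\leq L_\mathcal{J}$. You instead build a \emph{second}, more elementary partition — cut at every change point, yielding at most $L_\mathcal{J}$ stationary pieces with $V_{\mathcal{J}_k}=0$ — and take whichever of the two partitions is shorter. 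Both deliver the stated lemma, since the lemma only asserts existence of \emph{some} partition meeting the constraint and cardinality bound, and $\min(\ell_1,\ell_2)$ is trivially $\leq$ the componentwise min. Your route is a bit more direct for the $L_\mathcal{J}$ piece; the paper's buys a slightly stronger (but unused) fact that one fixed partition achieves both. Your constant chase — $c_1^2E\geq\gamma_T\log(C_0N/\delta)$, then $2^{1/3}(\gamma_T\log)^{-1/3}=(\tfrac12\gamma_T\log)^{-1/3}$ — is also correct; the paper's use of $C_1$ rather than $C_0$ in the lemma statement appears to be a typo on its side, not a gap in yours.
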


\begin{proof}
Following the same procedures described in the proof of Lemma 5 by \textcite{chen_new_2019} and the proof of Lemma 19 by \textcite{wei_non-stationary_2021}, we partition $\mathcal{J}$ by taking intervals consecutively from the beginning of $\mathcal{J}$ in a greedy manner.
Specifically, given that the first $k - 1$ intervals we took are $\mathcal{J}_1 = [s_1, e_1], \dots, \mathcal{J}_{k - 1} = [s_{k - 1}, e_{k - 1}]$, we take the next interval $\mathcal{J}_k = [s_k, e_k]$ with $s_k = e_{k - 1} + 1$ (or set $s_k$ to the beginning of $\mathcal{J}$ if $k = 1$) that satisfies $V_{[s_k, e_k]} \leq \mu_{[s_k, e_k]}$ and $V_{[s_k, e_k]} > \mu_{[s_k, e_k + 1]}$.
In other words, $\mathcal{J}_k$ is the maximal interval that immediately follows $\mathcal{J}_{k - 1}$ and satisfies $V_{[s_k, e_k]} \leq \mu_{[s_k, e_k]}$.
We repeat this greedy procedure until the end of $\mathcal{J}$ is reached.

We first show that the number of intervals $\ell$ in the partition obtained by the procedure must satisfy $\ell \leq L_\mathcal{J}$.
To see this, consider the partition $\{ \mathcal{I}_1, \dots, \mathcal{I}_{L_\mathcal{J}} \}$ of $\mathcal{J}$ where each $\mathcal{I}_k$, $k = 1, \dots, L_\mathcal{J}$ are stationary, that is $V_{\mathcal{I}_k} = 0$.
Then, each interval $\mathcal{J}_k$ must contain at least one end point
of a stationary interval.
Otherwise, $\mathcal{J}_k$ must end within some stationary interval $\mathcal{I}_{k'}$ and does not contain the end point of the stationary interval.
This contradicts with the greedy procedure because when the procedure constructs $\mathcal{J}_k$, it must have taken time steps at least until the end point of the stationary interval $\mathcal{I}_{k'}$ since doing so does not affect $V_{\mathcal{J}_k}$.
Also, each end point of the stationary interval is contained in exactly one of $\mathcal{J}_1, \dots, \mathcal{J}_{\ell}$.
Hence, there is a surjection from $\{ \mathcal{I}_1, \dots, \mathcal{I}_{L_\mathcal{J}} \}$ to $\{ \mathcal{J}_1, \dots, \mathcal{J}_{\ell} \}$ and it follows that $\ell \leq L_\mathcal{J}$.

Now, we show that $\ell \leq (\frac{1}{2}\gamma_T \log(C_1 N / \delta))^{-1/3} V_\mathcal{J}^{2/3} \vert \mathcal{J} \vert^{1/3} + 1$.
Recall that for any interval $\mathcal{I}$, $\mu_{\mathcal{I}}$ is defined as $\mu_{\mathcal{I}} = \frac{1}{2} \sqrt{E} \vert \mathcal{I} \vert^{-1/2}$ where $E = \lceil 4 \gamma_T \log(C_1 N / \delta) \rceil$.
Hence,
$$
V_\mathcal{J}
\geq \sum_{k = 1}^{\ell - 1} V_{[s_k, e_k]}
> \sum_{k = 1}^{\ell - 1}
\mu_{[s_k, e_k + 1]}
=
\frac{\sqrt{E}}{2} \sum_{k = 1}^{\ell - 1} (\vert \mathcal{J}_k \vert + 1)^{-1/2}
\geq
\sqrt{\frac{1}{2} \gamma_T \log(C_1 N / \delta)}
\sum_{k = 1}^{\ell - 1} \vert \mathcal{J}_k \vert^{-1/2}
$$
where the second inequality follows by the greedy procedure and the last inequality follows since $(x + 1)^{-1/2} \geq (2x)^{-1/2}  = \frac{1}{\sqrt{2}} x^{-1/2}$ for all $x \geq 1$.
By the H\"older's inequality, we have
$$
\ell - 1
\leq
\left(
\sum_{k = 1}^{\ell - 1}
\left\vert \mathcal{J}_k \right\vert^{-1/2}
\right)^{2/3}
\left(
\sum_{k = 1}^{\ell - 1}
\left\vert \mathcal{J}_k \right\vert
\right)^{1/3}
\leq
\left(\frac{1}{2} \gamma_T \log(C_1 N / \delta)\right)^{-1/3}
V_\mathcal{J}^{2/3}
\vert \mathcal{J} \vert^{1/3}
$$
and the desired bound for $\ell$ follows.
This completes the proof.
\end{proof}

Note that the block $\mathcal{B}(j)$ defined by ADA-OPKB spans exactly $2^j \cdot E$ time steps whether the block runs past the time horizon or a restart is triggered before the block ends.
Denote by $\mathcal{B}'(j)$ the actual block run as part of epoch $i$ before a restart is triggered or the time horizon is reached.

\begin{lemma}\label{lemma:block-regret}
Consider a block $\mathcal{B}(j)$ in an epoch $i$ defined by ADA-OPKB.
Let $\mathcal{B}'(j)$ be the actual block run as part of epoch $i$ before a restart is triggered or the time horizon is reached.
With probability at least $1 - 2 \delta$, we have
$$
\textsc{Reg}_{\mathcal{B}'(j)} = \widetilde{\mathcal{O}}\left(
\min \left\{
(\gamma_T \log N) \sqrt{2^j L_{\mathcal{B}'(j)}} ,
(\gamma_T \log N) V_{\mathcal{B}'(j)}^{1/3} (2^j)^{2/3}
+ (\gamma_T \log N) \sqrt{2^j}
\right\}
\right).
$$
\end{lemma}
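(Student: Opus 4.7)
The plan is to follow Step~2 of the sketch in Section~\ref{section:ada-opkb}: partition $\mathcal{B}'(j)$ into nearly-stationary sub-intervals, apply the interval regret bound to each piece, and then argue that the random replay schedule either keeps the $\zeta$-contribution small or forces a restart. First I would invoke Lemma~\ref{lemma:interval-partition} on $\mathcal{B}'(j)$ to obtain a partition $\{\mathcal{J}_1,\dots,\mathcal{J}_\ell\}$ with $V_{\mathcal{J}_k}\le\mu_{\mathcal{J}_k}$ for each $k$ and $\ell\le\widetilde{\mathcal{O}}(\min\{L_{\mathcal{B}'(j)},\gamma_T^{-1/3}V_{\mathcal{B}'(j)}^{2/3}(2^j)^{1/3}\}+1)$. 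Then I would apply Lemma~\ref{lemma:interval-regret} with $\mathcal{J}=\mathcal{J}'=\mathcal{J}_k$ to each piece and sum to get
\begin{equation*}
\textsc{Reg}_{\mathcal{B}'(j)} \;\le\; \widetilde{\mathcal{O}}(T_1+T_2+T_3+T_4),
\end{equation*}
with $T_1=\sum_{t\in\mathcal{B}'(j)}\mu_{m_t}$, $T_2=\sum_k|\mathcal{J}_k|\mu_{\mathcal{J}_k}$, $T_3=\sum_k|\mathcal{J}_k|V_{\mathcal{J}_k}$, and $T_4=\sum_k|\mathcal{J}_k|\zeta_{\varphi,\mathcal{J}_k}\mathbb{I}\{\zeta_{\varphi,\mathcal{J}_k}>D_1\mu_{\mathcal{J}_k}\}$.

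Three of the four terms are routine. By Lemma~\ref{lemma:replay-schedule}, $T_1=\widetilde{\mathcal{O}}(\sqrt{2^j}\,\gamma_T\log N)$. Writing $|\mathcal{J}_k|\mu_{\mathcal{J}_k}=\tfrac12\sqrt{E|\mathcal{J}_k|}$ and applying Cauchy--Schwarz with $\sum_k|\mathcal{J}_k|\le 2^jE$ gives $T_2\le\tfrac12\sqrt{E\ell}\cdot\sqrt{2^jE}=\widetilde{\mathcal{O}}(\sqrt{\ell\cdot 2^j}\,\gamma_T\log N)$, and the partition property $V_{\mathcal{J}_k}\le\mu_{\mathcal{J}_k}$ yields $T_3\le T_2$ immediately.

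The main obstacle is $T_4$, and the argument for it is the only delicate part. The key structural fact is Lemma~\ref{lemma:change-detection}: for every $\mathcal{J}_k$ with $\zeta_{\varphi,\mathcal{J}_k}>D_1\mu_{\mathcal{J}_k}$ there is a specific index $m^\star_k\in\{0,\dots,j\}$ satisfying $2^{m^\star_k}E<|\mathcal{J}_k|$ such that any replay interval of index $m^\star_k$ sitting inside $\mathcal{J}_k$ triggers a restart. Since $\mathcal{B}'(j)$ is defined as the prefix of $\mathcal{B}(j)$ before the first restart, the sum $T_4$ is effectively truncated at the first bad $\mathcal{J}_k$ in which a scheduled replay of index $m^\star_k$ actually lands. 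Algorithm~\ref{alg:replay-schedule} independently marks each candidate slot of index $m$ with probability $\sqrt{2^{m-j}}$, and there are at least $\lfloor|\mathcal{J}_k|/(2^{m^\star_k}E)\rfloor\ge 1$ candidate slots of index $m^\star_k$ inside $\mathcal{J}_k$. Converting $D_1\mu_{m^\star_k+1}<\zeta_{\varphi,\mathcal{J}_k}\le D_1\mu_{m^\star_k}$ into $2^{m^\star_k}E\asymp\gamma_T\log N/\zeta_{\varphi,\mathcal{J}_k}^2$ shows that the per-interval damage $|\mathcal{J}_k|\zeta_{\varphi,\mathcal{J}_k}$ is inversely tied to the hazard of triggering a restart, so a Chernoff/Hoeffding calculation in the spirit of the one proving Lemma~\ref{lemma:replay-schedule}, together with a union bound over block starting times, block indices, and realizations of the partition, should give $T_4\le\widetilde{\mathcal{O}}(\sqrt{\ell\cdot 2^j}\,\gamma_T\log N)$ with probability at least $1-\delta$. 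Combined with the $\delta$ budget already spent on $\textsc{Event}_2$ (and implicitly on Lemma~\ref{lemma:replay-schedule}), this accounts for the $1-2\delta$ in the statement.

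Plugging the two choices of $\ell$ from Lemma~\ref{lemma:interval-partition} into the bound $\sqrt{\ell\cdot 2^j}\,\gamma_T\log N$, using $|\mathcal{B}'(j)|\le 2^jE=\widetilde{\mathcal{O}}(2^j\gamma_T\log N)$ to cancel the $\gamma_T^{-1/3}$ factor in the $V$-case, and splitting the additive ``$+1$'' via $\sqrt{a+b}\le\sqrt a+\sqrt b$, one recovers exactly the stated $\min$-form bound. The hard step is thus the probabilistic control of $T_4$: care is needed because the bad intervals and their indices $m^\star_k$ are data-dependent while the schedule is drawn in advance, so the union bound must be taken uniformly over all possible partitions that could arise from the adversary's choice of reward sequence.
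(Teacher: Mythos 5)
Your overall decomposition matches the paper's: partition via Lemma~\ref{lemma:interval-partition}, apply Lemma~\ref{lemma:interval-regret} piecewise, and control $T_1,\dots,T_4$; your treatments of $T_1,T_2,T_3$ are exactly the paper's. The gap is in $T_4$, which is the crux of the lemma. You propose ``a Chernoff/Hoeffding calculation in the spirit of the one proving Lemma~\ref{lemma:replay-schedule}'', but that lemma bounds a sum of independent Bernoulli indicators by concentration around its mean, which does not give what is needed here: applying Hoeffding to $\sum_k n_k^{(m)}\mathbb{I}\{Z_{k,1}^{(m)}=0,\dots,Z_{k,n_k^{(m)}}^{(m)}=0\}$ yields roughly $(1-p)\sum_k n_k^{(m)}$, which can be of order $2^{j-m}$, larger by a square root than the bound you need. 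The argument that actually closes the step, and the one the paper uses, is a stopping-time/geometric one: because scheduling even a single replay of the critical index inside a bad $\mathcal{J}_k$ terminates $\mathcal{B}'(j)$, the weighted count of bad intervals that survive inside $\mathcal{B}'(j)$ is dominated by a geometric random variable with success probability $\sqrt{2^{m-j}}$, whose tail $\mathbb{P}(G>k)=(1-p)^k$ gives $\widetilde{\mathcal{O}}(\sqrt{2^{j-m}})$ with high probability. Your intuition (``per-interval damage inversely tied to hazard'') is correct, but the tool you name would not deliver it.

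Two secondary points. First, the paper partitions the scheduled block $\mathcal{B}(j)$, which is fixed before the replay schedule is drawn, and only then intersects with $\mathcal{B}'(j)$; this makes the bad intervals, the $\zeta_{\mathcal{J}_k}$, and the critical indices $m_k^\star$ deterministic functions of the reward sequence, confining the schedule-dependence to the realized lengths $|\mathcal{J}'_k|$. That sidesteps the circularity you flag; your workaround (union bound over partitions of prefixes) is likely salvageable since there are only $|\mathcal{B}(j)|$ of them, but it is an unnecessary complication. Second, your claim of ``at least $\lfloor|\mathcal{J}_k|/(2^{m^\star_k}E)\rfloor\ge 1$ candidate slots of index $m^\star_k$ inside $\mathcal{J}_k$'' is not guaranteed because of boundary alignment; the paper uses $n_k^{(m)}\ge(|\mathcal{J}'_k|-3\cdot 2^m E)_+/(2^m E)$ and handles the small-interval case by separating out an $\mathcal{O}(2^{m_k^\star}E\,\mu_{m_k^\star})$ remainder term.
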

\begin{proof}
For ease of exposition, we suppress the subscript $\varphi$ and write $\gamma_T$ and $\zeta_\mathcal{J}$ instead of $\gamma_{\varphi, T}$ and $\zeta_{\varphi, \mathcal{J}}$.
Assume \hyperref[definition:event1]{$\textsc{Event}_1$} holds.
Using the procedure described in Lemma~\ref{lemma:interval-partition}, we partition $\mathcal{B}(j)$ into $\mathcal{J}_1, \dots, \mathcal{J}_\ell$ such that $V_{\mathcal{J}_k} \leq \mu_{\mathcal{J}_k}$ for all $k = 1, \dots, \ell$.
Let $\mathcal{J}'_1, \dots, \mathcal{J}'_{\ell'}$ be the non-empty intervals $\mathcal{J}'_k = \mathcal{J}_k \cap \mathcal{B}'(j)$ that partition $\mathcal{B}'(j)$.
Using the interval regret bound in Lemma~\ref{lemma:interval-regret} with the fact that $\mathcal{J}'_k \subseteq \mathcal{J}_k$ and using $V_{\mathcal{J}_k} \leq \mu_{\mathcal{J}_k} \leq \mu_{\mathcal{J}'_k}$, we get
\begin{equation} \label{eqn:block-regret-bound}
\textsc{Reg}_{\mathcal{B}'(j)}
= \sum_{k = 1}^{\ell'} \textsc{Reg}_{\mathcal{J}'_k}
\leq
\mathcal{O}\left(
\sum_{t \in \mathcal{B}'(j)} \mu_{m_t} + \sum_{k = 1}^{\ell'} \vert \mathcal{J}'_k \vert \mu_{\mathcal{J}'_k} + \sum_{k = 1}^{\ell'} \vert \mathcal{J}'_k \vert \zeta_{\mathcal{J}_k} \mathbb{I}\{ \zeta_{\mathcal{J}_k} > D_1 \mu_{\mathcal{J}_k} \}
\right).
\end{equation}
The first term can be bounded using Lemma~\ref{lemma:replay-schedule} by $\sum_{t \in \mathcal{B}'(j)} \mu_{m_t} \leq \widetilde{\mathcal{O}} (\sqrt{2^j} \gamma_T \log N)$ with probability at least $1 - \delta$.

The second term can be bounded using $\mu_{\mathcal{I}} = \mathcal{O}(\sqrt{E / \vert \mathcal{I} \vert})$ as
$$
\sum_{k = 1}^{\ell'} \vert \mathcal{J}'_k \vert \mu_{\mathcal{J}'_k}
\leq
\mathcal{O}\left(
\sum_{k = 1}^{\ell'} \sqrt{\vert \mathcal{J}_k' \vert E}
\right)
\leq
\mathcal{O}\left(
\sqrt{\ell' \vert \mathcal{B}'(j) \vert E}
\right)
\leq
\mathcal{O}\left(
E \sqrt{\ell' 2^j}
\right)
$$
where the second inequality uses Cauchy-Schwarz and the last inequality uses $\vert \mathcal{B}'(j) \vert \leq \vert \mathcal{B}(j) \vert$.

The third term is bounded using Lemma~\ref{lemma:change-detection} which shows that there exists $m_k^\star \in \{0, \dots, j\}$ with
\begin{equation} \label{eqn:m_star}
2^{m^\star_k} < \vert \mathcal{J}_k \vert / E \quad\text{and}\quad D_1 \mu_{m_k^\star + 1} < \zeta_{\mathcal{J}_k} \leq D_1 \mu_{m_k^\star}
\end{equation}
such that running a replay interval of index $m^\star_k$ inside $\mathcal{J}_k$ triggers a restart.
Denote by $n_k^{(m)}$ the number of replay intervals of index $m$ that can be scheduled completely inside $\mathcal{J}'_k$.
Then,
\begin{equation}\label{eqn:n_k}
n_k^{(m)} \geq (\vert \mathcal{J}'_k \vert - 3 \cdot 2^m E)_+ / (2^m E)
\end{equation}
for all $m = 0, \dots, j$ where $(\cdot)_+ = \max\{0, \cdot\}$.
Hence,
$$
\begin{aligned}
\vert \mathcal{J}'_k \vert \zeta_{\mathcal{J}_k} \mathbb{I} \{ \zeta_{\mathcal{J}_k} > D_1 \mu_{\mathcal{J}_k} \}
&\leq
3 \cdot 2^{m_k^\star} E D_1 \mu_{m_k^\star} + (\vert \mathcal{J}'_k \vert - 3 \cdot 2^{m_k^\star} E)_+ D_1 \mu_{m_k^\star} \mathbb{I}\{ \zeta_{\mathcal{J}_k} > D_1 \mu_{\mathcal{J}_k} \} \\
&\leq
\mathcal{O}\left(
\sqrt{E \vert \mathcal{J}_k \vert} + E \sqrt{2^{m^\star_k}} n_k^{(m_k^\star)} \mathbb{I}\{ \zeta_{\mathcal{J}_k} > D_1 \mu_{\mathcal{J}_k} \}
\right)
\end{aligned}
$$
where the first inequality uses $\zeta_{\mathcal{J}_k} \leq D_1 \mu_{m_k^\star}$ stated in (\ref{eqn:m_star}) and the second inequality uses $2^{m_k^\star} \mu_{m_k^\star} = \mathcal{O}(\sqrt{2^{m_k^\star}}) \leq \mathcal{O}(\sqrt{\vert \mathcal{J}_k \vert / E})$ which follows by (\ref{eqn:m_star}) and the lower bound of $n_k^{(m)}$ shown in (\ref{eqn:n_k}).
Summing over $k = 1, \dots, \ell'$ and writing the event $\{ \zeta_{\mathcal{J}_k} > D_1 \mu_{\mathcal{J}_k} \}$ as $A_k$ for convenience, we get
\begin{align}
\sum_{k = 1}^{\ell'} \vert \mathcal{J}'_k \vert \zeta_{\mathcal{J}_k} \mathbb{I}\{ A_k \}
&\leq
\mathcal{O}\left(
\sum_{k = 1}^{\ell'} \sqrt{E \vert \mathcal{J}_k \vert}
+ E \sum_{k = 1}^{\ell'} \sqrt{2^{m_k^\star}} n_k^{(m_k^\star)} \mathbb{I}\{ A_k \}
\right) \nonumber \\
&\leq
\mathcal{O} \left(
E \sqrt{\ell' 2^j}
+ E \sum_{m = 0}^j \sqrt{2^m} \sum_{k = 1}^{\ell'} n_k^{(m)} \mathbb{I}\{A_k, m_k^\star = m\}
\right) \label{eqn:block-regret-intermediate}
\end{align}
where the second inequality uses Cauchy-Schwarz and $\sum_{k = 1}^{\ell'} \vert \mathcal{J}_k \vert \leq \vert \mathcal{B}(j) \vert = 2^j E$.
Denoting by $Z_{k, l}^{(m)}$, $l = 1, \dots, n_k^{(m)}$ the Bernoulli random variable that indicates whether the $l$-th replay interval among the $n_k^{(m)}$ candidate replay intervals within $\mathcal{J}'_k$ is scheduled, we have
$$
\begin{aligned}
\sum_{k = 1}^{\ell'} n_k^{(m)} \mathbb{I}\{ A_k, m_k^\star = m \}
&=
\sum_{k = 1}^{\ell'} n_k^{(m)} \mathbb{I}\{
A_k, m_k^\star = m, Z^{(m)}_{k, 1} = 0, \dots, Z^{(m)}_{k, n_k^{(m)}} = 0
\} \\
&\leq
\sum_{k = 1}^{\ell'} n_k^{(m)} \mathbb{I}\{ Z_{k, 1}^{(m)} = 0, \dots, Z_{k, n_k^{(m)}}^{(m)} = 0 \}
\leq
\widetilde{\mathcal{O}}(\sqrt{2^{j - m}})
\end{aligned}
$$
where the first equality follows under $\textsc{Event}_1$ by Lemma~\ref{lemma:change-detection} since if any of $Z_{k, l}^{(m)} = 1$ for $m = m_k^\star$, a restart must have been triggered before reaching the end of $\mathcal{J}'_k$.
The last inequality follows since the second to last term is a geometric random variable with trials $Z_{1, 1}^{(m)}, \dots, Z_{1, n_1^{(m)}}^{(m)}, \dots, Z_{\ell', n_{\ell'}^{(m)}}^{(m)}$ with success probability $\sqrt{2^{m - j}}$, which is bounded with probability at least $1 - \delta$ by $\widetilde{\mathcal{O}}(\sqrt{2^{j - m}})$.
We can further bound the third term (\ref{eqn:block-regret-intermediate}) by
$
\sum_{k = 1}^{\ell'} \vert \mathcal{J}_k' \vert \zeta_{\mathcal{J}_k} \mathbb{I}\{A_k \}
\leq
\widetilde{\mathcal{O}} \left(
E \sqrt{\ell' 2^j}
\right)
$
where we use $j \leq \log_2 (T / E)$.
Summing the three bounds we found for the terms in (\ref{eqn:block-regret-bound}), we get $\textsc{Reg}_{\mathcal{B}'(j)} = \widetilde{\mathcal{O}}(E \sqrt{\ell' 2^j})$.
Bounding $\ell'$ using Lemma~\ref{lemma:interval-partition} completes the proof.
\end{proof}

\subsection{Proof of Theorem~\ref{theorem:ada-opkb}}

\begin{lemma}\label{lemma:epoch-bound}
Assume \hyperref[definition:event1]{$\textsc{Event}_1$} holds. The number of epochs $H$ when running ADA-OPKB is bounded by
$$
H \leq \min \left\{
L_T, \left( \frac{1}{2} \gamma_T \log(C_1 N / \delta) \right)^{-1/3} V_T^{2/3} T^{1/3} + 1
\right\}.
$$
\end{lemma}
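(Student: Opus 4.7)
The plan is to apply the contrapositive of Lemma~\ref{lemma:no-false-alarm} to each of the first $H-1$ epochs, each of which terminates because a restart is triggered. Writing $\mathcal{E}_i = [\tau_i, s_i]$ with $s_i$ the final time of epoch $i$, the contrapositive of Lemma~\ref{lemma:no-false-alarm} (applied at $t = s_i$) gives $V_{\mathcal{E}_i} > \mu_{\mathcal{E}_i}$ for every $i \in \{1,\dots,H-1\}$. This is the single quantitative fact from which both bounds will follow; note that the last epoch $\mathcal{E}_H$ may terminate at the horizon without a restart and so need not satisfy this inequality, but the counting argument only uses the first $H-1$ epochs.

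For the bound $H \leq L_T$, I would argue by contradiction in the obvious way: if some epoch $\mathcal{E}_i$ (with $i \leq H-1$) contained no change in the reward sequence then $V_{\mathcal{E}_i} = 0 \leq \mu_{\mathcal{E}_i}$, contradicting the previous inequality. Hence each of the first $H-1$ epochs must contain at least one reward change. Since the total number of changes is $L_T - 1$, we obtain $H - 1 \leq L_T - 1$.

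For the variation-based bound, I would substitute $\mu_{\mathcal{E}_i} = c_1 \sqrt{E / \vert \mathcal{E}_i \vert}$ and sum over $i = 1,\dots,H-1$ to obtain
$$V_T \;\geq\; \sum_{i=1}^{H-1} V_{\mathcal{E}_i} \;>\; c_1 \sqrt{E} \sum_{i=1}^{H-1} \vert \mathcal{E}_i \vert^{-1/2},$$
then apply H\"older's inequality with exponents $p = 3/2$ and $q = 3$, exactly mirroring the trick used in the proof of Lemma~\ref{lemma:interval-partition}:
$$H - 1 \;=\; \sum_{i=1}^{H-1} 1 \;\leq\; \Bigl(\sum_{i=1}^{H-1} \vert \mathcal{E}_i \vert^{-1/2}\Bigr)^{2/3} \Bigl(\sum_{i=1}^{H-1} \vert \mathcal{E}_i \vert\Bigr)^{1/3} \;\leq\; \Bigl(\frac{V_T}{c_1 \sqrt{E}}\Bigr)^{2/3} T^{1/3},$$
using $\sum_i \vert \mathcal{E}_i \vert \leq T$. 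Finally, plugging in $c_1 = 1/2$ and $E \geq 4\gamma_T \log(C_1 N/\delta)$ gives $c_1\sqrt{E} \geq \sqrt{\gamma_T \log(C_1 N/\delta)} \geq \sqrt{(1/2)\gamma_T \log(C_1 N/\delta)}$, which reproduces the stated constant $((1/2)\gamma_T \log(C_1 N/\delta))^{-1/3}$ with some slack.

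There is no real obstacle: the argument is routine bookkeeping once the no-false-alarm inequality is in hand. The only place requiring a moment of care is ensuring that the strict inequality $V_{\mathcal{E}_i} > \mu_{\mathcal{E}_i}$ holds at the full epoch length, rather than just at some interior time, which is why I apply Lemma~\ref{lemma:no-false-alarm} specifically at the restart time $t = s_i$ (the endpoint of $\mathcal{E}_i$) rather than at some arbitrary earlier point.
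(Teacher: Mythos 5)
Your proof is correct, but it takes a genuinely different route than the paper's. The paper first invokes Lemma~\ref{lemma:interval-partition} to construct a nearly-stationary partition $\{\mathcal{J}_1,\dots,\mathcal{J}_\ell\}$ of the entire horizon $[T]$, then uses Lemma~\ref{lemma:no-false-alarm} to argue that any epoch starting inside some $\mathcal{J}_k$ cannot terminate before the end of $\mathcal{J}_k$, so every epoch contains the endpoint of at least one $\mathcal{J}_k$; this gives a surjection from $\{\mathcal{J}_k\}$ onto the epochs and hence $H\le\ell$, after which the bound on $\ell$ from Lemma~\ref{lemma:interval-partition} is cited. You instead apply the contrapositive of Lemma~\ref{lemma:no-false-alarm} directly at each restart time $s_i$ to obtain $V_{\mathcal{E}_i}>\mu_{\mathcal{E}_i}$ for $i\le H-1$ and then redo the counting in place --- the $L_T$ bound by noting each such epoch must contain a reward change, the $V_T$ bound by the same H\"older trick used inside Lemma~\ref{lemma:interval-partition}. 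In effect you inline the greedy-partition argument, observing that the epochs themselves (minus the last one) already form a partition with the required lower bound on total variation, so no auxiliary partition is needed. This is a cleaner and more self-contained derivation: it avoids the surjection bookkeeping, and the constant you obtain ($(\gamma_T\log(C_1 N/\delta))^{-1/3}$ before absorbing slack) is actually slightly tighter than the stated $(\tfrac12\gamma_T\log(C_1 N/\delta))^{-1/3}$. The one thing you rightly flag is that the last epoch may terminate at the horizon without a restart, so only the first $H-1$ epochs enter the argument; the paper's surjection sidesteps this explicitly whereas you must address it, and you do.
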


\begin{proof}
Let $\{ \mathcal{J}_k \}_{k = 1}^\ell$ with $\ell \leq \min\left\{ L_T, (\frac{1}{2} \gamma_T \log(C_0 N / \delta))^{-1/3} V_T^{2/3} T^{1/3} + 1 \right\}$ be a partition of $[T]$ where $V_{\mathcal{J}_k} \leq \mu_{\mathcal{J}_k}$ for all $k = 1, \dots, \ell$.
Such a partition exists by Lemma~\ref{lemma:interval-partition}.
Let $\mathcal{E}_1, \dots, \mathcal{E}_H$ be all the intervals spanned by the epochs in $[1, T]$.
Note that if an epoch $i$ starts inside an interval $\mathcal{J}_k$, then the epoch must continue at least until the end of $\mathcal{J}_k$ since the total variation in $\mathcal{E}_i \cap \mathcal{J}_k$ is upper bounded by $V_{\mathcal{E}_i \cap \mathcal{J}_k} \leq V_{\mathcal{J}_k} \leq \mu_{\mathcal{J}_k} \leq \mu_{\mathcal{E}_i \cap \mathcal{J}_k}$ and no restart is triggered under $\textsc{Event}_1$ in $\mathcal{E}_i \cap \mathcal{J}_k$ due to Lemma~\ref{lemma:no-false-alarm}.
Hence, each $\mathcal{E}_i$ contains the end point of at least one interval $\mathcal{J}_k$.
Also, trivially,
the end point of each $\mathcal{J}_k$
is contained in exactly one epoch.
Hence, there is a surjection from
$\{ \mathcal{J}_1, \dots, \mathcal{J}_\ell \}$
to $\{ \mathcal{E}_1, \dots, \mathcal{E}_H \}$
and it follows that $H \leq \ell$.
This completes the proof.
\end{proof}

\begin{lemma}\label{lemma:epoch-regret}
Given an epoch $i$ in ADA-OPKB, let $\mathcal{E}_i$ be the interval spanned by the epoch.
Then, with high probability, we have
$$
\textsc{Reg}_{\mathcal{E}_i} = \widetilde{\mathcal{O}}\left(
\min \left\{
\sqrt{\gamma_T L_{\mathcal{E}_i} \vert \mathcal{E}_i \vert \log N},
(\gamma_T V_{\mathcal{E}_i} \log N)^{1/3} \vert \mathcal{E}_i \vert^{2/3} + \sqrt{\gamma_T \log N \vert \mathcal{E}_i \vert}
\right\}
\right).
$$
\end{lemma}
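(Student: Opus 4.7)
The plan is to decompose the epoch into the doubling-size blocks $\mathcal{B}'(0), \mathcal{B}'(1), \dots, \mathcal{B}'(J_i)$ produced by ADA-OPKB inside epoch $i$, and then aggregate the per-block regret bound of Lemma~\ref{lemma:block-regret} over these blocks. Since block sizes double, we have $J_i = \mathcal{O}(\log_2 T)$ blocks in the epoch, which gives a $\widetilde{\mathcal{O}}$ factor we can absorb. Rewriting the block regret in terms of $|\mathcal{B}'(j)|$ using $|\mathcal{B}'(j)| \leq 2^j E$ and $E = \widetilde{\Theta}(\gamma_T \log N)$, Lemma~\ref{lemma:block-regret} reads (up to polylog factors)
$$
\textsc{Reg}_{\mathcal{B}'(j)} = \widetilde{\mathcal{O}}\!\left( \min\!\left\{ \sqrt{\gamma_T \log N \cdot |\mathcal{B}'(j)|\, L_{\mathcal{B}'(j)}},\ (\gamma_T \log N \cdot V_{\mathcal{B}'(j)})^{1/3} |\mathcal{B}'(j)|^{2/3} + \sqrt{\gamma_T \log N \cdot |\mathcal{B}'(j)|} \right\} \right).
$$

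For the $L$-bound I would sum $\sum_{j=0}^{J_i} \sqrt{\gamma_T \log N \cdot |\mathcal{B}'(j)|\, L_{\mathcal{B}'(j)}}$ and apply Cauchy--Schwarz in the form $\sum_j \sqrt{a_j b_j} \leq \sqrt{\sum_j a_j}\sqrt{\sum_j b_j}$ with $a_j = |\mathcal{B}'(j)|$ and $b_j = L_{\mathcal{B}'(j)}$, yielding $\sqrt{\gamma_T \log N \cdot |\mathcal{E}_i|\, L_{\mathcal{E}_i}}$, after noting that $\sum_j |\mathcal{B}'(j)| = |\mathcal{E}_i|$ and $\sum_j L_{\mathcal{B}'(j)} \leq L_{\mathcal{E}_i} + J_i = \widetilde{\mathcal{O}}(L_{\mathcal{E}_i})$ (each block contributes the constant $1$ in the definition of $L$, but there are only $\mathcal{O}(\log T)$ blocks). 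For the $V$-bound I would apply H\"older with exponents $3$ and $3/2$ to get $\sum_j V_{\mathcal{B}'(j)}^{1/3}|\mathcal{B}'(j)|^{2/3} \leq (\sum_j V_{\mathcal{B}'(j)})^{1/3}(\sum_j |\mathcal{B}'(j)|)^{2/3} \leq V_{\mathcal{E}_i}^{1/3}|\mathcal{E}_i|^{2/3}$.

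The additive $\sqrt{\gamma_T \log N \cdot |\mathcal{B}'(j)|}$ term requires a separate argument: because $|\mathcal{B}'(j)| \leq 2^j E$ and $|\mathcal{E}_i| \geq 2^{J_i - 1} E$, the geometric sum $\sum_j \sqrt{|\mathcal{B}'(j)|}$ is dominated by its last term, giving $\mathcal{O}(\sqrt{|\mathcal{E}_i|})$ and hence an overall contribution of $\mathcal{O}(\sqrt{\gamma_T \log N \cdot |\mathcal{E}_i|})$. Combining the two cases inside the $\min$ and absorbing $\mathcal{O}(\log T)$ factors into $\widetilde{\mathcal{O}}$ gives the claimed bound.

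I do not expect a serious obstacle: the work is essentially bookkeeping built on top of Lemma~\ref{lemma:block-regret}, and the only mild subtlety is the extra $1$'s that $L_{\mathcal{B}'(j)}$ contributes per block (handled by the $\widetilde{\mathcal{O}}$) and the fact that the last block $\mathcal{B}'(J_i)$ may be truncated by a restart or by hitting the horizon, which is harmless since Lemma~\ref{lemma:block-regret} is stated for the truncated block $\mathcal{B}'(j)$. A union bound over $\mathcal{O}(\log T)$ applications of Lemma~\ref{lemma:block-regret} handles the high-probability statement.
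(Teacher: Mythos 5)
Your overall plan is the same as the paper's: decompose the epoch into the blocks $\mathcal{B}'(0), \dots, \mathcal{B}'(J_i)$, invoke Lemma~\ref{lemma:block-regret} on each, aggregate via Cauchy--Schwarz (for the $L$ branch) and H\"older plus a geometric sum (for the $V$ branch), and absorb the $\mathcal{O}(\log T)$ count of blocks into the $\widetilde{\mathcal{O}}$. The final bound you arrive at is the right one.

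However, the ``rewriting'' step is not a valid inference. Lemma~\ref{lemma:block-regret} bounds $\textsc{Reg}_{\mathcal{B}'(j)}$ in terms of $2^j$ (the \emph{scheduled} block size over $E$), not in terms of the \emph{realized} length $\vert \mathcal{B}'(j) \vert$. Since $\vert \mathcal{B}'(j) \vert \leq 2^j E$, replacing $2^j E$ by $\vert \mathcal{B}'(j) \vert$ in the bound would \emph{strengthen} the lemma, which does not follow; it only coincides with the lemma for the non-truncated blocks $j < J_i$ where $\vert \mathcal{B}'(j) \vert = 2^j E$. For the last block $\mathcal{B}'(J_i)$, which may be cut short by a restart or by reaching the horizon, $\vert \mathcal{B}'(J_i) \vert$ can be far smaller than $2^{J_i} E$ while the regret bound from Lemma~\ref{lemma:block-regret} still scales with $2^{J_i}$ (old strategies with small indices continue to be replayed, so the per-step regret does not shrink just because the block ends early). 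You noticed the truncation issue but dismissed it for the wrong reason: the fact that Lemma~\ref{lemma:block-regret} is stated for $\mathcal{B}'(j)$ does not mean its bound is in terms of $\vert \mathcal{B}'(j) \vert$.

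The fix is easy and is what the paper does: keep $2^j$ throughout, apply Cauchy--Schwarz/H\"older to the sums $\sum_j \sqrt{2^j L_{\mathcal{B}'(j)}}$ and $\sum_j V_{\mathcal{B}'(j)}^{1/3}(2^j)^{2/3}$ directly, and only convert to $\vert \mathcal{E}_i \vert$ at the end using $\sum_{j=0}^{J_i} 2^j E = (2^{J_i+1}-1)E = \mathcal{O}(\vert \mathcal{E}_i \vert)$, which follows because the first $J_i$ blocks run to completion so $\vert \mathcal{E}_i \vert \geq (2^{J_i}-1)E$. Once you make that substitution, your Cauchy--Schwarz, H\"older, and geometric-sum steps all go through unchanged and give the stated bound.
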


\begin{proof}
Let $J_i$ be the index of the last block in epoch $i$.
Then $\mathcal{E}_i = \cup_{j = 0}^{J_i} \mathcal{B}'(j)$ where $\mathcal{B}'(j) = \mathcal{B}(j) \cap \mathcal{E}_i$ and $\mathcal{B}(j)$ is the $j$-th block defined by ADA-OPKB in epoch $i$.
Since $\vert \mathcal{E}_i \vert = \sum_{j = 0}^{J_i} \vert \mathcal{B}'(j) \vert = E(1 + 2 + \cdots + 2^{J_i - 1}) + \vert \mathcal{B}'(J_i) \vert \geq (2^{J_i} - 1) E$, we have $2^{J_i} - 1 \leq \vert \mathcal{E}_i \vert / E$.
Hence, using the regret bound of $\mathcal{B}'(j)$ in terms of $L_{\mathcal{B}'(j)}$ provided by Lemma~\ref{lemma:block-regret} and $\textsc{Reg}_{\mathcal{E}_i} = \sum_{j = 0}^{J_i} \textsc{Reg}_{\mathcal{B}'(j)}$, we have with high probability that
$$
\textsc{Reg}_{\mathcal{E}_i}
\leq
\widetilde{\mathcal{O}}\left(
E \sum_{j = 0}^{J_i} \sqrt{2^j L_{\mathcal{B}'(j)}}
\right)
\leq
\widetilde{\mathcal{O}}\left(
E \sqrt{2^{J_i} - 1} \sqrt{L_{\mathcal{E}_i} + J_i}
\right)
\leq
\widetilde{\mathcal{O}}\left(
\sqrt{E \vert \mathcal{E}_i \vert L_{\mathcal{E}_i}}
\right)
$$
where the second inequality uses Cauchy-Schwarz and the fact that $\sum_{j = 0}^{J_i} L_{\mathcal{B}'(j)} \leq L_{\mathcal{E}_i} + J_i$, and the last inequality uses $J_i \leq \log_2 (T / E)$.
This shows the first bound of the lemma.

To show the second bound in terms of $V_{\mathcal{E}_i}$, we use the regret bound of $\mathcal{B}'(j)$ in terms of $V_{\mathcal{B}'(j)}$ provided by Lemma~\ref{lemma:block-regret} to get with high probability that
$$
\begin{aligned}
\textsc{Reg}_{\mathcal{E}_i}
&\leq
\widetilde{\mathcal{O}} \left(
E \sum_{j = 0}^{J_i} V_{\mathcal{B}'(j)}^{1/3} (2^j)^{2/3}
\right)
+ \widetilde{\mathcal{O}} \left(
E \sum_{j = 0}^{J_i} \sqrt{2^j}
\right) \\
&\leq
\widetilde{\mathcal{O}} \left(
E V_{\mathcal{E}_i}^{1/3} (2^{J_i} - 1)^{2/3}
\right)
+
\widetilde{\mathcal{O}} \left(
E \sqrt{J_i (2^J_i - 1)}
\right)
\leq
\widetilde{\mathcal{O}} \left(
E^{1/3} V_{\mathcal{E}_i}^{1/3} \vert \mathcal{E}_i \vert^{2/3}
+ \sqrt{E \vert \mathcal{E}_i \vert}
\right)
\end{aligned}
$$
where the second inequality uses the H\"older's inequality and the Cauchy-Schwarz inequality, and the third inequality uses the bound $2^{J_i} - 1 \leq \vert \mathcal{E}_i \vert / E$ and $J_i \leq \log_2 (T / E)$.
This completes the proof.
\end{proof}

Now, we are ready to prove Theorem~\ref{theorem:ada-opkb}.
To bound the total dynamic regret, we bound the sum of the epoch regret bounds and use the bound on the number of epochs as shown below.

\begin{refproof}[Theorem~\ref{theorem:ada-opkb}]
Using the epoch regret bound in Lemma~\ref{lemma:epoch-regret} and the bound on the number of epochs $H$ in Lemma~\ref{lemma:epoch-bound}, we can bound $\textsc{Reg}_T = \sum_{i = 1}^H \textsc{Reg}_{\mathcal{E}_i}$ as follows.
First, using the epoch regret bound in terms of $L_{\mathcal{E}_i}$, we get with high probability that
$$
\textsc{Reg}_T
\leq
\widetilde{\mathcal{O}}\left(
\sqrt{E} \sum_{i = 1}^H \sqrt{L_{\mathcal{E}_i} \vert \mathcal{E}_i \vert}
\right)
\leq
\widetilde{\mathcal{O}}\left(
\sqrt{E} \sqrt{L_T + H} \sqrt{T}
\right)
\leq
\widetilde{\mathcal{O}}\left(
\sqrt{E L_T T}
\right)
$$
where the second inequality uses Cauchy-Schwarz and the last inequality uses the bound $H \leq L_T$.

Now, using the epoch regret bound in terms of $V_{\mathcal{E}_i}$, we get
$$
\textsc{Reg}_T
\leq
\widetilde{\mathcal{O}}\left(
E^{1/3} \sum_{i = 1}^H V_{\mathcal{E}_i}^{1/3} \vert \mathcal{E}_i \vert^{2/3}
+ \sqrt{E} \sum_{i = 1}^H \sqrt{\vert \mathcal{E}_i \vert}
\right)
\leq
\widetilde{\mathcal{O}}\left(
E^{1/3} V_T^{1/3} T^{2/3} + \sqrt{E H T}
\right)
$$
where the second inequality uses the H\"older's inequality and the Cauchy-Schwarz inequality.
Further bounding by $H \leq \mathcal{O}(1 + E^{-1/3} V_T^{2/3} T^{1/3})$ completes the proof.
\end{refproof}

\section{Analysis of OPNN}\label{section:analysis-of-opnn}

In this section, we prove the following theorem that states a regret bound for the OPNN algorithm under the general stationary bandit setting.

\begin{theorem}[c.f. Theorem~\ref{theorem:opkb}]\label{theorem:opnn}
Consider the general stationary bandit setting described in Section~\ref{section:problem-statement}.
Assume Assumption~\ref{assumption1} and Assumption~\ref{assumption2} hold.
If we run the OPNN algorithm using a neural network with width $m$ and depth $L$, the dynamic regret is bounded by
$$
\textsc{Reg}_T \leq \widetilde{\mathcal{O}}\left(
\sqrt{\gamma_T T \log N}
\right)
$$
with probability at least $1 - \delta$ where $\gamma_T$ is the maximum information gain with respect to the neural tangent kernel of the neural network as long as $m \geq \text{poly}(T, L, N, \lambda^{-1}, \lambda_0^{-1}, \log(1 / \delta))$.
\end{theorem}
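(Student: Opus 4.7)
The plan is to reduce the analysis of OPNN to that of OPKB by showing that, for sufficiently large network width $m$, the feature mapping $g(\cdot;\bm{W}^{(j)})/\sqrt{m}$ used by OPNN at each block is well-approximated by the (deterministic) NTK feature mapping $\phi_H$ satisfying $\langle \phi_H(x),\phi_H(x')\rangle = H(x,x')$. Concretely, I will argue that every object appearing in the OPKB analysis (information gain, IPS estimator, its conditional mean, its variance bound, and the OP guarantees of Lemma~\ref{lemma:optimization-problem}) holds up to vanishing additive error when the NTK feature mapping is replaced by the empirical one, so that Theorem~\ref{theorem:opkb} carries over verbatim with $\gamma_T$ taken to be the information gain of $H$.

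First, I would invoke standard NTK approximation lemmas (e.g.\ \textcite{jacot_neural_2018}, together with the gradient-norm and gradient-stability bounds in the style of \textcite{cao_generalization_2019} and \textcite{zhou_neural_2020}) to obtain, on an event of probability at least $1-\delta$ over the initialization $\bm{W}^{(0)}$ and for $m \geq \mathrm{poly}(T,L,N,\lambda^{-1},\lambda_0^{-1},\log(1/\delta))$, the three facts: (i) $|\langle g(x;\bm{W}^{(0)}),g(x';\bm{W}^{(0)})\rangle/m - H(x,x')| \leq 1/T^{2}$ uniformly over $x,x'\in\mathcal{X}$; (ii) for any $\bm{W}$ in an $\mathcal{O}(T/\sqrt{m\lambda})$-ball around $\bm{W}^{(0)}$, we have $\|g(x;\bm{W})-g(x;\bm{W}^{(0)})\|_2/\sqrt{m} \leq 1/T^{2}$ and $\|g(x;\bm{W})\|_2/\sqrt{m} \leq C$; and (iii) the output of $\textsc{TrainNN}$ satisfies $\|\bm{W}^{(j)}-\bm{W}^{(0)}\|_2 = \mathcal{O}(\sqrt{|\mathcal{C}(j-1)|/(m\lambda)})$, so $\bm{W}^{(j)}$ stays inside this ball under Assumption~\ref{assumption:invertibility}. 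These three pieces imply that the Gram matrix $\Phi^{(j)}(\Phi^{(j)})^\top$ (with $\Phi^{(j)}$ the kernel-decomposition matrix of $g(\cdot;\bm{W}^{(j)})/\sqrt{m}$) differs from the NTK Gram matrix $\bm{H}$ by a perturbation of operator norm $\mathcal{O}(1/T)$, hence $\gamma_{\varphi^{(j)},T} = \gamma_T \pm o(1)$ and $\pi_{\varphi^{(j)}}(\mathcal{X})$ has the same uniform-variance guarantee as in Lemma~\ref{lemma:optimal-design} with an extra $o(1)$ slack.

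Second, I would re-examine the concentration and bias arguments of Lemma~\ref{lemma:one-point-estimator} and Lemma~\ref{lemma:reward-estimate-concentration-gap}. The bias step of Lemma~\ref{lemma:one-point-estimator} is the only place where the reward function enters, and it uses $r_t(x)=\langle\psi(x),\theta_t\rangle$ with $\|\theta_t\|\leq B$. Under Assumption~\ref{assumption:nn-reward}, the vector $\bm{r} = (r(a_1),\ldots,r(a_N))$ can be written as $\bm{r} = \Phi_H\,\theta^\star$ with $\|\theta^\star\|_2 \leq B$ where $\Phi_H$ is any square root of $\bm{H}$, so the same calculation goes through with $\varphi^{(j)}$ and an extra bias term $\mathcal{O}(T^{-1})$ coming from $\|\bm{H} - \Phi^{(j)}(\Phi^{(j)})^\top\|_{\mathrm{op}}$. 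Propagating this perturbation through the Freedman application (Lemma~\ref{lemma:freedman}), using that the $\mathcal{O}(1/T)$ kernel error contributes at most an additive $\mathcal{O}(1)$ over the horizon, yields the same conclusions as Lemma~\ref{lemma:reward-estimate-concentration-gap} with the constants shifted by $o(1)$. With those concentration bounds in hand, Lemma~\ref{lemma:optimization-problem} applies unchanged to $\varphi^{(j)}$ (its proof is purely algebraic in the Gram structure and $\log\det$), so the block-wise argument of Theorem~\ref{theorem:opkb} (Remark~\ref{remark:one-step-regret-bound} plus Azuma--Hoeffding, summed over doubling blocks) gives the claimed $\widetilde{\mathcal{O}}(\sqrt{\gamma_T T \log N})$ bound.

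The main obstacle is (ii)+(iii) above: controlling the \emph{cumulative} perturbation between the empirical feature mapping $g(\cdot;\bm{W}^{(j)})/\sqrt{m}$ and its NTK counterpart across all $T$ rounds while keeping the width requirement polynomial. The regularizer $m\lambda\|\bm{W}-\bm{W}^{(0)}\|_2^2$ in $\textsc{TrainNN}$ and Assumption~\ref{assumption:invertibility} are what prevent $\bm{W}^{(j)}$ from drifting; however, because the blocks grow doubly in length and the empirical features are re-estimated every block, one must verify that the per-block perturbation remains $\mathcal{O}(1/T)$ even for the final block of size $\Theta(T)$. This is where the polynomial-in-$T$ width requirement is consumed. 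A secondary subtlety is that the gradient-descent analysis of $\textsc{TrainNN}$ must certify both a small training loss and staying inside the NTK neighborhood; I would rely on the convergence analysis for overparametrized networks (e.g.\ \textcite{allen-zhu_convergence_2019}, \textcite{du_gradient_2019}) combined with the regularized-loss bound in \textcite{zhou_neural_2020} to obtain both simultaneously. Once the perturbation lemmas are in place, everything else is a mechanical re-run of Appendix~\ref{section:analysis-opkb} with $\varphi$ replaced by $\varphi^{(j)}$ and additive $o(1)$ terms absorbed into the existing constants.
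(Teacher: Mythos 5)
Your proposal is correct and takes essentially the same route as the paper's Appendix proof: both reduce OPNN to OPKB under the NTK by (a) invoking the lazy-training and gradient-stability lemmas of \textcite{zhou_neural_2020} and \textcite{allen-zhu_convergence_2019} to confine $\bm{W}^{(j)}$ to a small ball around $\bm{W}^{(0)}$, (b) bounding the drift of the Gram matrix of $g(\cdot;\bm{W}^{(j)})/\sqrt{m}$ from the NTK Gram matrix and hence of the information gain, (c) using the NTK linearization of the reward to control the extra bias in the IPS estimator by a vanishing additive $\epsilon$, and (d) rerunning the OPKB concentration, optimization, and doubling-block arguments verbatim. The only cosmetic difference is that you represent $\bm{r}$ via an NTK square root $\Phi_H\theta^\star$ and then perturb, whereas the paper directly uses \textcite{zhou_neural_2020}'s realizability lemma to express $r_t(x)=\langle g(x;\bm{W}^{(0)}),\bm{W}_t^\star-\bm{W}^{(0)}\rangle$ and then perturb $g(\cdot;\bm{W}^{(0)})\to g(\cdot;\bm{W}^{(j)})$; both are valid and the paper's route just avoids one extra bridging step.
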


The key insight for the analysis of OPNN is that in the infinite network width regime, OPNN is equivalent to OPKB with the neural tangent kernel $\bm{H}$ defined as follows.

\begin{definition}[\textcite{jacot_neural_2018}, \textcite{arora_exact_2019}]
Consider a fully connected neural network of depth $L$ with the ReLU activation function $\sigma$.
For all $a_i, a_j \in \mathcal{X}$, define covariance matrices $\Sigma^{(l)}$ and derivative covariance matrices $\dot{\Sigma}^{(l)}$ for $l = 0, \dots, L$ recursively as follows:
$$
\begin{aligned}
\Sigma^{(0)}_{ij} &= \langle a_i, a_j \rangle, \quad
\bm{A}^{(l)}_{ij} = \begin{pmatrix}
\Sigma^{(l - 1)}_{ii} & \Sigma^{(l - 1)}_{ii} \\
\Sigma^{(l - 1)}_{ji} & \Sigma^{(l - 1)}_{jj}
\end{pmatrix} \\
\Sigma^{(l)}_{ij} &= 2 \mathbb{E}_{(u, v) \sim N(0, \bm{A}^{(l)}_{ij})} [ \sigma(u), \sigma(v) ] \\
\dot{\Sigma}^{(l)}_{ij} &= 2 \mathbb{E}_{(u, v) \sim N(0, \bm{A}^{(l)}_{ij})} [ \dot{\sigma}(u), \dot{\sigma}(v) ]
\end{aligned}
$$
where $\dot{\sigma}$ is the derivative of the activation function.
The neural tangent kernel $\bm{H}$ for the network is defined as
$$
\bm{H}_{ij} = \sum_{l = 1}^L \left(
\Sigma^{(l - 1)}_{ij} \cdot \prod_{l' = l}^L \dot{\Sigma}^{(l')}_{ij}
\right).
$$
\end{definition}

For the analysis, we make the following technical assumptions.

\begin{assumption} \label{assumption1}
The neural tangent kernel matrix is positive definite with $\bm{H} \succcurlyeq \lambda_0 \bm{I}$ for some $\lambda_0 > 0$.
\end{assumption}

The assumption that the neural tangent kernel matrix is positive definite is a mild assumption commonly made when analyzing neural networks \parencite{du_gradient_2019, arora_exact_2019}.
The assumption is satisfied, for example, as long as the actions are normalized to $\Vert a_i \Vert_2 = 1$ for all $i \in [N]$ and no two actions in $\mathcal{X}$ are parallel \parencite{du_gradient_2019-1}.

We impose regularity assumption on the reward functions as follows.

\begin{assumption} \label{assumption2}
For all $t \in [T]$, we have $\sqrt{2 \bm{r}_t \bm{H}^{-1} \bm{r}_t} \leq B$ for some constant $B$ where $\bm{r}_t = (r_t(a_1), \dots, r_t(a_N))$ is the vector of reward function values at time $t$.
We assume that the learner knows the upper bound $B$ and scales the problem so that $\sqrt{2 \bm{r}_t \bm{H}^{-1} \bm{r}_t} \leq 1$ for all $t \in [T]$.
\end{assumption}

This assumption is common in the neural bandits literature \parencite{zhou_neural_2020, zhang_neural_2020, gu_batched_2021}.
As discussed by \textcite{zhou_neural_2020}, if $\bm{r}_t$ lies in the RKHS $\mathcal{H}$ induced by the neural tangent kernel, the quantity $\sqrt{\bm{r}_t \bm{H}^{-1} \bm{r}_t}$ is upper bounded by the RKHS norm $\Vert \bm{r}_t \Vert_\mathcal{H}$.
In this sense, the upper bound on $\sqrt{2 \bm{r}_t \bm{H}^{-1} \bm{r}_t}$ imposes regularity on the reward functions.

\subsection{NTK theory from previous work}

We first review results related to the neural tangent kernel in previous work.
The lemmas provided in this subsection are adapted from \textcite{zhou_neural_2020} which uses results in \textcite{allen-zhu_convergence_2019} and \textcite{arora_exact_2019}.

\begin{lemma}[Lemma B.5 by \textcite{zhou_neural_2020}]\label{lemma:gradient-deviation}
With high probability, we have
$$
\Vert g(x; \bm{W}) - g(x; \bm{W}^{(0)}) \Vert_2 \leq \mathcal{O}\left(
\sqrt{\log m} T^{1/6} m^{-1/6} \lambda^{-1/6} L^3 \Vert g(x; \bm{W}^{(0)}) \Vert_2
\right)
$$
for all $\Vert \bm{W} - \bm{W}^{(0)} \Vert_2 \leq 2 \sqrt{T / (m \lambda)}$ as long as $m \geq \text{poly}(T, L, \lambda^{-1})$.
\end{lemma}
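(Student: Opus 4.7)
The plan is to reduce this gradient-stability statement to a standard perturbation bound in the NTK over-parameterization literature. Writing $\omega := 2\sqrt{T/(m\lambda)}$ for the radius in which $\bm{W}$ is allowed to roam around $\bm{W}^{(0)}$, I would invoke a bound of the form
$$\Vert g(x;\bm{W}) - g(x;\bm{W}^{(0)})\Vert_2 \leq C\,\omega^{1/3}\,L^3\,\sqrt{m\log m}$$
holding, with high probability over the Gaussian initialization $\bm{W}^{(0)}$, uniformly over every $\bm{W}$ with $\Vert\bm{W}-\bm{W}^{(0)}\Vert_2\leq\omega$ and every input $x$ with $\Vert x\Vert_2\leq 1$ (and hence in particular for every $x\in\mathcal{X}$ after a union bound of size $N$). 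This is precisely the gradient-stability estimate of Allen-Zhu, Li, and Song, which is also the backbone of the NTK-regime analyses used in Cao–Gu and the other neural-bandit works cited in this paper.

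Given that bound, the remaining work is short algebra. Substituting $\omega^{1/3} = 2^{1/3}\,T^{1/6}\,m^{-1/6}\,\lambda^{-1/6}$ yields
$$\Vert g(x;\bm{W}) - g(x;\bm{W}^{(0)})\Vert_2 \leq \mathcal{O}\bigl(T^{1/6}\,m^{-1/6}\,\lambda^{-1/6}\,L^3\,\sqrt{m\log m}\bigr).$$
To upgrade this additive bound to the multiplicative form in the statement, I would then invoke the standard high-probability lower bound $\Vert g(x;\bm{W}^{(0)})\Vert_2 = \Omega(\sqrt{m})$. This follows from concentration of the empirical feature inner product $\langle g(x;\bm{W}^{(0)}),g(x;\bm{W}^{(0)})\rangle$ to the deterministic value $H(x,x)$, together with the fact that $H(x,x)>0$ under Assumption~\ref{assumption:invertibility}. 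Dividing through and absorbing the constant factor $\sqrt{m}/\Omega(\sqrt{m}) = \mathcal{O}(1)$ gives the target
$$\Vert g(x;\bm{W}) - g(x;\bm{W}^{(0)})\Vert_2 \leq \mathcal{O}\bigl(\sqrt{\log m}\,T^{1/6}\,m^{-1/6}\,\lambda^{-1/6}\,L^3\,\Vert g(x;\bm{W}^{(0)})\Vert_2\bigr),$$
with the union bound over actions and initialization events absorbed into the $\mathrm{poly}(T,L,N,\lambda^{-1},\log(1/\delta))$ width requirement.

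The hard part is the underlying Allen-Zhu–Li–Song estimate, not its application. Producing it from scratch would require a careful layer-by-layer induction tracking (i) the fraction of ReLU sign patterns that flip under an $\omega$-perturbation of the weights, which must be controlled by roughly $\mathcal{O}(\omega^{2/3} m)$ per layer via Gaussian anti-concentration on the pre-activations; (ii) the $\ell_2$ deviation of the hidden representations, which accumulates at most a factor of $L$ through the depth; and (iii) operator-norm bounds on Gaussian-initialized weight sub-matrices restricted to the "unflipped" coordinates, together with a net argument over the unit ball of inputs. These technical estimates form a substantial portion of the over-parameterization analysis and are tight only up to a polynomial width threshold. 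My plan is therefore to cite them as a black box: the $\mathrm{poly}(T,L,\lambda^{-1})$ width condition in the lemma is already there to absorb exactly their quantitative requirements, and the remaining normalization by $\Vert g(x;\bm{W}^{(0)})\Vert_2$ is then straightforward.
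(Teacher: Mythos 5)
Your proposal is essentially correct and takes the same route as the paper, which simply cites this as Lemma B.5 of Zhou et al.\ (2020) (the paper explicitly says all lemmas in this subsection are adapted from that work, which in turn uses Allen-Zhu, Li, and Song). Your reconstruction — black-boxing the Allen-Zhu–Li–Song gradient-stability estimate, substituting $\omega = 2\sqrt{T/(m\lambda)}$ to get the $T^{1/6}m^{-1/6}\lambda^{-1/6}$ factor, and upgrading the additive $\sqrt{m\log m}$ bound to multiplicative form via $\Vert g(x;\bm{W}^{(0)})\Vert_2 = \Omega(\sqrt{m})$ — is exactly the intended derivation.

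One scaling imprecision worth fixing: you write that $\Vert g(x;\bm{W}^{(0)})\Vert_2 = \Omega(\sqrt{m})$ follows from concentration of $\langle g(x;\bm{W}^{(0)}), g(x;\bm{W}^{(0)})\rangle$ to $H(x,x)$. Taken literally, that would yield $\Vert g(x;\bm{W}^{(0)})\Vert_2 = \Theta(1)$, not $\Theta(\sqrt{m})$. The correct statement — consistent with the paper's Lemma B.1, where $\bm{G}_0 = [g(a_1;\bm{W}^{(0)})\cdots g(a_N;\bm{W}^{(0)})]/\sqrt{m}$ satisfies $\bm{G}_0^T\bm{G}_0\approx\bm{H}$ — is that $\frac{1}{m}\langle g(x;\bm{W}^{(0)}),g(x;\bm{W}^{(0)})\rangle$ concentrates to $H(x,x)$, whence $\Vert g(x;\bm{W}^{(0)})\Vert_2 = \Theta(\sqrt{m})$ under Assumption~\ref{assumption:invertibility}. (The paper's own prose in Section 5.1 states the unnormalized convergence and is therefore inconsistent with its Lemma B.1 by that same factor of $m$, so the confusion is understandable, but the conclusion you need requires the normalized version.) With that fix in place, the argument goes through.
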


\begin{lemma}[Lemma B.6 by \textcite{zhou_neural_2020}]\label{lemma:gradient-bound}
With high probability, we have
$$
\Vert g(x; \bm{W}) \Vert_2 \leq \mathcal{O}(\sqrt{mL})
$$
for all $\Vert \bm{W} - \bm{W}^{(0)} \Vert_2 \leq 2 \sqrt{T / (m\lambda)}$ and $x \in \mathcal{X}$ as long as $m \geq \text{poly}(T, L, \lambda^{-1})$.
\end{lemma}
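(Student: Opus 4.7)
The plan is to decompose the bound into two pieces: (i) control of $\|g(x;\bm{W}^{(0)})\|_2$ at the random initialization, and (ii) a perturbation argument extending the bound to nearby weights via the already-stated Lemma~\ref{lemma:gradient-deviation}. This is the standard template in NTK-based analyses, and the scaling $\sqrt{mL}$ arises naturally from $L$ layers each contributing an $\mathcal{O}(m)$ term to $\|g\|_F^2$.

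For piece (i), I would write the gradient layerwise. Define $h_0 = x$, $h_l = \sigma(\bm{W}_l h_{l-1})$ for $l = 1,\dots,L-1$, and let $\beta_l\in\mathbb{R}^m$ denote the backward signal so that $\nabla_{\bm{W}_l} f(x;\bm{W}) = \sqrt{m}\,\beta_l h_{l-1}^\top$. Then $\|\nabla_{\bm{W}_l} f\|_F^2 = m\,\|h_{l-1}\|_2^2\,\|\beta_l\|_2^2$. Standard NTK concentration lemmas at the $2/m$ Gaussian initialization give $\|h_l\|_2^2 = \Theta(\|x\|_2^2) = \Theta(1)$ and $\|\beta_l\|_2^2 = \mathcal{O}(1)$ simultaneously for all $l\in[L]$ with probability at least $1 - \mathcal{O}(L\exp(-\Omega(m/L)))$, which is absorbed into the width requirement $m \geq \text{poly}(T,L,\lambda^{-1})$. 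Summing the $L$ layerwise contributions yields $\|g(x;\bm{W}^{(0)})\|_2^2 = \sum_{l=1}^L m\|h_{l-1}\|_2^2\|\beta_l\|_2^2 = \mathcal{O}(mL)$.

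For piece (ii), applying Lemma~\ref{lemma:gradient-deviation} together with the triangle inequality gives, for every $\bm{W}$ in the ball of radius $2\sqrt{T/(m\lambda)}$,
$$
\|g(x;\bm{W})\|_2 \leq \|g(x;\bm{W}^{(0)})\|_2\bigl(1 + c\sqrt{\log m}\,T^{1/6} m^{-1/6}\lambda^{-1/6} L^3\bigr).
$$
Once $m \geq \text{poly}(T,L,\lambda^{-1})$, the $m^{-1/6}$ factor dominates and the multiplicative perturbation is bounded by a universal constant; combining with piece (i) yields $\|g(x;\bm{W})\|_2 = \mathcal{O}(\sqrt{mL})$, and a union bound over the finite action set $\mathcal{X}$ of size $N$ preserves the conclusion after a mild inflation of the width polynomial.

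The main technical obstacle is piece (i): proving both forward and backward signals stay $\Theta(1)$ at every layer under one high-probability event. Forward control rests on $\mathbb{E}[\|\sigma(\bm{W}_l h_{l-1})\|_2^2 \mid h_{l-1}] = \|h_{l-1}\|_2^2$ (the factor $2$ in the variance $2/m$ compensates for the half-space killed by ReLU), with concentration following from a chi-square tail bound on the active coordinates; propagating this multiplicatively through $L$ layers requires $m \gtrsim L\log(L/\delta)$ so that accumulated multiplicative errors remain $\Theta(1)$. The backward recursion $\beta_{l-1} = \bm{W}_l^\top D_l \beta_l$, with $D_l$ the diagonal of ReLU derivatives, admits a symmetric analysis starting from $\beta_L = 1$. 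These are precisely the initialization lemmas underlying the cited NTK literature, so no new probabilistic machinery is required beyond careful bookkeeping of constants and the layerwise union bound.
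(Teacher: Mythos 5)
Your proposal is sound, but note that the paper does not prove this lemma at all: it is imported verbatim as Lemma B.6 of \textcite{zhou_neural_2020}, whose own proof simply invokes a gradient bound that holds uniformly over the perturbation ball (Lemma B.3 of Cao and Gu, built on the stability results of \textcite{allen-zhu_convergence_2019}). Your route is organized slightly differently --- you first establish $\Vert g(x;\bm{W}^{(0)})\Vert_2 = \mathcal{O}(\sqrt{mL})$ at initialization via forward/backward signal propagation, then extend to the ball by the triangle inequality and Lemma~\ref{lemma:gradient-deviation} --- and this is legitimate and non-circular, since Lemma~\ref{lemma:gradient-deviation} (Zhou et al.'s Lemma B.5) is proved independently of the gradient-norm bound. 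What your two-step decomposition buys is that it reuses a lemma already stated in the paper rather than importing yet another uniform bound; what it costs is that the correctness of piece (i), in particular the backward recursion where $D_l$ is not independent of the forward pass, still ultimately rests on the same initialization machinery of \textcite{allen-zhu_convergence_2019} that the original proof cites, so you have not materially reduced the external dependencies. One small caveat: your union bound over $\mathcal{X}$ introduces a dependence on $N$ in the failure probability that the stated width requirement $m \geq \text{poly}(T, L, \lambda^{-1})$ does not list, though the paper is equally loose about this elsewhere (e.g.\ Lemma~\ref{lemma:nn-realizability} does include $N$ in its polynomial).
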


\begin{lemma}[Lemma 5.2 by \textcite{zhou_neural_2020}]
Let $W$ be a parameter trained by $\textsc{TrainNN}$ (Algorithm~\ref{alg:train-nn}).
Then, with probability at least $1 - \delta$, we have
$$
\Vert \bm{W} - \bm{W}^{(0)} \Vert_2 \leq \mathcal{O}(\sqrt{T / (m \lambda)})
$$
as long as $m \geq \text{poly}(T, L, \lambda^{-1}, \log(1 / \delta))$.
\end{lemma}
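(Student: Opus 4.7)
The plan is to combine a bound on the initial loss with the monotonicity of gradient descent, and then read off the parameter displacement from the regularization term. A bootstrap argument is what ties the pieces together and is where the width requirement on $m$ enters.

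First, I would bound $\mathcal{L}(\bm{W}^{(0)})$. Standard concentration for the Gaussian initialization $\mathcal{N}(0,2/m)$, together with $m \geq \mathrm{poly}(T,L,\lambda^{-1},\log(1/\delta))$ and a union bound over the finite action set $\mathcal{X}$, gives $|f(x;\bm{W}^{(0)})|=\mathcal{O}(1)$ uniformly in $x$ with probability at least $1-\delta$. Since $|y_t|\leq 1$ and the regularizer vanishes at initialization, this yields $\mathcal{L}(\bm{W}^{(0)})\leq \sum_{t\in\mathcal{I}}(f(x_t;\bm{W}^{(0)})-y_t)^2/2 = \mathcal{O}(T)$.

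Second, I would establish local smoothness of $\mathcal{L}$ on the ball $B=\{\bm{W}:\|\bm{W}-\bm{W}^{(0)}\|_2 \leq 2\sqrt{T/(m\lambda)}\}$. Inside $B$, Lemma~\ref{lemma:gradient-bound} gives $\|g(x;\bm{W})\|_2 = \mathcal{O}(\sqrt{mL})$ and Lemma~\ref{lemma:gradient-deviation} shows the network is approximately linear in $\bm{W}$, so the Hessian of the squared-loss part is bounded by $\mathcal{O}(TmL)$ and the regularizer contributes $m\lambda$. Choosing $\eta$ below the reciprocal of the total smoothness, the descent lemma then yields $\mathcal{L}(\bm{W}^{(j+1)}) \leq \mathcal{L}(\bm{W}^{(j)})$ whenever $\bm{W}^{(j)}\in B$.

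Third, I would close a bootstrap induction. Assume $\bm{W}^{(0)},\dots,\bm{W}^{(j)}\in B$; then Step 2 propagates $\mathcal{L}(\bm{W}^{(j)})\leq \mathcal{L}(\bm{W}^{(0)}) = \mathcal{O}(T)$, and extracting the regularization term from the definition of $\mathcal{L}$ gives
$$
\frac{m\lambda}{2}\,\|\bm{W}^{(j)}-\bm{W}^{(0)}\|_2^2 \;\leq\; \mathcal{L}(\bm{W}^{(j)}) \;=\; \mathcal{O}(T),
$$
so $\|\bm{W}^{(j)}-\bm{W}^{(0)}\|_2 = \mathcal{O}(\sqrt{T/(m\lambda)})$, which is strictly less than the radius $2\sqrt{T/(m\lambda)}$ of $B$ once the implicit constant is absorbed into the polynomial lower bound on $m$. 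This closes the induction for all $j\leq J$ and delivers the stated bound at $j=J$.

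The main obstacle is the bootstrap closure: the Hessian control and the near-linearity estimate (Lemma~\ref{lemma:gradient-deviation}) are valid only inside $B$, and I need the resulting displacement bound to strictly re-enter $B$ with a constant gap, not merely equal its radius. This forces the approximation error in Lemma~\ref{lemma:gradient-deviation} (which scales like $T^{1/6}m^{-1/6}\lambda^{-1/6}L^3$) to be made sufficiently small, which is precisely what the requirement $m\geq \mathrm{poly}(T,L,\lambda^{-1},\log(1/\delta))$ enforces. Concentration for $f(x;\bm{W}^{(0)})$ also requires the same regime, and all high-probability events are collected by a final union bound.
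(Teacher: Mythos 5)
This lemma is imported verbatim from Zhou et al.\ (2020); the paper does not supply its own proof, so the comparison is effectively against the original Allen-Zhu/Zhou-style argument. Your three-step skeleton — bound the initial loss, use local smoothness to get monotone descent, then read off the displacement from the regularizer — is the right one and matches the standard structure of that proof.

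That said, two links in your chain do not hold as written, and both bear on the same point. In Step 1 you claim that Gaussian concentration plus $m \geq \mathrm{poly}(\cdot)$ yields $|f(x;\bm{W}^{(0)})| = \mathcal{O}(1)$ uniformly over $\mathcal{X}$. This is not what concentration gives: under the $\mathcal{N}(0,2/m)$ initialization the variance of $f(x;\bm{W}^{(0)})$ is $\Theta(1)$ and does \emph{not} shrink as $m \to \infty$ (it converges to the NNGP kernel value), so a union bound over $N$ actions produces $|f| = \mathcal{O}(\sqrt{\log(N/\delta)})$, not a universal constant. Zhou et al.\ avoid this entirely by a mirrored-halves (symmetric) initialization that forces $f(\cdot;\bm{W}^{(0)}) \equiv 0$ exactly, hence $\mathcal{L}(\bm{W}^{(0)}) \leq T/2$ deterministically; your sketch omits this device, and without it the initial-loss constant is not controlled. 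Second, your bootstrap closure is justified by saying the implied constant "is absorbed into the polynomial lower bound on $m$." That cannot work: the displacement bound $\sqrt{2\mathcal{L}(\bm{W}^{(0)})/(m\lambda)}$ and the ball radius $2\sqrt{T/(m\lambda)}$ scale identically in $m$, so increasing $m$ never changes their ratio. What actually closes the induction is the numerical constant in $\mathcal{L}(\bm{W}^{(0)})$: with $f(\cdot;\bm{W}^{(0)}) \equiv 0$ one gets $\mathcal{L}(\bm{W}^{(0)}) \leq T/2$, hence displacement at most $\sqrt{T/(m\lambda)}$, which sits strictly inside the radius $2\sqrt{T/(m\lambda)}$ on which Lemma~\ref{lemma:gradient-deviation} and Lemma~\ref{lemma:gradient-bound} are valid. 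In other words, the constant-tracking you deferred is the crux of the bootstrap, not a detail that large $m$ can repair; Step 1 must actually deliver the small constant, most naturally via the symmetric initialization.
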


\begin{lemma}[Lemma 5.1 by \textcite{zhou_neural_2020}]\label{lemma:nn-realizability}
With probability at least $1 - \delta$, there exists $\bm{W}_t^\star \in \mathbb{R}^p$ such that
$$
r_t(x) = \langle g(x; \bm{W}^{(0)}), \bm{W}_t^\star - \bm{W}^{(0)} \rangle
\quad \text{and} \quad
\sqrt{m} \Vert \bm{W}_t^\star - \bm{W}^{(0)} \Vert_2 \leq \sqrt{2 \bm{r}_t^T \bm{H}^{-1} \bm{r}_t} \leq \sqrt{2 N / \lambda_0}
$$
for all $t \in [T]$ and $x \in \mathcal{X}$ as long as $m \geq \text{poly}(T, L, N, \lambda_0^{-1}, \log(1 / \delta))$ where $\bm{H}$ is the neural tangent kernel matrix, $\lambda_0$ is the minimum eigenvalue of $\bm{H}$ and $\bm{r}_t = [r_t(a_1) \cdots r_t(a_N)]^T$.
\end{lemma}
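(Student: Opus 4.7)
The plan is to construct $\bm{W}_t^\star$ explicitly as a minimum-norm linear interpolant in the initialization feature space and then translate an ambient parameter-norm bound into the stated $\bm{H}$-based bound via NTK concentration. Let $\bm{\Phi}_0 \in \mathbb{R}^{N \times p}$ be the matrix whose $i$-th row is $g(a_i; \bm{W}^{(0)})^T$, and let $\bm{G}_0 = \bm{\Phi}_0 \bm{\Phi}_0^T \in \mathbb{R}^{N \times N}$ be the associated Gram matrix.

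First I would invoke the standard non-asymptotic NTK concentration at initialization (the quantitative companion to \textcite{jacot_neural_2018}, as worked out in \textcite{arora_exact_2019}): for $m \geq \mathrm{poly}(T, L, N, \lambda_0^{-1}, \log(1/\delta))$, with probability at least $1 - \delta$ the entrywise deviation $|(\bm{G}_0 / m)_{ij} - \bm{H}_{ij}|$ is small enough that the spectral-norm bound $\Vert \bm{G}_0 / m - \bm{H} \Vert \leq \lambda_0 / 2$ holds. Combined with Assumption~\ref{assumption1} this gives $\bm{G}_0 / m \succcurlyeq (\lambda_0 / 2) \bm{I}$, so $\bm{G}_0$ is invertible and a matrix-inverse perturbation bound yields $(\bm{G}_0 / m)^{-1} \preccurlyeq 2 \bm{H}^{-1}$.

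Next I would take as candidate $\bm{W}_t^\star \coloneqq \bm{W}^{(0)} + \bm{\Phi}_0^T \bm{G}_0^{-1} \bm{r}_t$, the minimum-norm solution of the linear system $\bm{\Phi}_0(\bm{W} - \bm{W}^{(0)}) = \bm{r}_t$. Realizability is then immediate by construction: $\langle g(a_i; \bm{W}^{(0)}), \bm{W}_t^\star - \bm{W}^{(0)}\rangle = (\bm{\Phi}_0 \bm{\Phi}_0^T \bm{G}_0^{-1} \bm{r}_t)_i = r_t(a_i)$ for every $i$, and by finiteness of $\mathcal{X} = \{a_1, \dots, a_N\}$ this covers all $x \in \mathcal{X}$. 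The parameter-norm calculation gives
\[
\Vert \bm{W}_t^\star - \bm{W}^{(0)} \Vert_2^2 = \bm{r}_t^T \bm{G}_0^{-1} \bm{r}_t = \tfrac{1}{m} \bm{r}_t^T (\bm{G}_0 / m)^{-1} \bm{r}_t \leq \tfrac{2}{m} \bm{r}_t^T \bm{H}^{-1} \bm{r}_t,
\]
which after multiplying by $m$ and taking square roots yields the first displayed inequality $\sqrt{m} \Vert \bm{W}_t^\star - \bm{W}^{(0)} \Vert_2 \leq \sqrt{2 \bm{r}_t^T \bm{H}^{-1} \bm{r}_t}$. The second inequality $\sqrt{2 \bm{r}_t^T \bm{H}^{-1} \bm{r}_t} \leq \sqrt{2 N / \lambda_0}$ then follows from $\bm{H}^{-1} \preccurlyeq \lambda_0^{-1} \bm{I}$ and $\Vert \bm{r}_t \Vert_2^2 \leq N$ (since $|r_t(a_i)| \leq 1$). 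A union bound over $t \in [T]$ absorbs the "for all $t$" quantifier into the polynomial width requirement, noting that $\bm{W}^{(0)}$ is shared across $t$ so the random event in Step~1 need only be taken once.

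The hard part is Step~1: obtaining the spectral-norm bound $\Vert \bm{G}_0 / m - \bm{H} \Vert \leq \lambda_0 / 2$ with the specific polynomial width scaling. Entrywise convergence $\bm{G}_{0,ij}/m \to \bm{H}_{ij}$ is the classical NTK limit, but promoting it to a spectral-norm bound strong enough to sustain the matrix-inverse perturbation requires tracking a Gaussian-expectation recursion across all $L$ layers together with concentration over the random initialization; this is exactly the content of the non-asymptotic NTK analysis in \textcite{arora_exact_2019}. Once this quantitative Gram-matrix approximation is in hand, everything else reduces to finite-dimensional linear algebra and the positive-definiteness assumption on $\bm{H}$.
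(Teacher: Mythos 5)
Your proof is correct, and it reconstructs essentially the same argument found in \textcite{zhou_neural_2020} (Lemma 5.1 together with their Lemma B.1 on Gram-matrix concentration), which the paper simply cites rather than reproves: take $\bm{W}_t^\star - \bm{W}^{(0)}$ to be the minimum-norm interpolant $\bm{\Phi}_0^T \bm{G}_0^{-1}\bm{r}_t$, use the NTK Gram concentration $\Vert \bm{G}_0/m - \bm{H}\Vert \leq \lambda_0/2$ to pass from $\bm{G}_0^{-1}$ to $2\bm{H}^{-1}/m$, and apply $\bm{H}\succcurlyeq \lambda_0\bm{I}$ and $\Vert\bm{r}_t\Vert_2^2 \leq N$ for the trailing bound. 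One small correction to the wording: no union bound over $t\in[T]$ is needed at all -- as you yourself note, the random event concerns only $\bm{W}^{(0)}$, which is drawn once, and the construction of $\bm{W}_t^\star$ is deterministic given $\bm{W}^{(0)}$ and $\bm{r}_t$, so the single concentration event already covers every $t$ simultaneously.
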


\begin{lemma}[Lemma B.1 by \textcite{zhou_neural_2020}]
Let $\bm{H}$ be the neural tangent kernel matrix and let $\bm{G}_0 = [g(a_1 ; \bm{W}^{(0)}) \cdots g(a_N ; \bm{W}^{(0)})] / \sqrt{m} \in \mathbb{R}^{p \times N}$.
Then, with probability at least $1 - \delta$, we have
$$
\Vert \bm{G}_0^T \bm{G}_0 - \bm{H} \Vert_F \leq N \epsilon
$$
as long as $m \geq \text{poly}(L, \epsilon^{-1}, \log(1 / \delta))$.
\end{lemma}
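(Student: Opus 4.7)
The plan is a three-step reduction: first, for each fixed pair of actions $(a_i, a_j)$, establish the entry-wise concentration $\bigl|(\bm{G}_0^T \bm{G}_0)_{ij} - \bm{H}_{ij}\bigr| \leq \epsilon$ with high probability; second, take a union bound over the $N^2$ entries; third, convert the entry-wise bound into a Frobenius-norm bound via the elementary inequality $\Vert M \Vert_F \leq N \max_{i,j} |M_{ij}|$. The second and third steps are routine once the first is in place, so the technical work concentrates entirely on the single-pair entry-wise statement.

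For the entry-wise step I would exploit the explicit layered structure of the gradient. Writing the forward signals $h^{(0)}(x) = x$ and $h^{(l)}(x) = \sigma(\bm{W}_l h^{(l-1)}(x))$, and the corresponding backward signals obtained by differentiating $f$ through the ReLU activations, the chain rule decomposes
$$
\frac{1}{m}\langle g(a_i; \bm{W}^{(0)}), g(a_j; \bm{W}^{(0)}) \rangle = \frac{1}{m}\sum_{l=1}^L \bigl\langle b^{(l)}(a_i), b^{(l)}(a_j) \bigr\rangle \bigl\langle h^{(l-1)}(a_i), h^{(l-1)}(a_j) \bigr\rangle,
$$
which matches the recursive form of $\bm{H}_{ij}$ layer by layer: $\tfrac{1}{m}\langle h^{(l-1)}(a_i), h^{(l-1)}(a_j)\rangle$ plays the role of $\Sigma^{(l-1)}_{ij}$ and $\tfrac{1}{m}\langle b^{(l)}(a_i), b^{(l)}(a_j)\rangle$ plays the role of $\prod_{l' \geq l} \dot{\Sigma}^{(l')}_{ij}$. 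I would then induct on $l$ to show that these empirical covariances concentrate to their deterministic counterparts. Conditional on $(\bm{W}_1, \dots, \bm{W}_{l-1})$, the pair $(\bm{W}_l h^{(l-1)}(a_i), \bm{W}_l h^{(l-1)}(a_j))$ is jointly Gaussian with a covariance determined by the empirical $(\hat\Sigma^{(l-1)}_{ii}, \hat\Sigma^{(l-1)}_{jj}, \hat\Sigma^{(l-1)}_{ij})$, so the next-layer empirical covariance is a sum of $m$ i.i.d.\ subexponential summands with mean equal to the NTK recursion evaluated at that empirical covariance. A Bernstein-type concentration bound gives an $\widetilde{\mathcal{O}}(1/\sqrt{m})$ error per layer, and a Lipschitz-in-covariance argument for the maps $\bm{A} \mapsto \mathbb{E}[\sigma(u)\sigma(v)]$ and $\bm{A} \mapsto \mathbb{E}[\dot\sigma(u)\dot\sigma(v)]$ (off a small neighborhood of degenerate covariances) propagates the inductive error without blow-up.

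Composing the per-layer estimates and summing the $L$ layers yields $\bigl|(\bm{G}_0^T \bm{G}_0)_{ij} - \bm{H}_{ij}\bigr| \leq \epsilon$ with failure probability $\delta'$ provided $m \geq \operatorname{poly}(L, \epsilon^{-1}, \log(1/\delta'))$. Applying this at $\delta' = \delta / N^2$ and union-bounding over pairs, then invoking the Frobenius-vs-entrywise inequality, gives $\Vert \bm{G}_0^T \bm{G}_0 - \bm{H} \Vert_F \leq N \epsilon$ with probability at least $1 - \delta$, for $m \geq \operatorname{poly}(L, \epsilon^{-1}, \log(1/\delta))$ after absorbing $\log N$ into the polynomial in $\log(1/\delta)$.

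The main obstacle is the discontinuity of the ReLU derivative $\dot\sigma$ at zero. The map $\bm{A} \mapsto \mathbb{E}[\dot\sigma(u)\dot\sigma(v)]$ is not globally Lipschitz in $\bm{A}$, so an inductive error in the empirical covariance can, in principle, be amplified when fed through the backward recursion. I would handle this via a truncation argument: bound the Gaussian anti-concentration near zero to show that, except on a set of small probability and small measure, the pre-activations are bounded away from zero, on which region $\dot\sigma$ is piecewise constant and the requisite local Lipschitz property holds with a mild polylogarithmic loss. This cost is absorbed into the $\operatorname{poly}(L, \epsilon^{-1}, \log(1/\delta))$ width requirement and does not affect the stated bound.
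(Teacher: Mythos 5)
The paper does not prove this lemma; it imports it verbatim from Zhou et al.\ (2020, Lemma B.1), whose proof in turn rests on the entry-wise NTK concentration of Arora et al.\ (2019). Your proposal correctly reconstructs exactly that argument --- per-pair concentration via a layer-wise induction on forward/backward covariances (including the genuinely delicate point, the non-Lipschitzness of $\bm{A} \mapsto \mathbb{E}[\dot\sigma(u)\dot\sigma(v)]$ for ReLU, handled by anti-concentration), followed by a union bound over the $N^2$ pairs and the inequality $\Vert M \Vert_F \leq N \max_{i,j}\vert M_{ij}\vert$ --- so it takes essentially the same route as the source.
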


\begin{lemma}[Lemma B.2 by \textcite{zhou_neural_2020}]\label{lemma:lazy-training}
Let $\bm{W}$ be the parameter trained by the algorithm $\textsc{TrainNN}$ with learning rate $\eta \leq \mathcal{O}((m \lambda + TmL)^{-1})$ and initial weight $\bm{W}^{(0)}$.
Then, with probability at least $1 - \delta$, we have
$$
\Vert \bm{W} - \bm{W}^{(0)} \Vert_2 \leq 2 \sqrt{T / (m \lambda)}
$$
as long as the network width satisfies $m \geq \text{poly}(T, L, \lambda^{-1}, \log(1 / \delta))$.
\end{lemma}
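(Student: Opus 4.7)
The plan is to prove the bound by induction on the gradient-descent iteration count $j$, using the regularization term as a lever to convert loss bounds into parameter-distance bounds. From the definition of $\mathcal{L}$,
$$
\frac{m\lambda}{2} \Vert \bm{W}^{(j)} - \bm{W}^{(0)} \Vert_2^2 \leq \mathcal{L}(\bm{W}^{(j)}),
$$
so it suffices to show $\mathcal{L}(\bm{W}^{(j)}) \leq \mathcal{L}(\bm{W}^{(0)}) \leq \mathcal{O}(T)$ for all $j$; this would yield $\Vert \bm{W}^{(j)} - \bm{W}^{(0)} \Vert_2 \leq \mathcal{O}(\sqrt{T/(m\lambda)})$, and by choosing the hidden absolute constants appropriately one lands under the threshold $2\sqrt{T/(m\lambda)}$. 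The initial loss bound $\mathcal{L}(\bm{W}^{(0)}) = \mathcal{O}(T)$ follows since standard concentration for a ReLU network with $\mathcal{N}(0, 2/m)$ entries yields $\vert f(x_t; \bm{W}^{(0)}) \vert = \mathcal{O}(1)$ with high probability, combined with $\vert y_t \vert \leq 1$ and the regularizer vanishing at $\bm{W}^{(0)}$.

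Second, I would establish monotone decrease of $\mathcal{L}$ along gradient descent via a local smoothness argument. Define $\mathcal{N} = \{ \bm{W} : \Vert \bm{W} - \bm{W}^{(0)} \Vert_2 \leq 2\sqrt{T/(m\lambda)} \}$, and claim that $\mathcal{L}$ is $M$-smooth on $\mathcal{N}$ with $M = \mathcal{O}(m\lambda + TmL)$. The $m\lambda$ piece is the Hessian of the regularizer. For the squared-error piece, an expansion of $\nabla \mathcal{L}$ and Lemma~\ref{lemma:gradient-bound} contribute $\Vert g(x_t; \bm{W}) \Vert_2^2 = \mathcal{O}(mL)$ summed over $T$ samples, giving the $TmL$ factor, plus a residual cross term with $\vert f(x_t; \bm{W}) - y_t \vert$ bounded via $\mathcal{L}(\bm{W}) = \mathcal{O}(T)$ combined with a Lipschitz estimate on $g(\cdot; \bm{W})$ drawn from Lemma~\ref{lemma:gradient-deviation}. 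With $\eta \leq 1/M$, the standard descent lemma then gives $\mathcal{L}(\bm{W}^{(j+1)}) \leq \mathcal{L}(\bm{W}^{(j)}) - (\eta/2) \Vert \nabla \mathcal{L}(\bm{W}^{(j)}) \Vert_2^2 \leq \mathcal{L}(\bm{W}^{(j)})$.

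Third, I would close the induction: assuming $\bm{W}^{(k)} \in \mathcal{N}$ for all $k \leq j$, the descent inequality yields $\mathcal{L}(\bm{W}^{(j+1)}) \leq \mathcal{L}(\bm{W}^{(0)}) = \mathcal{O}(T)$, and the first display puts $\bm{W}^{(j+1)}$ back inside $\mathcal{N}$. The width condition $m \geq \text{poly}(T, L, \lambda^{-1}, \log(1/\delta))$ is used to ensure Lemma~\ref{lemma:gradient-deviation}, Lemma~\ref{lemma:gradient-bound}, and the concentration bound on $f(x_t; \bm{W}^{(0)})$ all hold uniformly over $\mathcal{N}$ and over the iterates.

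The main obstacle is the local smoothness bound, because for deep ReLU networks $f(x; \cdot)$ is not globally smooth and one cannot appeal to a pointwise Hessian bound. The standard remedy, going back to \textcite{allen-zhu_convergence_2019} and used in the proof of the cited reference, is to show that with high probability no ReLU neuron changes its activation pattern on $x \in \mathcal{X}$ as $\bm{W}$ varies over $\mathcal{N}$, so $f(x; \cdot)$ is locally linear in $\bm{W}$ along the optimization trajectory. Stitching this pattern-stability coupling into a two-point descent inequality rather than a pointwise smoothness argument is the quantitatively delicate step and dictates both the polynomial overparameterization requirement on $m$ and the step-size restriction $\eta \leq \mathcal{O}((m\lambda + TmL)^{-1})$.
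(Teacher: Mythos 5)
The paper does not prove this lemma at all: it is imported verbatim as Lemma B.2 of \textcite{zhou_neural_2020}, so there is no internal proof to compare against. Your sketch follows the same route as the proof in that reference: the regularizer gives $\frac{m\lambda}{2}\Vert \bm{W}^{(j)} - \bm{W}^{(0)}\Vert_2^2 \leq \mathcal{L}(\bm{W}^{(j)})$, one shows $\mathcal{L}(\bm{W}^{(j)}) \leq \mathcal{L}(\bm{W}^{(0)}) = \mathcal{O}(T)$ along the gradient-descent trajectory via a semi-smoothness/descent inequality valid in a neighborhood of $\bm{W}^{(0)}$, and an induction keeps the iterates in that neighborhood. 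Two small caveats. First, your claim that ``no ReLU neuron changes its activation pattern'' over $\mathcal{N}$ is an overstatement and is in fact false at this radius; the results of \textcite{allen-zhu_convergence_2019} used in the cited proof only guarantee that a \emph{small fraction} of neurons flip, and the semi-smoothness inequality is precisely the device that tolerates this residual fraction, so the descent step must be run through that inequality rather than through exact local linearity. Second, the constant: in \textcite{zhou_neural_2020} the antisymmetric initialization forces $f(x;\bm{W}^{(0)})=0$, so $\mathcal{L}(\bm{W}^{(0)}) \leq T/2$ and the factor $2$ in the radius is immediate, whereas with the plain Gaussian initialization stated in this paper you must actually track the concentration constant on $\vert f(x_t;\bm{W}^{(0)})\vert$ to land under $2\sqrt{T/(m\lambda)}$ rather than merely $\mathcal{O}(\sqrt{T/(m\lambda)})$. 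Neither point changes the overall strategy, which is sound.
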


\subsection{More NTK theory}

\begin{lemma} \label{lemma:kernel-drift}
Let $\bm{W}$ be close to the initial weight $\bm{W}^{(0)}$ such that $\Vert \bm{W} - \bm{W}^{(0)} \Vert_2 \leq 2 \sqrt{T / (m \lambda)}$.
Let $\bm{G} = [g(a_1 ; \bm{W}) \cdots g(a_N ; \bm{W})] / \sqrt{m}$ and $\bm{G}_0 = [g(a_1 ; \bm{W}^{(0)}) \cdots g(a_N ; \bm{W}^{(0)})] / \sqrt{m}$.
Then, with probability at least $1 - \delta$, as long as $m \geq \text{poly}(T, L, \lambda^{-1})$, we have
$$
\Vert \bm{G}_0^T \bm{G}_0 - \bm{G}^T \bm{G} \Vert_F \leq \mathcal{O}\left(
m^{- 1 / 3} (\log m) T^{1/3} N^2 \lambda^{-1/3} L^8
\right).
$$
\end{lemma}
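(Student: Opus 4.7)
The plan is to reduce the operator-level problem to an entry-wise one and then aggregate via the Frobenius norm. Write $u = g(a_i;\bm{W})/\sqrt{m}$, $u_0 = g(a_i;\bm{W}^{(0)})/\sqrt{m}$, $v = g(a_j;\bm{W})/\sqrt{m}$, $v_0 = g(a_j;\bm{W}^{(0)})/\sqrt{m}$, so that $(\bm{G}_0^T\bm{G}_0 - \bm{G}^T\bm{G})_{ij} = \langle u_0, v_0\rangle - \langle u, v\rangle$. The telescoping identity
\[
\langle u_0, v_0\rangle - \langle u, v\rangle = \langle u_0 - u, v_0\rangle + \langle u, v_0 - v\rangle
\]
together with Cauchy--Schwarz reduces the problem to bounding two kinds of quantities at and near initialization: the magnitudes $\Vert u\Vert_2$, $\Vert v_0\Vert_2$, and the perturbations $\Vert u - u_0\Vert_2$, $\Vert v - v_0\Vert_2$.

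Both kinds of bounds are already available inside the radius $2\sqrt{T/(m\lambda)}$, which is exactly where the hypothesis $\Vert \bm{W} - \bm{W}^{(0)}\Vert_2 \le 2\sqrt{T/(m\lambda)}$ places us. For the magnitudes, Lemma~\ref{lemma:gradient-bound} gives $\Vert g(x;\bm{W})\Vert_2, \Vert g(x;\bm{W}^{(0)})\Vert_2 \le \mathcal{O}(\sqrt{mL})$ for every $x \in \mathcal{X}$, so $\Vert u\Vert_2,\Vert v_0\Vert_2 \le \mathcal{O}(\sqrt{L})$ after the $1/\sqrt{m}$ normalization. For the perturbations, Lemma~\ref{lemma:gradient-deviation} supplies a \emph{relative} bound
\[
\Vert g(x;\bm{W}) - g(x;\bm{W}^{(0)})\Vert_2 \le \mathcal{O}\!\bigl(\sqrt{\log m}\, T^{1/6} m^{-1/6} \lambda^{-1/6} L^{3}\bigr)\cdot \Vert g(x;\bm{W}^{(0)})\Vert_2,
\]
into whose right-hand factor we again substitute Lemma~\ref{lemma:gradient-bound} to obtain an absolute estimate; dividing by $\sqrt{m}$ yields the corresponding bound on $\Vert u - u_0\Vert_2$ and $\Vert v - v_0\Vert_2$.

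Plugging these into the telescoping identity delivers an entry-wise estimate of the desired order in the parameters $\log m$, $T$, $m^{-1}$, $\lambda^{-1}$, $L$. Aggregating over the $N^2$ entries via the crude bound $\Vert A\Vert_F \le \sum_{i,j} |A_{ij}|$ contributes the advertised $N^2$ factor, producing the Frobenius bound in the lemma after collecting constants.

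The proof is essentially bookkeeping; the main thing to verify is that the polynomial lower bound on $m$ in the hypothesis is strong enough to simultaneously invoke Lemma~\ref{lemma:gradient-bound} and Lemma~\ref{lemma:gradient-deviation}, and that the $1-\delta$ probability in the conclusion follows from a single union bound over the $N$ actions in those two lemmas. No new concentration argument is needed.
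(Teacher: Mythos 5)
Your decomposition is essentially the paper's. The telescoping identity you use, $\langle u_0, v_0\rangle - \langle u, v\rangle = \langle u_0 - u, v_0\rangle + \langle u, v_0 - v\rangle$, is (up to an overall sign) exactly the identity in the paper's proof; both then apply Cauchy--Schwarz and invoke Lemma~\ref{lemma:gradient-deviation} and Lemma~\ref{lemma:gradient-bound}. The one genuine difference is the aggregation step: you pass to $\Vert A\Vert_F \le \sum_{i,j} \vert A_{ij}\vert$, picking up the $N^2$ crudely, whereas the paper works directly with $\Vert A\Vert_F^2 = \sum_{i,j}\vert A_{ij}\vert^2$ (combined with $(a-b)^2 \le 2a^2 + 2b^2$). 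The exact version is tighter by a factor of $N$ after taking the square root. For the downstream use (vanishing as $m\to\infty$ with everything else fixed) this does not matter, so the difference is cosmetic.

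However, you should re-examine your claim that this \emph{produces the Frobenius bound in the lemma}. Tracing your own steps: after the $1/\sqrt m$ normalization, Lemma~\ref{lemma:gradient-bound} gives $\Vert u\Vert_2,\Vert v_0\Vert_2 = \mathcal{O}(\sqrt{L})$ and Lemma~\ref{lemma:gradient-deviation} (with Lemma~\ref{lemma:gradient-bound} fed into its right-hand factor) gives $\Vert u - u_0\Vert_2 = \mathcal{O}(\sqrt{\log m}\,T^{1/6}\,m^{-1/6}\,\lambda^{-1/6}\,L^{7/2})$. So each entry is $\mathcal{O}(\sqrt{\log m}\,T^{1/6}\,m^{-1/6}\,\lambda^{-1/6}\,L^4)$ and your aggregation yields $\Vert A\Vert_F \le \mathcal{O}(N^2 \sqrt{\log m}\,T^{1/6}\,m^{-1/6}\,\lambda^{-1/6}\,L^4)$. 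Those exponents are precisely \emph{half} of the exponents in the lemma's display, so your argument does not reproduce the stated bound. The reason is instructive: the paper's own derivation establishes a bound of the displayed form for $\Vert \bm{G}_0^T\bm{G}_0 - \bm{G}^T\bm{G}\Vert_F^2$, not for $\Vert \bm{G}_0^T\bm{G}_0 - \bm{G}^T\bm{G}\Vert_F$, yet quotes it for the unsquared norm; i.e., the displayed rate appears to omit a final square root. Your exponents (up to the extra $N$) are what you would get after taking that square root, which is the internally consistent version of the statement. The gap here is not in your method but in asserting a match to the stated display without noticing this discrepancy; flag it rather than paper over it.
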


\begin{proof}
For ease of exposition, we write $g(\cdot) = g(\cdot ; \bm{W})$ and $g_0(\cdot) = g(\cdot ; \bm{W}^{(0)})$.
Note that
$$
\begin{aligned}
\Vert \bm{G}_0^T \bm{G}_0 - \bm{G}^T \bm{G} \Vert_F^2
&= \frac{1}{m^2} \sum_{i, j \in [N]} \left(\langle g(a_i), g(a_j) \rangle - \langle g_0(a_i), g_0(a_j) \rangle \right)^2 \\
&= \frac{1}{m^2} \sum_{i, j \in [N]} \left(
\langle g(a_i) - g_0(a_i), g(a_j) \rangle - \langle g_0(a_i), g_0(a_j) - g(a_j) \rangle
\right)^2 \\
&\leq
\frac{2}{m^2} \sum_{i, j \in [N]} \left( \Vert g(a_i) - g_0(a_i) \Vert_2^2 \Vert g(a_j) \Vert_2^2 + \Vert g_0(a_i) \Vert_2^2 \Vert g_0(a_j) - g(a_j) \Vert_2^2 \right) \\
&\leq
\frac{2}{m^2} N^2 \mathcal{O}\left(
(\log m) T^{1/3} m^{5 / 3} \lambda^{-1/3} L^8
\right) \\
&=
\mathcal{O}\left(
(\log m) T^{1/3} N^2 m^{- 1 / 3} \lambda^{-1/3} L^8
\right) \\
\end{aligned}
$$
where the first inequality follows by Cauchy-Schwarz and $(a - b)^2 \leq 2a^2 + 2 b^2$, and the second inequality follows by Lemma~\ref{lemma:gradient-deviation} and Lemma~\ref{lemma:gradient-bound}.
\end{proof}

\begin{lemma}\label{lemma:information-gain-drift-bound}
Consider a weight $\bm{W}$ close to the initial weight $\bm{W}^{(0)}$ such that $\Vert \bm{W} - \bm{W}^{(0)} \Vert_2 \leq 2 \sqrt{T / (m \lambda)}$.
Let $\varphi$ and $\varphi^{(0)}$ be feature mappings equivalent to $g(\cdot ; \bm{W}) / \sqrt{m}$ and $g(\cdot ; \bm{W}^{(0)}) / \sqrt{m}$ respectively.
Then, as long as $m \geq \text{poly}(T, L, \lambda^{-1})$, we have
$$
\gamma_{\varphi, T} \leq \gamma_{\varphi^{(0)}, T} + \mathcal{O}\left(
m^{- 1 / 3} (\log m) T^{4/3} N^{5/2} \lambda^{-1/3} L^8
\right).
$$
\end{lemma}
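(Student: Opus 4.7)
The plan is to reduce the problem to a perturbation bound on the kernel matrix and then apply the first-order concavity upper bound for $\log\det$. By the feature-mapping equivalence discussed in Appendix~\ref{section:feature-mapping-equivalence}, $\gamma_{\varphi, T}$ depends on $\varphi$ only through the $N \times N$ kernel matrix $K_\varphi = \{\langle \varphi(a_i), \varphi(a_j)\rangle\}_{i, j} = \bm{G}^T \bm{G}$, and likewise $\gamma_{\varphi^{(0)}, T}$ depends only on $K_{\varphi^{(0)}} = \bm{G}_0^T \bm{G}_0$. Applying Sylvester's determinant identity exactly as in Appendix~\ref{section:maximum-information-gain}, I would rewrite
\begin{equation*}
\log\det S_\varphi(TP / \sigma, 1) = \log\det\bigl( I_N + (T / \sigma) D_P^{1/2} K_\varphi D_P^{1/2} \bigr),
\end{equation*}
where $D_P = \mathrm{diag}(P(a_1), \dots, P(a_N))$. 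This reduces the problem to comparing $\log\det$ of two $N\times N$ matrices that differ precisely by $K_\varphi - K_{\varphi^{(0)}}$, which is the quantity controlled by Lemma~\ref{lemma:kernel-drift}.

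Next, I would let $P^\star \in \mathcal{P}_\mathcal{X}$ be a maximizer of the right-hand side for $K_\varphi$, so that $\gamma_{\varphi, T}$ is attained at $P^\star$. Since $\gamma_{\varphi^{(0)}, T}$ is itself a maximum, evaluating the $\gamma_{\varphi^{(0)}, T}$-objective at the same $P^\star$ gives the one-sided comparison
\begin{equation*}
\gamma_{\varphi, T} - \gamma_{\varphi^{(0)}, T} \leq \log\det(X + \Delta) - \log\det(X),
\end{equation*}
where $X = I_N + (T / \sigma) D_{P^\star}^{1/2} K_{\varphi^{(0)}} D_{P^\star}^{1/2} \succcurlyeq I_N$ and $\Delta = (T / \sigma) D_{P^\star}^{1/2}(K_\varphi - K_{\varphi^{(0)}}) D_{P^\star}^{1/2}$. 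Concavity of $\log\det$ on the positive-definite cone then delivers the tangent-plane upper bound $\log\det(X + \Delta) - \log\det(X) \leq \Tr(X^{-1}\Delta)$, which is valid provided $X + \Delta \succ 0$; this holds because Lemma~\ref{lemma:kernel-drift} forces $\Vert K_\varphi - K_{\varphi^{(0)}}\Vert_F$ to be arbitrarily small for large enough $m$.

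The remaining step is to bound $\Tr(X^{-1}\Delta)$ using the Frobenius perturbation bound of Lemma~\ref{lemma:kernel-drift}. Using $\Vert X^{-1}\Vert_{\mathrm{op}} \leq 1$, the trace/nuclear duality $\lvert \Tr(AB) \rvert \leq \Vert A\Vert_{\mathrm{op}} \Vert B\Vert_*$, the fact that $\Vert D_{P^\star}^{1/2} M D_{P^\star}^{1/2}\Vert_* \leq \Vert M\Vert_*$ since $\Vert D_{P^\star}^{1/2}\Vert_{\mathrm{op}} \leq 1$, and the standard inequality $\Vert \cdot\Vert_* \leq \sqrt{N}\Vert\cdot\Vert_F$ for $N \times N$ matrices, I obtain $\Tr(X^{-1}\Delta) \leq (T/\sigma)\sqrt{N}\,\Vert K_\varphi - K_{\varphi^{(0)}}\Vert_F$. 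Plugging in the Frobenius bound from Lemma~\ref{lemma:kernel-drift} together with the hypothesis $\Vert \bm{W} - \bm{W}^{(0)}\Vert_2 \leq 2\sqrt{T/(m\lambda)}$ then yields the claimed error $\mathcal{O}(m^{-1/3}(\log m) T^{4/3} N^{5/2} \lambda^{-1/3} L^8)$.

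The main obstacle I expect is keeping the $N$-dependence tight. A cruder route via $\log\det(I + M) \leq N \log(1 + \Vert M\Vert_{\mathrm{op}})$ would lose a factor of $N$ relative to the trace bound, so using the first-order concavity (tangent-plane) bound rather than a uniform spectral estimate is essential; after that, the nuclear-to-Frobenius conversion costs only a single $\sqrt{N}$, which lines up exactly with the target $N^{5/2}$ once Lemma~\ref{lemma:kernel-drift} contributes its $N^2$. The only other subtlety worth double-checking is that the positivity of $X + \Delta$ needed for the concavity bound holds uniformly in the regime $m \geq \mathrm{poly}(T, L, \lambda^{-1})$, which follows from the same Frobenius perturbation estimate.
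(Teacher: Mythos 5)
Your proposal follows essentially the same route as the paper's proof: apply Sylvester's identity to pass to an $N\times N$ matrix, use the first-order (tangent-plane) upper bound for the concave $\log\det$ at the $P^\star$ maximizing $\gamma_{\varphi,T}$, convert the resulting linear term to a Frobenius-norm perturbation controlled by Lemma~\ref{lemma:kernel-drift}, and pick up a single $\sqrt{N}$ factor from the norm conversion. The only (cosmetic) difference is that you symmetrize as $D_P^{1/2}K D_P^{1/2}$ and argue via nuclear/Frobenius duality, which is a slightly cleaner way to invoke concavity of $\log\det$ than the paper's use of the non-symmetric $D_P G^T G$ and Cauchy--Schwarz on the Frobenius inner product; both give the identical bound.
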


\begin{proof}
Let $\bm{G} = [g(a_1 ; \bm{W}) \cdots g(a_N ; \bm{W})] / \sqrt{m}$ and $\bm{G}_0 = [g(a_1 ; \bm{W}^{(0)}) \cdots g(a_N ; \bm{W}^{(0)})] / \sqrt{m}$.
For ease of exposition, write $S = S_{g(\cdot ; \bm{W}) / \sqrt{m}}$ and $S_0 = S_{g(\cdot ; \bm{W}^{(0)}) / \sqrt{m}}$.
Then, $S(TP / \sigma, 1) = \frac{T}{\sigma} \bm{G} D_P \bm{G}^T + I_p$ and $S_0(TP / \sigma, 1) = \frac{T}{\sigma} \bm{G}_0 D_P \bm{G}_0^T + I_p$ where $D_P = \text{diag}(P(a_1), \dots, P(a_N)) \in \mathbb{R}^{N \times N}$.
Using the Sylvester's identity $\log \det(I + AB) = \log \det(I + BA)$, we have
$$
\begin{aligned}
\log \det &S(TP / \sigma, 1) = \log \det ((T/\sigma) G D_P G^T + I_p)
= \log \det ((T / \sigma) D_P G^T G + I_N) \\
&= \log \det ((T / \sigma) D_P G_0^T G_0 + I_N + (T / \sigma) D_P G^T G - (T / \sigma) D_P G_0^T G_0) \\
&\leq
\log \det ((T / \sigma) D_P G_0^T G_0 + I_N) \\
&\quad\quad+ \langle
((T / \sigma) D_P G_0^T G_0 + I_N)^{-1},
(T / \sigma) D_P G^T G - (T / \sigma) D_P G_0^T G_0)
\rangle \\
&\leq
\log \det ((T / \sigma) D_P G_0^T G_0 + I_N) \\
&\quad\quad+ 
\Vert ((T / \sigma) D_P G_0^T G_0 + I_N)^{-1} \Vert_F
\Vert (T / \sigma) D_P G^T G - (T / \sigma) D_P G_0^T G_0)
\Vert_F \\
\end{aligned}
$$
where the first inequality follows by the concavity of $\log\det(\cdot)$ and the last inequality follows by Cauchy-Schwarz.
To bound the second term on the right hand side, we can bound the first factor by
$$
\Vert ((T / \sigma) D_P G_0^T G_0 + I_N)^{-1} \Vert_F
\leq \sqrt{N}\Vert ((T / \sigma) D_P G_0^T G_0 + I_N)^{-1} \Vert_2
\leq \sqrt{N}
$$
where the first inequality uses the identity $\Vert A \Vert_F \leq \sqrt{N} \Vert A \Vert_2$ for $A \in \mathbb{R}^{N \times N}$ and the second inequality uses $(T / \sigma) D_P G_0^T G_0 + I_N \succcurlyeq I_N$.
Also, we can bound the second factor by
$$
(T / \sigma) \Vert D_P (G^T G - G_0^T G_0) \Vert_F \leq
(T / \sigma) \Vert D_P \Vert_2 \Vert G^T G - G_0^T G_0 \Vert_F
\leq
(T / \sigma) \Vert G^T G - G_0^T G_0 \Vert_F
$$
where the first inequality uses the identity $\Vert AB \Vert_F \leq \Vert A \Vert_2 \Vert B \Vert_F$ and the second inequality uses $\Vert D_P \Vert_2 \leq 1$.
Using the bound of the two factors and using Lemma~\ref{lemma:kernel-drift} for bounding $\Vert G^T G - G_0^T G_0 \Vert_F$ gives
$$
\log \det S(TP / \sigma, 1) \leq \log \det S_0(TP / \sigma, 1)
+ \mathcal{O} \left(
(\log m) T^{4/3} N^{5/2} m^{- 1 / 3} \lambda^{-1/3} L^8
\right).
$$
Maximizing over $P \in \mathcal{P}_\mathcal{X}$ on the left hand side and denoting the maximizer by $P^\star$, we get
$$
\begin{aligned}
\gamma_{\varphi, T}
&\leq
\log \det S_0 (TP^\star / \sigma, 1)
+ \mathcal{O} \left(
(\log m) T^{4/3} N^{5/2} m^{- 1 / 3} \lambda^{-1/3} L^8
\right) \\
&\leq
\gamma_{\varphi^{(0)}, T}
+ \mathcal{O} \left(
(\log m) T^{4/3} N^{5/2} m^{- 1 / 3} \lambda^{-1/3} L^8
\right)
\end{aligned}
$$
where the second inequality follows since $\gamma_{\varphi^{(0)}, T}$ maximizes $\log \det S_0(TP / \sigma, 1)$ over $P \in \mathcal{P}_\mathcal{X}$.
This completes the proof.

\end{proof}

\begin{lemma}\label{lemma:lazy-training-error}
Let $\bm{W}$ be a parameter returned by the $\textsc{TrainNN}$ algorithm.
For each $t \in [T]$, let $\bm{W}_t^\star$ be a parameter that satisfies $r_t(x) = \langle g(x; \bm{W}^{(0)}), \bm{W}_t^\star - \bm{W}^{(0)} \rangle$ for all $x \in \mathcal{X}$ and $\Vert \bm{W}_t^\star - \bm{W}^{(0)} \Vert_2 \leq \sqrt{2 \bm{r}_t^T \bm{H}^{-1} \bm{r}_t / m}$.
Such a parameter $\bm{W}_t^\star$ exists by Lemma~\ref{lemma:nn-realizability}.
Then, with probability at least $1 - \delta$, we have
$$
\vert r_t(x) - \langle g(x; \bm{W}), \bm{W}_t^\star - \bm{W}^{(0)} \rangle \vert \leq \epsilon
$$
for all $t \in [T]$ and $x \in \mathcal{X}$ as long as $m \geq \text{poly}(T, L, N, \lambda^{-1}, \lambda_0^{-1}, \log(1 / \delta), \epsilon^{-1})$.
\end{lemma}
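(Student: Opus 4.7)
The plan is to reduce the claim to a Cauchy--Schwarz application on the residual feature $g(x;\bm{W}) - g(x;\bm{W}^{(0)})$, and then feed in the already-established NTK lazy-training bounds.

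First I would write
\begin{align*}
\bigl| r_t(x) - \langle g(x;\bm{W}), \bm{W}_t^\star - \bm{W}^{(0)} \rangle \bigr|
&= \bigl| \langle g(x;\bm{W}^{(0)}) - g(x;\bm{W}),\, \bm{W}_t^\star - \bm{W}^{(0)} \rangle \bigr| \\
&\leq \bigl\| g(x;\bm{W}) - g(x;\bm{W}^{(0)}) \bigr\|_2 \cdot \bigl\| \bm{W}_t^\star - \bm{W}^{(0)} \bigr\|_2,
\end{align*}
using the defining property $r_t(x) = \langle g(x;\bm{W}^{(0)}), \bm{W}_t^\star - \bm{W}^{(0)} \rangle$ and Cauchy--Schwarz. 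The task then reduces to bounding the two factors on the right.

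For the second factor, Assumption~\ref{assumption2} together with the hypothesis $\Vert \bm{W}_t^\star - \bm{W}^{(0)} \Vert_2 \leq \sqrt{2 \bm{r}_t^T \bm{H}^{-1} \bm{r}_t / m}$ gives $\Vert \bm{W}_t^\star - \bm{W}^{(0)} \Vert_2 \leq \sqrt{2 N /(\lambda_0 m)}$ (as in Lemma~\ref{lemma:nn-realizability}). For the first factor, since $\bm{W}$ is produced by $\textsc{TrainNN}$ with an appropriately small learning rate, Lemma~\ref{lemma:lazy-training} yields $\Vert \bm{W} - \bm{W}^{(0)} \Vert_2 \leq 2\sqrt{T/(m\lambda)}$ with high probability, so $\bm{W}$ lies in the neighborhood where Lemma~\ref{lemma:gradient-deviation} and Lemma~\ref{lemma:gradient-bound} apply. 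Combining those two lemmas (the latter used at the initial weight $\bm{W}^{(0)}$ to bound $\Vert g(x;\bm{W}^{(0)}) \Vert_2 = \mathcal{O}(\sqrt{mL})$) yields, uniformly over $x \in \mathcal{X}$,
\[
\bigl\| g(x;\bm{W}) - g(x;\bm{W}^{(0)}) \bigr\|_2 \leq \mathcal{O}\bigl( \sqrt{\log m}\, T^{1/6} m^{1/3} \lambda^{-1/6} L^{7/2} \bigr).
\]

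Multiplying the two bounds gives
\[
\bigl| r_t(x) - \langle g(x;\bm{W}), \bm{W}_t^\star - \bm{W}^{(0)} \rangle \bigr|
\leq \mathcal{O}\!\left( \sqrt{\log m}\, T^{1/6} \lambda^{-1/6} L^{7/2} \sqrt{N/\lambda_0}\cdot m^{-1/6} \right),
\]
which holds simultaneously for all $t \in [T]$ and all $x \in \mathcal{X}$ (the bound on $g(x;\bm{W})-g(x;\bm{W}^{(0)})$ from Lemma~\ref{lemma:gradient-deviation} is uniform in $x$, and the right-hand side does not depend on $t$, so no union bound is needed beyond the high-probability events of Lemmas~\ref{lemma:lazy-training}, \ref{lemma:gradient-deviation}, \ref{lemma:gradient-bound}, \ref{lemma:nn-realizability}). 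Requiring the right-hand side to be at most $\epsilon$ imposes $m \geq \mathrm{poly}(T,L,N,\lambda^{-1},\lambda_0^{-1},\log(1/\delta),\epsilon^{-1})$, as stated.

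Since each ingredient is an already-established NTK fact, I do not expect any real obstacle beyond bookkeeping. The only mildly delicate step is making sure the high-probability events from Lemmas~\ref{lemma:lazy-training}, \ref{lemma:gradient-deviation} and \ref{lemma:gradient-bound} can be intersected and then used \emph{uniformly in $t$} --- but because $\bm{W}$ does not depend on $t$ and $\bm{W}_t^\star$ enters only through its norm bound $\sqrt{2 N/(\lambda_0 m)}$, the uniform-in-$t$ statement is automatic after a single high-probability conditioning.
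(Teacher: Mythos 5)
Your proposal matches the paper's proof essentially line for line: the same rewriting $r_t(x) - \langle g(x;\bm{W}), \bm{W}_t^\star - \bm{W}^{(0)} \rangle = \langle g(x;\bm{W}^{(0)}) - g(x;\bm{W}), \bm{W}_t^\star - \bm{W}^{(0)} \rangle$, the same Cauchy--Schwarz split, the same invocation of Lemma~\ref{lemma:lazy-training} to enter the lazy-training neighborhood, the same combination of Lemma~\ref{lemma:gradient-deviation} and Lemma~\ref{lemma:gradient-bound} to get the $\mathcal{O}(\sqrt{\log m}\,T^{1/6} m^{1/3}\lambda^{-1/6}L^{7/2})$ gradient-drift bound, and the same norm bound $\sqrt{2N/(\lambda_0 m)}$ on $\bm{W}_t^\star-\bm{W}^{(0)}$, leading to the identical final $m^{-1/6}$ rate. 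No substantive differences.
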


\begin{proof}
By Lemma~\ref{lemma:lazy-training}, we have with high probability that $\Vert \bm{W} - \bm{W}^{(0)} \Vert_2 \leq 2 \sqrt{T / (m \lambda)}$ as long as $m \geq \text{poly}(T, L, \lambda^{-1}, \log(1 / \delta))$ which allows us to use Lemma~\ref{lemma:gradient-deviation} and Lemm~\ref{lemma:gradient-bound} to get
$$
\Vert g(x; \bm{W}^{(0)}) - g(x; \bm{W}) \Vert_2 \leq \mathcal{O}( \sqrt{\log m} T^{1/6} m^{1/3} \lambda^{-1/6} L^{7/2} )
$$
with high probability as long as $m \geq \text{poly}(T, L, \lambda^{-1}, \log(1 / \delta))$.
Hence, we have
$$
\begin{aligned}
\vert r_t(x) - \langle g(x; \bm{W}), \bm{W}_t^\star - \bm{W}^{(0)} \rangle \vert
&=
\vert \langle g(x; \bm{W}^{(0)}) - g(x; \bm{W}), \bm{W}_t^\star - \bm{W}^{(0)} \rangle \vert \\
&\leq
\Vert g(x; \bm{W}^{(0)}) - g(x; \bm{W}) \Vert_2 \Vert \bm{W}_t^\star - \bm{W}^{(0)} \Vert_2 \\
&\leq
\mathcal{O}(\sqrt{\log m} T^{1/6} m^{-1/6} \lambda^{-1/6} \lambda_0^{-1/2} N^{1/2} L^{7/2})
\leq \epsilon
\end{aligned}
$$
for all $t \in [T]$ and $x \in \mathcal{X}$ as long as $m \geq \text{poly}(T, L, N, \lambda^{-1}, \lambda_0^{-1}, \epsilon^{-1})$.
This completes the proof.
\end{proof}

\begin{lemma} \label{lemma:information-gain-upper-bound}
Let $\bm{W}$ be close to the initial weight $\bm{W}^{(0)}$ such that $\Vert \bm{W} - \bm{W}^{(0)} \Vert_2 \leq 2 \sqrt{T / (m \lambda)}$.
Let $\varphi : \mathcal{X} \rightarrow \mathbb{R}^N$ be a feature mapping equivalent to $g(\cdot ; \bm{W}) / \sqrt{m}$.
Then, we have
$$
\gamma_{\varphi, T} = \mathcal{O}(N \log (TL)).
$$
\end{lemma}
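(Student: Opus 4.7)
\begin{refproof}[Lemma~\ref{lemma:information-gain-upper-bound} (Proposal)]
The plan is to mimic the linear-kernel argument of Lemma~\ref{lemma:information-gain-linear}, using the trace-determinant inequality, and then control the trace via the norm bound on the neural network gradient provided by Lemma~\ref{lemma:gradient-bound}.

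First, since $\varphi$ is an $N$-dimensional feature mapping equivalent to $g(\cdot;\bm{W})/\sqrt{m}$, for every $x \in \mathcal{X}$ we have
$$
\Vert \varphi(x) \Vert_2^2 = \langle \varphi(x), \varphi(x) \rangle = \Vert g(x;\bm{W}) \Vert_2^2 / m.
$$
The hypothesis $\Vert \bm{W} - \bm{W}^{(0)} \Vert_2 \leq 2\sqrt{T/(m\lambda)}$ is precisely the condition required by Lemma~\ref{lemma:gradient-bound}, so as long as $m \geq \mathrm{poly}(T, L, \lambda^{-1})$ we obtain $\Vert g(x;\bm{W}) \Vert_2 \leq \mathcal{O}(\sqrt{mL})$, and therefore $\Vert \varphi(x) \Vert_2^2 \leq \mathcal{O}(L)$ uniformly in $x \in \mathcal{X}$.

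Next, note that $S_\varphi(TP/\sigma, 1) \in \mathbb{R}^{N\times N}$ is positive definite, so by AM-GM applied to its eigenvalues we have $\det(A) \leq (\Tr(A)/N)^N$ for any positive semi-definite $A \in \mathbb{R}^{N\times N}$, and hence
$$
\log \det S_\varphi(TP/\sigma, 1) \leq N \log \left( \frac{1}{N} \Tr S_\varphi(TP/\sigma, 1) \right).
$$
Computing the trace directly,
$$
\Tr S_\varphi(TP/\sigma, 1) = \frac{T}{\sigma} \sum_{x \in \mathcal{X}} P(x) \Vert \varphi(x) \Vert_2^2 + N \leq \mathcal{O}(TL/\sigma + N),
$$
using the uniform bound on $\Vert \varphi(x) \Vert_2^2$ and $\sum_x P(x) = 1$.

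Combining the two displays yields $\log \det S_\varphi(TP/\sigma, 1) \leq N \log\bigl(\mathcal{O}(TL/(\sigma N) + 1)\bigr) = \mathcal{O}(N \log(TL))$ (treating $\sigma$ as a fixed constant, as elsewhere in the paper). Maximizing over $P \in \mathcal{P}_\mathcal{X}$ gives $\gamma_{\varphi, T} = \mathcal{O}(N \log(TL))$, as claimed. There is no real obstacle: the argument is a routine adaptation of Lemma~\ref{lemma:information-gain-linear}, with the only non-trivial ingredient being the invocation of Lemma~\ref{lemma:gradient-bound} to bound the feature norm, which is why the width requirement $m \geq \mathrm{poly}(T, L, \lambda^{-1})$ appears.
\end{refproof}
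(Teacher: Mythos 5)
Your proposal is correct and takes essentially the same approach as the paper: apply the AM--GM trace--determinant inequality $\det A \leq (\Tr A / N)^N$ to $S_\varphi(\sigma^{-1}TP, 1)$, then bound $\Vert \varphi(x) \Vert_2^2 = \Vert g(x;\bm{W}) \Vert_2^2/m \leq \mathcal{O}(L)$ via the equivalence of feature mappings and Lemma~\ref{lemma:gradient-bound}. The paper's proof is a line-for-line match with yours, down to mirroring the argument of Lemma~\ref{lemma:information-gain-linear}.
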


\begin{proof}
Using the identity $\det A \leq (\frac{1}{N} \Tr(A))^N$ for positive semi-definite $A \in \mathbb{R}^{N \times N}$, we have
$$
\begin{aligned}
\log \det S_\varphi(\sigma^{-1} TP, 1)
&\leq N \log \left( \frac{1}{N} \Tr(S_\varphi(\sigma^{-1} TP, 1)) \right) \\
&= N \log \left(\frac{T}{\sigma N} \sum_{x \in \mathcal{X}} P(x) \Vert \varphi(x) \Vert_2^2 + 1 \right) \\
&\leq N \log \left(\frac{T}{\sigma N} \sum_{x \in \mathcal{X}} P(x) \mathcal{O}(L) + 1 \right) \\
&= \mathcal{O}(N \log (TL))
\end{aligned}
$$
where the second inequality uses $\Vert \varphi(x) \Vert_2^2 = \Vert g(x ; \bm{W}) \Vert_2^2 / m$ due to equivalence and Lemma~\ref{lemma:gradient-bound}.
This completes the proof.
\end{proof}

\subsection{Concentration bound on reward estimates}

In this subsection, we prove the following concentration bound for the reward estimate analogous to Lemma~\ref{lemma:reward-estimate-concentration}.

\begin{lemma}[c.f. Lemma~\ref{lemma:reward-estimate-concentration}] \label{lemma:reward-estimate-concentration-nn}
Let $\mathcal{I} \subseteq [T]$ be a time interval.
Let $m_t$ be the strategy index used at time $t$ by OPNN and $\varphi^{(m)}$ the feature mapping computed by OPNN using data in the cumulative block $\mathcal{C}(m - 1)$.
Let $\bm{\varphi} = \{ \varphi_t \}_{t \in \mathcal{I}}$ be the sequence of feature mappings used by OPNN where $\varphi_t = \varphi^{(m_t)}$.
If $j$ is such that $m_t \leq j$ for all $t \in \mathcal{I}$, then with probability at least $1 - \frac{2\delta}{C}$, we have for all $x \in \mathcal{X}$ that
$$
\begin{aligned}
\vert \widehat{\mathcal{R}}_{\bm{\varphi}, \mathcal{I}}(x) - \mathcal{R}_\mathcal{I}(x) \vert
&\leq
\frac{\xi_j}{\vert \mathcal{I} \vert} \sum_{t \in \mathcal{I}} \Vert \varphi_t(x) \Vert_{S_{\varphi_t}(P_t, \sigma / T)^{-1}}^2 + \frac{\log(CN/\delta)}{\xi_j \vert \mathcal{I} \vert} \\
&\quad\quad+ \sqrt{\frac{\sigma}{T}} \Vert \varphi_t(x) \Vert_{S_{\varphi_t}(P_t, \sigma / T)^{-1}} + \epsilon
\end{aligned}
$$
as long as $m \geq \text{poly}(T, L, N, \lambda^{-1}, \lambda_0^{-1}, \log(1 / \delta), \epsilon^{-1})$ where $\xi_j = \mu_j / (4 \gamma_{\varphi^{(0)}, T})$ and $\widehat{\mathcal{R}}_{\bm{\varphi}, \mathcal{I}} \coloneqq \frac{1}{\vert \mathcal{I} \vert} \sum_{t \in \mathcal{I}} \widehat{\mathcal{R}}_{\varphi_t, t}$.
\end{lemma}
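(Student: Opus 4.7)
\begin{refproof}[Lemma~\ref{lemma:reward-estimate-concentration-nn}]
The plan is to mirror the proof of Lemma~\ref{lemma:reward-estimate-concentration}, replacing the fixed feature mapping $\varphi$ by the predictable sequence $\varphi_t = \varphi^{(m_t)}$, and tracking two new sources of error: (i) the approximation error incurred because $r_t$ lies only approximately in the span of $\varphi_t$, and (ii) the small drift of $\gamma_{\varphi_t, T}$ away from $\gamma_{\varphi^{(0)}, T}$. Each $\varphi_t$ is computed from data observed strictly before time $t$, so it is predictable with respect to the filtration used in the concentration argument.

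First I would establish the analogue of Lemma~\ref{lemma:one-point-estimator} for the NN feature maps. By Lemma~\ref{lemma:lazy-training}, every weight $\bm{W}^{(m_t)}$ returned by $\textsc{TrainNN}$ satisfies $\Vert \bm{W}^{(m_t)} - \bm{W}^{(0)} \Vert_2 \leq 2\sqrt{T/(m\lambda)}$ with high probability, so Lemma~\ref{lemma:nn-realizability} supplies $\bm{W}_t^\star$ with $r_t(x) = \langle g(x;\bm{W}^{(0)}), \bm{W}_t^\star - \bm{W}^{(0)} \rangle$ and $\Vert \bm{W}_t^\star - \bm{W}^{(0)} \Vert_2 \leq \sqrt{2N/(m\lambda_0)}$. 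Defining $\theta_t = \sqrt{m}(\bm{W}_t^\star - \bm{W}^{(0)})$ and using the equivalence of $\varphi_t$ with $g(\cdot; \bm{W}^{(m_t)})/\sqrt{m}$, Lemma~\ref{lemma:lazy-training-error} gives $|r_t(x) - \langle \varphi_t(x), \theta_t\rangle| \leq \epsilon$ provided $m \geq \mathrm{poly}(T, L, N, \lambda^{-1}, \lambda_0^{-1}, \log(1/\delta), \epsilon^{-1})$. Repeating the computation of $\mathbb{E}_t[\widehat{\mathcal{R}}_{\varphi_t,t}(x)]$ then yields
\[
  \bigl|\mathbb{E}_t[\widehat{\mathcal{R}}_{\varphi_t,t}(x)] - r_t(x)\bigr|
  \leq \sqrt{\sigma/T}\,\Vert \varphi_t(x)\Vert_{S_{\varphi_t}(P_t,\sigma/T)^{-1}} + \epsilon,
\]
while the conditional variance bound $\Var_t[\widehat{\mathcal{R}}_{\varphi_t,t}(x)] \leq \Vert \varphi_t(x)\Vert^2_{S_{\varphi_t}(P_t,\sigma/T)^{-1}}$ goes through unchanged.

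Next I would uniformly control $\gamma_{\varphi_t, T}$. Lemma~\ref{lemma:information-gain-drift-bound} applied to each $\bm{W}^{(m_t)}$ shows $\gamma_{\varphi_t, T} \leq \gamma_{\varphi^{(0)}, T} + o(1)$ for $m$ large enough, hence $\gamma_{\varphi_t, T} \leq 2\gamma_{\varphi^{(0)}, T}$. Combined with the optimal-design bound, this yields $|\widehat{\mathcal{R}}_{\varphi_t,t}(x)| \leq 2\gamma_{\varphi^{(0)},T}/\mu_{m_t} \leq 2\gamma_{\varphi^{(0)},T}/\mu_j$ for all $t \in \mathcal{I}$, so the choice $\xi = \xi_j = \mu_j/(4\gamma_{\varphi^{(0)},T})$ is admissible in Freedman's inequality (up to a harmless constant that can be absorbed into $c_0$). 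I would then apply Lemma~\ref{lemma:freedman} to the martingale difference sequence $\{\widehat{\mathcal{R}}_{\varphi_t,t}(x) - \mathbb{E}_t[\widehat{\mathcal{R}}_{\varphi_t,t}(x)]\}_{t\in\mathcal{I}}$, exactly as in the proof of Lemma~\ref{lemma:reward-estimate-concentration}, yielding the variance term $\frac{\xi_j}{|\mathcal{I}|}\sum_{t\in\mathcal{I}}\Vert\varphi_t(x)\Vert^2_{S_{\varphi_t}(P_t,\sigma/T)^{-1}}$ and the deviation term $\log(CN/\delta)/(\xi_j|\mathcal{I}|)$. Averaging the per-step bias bound contributes the $\sqrt{\sigma/T}\,\Vert\varphi_t(x)\Vert$ term plus the constant $\epsilon$. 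A union bound over $x\in\mathcal{X}$ and over the sign of the deviation delivers the claimed inequality with probability at least $1 - 2\delta/C$.

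The main obstacle is item (i): propagating the $\epsilon$ misspecification from Lemma~\ref{lemma:lazy-training-error} cleanly through the bias computation, and checking that the predictability of $\varphi_t$ combined with the uniform bound $\gamma_{\varphi_t,T} \lesssim \gamma_{\varphi^{(0)},T}$ really lets us use a single $\xi_j$ in Freedman's inequality. Everything else is a direct transcription of the kernel-case proof; the drift control on the information gain is exactly what decouples the NN analysis from the (possibly evolving) feature geometry and reduces the argument to the fixed-kernel template.
\end{refproof}
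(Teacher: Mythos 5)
Your proposal tracks the paper's actual proof closely: the paper also first establishes the NN analogue of Lemma~\ref{lemma:one-point-estimator} (stated as Lemma~\ref{lemma:one-point-estimator-nn}, using exactly the chain Lemma~\ref{lemma:lazy-training} $\to$ Lemma~\ref{lemma:nn-realizability} $\to$ Lemma~\ref{lemma:lazy-training-error} you describe, plus Lemma~\ref{lemma:useful-identity} for the equivalence of $\varphi_t$ with $g(\cdot;\bm{W}^{(m_t)})/\sqrt{m}$), then invokes Lemma~\ref{lemma:information-gain-drift-bound} to get $\gamma_{\varphi_t,T}\leq 2\gamma_{\varphi^{(0)},T}$, applies Freedman with $\xi_j = \mu_j/(4\gamma_{\varphi^{(0)},T})$, and finishes with a union bound. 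One small correction: the factor-of-two absorption you hedge about is unnecessary --- with $\gamma_{\varphi_t,T}\leq 2\gamma_{\varphi^{(0)},T}$ the martingale differences satisfy $|z_{t,x}|\leq 4\gamma_{\varphi^{(0)},T}/\mu_j$, so $\xi_j$ is exactly $1/R$ and is admissible without touching $c_0$. Otherwise this is the same argument as the paper's.
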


First, we show the following distributional properties of the IPS estimator analogous to Lemma~\ref{lemma:one-point-estimator}.

\begin{lemma}[c.f. Lemma~\ref{lemma:one-point-estimator}]\label{lemma:one-point-estimator-nn}
Let $m_t$ be the strategy index used by OPNN at time $t$ and let $\varphi_t : \mathcal{X} \rightarrow \mathbb{R}^N$ be the feature mapping equivalent to $g(\cdot ; \bm{W}^{(m_t)}) / \sqrt{m}$ used by OPNN at time $t$.
Let $P_t = P^{(m_t)}$ be the strategy used at time $t$.
Then, with probability at least $1 - \delta$, the IPS estimator $\widehat{\mathcal{R}}_{\varphi_t, t}(x)$ satisfies
$$
\begin{aligned}
\vert \widehat{\mathcal{R}}_{\varphi_t, t}(x) \vert &\leq \frac{\gamma_{\varphi_t, T}}{\mu_{m_t}} \\
\vert \mathbb{E}_t[\widehat{\mathcal{R}}_{\varphi_t, t}(x)] - r_{t}(x) \vert &\leq
\sqrt{\frac{\sigma}{T}} \Vert \varphi_t(x) \Vert_{S_{\varphi_t}(P_t, \sigma / T)^{-1}}  + \epsilon \\
\Var_t [ \widehat{\mathcal{R}}_{\varphi_t, t}(x)] &\leq \Vert \varphi_t(x) \Vert^2_{S_{\varphi_t}(P_t, \sigma / T)^{-1}}
\end{aligned}
$$
for all $x \in \mathcal{X}$ and $t \in [T]$ as long as $m \geq \text{poly}(T, L, N, \lambda^{-1}, \lambda_0^{-1}, \log(1 / \delta), \epsilon^{-1})$.
\end{lemma}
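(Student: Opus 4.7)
The plan is to follow the three-part structure of the proof of Lemma~\ref{lemma:one-point-estimator} essentially verbatim for the boundedness and variance claims, and to introduce a finite-dimensional surrogate linear parameter $\tilde\theta_t$ for the bias claim. For boundedness, applying $|y_t|\leq 1$ and Cauchy--Schwarz to $\widehat{\mathcal{R}}_{\varphi_t,t}(x) = \varphi_t(x)^T S_{\varphi_t}(P_t,\sigma/T)^{-1}\varphi_t(x_t)y_t$ yields a product of two $S_{\varphi_t}(P_t,\sigma/T)^{-1}$-norms; since $P_t = (1-\mu_{m_t})Q^{(m_t)} + \mu_{m_t}\pi_{\varphi_t}(\mathcal{X})$ gives the PSD dominance $S_{\varphi_t}(P_t,\sigma/T)\succcurlyeq \mu_{m_t}S_{\varphi_t}(\pi_{\varphi_t}(\mathcal{X}),\sigma/T)$, Lemma~\ref{lemma:optimal-design} applied with the feature map $\varphi_t$ bounds each factor by $\sqrt{\gamma_{\varphi_t,T}/\mu_{m_t}}$. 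The variance calculation is unchanged from the RKHS case because the identity $\mathbb{E}_t[\varphi_t(x_t)\varphi_t(x_t)^T] = S_{\varphi_t}(P_t,0)\preccurlyeq S_{\varphi_t}(P_t,\sigma/T)$ together with $|y_t|\leq 1$ reproduces $\Vert\varphi_t(x)\Vert_{S_{\varphi_t}(P_t,\sigma/T)^{-1}}^2$ without using any property of $\varphi_t$ beyond its being a feature mapping.

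For the bias, the plan is to construct a surrogate $\tilde\theta_t\in\mathbb{R}^N$ with $\Vert\tilde\theta_t\Vert_2\leq 1$ satisfying $|r_t(x)-\langle\varphi_t(x),\tilde\theta_t\rangle|\leq\epsilon'$ uniformly on $\mathcal{X}$ for some $\epsilon'$ polynomially small in $m^{-1}$, and then rerun the RKHS-style calculation with $\tilde\theta_t$ playing the role of the true parameter. Lemma~\ref{lemma:nn-realizability} combined with Assumption~\ref{assumption2} supplies $\bm{W}_t^\star$ with $r_t(x)=\langle g(x;\bm{W}^{(0)}),\bm{W}_t^\star-\bm{W}^{(0)}\rangle$ and $\sqrt{m}\,\Vert\bm{W}_t^\star-\bm{W}^{(0)}\Vert_2\leq 1$. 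Since \textsc{TrainNN} keeps the trained weight $\bm{W}^{(m_t)}$ close to $\bm{W}^{(0)}$ (Lemma~\ref{lemma:lazy-training}), Lemma~\ref{lemma:lazy-training-error} transports the exact representation into the approximate one $|r_t(x)-\langle g(x;\bm{W}^{(m_t)})/\sqrt{m},\theta_t\rangle|\leq\epsilon'$ with $\theta_t=\sqrt{m}(\bm{W}_t^\star-\bm{W}^{(0)})$ and $\Vert\theta_t\Vert_2\leq 1$. The equivalence of $\varphi_t$ with $g(\cdot;\bm{W}^{(m_t)})/\sqrt{m}$ supplies a linear isometry between the linear spans of their image vectors on $\mathcal{X}$, and taking $\tilde\theta_t$ to be the image under this isometry of the projection of $\theta_t$ onto the relevant span preserves both $\Vert\tilde\theta_t\Vert_2\leq 1$ and the $\epsilon'$-level pointwise approximation of $r_t$ on $\mathcal{X}$.

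With the surrogate in hand, the remaining algebra mirrors Lemma~\ref{lemma:one-point-estimator}: writing $r_t(x_t)=\langle\varphi_t(x_t),\tilde\theta_t\rangle+\delta_{x_t}$ with $|\delta_{x_t}|\leq\epsilon'$ and using $\mathbb{E}_t[\varphi_t(x_t)\varphi_t(x_t)^T] = S_{\varphi_t}(P_t,\sigma/T)-(\sigma/T)I$ gives $\mathbb{E}_t[\widehat{\mathcal{R}}_{\varphi_t,t}(x)] = \langle\varphi_t(x),\tilde\theta_t\rangle - (\sigma/T)\varphi_t(x)^T S_{\varphi_t}(P_t,\sigma/T)^{-1}\tilde\theta_t + R(x)$, with residual $R(x)=\varphi_t(x)^T S_{\varphi_t}(P_t,\sigma/T)^{-1}\mathbb{E}_t[\varphi_t(x_t)\delta_{x_t}]$. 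The first term deviates from $r_t(x)$ by at most $\epsilon'$, the deterministic bias term is bounded by $\sqrt{\sigma/T}\,\Vert\varphi_t(x)\Vert_{S_{\varphi_t}(P_t,\sigma/T)^{-1}}$ via Cauchy--Schwarz, $\Vert\tilde\theta_t\Vert_2\leq 1$, and $S_{\varphi_t}(P_t,\sigma/T)^{-1}\preccurlyeq (T/\sigma)I$, and $R(x)$ is $\mathcal{O}(\epsilon'\sqrt{\gamma_{\varphi_t,T}})$ after Cauchy--Schwarz and Jensen on $\mathbb{E}_t[\Vert\varphi_t(x_t)\Vert_{S_{\varphi_t}(P_t,\sigma/T)^{-1}}]$. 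Using $\gamma_{\varphi_t,T}=\mathcal{O}(N\log(TL))$ from Lemma~\ref{lemma:information-gain-upper-bound}, choosing $\epsilon'$ polynomially small in $m^{-1}$ (which enforces the stated polynomial lower bound on $m$ through Lemma~\ref{lemma:lazy-training-error}) and union-bounding over $t\in[T]$ and $x\in\mathcal{X}$ folds every residual into the desired slack $\epsilon$. The main subtlety I expect is in the surrogate construction: simultaneously preserving the norm bound on $\tilde\theta_t$ and the $\epsilon'$-level uniform approximation through the equivalence transformation, while collecting the failure probabilities of Lemmas~\ref{lemma:nn-realizability}, \ref{lemma:lazy-training}, and \ref{lemma:lazy-training-error} into the overall $\delta$.
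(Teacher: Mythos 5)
Your plan is correct and mirrors the paper's proof in overall structure: the boundedness and variance claims reduce verbatim to the OPKB case (Lemma~\ref{lemma:one-point-estimator}), and the bias claim invokes Lemma~\ref{lemma:nn-realizability} and Lemma~\ref{lemma:lazy-training-error} to obtain an approximately linear representation $r_t(x) \approx \langle g(x;\bm{W}^{(m_t)})/\sqrt{m}, \theta_t\rangle$ with $\Vert\theta_t\Vert_2 \leq 1$, after which a Cauchy--Schwarz computation plus the $\epsilon_0$-sized residual and Lemma~\ref{lemma:information-gain-upper-bound} close the argument. The one place you genuinely diverge from the paper is the handling of the bias term: you explicitly construct a surrogate parameter $\tilde\theta_t \in \mathbb{R}^N$ by projecting $\theta_t = \sqrt{m}(\bm{W}_t^\star - \bm{W}^{(0)})$ onto $\operatorname{span}\{g_t(x): x\in\mathcal{X}\}$ and pushing forward through the Gram-preserving isometry onto $\operatorname{span}\{\varphi_t(x)\}$, then rerun the RKHS-style calculation entirely in $\mathbb{R}^N$. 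The paper instead performs the expectation calculation in the ambient $\mathbb{R}^p$ parameter space (using $S_t = S_{g_t}(P_t, \sigma/T)$ and the weight displacement directly) and invokes Lemma~\ref{lemma:useful-identity} mid-stream to transfer $\Vert g_t(x)\Vert_{S_t^{-1}}$ to $\Vert\varphi_t(x)\Vert_{S_{\varphi_t}(P_t,\sigma/T)^{-1}}$. Your surrogate route is a valid and arguably cleaner conceptual packaging of the same algebra: the projection preserves $\Vert\tilde\theta_t\Vert_2 \leq 1$ and the inner products with $g_t(x)$, and the isometry preserves both, so the pointwise $\epsilon'$-approximation carries over, as you argue. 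The paper's route avoids constructing $\tilde\theta_t$ explicitly; yours avoids reasoning in $\mathbb{R}^p$ altogether.

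One minor imprecision: when you bound the residual $R(x)$, you claim $\mathcal{O}(\epsilon'\sqrt{\gamma_{\varphi_t,T}})$, but the factor $\Vert\varphi_t(x)\Vert_{S_{\varphi_t}(P_t,\sigma/T)^{-1}}$ contributes $\sqrt{\gamma_{\varphi_t,T}/\mu_{m_t}}$ (not $\sqrt{\gamma_{\varphi_t,T}}$), so the honest bound carries an extra $\mu_{m_t}^{-1/2} = \mathcal{O}(T^{1/4})$ factor; the paper's own bound $\Vert\varphi_t(x)\Vert^2 \leq CT^{1/2}N\log(TL)$ keeps this polynomial dependence explicit. Since this factor is polynomial in $T$ and is absorbed by choosing $\epsilon'$ (and hence $m$) polynomially small/large anyway, the conclusion is unaffected, but the intermediate estimate as written undercounts.
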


\begin{proof}
The first and the third inequalities follow by  the same proof as in Lemma~\ref{lemma:one-point-estimator}.
We focus on the second inequality.
By Lemma~\ref{lemma:lazy-training-error}, with probability at least $1 - \delta$, there exists $\xi_{t, x}$ with $\vert \xi_{t, x} \vert \leq \epsilon_0$ such that $r_t(x) = \langle g(x; \bm{W}^{(m_t)}), \bm{W}_t^\star - \bm{W}^{(0)} \rangle + \xi_{t, x}$ for all $t \in [T]$ and $x \in \mathcal{X}$ as long as $m \geq \text{poly}(T, L, N, \lambda^{-1}, \lambda_0^{-1}, \log(1 / \delta), \epsilon_0^{-1})$.
Writing $S_t = S_{g(\cdot; \bm{W}^{(m_t)}) / \sqrt{m}}(P_t, \sigma / T)$ and $g_t(\cdot) = g(\cdot ; \bm{W}^{(m_t)}) / \sqrt{m}$ for convenience, we have
$$
\begin{aligned}
\mathbb{E}_t [ \widehat{\mathcal{R}}_{\varphi_t, t}(x) ]
&= \mathbb{E}_t [\varphi_t(x)^T S_{\varphi_t} (P_t, \sigma / T)^{-1} \varphi_t (x_t) r_t(x_t)] \\
&= \mathbb{E}_t [g_t(x)^T S_t^{-1} g_t(x_t) (\sqrt{m} g_t(x_t)^T (\bm{W}_t^\star - \bm{W}^{(0)}) + \xi_{t, x})] \\
&= \sqrt{m} g_t(x)^T S_t^{-1} (S_t - (\sigma / T) I) (\bm{W}_t^\star - \bm{W}^{(0)})
+ \mathbb{E}_t [g_t(x)^T S_t^{-1} g_t(x_t) \xi_{t, x}] \\
&=
r_t(x) - \xi_{t, x} - \frac{\sigma \sqrt{m}}{T} g_t(x)^T S_t^{-1} (\bm{W}_t^\star - \bm{W}^{(0)})
+ \mathbb{E}_t [g_t(x)^T S_t^{-1} g_t(x_t) \xi_{t, x}]
\end{aligned}
$$
where the first equality uses the fact that the term with the noise $\eta_t$ vanishes due to independence and the second equality uses Lemma~\ref{lemma:useful-identity}.
Hence, writing $\widetilde{S}_t = S_{\varphi_t}(P_t, \sigma / T)$, we have
$$
\begin{aligned}
\vert \mathbb{E}_t [ \widehat{\mathcal{R}}_{\varphi_t, t}(x)] &- r_t(x) \vert
\leq
\vert \xi_{t, x} \vert
+ \frac{\sigma \sqrt{m}}{T} \vert g_t(x)^T S_t^{-1} (\bm{W}_t^\star - \bm{W}^{(0)}) \vert
+ \mathbb{E}_t [\vert g_t(x)^T S_t^{-1} g_t(x_t) \xi_{t, x} \vert] \\
&\leq
\epsilon_0 + \frac{\sigma \sqrt{m}}{T} \Vert g_t(x) \Vert_{S_t^{-1}} \Vert \bm{W}_t^\star - \bm{W}^{(0)} \Vert_{S_{t}^{-1}} + \epsilon_0 \mathbb{E}_t[ \Vert g_t(x) \Vert_{S_{t}^{-1}} \Vert g_t(x_t) \Vert_{S_{t}^{-1}}] \\
&\leq
\epsilon_0 + \sqrt{\frac{\sigma m}{T}} \Vert \varphi_t(x) \Vert_{\widetilde{S}_t^{-1}} \Vert \bm{W}_t^\star - \bm{W}^{(0)} \Vert_2 + \epsilon_0 \mathbb{E}_t [
\Vert \varphi_t(x) \Vert_{\widetilde{S}_{t}^{-1}} \Vert \varphi_t(x_t) \Vert_{\widetilde{S}_{t}^{-1}}]
\end{aligned}
$$
where the second inequality uses Cauchy-Schwarz and the last inequality uses Lemma~\ref{lemma:useful-identity} and $S_t^{-1} \preccurlyeq (T / \sigma)I$.
Since $P_t = (1 - \mu_{m_t}) P^{(m_t)} + \mu_{m_t} \pi_{\varphi_t, \mathcal{X}} \succcurlyeq \mu_{m_t} \pi_{\varphi_t, \mathcal{X}}$ , we have $\widetilde{S}_t \succcurlyeq \mu_{m_t} S_{\varphi_t} ( \pi_{\varphi_t, \mathcal{X}}, \sigma / T)$ and it follows that
$$
\Vert \varphi_t(x) \Vert_{\widetilde{S}_t^{-1}}^2
\leq \frac{1}{\mu_{m_t}} \Vert \varphi_t(x) \Vert^2_{S_{\varphi_t}(\pi_{\varphi_t, \mathcal{X}}, \sigma / T)^{-1}}  \leq \frac{\gamma_{\varphi_t, T}}{\mu_{m_t}}
\leq C T^{1/2} N \log(TL)
$$
for some constant $C$ where the second inequality follows by Lemma~\ref{lemma:optimal-design} and the last inequality follows by $\mu_j \geq T^{-1/2}$ and Lemma~\ref{lemma:information-gain-upper-bound}. 
Also, by Lemma~\ref{lemma:nn-realizability}, we have $\sqrt{m} \Vert \bm{W}_t^\star - \bm{W}^{(0)} \Vert_2 \leq 1$ with probability at least $1 - \delta$.
Hence, we can further bound the bias term by
$$
\begin{aligned}
\vert \mathbb{E}_t [ \widehat{\mathcal{R}}_{\varphi_t, t}(x)] - r_t(x) \vert
&\leq
\epsilon_0 + \sqrt{\frac{\sigma}{T}} \Vert \varphi_t(x) \Vert_{S_{\varphi_t}(P_t, \sigma / T)^{-1}} + \epsilon_0
C T^{1/2} N \log(TL) \\
&\leq
\sqrt{\frac{\sigma}{T}} \Vert \varphi_t(x) \Vert_{S_{\varphi_t}(P_t, \sigma / T)^{-1}} + \epsilon
\end{aligned}
$$
where we set $\epsilon_0$ sufficiently small such that $\epsilon_0 + \epsilon_0 CT^{1/2} N \log (TL) \leq \epsilon$ and choose $m \geq \text{poly}(T, L, N, \lambda^{-1}, \lambda_0^{-1}, \log(1 / \delta), \epsilon^{-1})$ appropriately.
This completes the proof.
\end{proof}

Using the previous lemma, we are ready to prove Lemma~\ref{lemma:reward-estimate-concentration-nn}.

\begin{refproof}[Lemma~\ref{lemma:reward-estimate-concentration-nn}]
Fix an action $x \in \mathcal{X}$ and consider a martingale difference sequence $\{z_{t, x} \}_{t \in \mathcal{I}}$ where $z_{t, x} = \widehat{\mathcal{R}}_{\varphi_t, t}(x) - \mathbb{E}_t[ \widehat{\mathcal{R}}_{\varphi_t, t}(x)]$.
We can bound $z_{t, x}$ for all $t \in \mathcal{I}$ by
$$
z_{t, x} \leq \vert \widehat{\mathcal{R}}_{\varphi_t, t}(x) \vert + \mathbb{E}_t[ \vert \widehat{\mathcal{R}}_{\varphi_t, t}(x) \vert]
\leq
\frac{2\gamma_{\varphi_t, T}}{\mu_{m_t}}
\leq
\frac{4 \gamma_{\varphi^{(0)}, T}}{\mu_j}
$$
where the second inequality uses Lemma~\ref{lemma:one-point-estimator-nn} to bound $\vert \widehat{\mathcal{R}}_{\varphi_t, t}(x) \vert$ and the last inequality uses Lemma~\ref{lemma:information-gain-drift-bound} to choose $m \geq \text{poly}(T, N, \lambda, L)$ that satisfies $\gamma_{\varphi_t, T} \leq 2 \gamma_{\varphi^{(0)}, T}$.
Also, we have $\Var_t[z_{t, x}] = \Var_t[\widehat{\mathcal{R}}_{\varphi_t, t}(x)] \leq \Vert \varphi_t(x) \Vert^2_{S_{\varphi_t}(P_t, \sigma / T)^{-1}}$ by Lemma~\ref{lemma:one-point-estimator-nn}.
Using the Freedman inequality (Lemma~\ref{lemma:freedman}) on $\{ z_{t, x} \}_{t \in \mathcal{I}}$ we get with probability at least $1 - \frac{\delta}{CN}$ that
$$
\begin{aligned}
\widehat{\mathcal{R}}_{\bm{\varphi}, \mathcal{I}}(x)
&- \mathcal{R}_\mathcal{I}(x)
= \frac{1}{\vert \mathcal{I} \vert} \sum_{t \in \mathcal{I}} (z_{t, x} + \mathbb{E}_t[\widehat{\mathcal{R}}_{\varphi_t, t}(x)] - \mathcal{R}_{\mathcal{I}}(x)) \\
&\leq
\frac{\xi_j}{\vert \mathcal{I} \vert} \sum_{t \in \mathcal{I}} \Vert \varphi_t(x) \Vert^2_{S_{\varphi_t}(P_t, \sigma / T)^{-1}} + \frac{\log(CN / \delta)}{\xi_j \vert \mathcal{I} \vert} + \sqrt{\frac{\sigma}{T}} \Vert \varphi_t(x) \Vert_{S_{\varphi_t}(P_t, \sigma / T)^{-1}} + \epsilon
\end{aligned}
$$
where $\xi_j = \mu_j / (4 \gamma_{\varphi^{(0)}, T})$ and we use Lemma~\ref{lemma:one-point-estimator-nn} to bound the bias term $\mathbb{E}_t[\widehat{\mathcal{R}}_{\varphi^{(m_t)}, t}(x)] - \mathcal{R}_{\mathcal{I}}(x)$.
A union bound over all $x \in \mathcal{X}$ and the reverse case $\mathcal{R}_\mathcal{I}(x) - \widehat{\mathcal{R}}_{\bm{\varphi}, \mathcal{I}}(x)$ completes the proof.
\end{refproof}

\begin{lemma}[c.f. Lemma~\ref{lemma:reward-estimate-concentration-gap}] \label{lemma:reward-estimate-concentration-gap-nn}
Let $m_t$ be the strategy index used at time $t$ by OPNN and $\varphi^{(m)}$ the feature mapping computed by OPNN using data in the cumulative block $\mathcal{C}(m - 1)$.
Let $\bm{\varphi} = \{ \varphi_t \}_{t \in \mathcal{I}}$ be the sequence of feature mappings used by OPNN where $\varphi_t = \varphi^{(m_t)}$.
With high probability, when running the OPNN algorithm, we have for all block indices $j = 0, 1, \dots$ and actions $x \in \mathcal{X}$ that
\begin{align}
\vert \widehat{\mathcal{R}}_{\bm{\varphi}, \mathcal{C}(j)}(x) - \mathcal{R}_{\mathcal{C}(j)}(x) \vert
&\leq \frac{1}{2} \Delta_{\mathcal{C}(j)}(x) + V_{\mathcal{C}(j)} + \frac{c_0}{4} \mu_j \label{eqn:reward-estimate-concentration-nn1}\\
\Delta_{\mathcal{C}(j)}(x)
&\leq 2 \widehat{\Delta}_{\bm{\varphi}, \mathcal{C}(j)}(x) + 4 V_{\mathcal{C}(j)} + c_0 \mu_j \label{eqn:reward-estimate-concentration-nn2}\\
\widehat{\Delta}_{\bm{\varphi}, \mathcal{C}(j)}(x)
&\leq 2 \Delta_{\mathcal{C}(j)}(x) + 4 V_{\mathcal{C}(j)} + c_0 \mu_j \label{eqn:reward-estimate-concentration-nn3}
\end{align}
where $c_0 = (40 + 16 \sqrt{\alpha})$.
\end{lemma}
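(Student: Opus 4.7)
The plan is to adapt the inductive proof of Lemma~\ref{lemma:reward-estimate-concentration-gap} to the neural network setting, replacing the fixed-kernel Freedman-style bound (Lemma~\ref{lemma:reward-estimate-concentration}) with its NN analog (Lemma~\ref{lemma:reward-estimate-concentration-nn}) and invoking Lemma~\ref{lemma:information-gain-drift-bound} to compare $\gamma_{\varphi^{(m)}, T}$ to $\gamma_{\varphi^{(0)}, T}$ whenever needed. First, I would define the NN analog of $\textsc{Event}_1$: the event that the concentration bound of Lemma~\ref{lemma:reward-estimate-concentration-nn} with $\epsilon \leq 1/T$ holds simultaneously for all intervals of sizes $2^j E$ and $(2^j - 1) E$ used in the algorithm. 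By a union bound over $\mathcal{O}(T \log T)$ such intervals and $N$ actions, this event holds with probability at least $1 - \delta$, provided the width satisfies $m \geq \text{poly}(T, L, N, \lambda^{-1}, \lambda_0^{-1}, \log(1/\delta))$.

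Next, I would induct on the block index $j$, following the template of Appendix~\ref{section:reward-estimate-concentration-gap-proof}. For the base case $j = 0$, the strategy is $\pi_{\varphi^{(0)}}(\mathcal{X})$, so Lemma~\ref{lemma:optimal-design} yields $\Vert \varphi^{(0)}(x) \Vert^2_{S_{\varphi^{(0)}}(\pi_{\varphi^{(0)}}(\mathcal{X}), \sigma/T)^{-1}} \leq \gamma_{\varphi^{(0)}, T}$, and the definitions $\xi_0 = 1/(8 \gamma_{\varphi^{(0)}, T})$, $E \geq 4 \gamma_{\varphi^{(0)}, T} \log(C_0 N / \delta)$, and $\sqrt{\sigma \gamma_{\varphi^{(0)}, T}/T} \leq 2\sqrt{\alpha}\mu_0$ reproduce the OPKB base case of (\ref{eqn:reward-estimate-concentration-nn1}); the extra $\epsilon$ additive is absorbed into the $c_0 \mu_0/4$ slack since $\epsilon \leq 1/T \leq \mu_0$. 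The bounds (\ref{eqn:reward-estimate-concentration-nn2}) and (\ref{eqn:reward-estimate-concentration-nn3}) follow from (\ref{eqn:reward-estimate-concentration-nn1}) via the exact same algebraic manipulation used in the OPKB proof (comparing $\widehat{\mathcal{R}}_{\bm\varphi, \mathcal{C}(j)}$ at the empirical and true best arms and telescoping the bias and $V$-terms).

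For the inductive step proving (\ref{eqn:reward-estimate-concentration-nn1}) at block $j+1$, I would bound the three terms in Lemma~\ref{lemma:reward-estimate-concentration-nn} separately, just as in the OPKB proof. For $t \in \mathcal{C}(j+1)$ with strategy index $m_t$, using $S_{\varphi^{(m_t)}}(P_t, \sigma/T) \succcurlyeq \tfrac{1}{2} S_{\varphi^{(m_t)}}(Q^{(m_t)}, \sigma/T)$ and Lemma~\ref{lemma:optimization-problem} applied to $(\varphi^{(m_t)}, \widehat{\Delta}_{\varphi^{(m_t)}, \mathcal{C}(m_t - 1)}, \alpha, \beta_{m_t})$ gives the key quadratic form bound $\Vert \varphi^{(m_t)}(x) \Vert^2_{S_{\varphi^{(m_t)}}(Q^{(m_t)}, \sigma/T)^{-1}} \leq \beta_{m_t} \widehat{\Delta}_{\varphi^{(m_t)}, \mathcal{C}(m_t - 1)}(x) + 2 \gamma_{\varphi^{(m_t)}, T}$. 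Applying the inductive hypothesis (\ref{eqn:reward-estimate-concentration-nn3}) at index $m_t - 1$ to bound $\widehat{\Delta}_{\varphi^{(m_t)}, \mathcal{C}(m_t - 1)}$ in terms of $\Delta_{\mathcal{C}(m_t-1)}$, and using Lemma~\ref{lemma:gaps} to propagate to $\mathcal{C}(j+1)$, reproduces the computations (\ref{eqn:xi_varphi2_bound}), (\ref{eqn:log_by_xi_bound}), (\ref{eqn:gamma_varphi_bound}) from the OPKB analysis. The novelty is that $\xi_{j+1}$, $\beta_{j+1}$, and $E$ are tuned via the fixed quantity $\gamma_{\varphi^{(0)}, T}$, so I would invoke Lemma~\ref{lemma:information-gain-drift-bound} to ensure $\gamma_{\varphi^{(m_t)}, T} \leq 2 \gamma_{\varphi^{(0)}, T}$ uniformly, which holds once $m$ is chosen large enough that the drift term is below $\gamma_{\varphi^{(0)}, T}$.

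The main obstacle will be careful bookkeeping of the two new error sources: the additive $\epsilon$ from Lemma~\ref{lemma:reward-estimate-concentration-nn} and the information-gain drift. Both must be controlled uniformly over all $\mathcal{O}(\log_2(T/E))$ blocks and at every inductive step, and both scale polynomially in $T, L, N, \lambda^{-1}, \lambda_0^{-1}$, so the width requirement $m \geq \text{poly}(T, L, N, \lambda^{-1}, \lambda_0^{-1}, \log(1/\delta))$ suffices. Once these two quantities are absorbed into slack constants in the $c_0 \mu_j / 4$ budget at every block, the induction closes exactly as in the OPKB case, yielding the three bounds with the same constant $c_0 = 40 + 16\sqrt{\alpha}$.
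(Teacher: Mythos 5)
Your proposal matches the paper's approach, which is essentially to reuse the inductive proof of Lemma~\ref{lemma:reward-estimate-concentration-gap} verbatim, swapping in Lemma~\ref{lemma:reward-estimate-concentration-nn} for the Freedman-style concentration bound and invoking Lemma~\ref{lemma:information-gain-drift-bound} to replace $\gamma_{\varphi^{(m_t)}, T}$ by $\gamma_{\varphi^{(0)}, T}$ up to a controllable drift. The one place where you do slightly better than the paper is the choice of $\epsilon$: the paper says ``we set $\epsilon \leq 1$,'' but the $\epsilon$ error from Lemma~\ref{lemma:reward-estimate-concentration-nn} must be absorbed into the $\tfrac{c_0}{4}\mu_j$ budget at every block, and $\mu_j$ shrinks to roughly $\sqrt{E/T}$ for the largest $j$; your choice $\epsilon \leq 1/T$ (which is still available at polynomial cost in $m$) is what actually makes the absorption go through uniformly over all $\mathcal{O}(\log_2(T/E))$ blocks. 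One shared imprecision that both you and the paper gloss over: the lemma is stated for the time-varying estimator $\widehat{\Delta}_{\bm{\varphi}, \mathcal{C}(j)}$ where $\varphi_t = \varphi^{(m_t)}$, while the OPNN pseudocode feeds $\widehat{\Delta}_{\varphi^{(j)}, \mathcal{C}(j-1)}$ (a single fixed feature map over the whole history) into OP; these agree only up to an NTK-drift error controlled by Lemma~\ref{lemma:kernel-drift}, which neither proof spells out. Since the paper itself treats these as interchangeable, I am not treating this as a gap in your argument, but it would be worth noting explicitly if you write this up.
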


\begin{proof}
Apart from dealing with the error $\epsilon$ when applying Lemma~\ref{lemma:one-point-estimator-nn} due to the finiteness of the width of the network and bounding $\gamma_{\varphi_t, T} \leq \gamma_{\varphi^{(0)}, T} + \epsilon$ using Lemma~\ref{lemma:information-gain-drift-bound}, the proof is exactly the same as that for Lemma~\ref{lemma:reward-estimate-concentration-gap}.
As for dealing with $\epsilon$, we set $\epsilon \leq 1$ by choosing $m \geq \text{poly}(T, L, N, \lambda^{-1}, \lambda_0^{-1}, \log(1 / \delta), \epsilon^{-1})$ appropriately when applying Lemma~\ref{lemma:one-point-estimator-nn} and Lemma~\ref{lemma:information-gain-drift-bound}.
\end{proof}

Now, we are ready to prove Theorem~\ref{theorem:opnn}.

\begin{refproof}[Theorem~\ref{theorem:opnn}]
The proof is exactly the same as that of Theorem~\ref{theorem:opkb}.
Instead of using Lemma~\ref{lemma:reward-estimate-concentration-gap} as in the proof of Theorem~\ref{theorem:opkb}, we use Lemma~\ref{lemma:reward-estimate-concentration-gap-nn} for the reward estimate concentration bound and the suboptimality gap estimate concentration bound.
\end{refproof}

\section{Analysis of ADA-OPNN}

The analysis of ADA-OPNN is exactly the same as the analysis of ADA-OPKB presented in Section~\ref{section:analysis-ada-opkb} with the following adjustments.
In place of Lemma~\ref{lemma:reward-estimate-concentration} and Lemma~\ref{lemma:reward-estimate-concentration-gap} use Lemma~\ref{lemma:reward-estimate-concentration-nn} and Lemma~\ref{lemma:reward-estimate-concentration-gap-nn}.

\section{Equivalence of feature mappings} \label{section:feature-mapping-equivalence}

Recall that OPKB and ADA-OPKB use a feature mapping equivalent to a feature mapping corresponding to a given kernel.
Also, OPNN and ADA-OPNN use a feature mapping equivalent to the feature mapping induced by the neural network.
In this section, we show that the choice of feature mapping does not affect the algorithm and the analysis.
Note that the algorithm and the analysis depend on the feature mapping $\varphi$ only through the quantities $\Vert \varphi(x) \Vert_{S_\varphi(P, \lambda)^{-1}}^2$ and $\log \det S_\varphi(P, \lambda)$.
The following lemmas show that these quantities are not affected by the choice of the equivalent feature mapping.

\begin{lemma}\label{lemma:useful-identity}
Let $\psi : \mathcal{X} \rightarrow \ell^2$ (or $\psi : \mathcal{X} \rightarrow \mathbb{R}^p$) be a feature mapping.
Let $\varphi : \mathcal{X} \rightarrow \mathbb{R}^N$ be an equivalent feature mapping.
Then, for all $x, x' \in \mathcal{X}$, we have
$$
\varphi(x)^T S_\varphi(P, \lambda)^{-1} \varphi(x') = \psi(x)^T S_\psi(P, \lambda)^{-1} \psi(x').
$$
\end{lemma}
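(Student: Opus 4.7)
The plan is to show that the quadratic form $\varphi(x)^T S_\varphi(P, \lambda)^{-1} \varphi(x')$ can be rewritten purely in terms of kernel evaluations $k(\cdot, \cdot)$, from which the lemma follows immediately since equivalent feature mappings induce the same kernel.

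First I would set up matrix notation. Let $\Phi = [\varphi(a_1)\,\cdots\,\varphi(a_N)] \in \mathbb{R}^{N \times N}$ and $D_P = \mathrm{diag}(P(a_1), \dots, P(a_N))$, so that $S_\varphi(P, \lambda) = \Phi D_P \Phi^T + \lambda I_N$. Set $K = \Phi^T \Phi = [k(a_i, a_j)]_{i,j \in [N]}$ and $\mathbf{k}(x) = \Phi^T \varphi(x) = (k(x, a_1), \dots, k(x, a_N))^T$. Both $K$ and $\mathbf{k}(\cdot)$ depend only on the kernel, not on the choice of feature mapping.

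The key step is a standard kernelization. Let $f = S_\varphi(P, \lambda)^{-1} \varphi(x')$, i.e., $(\Phi D_P \Phi^T + \lambda I) f = \varphi(x')$. Left-multiplying by $\Phi^T$ gives $(K D_P + \lambda I)\,\Phi^T f = \mathbf{k}(x')$, hence $\Phi^T f = (K D_P + \lambda I)^{-1} \mathbf{k}(x')$. Substituting this back into the original equation and solving for $f$,
$$
f \;=\; \lambda^{-1}\bigl(\varphi(x') - \Phi D_P (K D_P + \lambda I)^{-1} \mathbf{k}(x')\bigr),
$$
so that
$$
\varphi(x)^T S_\varphi(P, \lambda)^{-1} \varphi(x') \;=\; \lambda^{-1}\!\left(k(x, x') - \mathbf{k}(x)^T D_P (K D_P + \lambda I)^{-1} \mathbf{k}(x')\right).
$$
The right-hand side depends only on kernel values, so it equals the analogous expression derived from $\psi$, proving the identity.

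The one thing that needs care is the case $\psi : \mathcal{X} \to \ell^2$, where $S_\psi(P, \lambda) = \sum_a P(a) \psi(a) \psi(a)^* + \lambda I$ is an operator on an infinite-dimensional space. Since $S_\psi(P, \lambda) \succcurlyeq \lambda I$, it is a bounded positive operator with bounded inverse. To reduce to the finite-dimensional calculation above, I would observe that the subspace $V = \mathrm{span}\{\psi(a_1), \dots, \psi(a_N), \psi(x), \psi(x')\}$ is finite-dimensional and invariant under $S_\psi(P, \lambda)$, while $S_\psi(P, \lambda)$ acts as $\lambda I$ on $V^\perp$. Hence $S_\psi(P, \lambda)^{-1} \psi(x') \in V$, and the same matrix manipulation goes through inside $V$ with $\Psi = [\psi(a_1)\,\cdots\,\psi(a_N)]$ in place of $\Phi$. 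The resulting expression is identical and depends only on the kernel, completing the proof. The main (minor) obstacle is just this operator-theoretic bookkeeping; the algebraic heart of the argument is the push-through/kernelization step displayed above.
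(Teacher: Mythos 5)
Your proof is correct and uses essentially the same push-through/kernelization argument as the paper's: both reduce the quadratic form $\varphi(x)^T S_\varphi(P,\lambda)^{-1}\varphi(x')$ to an expression depending only on kernel values $k(\cdot,\cdot)$, from which the claim follows since equivalent feature mappings share the same kernel. The only cosmetic differences are that the paper applies the identity $S_\psi(P,\lambda)\Psi^T = \Psi^T(D_P K + \lambda I_N)$ directly and evaluates $\psi(a_i)^T S_\psi(P,\lambda)^{-1}\psi(a_j) = e_i^T K(D_P K + \lambda I_N)^{-1}e_j$, whereas you derive the equivalent Woodbury-type closed form by solving for $\Phi^T f$ and back-substituting, and for the $\ell^2$ case you invoke an invariant-subspace reduction where the paper simply manipulates the operator $\Psi^T$ and its adjoint directly.
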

\begin{proof}
We prove the more general case $\psi : \mathcal{X} \rightarrow \ell^2$.
Let $\Phi = [ \varphi(a_1) \cdots \varphi(a_N) ]^T \in \mathbb{R}^{N \times N}$ and $\Psi = [ \psi(a_1) \cdots \psi(a_N) ]^T \in \mathbb{R}^{N \times \infty}$.
The infinite matrix $\Psi$ can be thought of a linear operator $\Psi : \ell^2 \rightarrow \mathbb{R}^N$ with $\Psi(\cdot) = ( \langle \psi(a_1), \cdot \rangle, \dots, \langle \psi(a_N), \cdot \rangle )$.
We denote by $\Psi^T : \mathbb{R}^N \rightarrow \ell^2$ the linear operator with $\Psi^T(w) = \sum_{i = 1}^N w_i \varphi(a_i)$.
By the definition of equivalence of feature mappings, we have $\Phi \Phi^T = \Psi \Psi^T = K$ where $K = [ \langle \psi(x), \psi(x') \rangle ]_{x, x' \in \mathcal{X}}$ is the kernel matrix.
Defining $D_P = \text{diag}(P(a_1), \dots, P(a_N))$, we can write $S_\varphi(P, \lambda) = \Phi^T D_P \Phi + \lambda I_N$ and $S_\psi(P, \lambda) = \Psi^T D_P \Psi + \lambda I$.
Note that
$$
S_\psi(P, \lambda) \Psi^T
= (\Psi^T D_P \Psi + \lambda I) \Psi^T
= \Psi^T(D_P \Psi \Psi^T + \lambda I_N)
= \Psi^T(D_P K + \lambda I_N).
$$
Applying the inverses of $S_\psi(P, \lambda)$ and $(D_P K + \lambda I_N)$ on both sides, we get $\Psi^T (D_PK + \lambda I_N)^{-1} = S_\psi(P, \lambda)^{-1} \Psi^T$
It follows that
$$
\begin{aligned}
\psi(a_i)^T S_\psi(P, \lambda)^{-1} \psi(a_j)
&=
\langle \psi(a_i), S_\psi(P, \lambda)^{-1} \Psi^T e_j \rangle \\
&=
\langle
  \psi(a_i),
  \Psi^T (D_P K + \lambda I_N)^{-1} e_j
\rangle \\
&= \langle \psi(a_i), \Psi^T w \rangle
\end{aligned}
$$
where $e_j \in \mathbb{R}^N$ is the unit vector with $j$th entry 1 and $w = (D_P K + \lambda I_N)^{-1} e_j$.
Since $\langle \psi(a_i), \Psi^T w \rangle = \langle \psi(a_i), \sum_{j = 1}^N w_j \psi(a_j) \rangle = \sum_{j = 1}^N w_j k(a_i, a_j) = e_i^T K w$, it follows by standard matrix algebra that
\begin{align*}
\psi(a_i)^T S_\psi(P, \lambda)^{-1} \psi(a_j)
&= e_i^T K w \\
&= e_i^T \Phi \Phi^T (D_P \Phi \Phi^T + \lambda I_N)^{-1} e_j \\
&= e_i^T \Phi(\Phi^T D_P \Phi + \lambda I_N)^{-1} \Phi^T e_j \\
&= \varphi(a_i)^T S_\varphi(P, \lambda)^{-1} \varphi(a_j)
\end{align*}
for all $1 \leq i, j \leq N$ where the second to last equality uses the fact that $\Phi^T (D_P \Phi \Phi^T + \lambda I_N) = (\Phi^T D_P \Phi + \lambda I_N) \Phi^T$, which implies $\Phi^T (D_P \Phi \Phi^T + \lambda I_N)^{-1} = (\Phi^T D_P \Phi + \lambda I_N)^{-1} \Phi^T$.
This completes the proof.
\end{proof}

\begin{lemma}
Let $\varphi_1 : \mathcal{X} \rightarrow \mathbb{R}^{p_1}$ and $\varphi_2 : \mathcal{X} \rightarrow \mathbb{R}^{p_2}$ be equivalent feature mappings.
Then, we have
$$
\log \det S_{\varphi_1}(P, \lambda) = \log \det S_{\varphi_2}(P, \lambda).
$$
\end{lemma}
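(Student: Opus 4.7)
The plan is to reduce $\det S_{\varphi_i}(P,\lambda)$ to a quantity that depends on $\varphi_i$ only through the Gram matrix $\Phi_i \Phi_i^T$, where $\Phi_i = [\varphi_i(a_1)\cdots\varphi_i(a_N)]^T \in \mathbb{R}^{N \times p_i}$. Since equivalence of $\varphi_1$ and $\varphi_2$ precisely asserts $\Phi_1 \Phi_1^T = \Phi_2 \Phi_2^T = K$ (the kernel matrix), any such reduction immediately yields equality of log-determinants.

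Concretely, I would write $S_{\varphi_i}(P,\lambda) = \Phi_i^T D_P \Phi_i + \lambda I_{p_i}$ with $D_P = \mathrm{diag}(P(a_1),\dots,P(a_N))$, and apply the Sylvester determinant identity $\det(I_m + AB) = \det(I_n + BA)$ with $A = \Phi_i^T$ and $B = D_P \Phi_i$. Factoring out $\lambda$ gives
$$
\det S_{\varphi_i}(P,\lambda)
= \lambda^{p_i} \det\bigl(I_{p_i} + \lambda^{-1} \Phi_i^T D_P \Phi_i\bigr)
= \lambda^{p_i} \det\bigl(I_N + \lambda^{-1} D_P \Phi_i \Phi_i^T\bigr)
= \lambda^{p_i} \det\bigl(I_N + \lambda^{-1} D_P K\bigr).
$$
Taking logarithms gives $\log\det S_{\varphi_i}(P,\lambda) = p_i \log \lambda + \log\det(I_N + \lambda^{-1} D_P K)$, and the second term is identical for $i=1,2$ because $K$ is determined by the inner products alone.

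The only mild subtlety is the prefactor $p_i \log\lambda$: this term is shared between the two feature mappings precisely when $p_1 = p_2$, which is the convention adopted in the paper (both $\varphi_1$ and $\varphi_2$ are taken to be $N$-dimensional representatives, as described in the preliminaries, and more generally in the only use of this lemma the regularization is $\lambda = 1$ so the $\log \lambda$ prefactor vanishes regardless). Under either hypothesis the equality $\log\det S_{\varphi_1}(P,\lambda) = \log\det S_{\varphi_2}(P,\lambda)$ follows. The main ``obstacle'' is therefore nothing more than making clean use of Sylvester's identity to push the determinant from $\mathbb{R}^{p_i \times p_i}$ onto $\mathbb{R}^{N \times N}$, where only the kernel matrix $K$ appears.
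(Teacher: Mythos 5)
Your proof is correct and follows essentially the same route as the paper: both reduce to the Gram matrix via Sylvester's determinant identity. You are in fact slightly more careful than the paper, which moves directly from $\log\det(\Phi_1^T D_P \Phi_1 + \lambda I_{p_1})$ to $\log\det(\Phi_1\Phi_1^T D_P + \lambda I_N)$ by invoking $\det(AB+I)=\det(BA+I)$; that step is only literally valid when $\lambda=1$ or $p_1=N$, whereas your explicit factoring out of $\lambda^{p_i}$ makes the needed hypothesis ($p_1=p_2$, or $\lambda=1$ as in the only application) transparent.
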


\begin{proof}
Let $\Phi_1 = [ \varphi_1(a_1) \cdots \varphi_1(a_N) ]^T \in \mathbb{R}^{N \times p_1}$ and $\Phi_2 = [ \varphi_2(a_1) \cdots \varphi_2(a_N) ]^T \in \mathbb{R}^{N \times p_2}$.
By the definition of equivalence of feature mappings, we have $\Phi_1 \Phi_1^T = \Phi_2 \Phi_2^T = K$ for some kernel matrix $K \in \mathbb{R}^{N \times N}$.
Defining $D_P = \text{diag}(P(a_1), \dots, P(a_N))$, we can write $S_{\varphi_1}(P, \lambda) = \Phi_1^T D_P \Phi_1 + \lambda I_N$ and $S_{\varphi_2}(P, \lambda) = \Phi_2^T D_P \Phi_2 + \lambda I_N$.
Using the Sylvester's determinant identity $\det(AB + I) = \det(BA + I)$, we get
$$
\begin{aligned}
\log \det S_{\varphi_1}(P, \lambda) &= \log \det (\Phi_1^T D_P \Phi_1 + \lambda I_{p_1}) \\
&= \log \det (\Phi_1 \Phi_1^T D_P + \lambda I_N) \\
&= \log \det (\Phi_2 \Phi_2^T D_P + \lambda I_N) \\
&= \log \det (\Phi_2^T D_P \Phi_2 + \lambda I_{p_2}) \\
&= \log \det S_{\varphi_2}(P, \lambda)
\end{aligned}
$$
which completes the proof.
\end{proof}

\section{Additional experiments} \label{section:additional-experiments}

In this section, we provide additional experimental results under a simulated environment with the reward function $r_t(x) = 0.8 \cos(3 x^T \theta + \phi(t))$ where the action $x$ and the parameter $\theta$ are randomly sampled from the unit sphere in $\mathbb{R}^d$, and $\phi(t)$ denotes the phase over time.
We use the parameters tuned in Section~\ref{section:experiment} for all the experiments in this section.

\begin{figure}
    \centering
    \begin{subfigure}[b]{0.475\textwidth}
        \centering
        \includegraphics[width=\textwidth, height=1.2in]{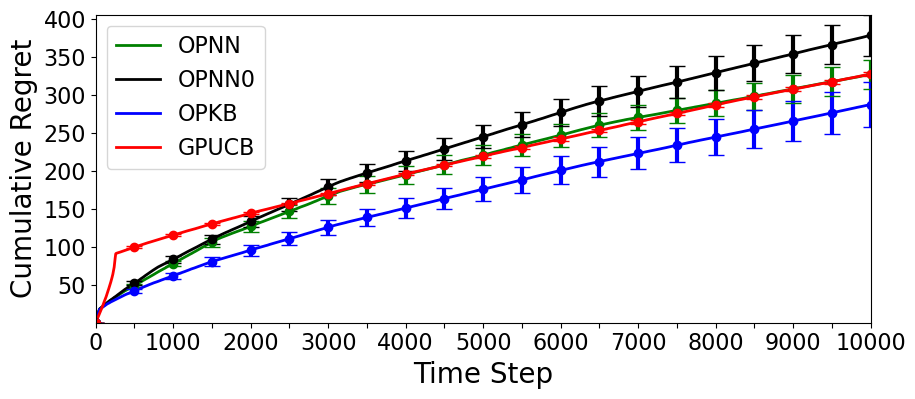}
        \caption[a]%
        {{\small Stationary cosine bandit}}
    \end{subfigure}
    \hfill
    \begin{subfigure}[b]{0.475\textwidth}  
        \centering 
        \includegraphics[width=\textwidth, height=1.2in]{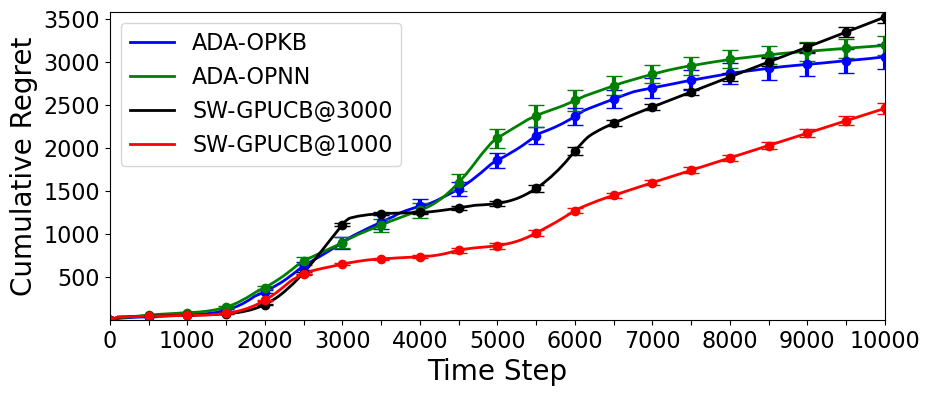}
        \caption[]%
        {{\small Slowly-varying cosine bandit}}    
    \end{subfigure}
    \caption{Cumulative regret comparison of algorithms in cosine bandit environments}
    \label{figure:additional-experiments}
\end{figure}

\subsection{Algorithm Tuning} \label{subsection:algorithm-tuning}

We tune SW-GPUCB, WGPUCB, ADA-OPKB and ADA-OPNN algorithms under the single switch environment.
For SW-GPUCB, we do a grid search for $\lambda$ over the range $\{0.01, 0.02, 0.05, 0.1, \dots, 100\}$, the UCB scale parameter $v$ over $[0.001, 1]$, and the window size over $\{100, 200, 500, 1000, \dots, 10000\}$.
See Algorithm~\ref{alg:igp-ucb} for the definition of $\lambda$.
For WGPUCB, we do a grid search for $\lambda$ over the range $\{0.01, 0.02, 0.05, 0.1, \dots, 100\}$, the UCB scale parameter over $\{0.001, 0.002, 0.005, 0.01, \dots, 1\}$, and the discounting factor over $\{0.99, 0.995, 0.999, 0.9995, 0.9999 \}$.
See Algorithm~\ref{alg:igp-ucb} for the definition of $\lambda$.
For ADA-OPKB and ADA-OPNN, we do a grid search for $\sigma$ over $\{1, 2, 5, 10, 20, 50, 100, 200, 500, 1000\}$ and $c_0, c_1, c_2, c_3, c_4$ over $\{0.001, 0.002, 0.005, 0.01, \dots, 100\}$.
For ADA-OPNN, we do a grid search for the learning rate $\eta$ over $\{10^{-9}, 10^{-8}, 10^{-7} \}$, training steps $J$ over $\{100, 1000, 10000\}$ and regularization parameter $\lambda$ over $\{1, 10, 100, 1000\}$.
We use a neural network of depth $L = 3$ and width $m = 2048$.

\subsection{Stationary cosine bandits}

We perform an experiment to demonstrate that OPNN benefits from dynamically adapting the feature mapping.
We use the cosine bandits described earlier with the phase fixed at $\phi(t) = 0$.
For a comparison, we run the algorithm OPNN0 that does not train the neural network for updating the feature mapping and uses the feature mapping induced by the initial weight of the neural network for all blocks.

The cumulative regrets averaged over 50 random seeds are shown in plot (b) of Figure~\ref{figure:additional-experiments}.
Error bars indicates standard errors of the means.
OPNN outperforms OPNN0, suggesting that updating feature mapping by training the neural network with observed data is beneficial.
Also, note that the performance of OPNN is comparable to GPUCB and OPKB.

\subsection{Slowly-varying cosine bandits}

We perform an experiment on slowly-varying bandits to demonstrate that our change detection based algorithms ADA-OPKB and ADA-OPNN adapt to slowly-varying environments.
We use the cosine bandit described earlier with varying phase $\phi(t)$.
We keep $\phi(t) = 0$ from time 0 to 1000, then let it grow from 0 to $\pi$ linearly from time 1000 to 3000.
From time 4000 to 6000, we let $\phi(t)$ grow again from $\pi$ to $2\pi$ linearly, and then keep $\phi(t) = 2\pi$ until the end of the simulation.

The cumulative regrets averaged over 25 random seeds under the slowly-varying cosine environment are shown in plot(b) of Figure~\ref{figure:additional-experiments}.
Error bars indicate standard errors of the means.
Note that SW-GPUCB with window size 3000, which is the best tuned parameter for the switching environment in Section~\ref{section:experiment}, is outperformed by the change detection based algorithms ADA-OPKB and ADA-OPNN.
If we tune SW-GPUCB again and use SW-GPUCB with window size 1000, SW-GPUCB performs the best.
Similarly, the best tuned WGPUCB under the single switching environment in Section~\ref{section:experiment} is outperformed by ADA-OPKB and ADA-OPNN in the slowly varying environment.

\end{document}